\pdfoutput=1
\documentclass[11pt]{article}

\usepackage[utf8]{inputenc}
\usepackage[T1]{fontenc}
\usepackage{booktabs}
\usepackage{amsfonts}
\usepackage{nicefrac}
\usepackage{microtype}
\usepackage{wrapfig}
\usepackage[table]{xcolor}
\PassOptionsToPackage{hyphens}{url}

\usepackage{./smile}
\usepackage{titlesec}

\colorlet{sectionblue}{templatecolor}

\titleformat{\section}
  {\Large\bfseries\color{black}}{\thesection}{1em}{}
\titleformat{\subsection}
  {\large\bfseries\color{black}}{\thesubsection}{1em}{}
\titleformat{\subsubsection}
  {\normalsize\bfseries\color{black}}{\thesubsubsection}{1em}{}
\titleformat{\paragraph}[runin]
  {\normalfont\normalsize\bfseries\color{black}}{}{0pt}{}

\mathtoolsset{showonlyrefs}

\newcommand{\bpf}{\bP_{\mathrm{f}}}
\newcommand{\bps}{\bP_{\mathrm{s}}}
\newcommand{\gmax}{\gamma_{\mathrm{max}}}
\newcommand{\gflat}{\gamma_{\mathrm{flat}}}

\usepackage{fancyhdr}
\newcommand\blfootnote[1]{%
  \begingroup
  \renewcommand\thefootnote{}\footnote{#1}%
  \addtocounter{footnote}{-1}%
  \endgroup
}

\title{
  \vskip-30pt
  \textbf{Towards Understanding Momentum Acceleration in River-Valley Loss Landscape}
}

\author{
  Miao Lu$^{1}$ \quad
  Zeyu Bian$^{2}$ \quad
  Kaiyue Wen$^{1}$ \quad
  Beining Wu$^{3}$ \\
  Siyu Chen$^{4}$ \quad
  Tianhao Wang$^{2}$ \quad
  Zhiyuan Li$^{5}$ \\[6pt]
  {\small $^{1}$Stanford University \quad $^{2}$University of California, San Diego \quad $^{3}$University of Chicago} \\[1pt]
  {\small $^{4}$Yale University \quad $^{5}$Toyota Technological Institute at Chicago}
}
\date{}

\begin{document}

\maketitle

\blfootnote{%
  The main work was done while ML was visiting TTIC.
  Author emails:
  \texttt{\{miaolu,kaiyuew\}@stanford.edu},
  \texttt{\{zebian,tianhaowang\}@ucsd.edu},
  \texttt{beiningw@uchicago.edu},
  \texttt{siyu.chen.sc3226@yale.edu}, and
  \texttt{zhiyuanli@ttic.edu}.
}

\pagestyle{plain}

\vspace{-10mm}
\begin{abstract}
The empirical success of pretraining large language models has inspired a deeper investigation into the underlying loss landscapes and the optimization dynamics. Recent empirical and theoretical study suggest that the training loss landscape often exhibits a ``river-valley'' structure, which features a low-loss manifold (river) flanked by sharp orthogonal directions with higher loss (mountains). In the long term, the optimization progress is determined primarily by the progress along the river. Within such a landscape, gradient descent with large learning rates can move faster along the river despite high apparent loss due to vertical oscillations, while a subsequent sharp decay in the learning rate suppresses these oscillations, revealing genuine optimization progress. This explains the recent success of warmup-stable-decay (WSD) learning rate scheduler which, unlike cosine scheduling, keeps stable high learning rate and decays before producing intermediate checkpoints.
Building on this foundation, in this work we take a step further and study the role of momentum within such a loss landscape. We establish theoretical analysis that characterizes how momentum accelerates optimization by stabilizing large learning rates that can not be tolerated by vanilla GD without deviating significantly from the river.
The enabled large learning rate in-turn gives greater speed along the river and makes faster essential progress in the long run. Another intriguing observation from theory is that for a river-valley landscape with very flat and slow-spinning river, the momentum itself does not contribute directly to acceleration in terms of the speed of tracking the river, while the main acceleration comes from the admissible larger learning rate. We further provide empirical validations, including experiments on synthetic functions and language model pretraining, confirming the predicted behaviors and offering insights into the joint tuning of learning rate and momentum.
\end{abstract}

\section{Introduction}

Momentum is a foundational component of modern optimizers and is widely used in training large neural networks. 
Variants of gradient descent with heavy-ball momentum \citep{polyak1964some}, e.g., Adam \citep{kingma2014adam} and Muon \citep{jordan2024muon}, are standard choices of optimizer in practice.
Despite this ubiquity, however, the theoretical conditions under which momentum actually accelerates learning remain poorly understood. 
Classical theory can only guarantee the acceleration of heavy-ball momentum within restricted settings with quadratic-like properties via improving the exponent in linear convergence \citep{polyak1964some,kassing2024polyak}.
However, the loss landscapes of neural networks are typically highly non-convex and linear convergence is rarely observed in practice. As a result, the conventional intuition -- that momentum accelerates local linear convergence -- may not fully explain its success in deep learning, calling for rethinking of the role of momentum in complex loss landscapes.
In particular, theoretically answering this question requires understanding momentum in the context of a tractable but still realistic and meaningful loss landscape beyond convex settings.

\begin{wrapfigure}[20]{r}{0.41\textwidth}
    \centering
    \vspace{-0.7\baselineskip}
    \captionsetup{width=\linewidth,font=small,justification=raggedright,
    singlelinecheck=false}
    \includegraphics[width=\linewidth]{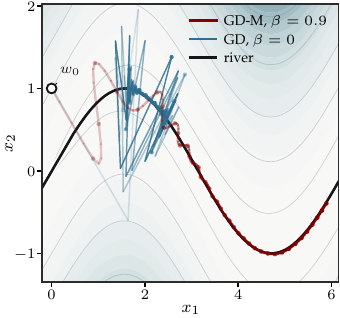}
    \caption{GD-M stabilizes a river-valley trajectory. At $\eta=0.4$, GD-M
    ($\beta=0.9$, red) damps transverse oscillations and moves along the black
    river, whereas GD ($\beta=0$, blue) keeps oscillating. Opacity increases
    with iteration.}
    \label{fig: river visualization}
    \vspace{-0.5\baselineskip}
\end{wrapfigure}
A recent work on optimization in language model pretraining has identified and proposed an intriguing ``river-valley’’ structure of the loss landscape \citep{wen2025understanding}: it contains a deep valley with a narrow, low-loss manifold at the bottom (``river''), surrounded by orthogonal steep directions (``mountains'').
Empirically, this loss landscape is evidenced by comparing the loss curves of the Warmup-Stable-Decay (WSD) learning rate scheduler \citep{hu2024minicpm} and the cosine learning rate scheduler \citep{loshchilov2017sgdr}: WSD first uses a constant large learning rate, causing faster descent along the river than cosine scheduler but with higher loss due to oscillations across valley walls. 
In the end, a sharp decay of the learning rate can suppress the oscillations, revealing the genuine optimization progress along the river by WSD and resulting in lower final loss than cosine scheduler. \citet{wen2025understanding} also probe LLM pretraining loss landscape to showcase the existence of such a structure. 
We illustrate such a landscape through a synthetic loss function in Figure~\ref{fig: river visualization}.

Within such a landscape of ``a deep valley with a river at its bottom’’, the long-term optimization progress is essentially governed by the motion along the river, making the main challenge how to move quickly along the low-loss river manifold while damping out the instability in the high-curvature valley directions. 
Motivated by the lack of theoretical underpinning of momentum in complex loss landscapes, in this paper, we theoretically analyze the dynamics of heavy-ball momentum gradient descent under the emerging setting of river-valley loss landscape. 
Within such a landscape, we ask the following questions: 
\vspace{-2mm}
\begin{center}
    \emph{What is the role of momentum?  How does the momentum parameter interplay with  learning rate?  \\ 
    How do they jointly accelerate optimization along the river?}
\end{center}

\subsection{Our Contributions}
Theoretically answering these questions requires a new analysis for GD with momentum in such a \emph{non-convex} loss landscape, especially, how momentum shapes the interactions between the benign progress in the river and the oscillating updates in the valley directions.
In this paper, we investigate the following GD iteration with heavy-ball momentum: given loss function $L$ and initial parameter and momentum $(w_0,m_0)$, the iteration goes as follows:
\begin{align}
   m_{k+1} = \beta\cdot m_k + (1-\beta)\cdot\nabla L(w_k),\quad w_{k+1} = w_k - \eta\cdot m_{k+1}.  \label{eq: gd momentum intro}
\end{align}
\paragraph{Mechanism of momentum acceleration in river-valley.} We develop a new picture for understanding \eqref{eq: gd momentum intro} in the river-valley loss landscape by proving that momentum can enlarge the largest tolerable learning rate by the leading factor $(1+\beta)/(1-\beta)$, i.e., $\eta_{\max}^{\mathrm{GD}\text{-}\mathrm{M}}\approx (1+\beta)/(1-\beta)\cdot\eta_{\max}^{\mathrm{GD}}$.
See Theorem~\ref{thm: main}.
This factor is classical for heavy-ball dynamics on quadratic losses; our contribution is to show that it continues to govern the leading stability threshold in a non-convex river-valley landscape, where the flat and sharp eigenspaces rotate and interact along the trajectory.
Compared to vanilla GD, momentum accelerates the progress along the river by enabling the use of a substantially larger learning rate.
In fact, momentum acts as a stabilizer in the presence of oscillations caused by aggressive choices of learning rate, which vanilla GD cannot tolerate.
The larger learning rate enables faster traversal along the river manifold, achieving acceleration.
Moreover, for flat and slow-spinning river, vanilla heavy-ball momentum itself does not alter the speed on the river much; the acceleration mainly comes from the choice of a larger learning rate.

\paragraph{Technical contributions for analyzing momentum.} 
We introduce new arguments to prove, under the largest tolerable learning rate, how the GD-momentum iteration can closely track the river without exploding in the face of oscillations across the valley directions, see Lemma~\ref{lem: flat dominance momentum main}. 
The proof introduces new induction argument to track the scale of both the gradient and the momentum terms projected to river and valley directions. 
We note that both the river and the valley directions of the gradient and the momentum interfere with each other, casting new and different challenges to tightly track the magnitudes of the gradient (and the momentum) compared to analyzing vanilla GD in river-valley landscape \citep{wen2025understanding}. 
The new analysis could also be of independent interest for future research on more advanced adaptive optimizers with momentum, e.g., Adam, Muon, in the river-valley loss landscape.
As a by-product of the spirit of the new induction argument, we also achieve a substantial improvement in terms of the largest learning rate for vanilla GD compared to \citet{wen2025understanding}, see Remark~\ref{rem: gd comparison}.
This actually deepens our understanding of the river-valley model in terms of the largest tolerable learning rate for vanilla GD and in turn helps to characterize the benefit of momentum.

\paragraph{Experiment on synthetic loss \& language model training.} To verify our theory, we first utilize a synthetic non-convex function with river-valley structure as a playground to check all the theoretical predictions.
The experiments (Figure~\ref{fig: experimemnt 1}) perfectly match our predictions on: (i) how momentum can accelerate the optimization; and (ii) the scaling of the largest tolerable learning rate with respect to the momentum parameter.
Moving forward to training language models, we verify our theoretical predictions by pretraining GPT-Neo \citep{gao2020pile} on the TinyStories dataset \citep{eldan2023tinystories} with (i) SGD-Momentum and (ii) Adam.
We observe a matching relationship between the largest tolerable learning rate and the momentum parameter $\beta$ in SGD-Momentum, and a similar trend with respect to $\beta_1$ in Adam.
All these LM training results verify our theoretical predictions on how momentum achieves acceleration by enabling a large learning rate training for faster traverse along the river manifold.

\paragraph{Paper outline.}
The paper is organized as following. 
The remaining of this section gives related works.
In Section~\ref{sec: preliminaries}, we introduce the river-valley loss landscape.
We review the theory on vanilla GD in river-valley landscape in Section~\ref{sec: gd}. 
Then we establish our main theory on GD with heavy-ball momentum in Section~\ref{sec: main}. 
We sketch the proofs of our results in Section~\ref{sec: proof outline}. 
The experiments on language model pretraining are in Section~\ref{sec: llm experiment}.

\subsection{Related Works}

\paragraph{Theoretical analysis of momentum.}
Momentum acceleration in convex optimization dates back to Polyak's heavy-ball method \citep{polyak1964some}.
Provable acceleration of heavy-ball momentum and its stochastic variants has been established for locally near-quadratic problems, including convex settings \citep{ghadimi2015global,liu2020improved,sebbouh2021almost,li2022last,dang2024noise,bollapragada2024fast,panaccelerated,wei2024accelerated} and non-convex objectives satisfying a PL condition \citep{wang2021modular,pmlr-v162-wang22p,kassing2024polyak}.
Beyond these structured regimes, however, our understanding of momentum in the non-convex landscapes arising in deep learning remains limited.
\citet{kim2026understanding} show that standard stationarity-based non-convex theory can yield worst-case comparisons unfavorable to momentum, motivating analyses with additional landscape structure.
For neural networks, \citet{wang2021modular} prove acceleration for wide ReLU and deep linear networks, while \citet{Plattner2022,wang2024the} study the marginal value of momentum in the small-learning-rate regime.
Moving beyond this regime, \citet{fu2023when} empirically observe that SGDM begins to outperform SGD at a matched effective learning rate only after this rate crosses a problem-dependent threshold, and associate this separation with momentum delaying abrupt sharpening.
\citet{phunyaphibarn2024catapults} study heavy-ball dynamics with large learning rates and provide evidence that momentum can prolong self-stabilizing catapults and bias the trajectory toward flatter solutions.
Concurrent work \citep{andreyev2026momentum} characterizes batch-size-dependent stochastic stability thresholds for momentum SGD.
These works focus on empirical trajectory phenomena or stochastic stability, whereas we prove stable river tracking and characterize the deterministic learning-rate enlargement caused by momentum in a rotating non-convex valley.
Besides heavy-ball momentum, another line of work studies Nesterov's momentum \citep{nesterov1983method,liu2022convergence,xu2024provable,liao2024provable}.

\paragraph{Large learning rate training.} 
The benefits of large-learning-rate training have received increasing attention \citep{he2016deep,xing2018walk,cohen2021gradient}.
Large learning rates can provably accelerate optimization in logistic regression \citep{wu2024large,meng2024gradient,zhang2025minimax} and non-homogeneous two-layer networks \citep{cai2024large}.
In the river-valley model, \citet{wen2025understanding} show that a larger learning rate can accelerate progress along the low-loss river manifold.
Work on the ``Edge of Stability'' studies how large learning rates shape oscillatory training, sharpness, and implicit bias \citep{cohen2021gradient,jastrzebski2021catastrophic,arora2022understanding,kong2020stochasticity,zhuunderstanding,wanglarge,andriushchenko2023sgd,wang2023good,wu2023implicit,lubenign,xu2024implicit}.
More recently, \citet{macdonald2025eos} quantify large-step dynamics in overparameterized least squares by decomposing the dynamics into components parallel and orthogonal to the manifold of minimizers.
Momentum is no longer absent from this line of literature: \citet{phunyaphibarn2024catapults} and \citet{andreyev2026momentum} study large-learning-rate momentum through catapult and stochastic-stability mechanisms, respectively.
Unlike these analyses of sharpness selection or stochastic stability, we establish stable tracking of an explicit river manifold and characterize the leading tolerable-learning-rate gain due to heavy-ball momentum.

\paragraph{Loss landscape in deep learning.} 
Understanding the interaction between loss geometry, optimization dynamics, and generalization is central to explaining the success of deep learning \citep{freeman2017topology,garipov2018loss}.
The works most closely related to ours consider ``river-valley''-like geometry: \citet{xing2018walk} propose a picture in which SGD bounces across a valley while moving along its floor; \citet{davis2024gradient} identify a manifold with vanishing gradient in the sharp Hessian directions under fourth-order growth; and \citet{wen2025understanding} formalize a river-valley model for language-model pretraining.
Together, these works motivate and provide evidence for the river-valley perspective.
Recent work further connects optimizer dynamics with low-curvature landscape geometry.
\citet{cohen2025central} model time-averaged Edge-of-Stability trajectories through central flows, \citet{dremov2025cooldown} empirically visualize river-valley structure during WSD cooldown, and \citet{song2025through} use the river perspective to analyze schedule-free language-model training.
These works support or extend the geometric picture, whereas our focus is a trajectory-level analysis of heavy-ball momentum under explicit river-valley assumptions.

\paragraph{Further related works.} We refer the reader to Appendix~\ref{sec: further related works} for further discussion of related work.

\section{Preliminaries}
\label{sec: preliminaries}

\paragraph{Notations.} We denote the eigenvalues of  $\bA\in\RR^{d\times d}$ as $\{\lambda_k(\bA)\}_{k=1}^d$  with $\lambda_1(\bA)\geq \cdots\geq \lambda_d(\bA)$. 
We denote $\{v_k(\bA)\}_{k=1}^d$ as the normalized eigenvectors of $\bA$ with $v_k(\bA)$ corresponding to $\lambda_k(\bA)$.
We denote $\|\bA\|_{\mathrm{Op}}$ as the operator norm of $\bA$.
For the third derivative tensor, we define $\|\nabla^3 L(w)\|_{\mathrm{Op}}:=\sup_{\|u\|_2=\|v\|_2=\|z\|_2=1}|\nabla^3 L(w)[u,v,z]|$, where $\nabla^3 L(w)[u,v,z]$ denotes the corresponding trilinear form.
We denote $\lfloor t \rfloor$ as the largest integer smaller or equal to $t$, and $\cB(w, r)$ as the Euclidean ball of radius $r$ centered at $w$. 
For $\beta\in(0,1)$, we use $\cO_{\beta}(\cdot)$ to hide factors polynomial in $1/(1-\beta)$.

\subsection{River-Valley Loss Landscape: Setups and Key Assumptions}

We use $w\in\RR^d$ to denote the parameters. 
We denote $L:\RR^d\to\RR$ as the loss function and assume $L$ is smooth.
The concept of the river-valley loss landscape is as follows.

\begin{assumption}[Existence of the manifold of river]\label{ass: existence}
    There exists a one-dimensional sub-manifold $\cM$ of $\mathbb{R}^d$ s.t. for $\forall w\in\cM$, $\nabla L(w)$ aligns with $v_d(\nabla^2L(w))$, i.e., $\nabla L(w)/\|\nabla L(w)\|_2 = v_d(\nabla^2L(w))$.
\end{assumption}

In words, along the river, the gradient $\nabla L(w)$ aligns with the 
flattest direction at $w$, i.e., $v_d(\nabla^2 L(w))$, which we refer to as the river direction or the \emph{flat direction}.
All the other directions
orthogonal to the river are dubbed as mountain directions or \emph{sharp directions}, corresponding to the steep
valley.
Intuitively, the river captures the path with the lowest loss locally, while the hill components reflect the additional loss incurred by the deviations from the river. 
Such a structure is ubiquitous in deep learning loss landscapes \citep{xing2018walk, dang2024noise, wen2025understanding}.

Conceptually, the optimization process goes as follows: 
The parameter $w$ starts from a neighborhood $\cU$ of the river $\cM$, and approaches the river $\cM$ by decreasing the loss.
Then the optimization proceeds by flowing downstream for further optimization progress, which in the long term, is determined primarily by the progress along the river.
To rigorously characterize this process, we introduce technical assumptions following the prior work \citet{wen2025understanding}.

\begin{assumption}[Existence and regularity of an open neighborhood around river]\label{ass: regularity}
    There exist an open set $\cU$ with $\cM\subset\cU$ and constants $\gflat,\gamma,\gmax,g_{\max},\kappa \geq 0$, s.t.: 
    \begin{enumerate}[nosep, leftmargin=5mm]
        \item \textbf{Diameter of}  $\cU$: For any $w\in\cM$, $\cB(w, 6g_{\max}/\gamma)\subset\cU$, where $\|\nabla L(u)\|_2\leq g_{\max}$ for all $u\in\cU$.
        \item \textbf{Eigen-gap}: $\gflat, \gamma, \gmax$, $\kappa$ satisfy that $\kappa\gamma\leq \gamma_{\mathrm{flat}} \leq \kappa^{1/2}\gmax$, $\kappa^{1/32}\gmax\leq\gamma\leq\gmax/200$, and for any $w\in\cU$, $\lambda_1(\nabla^2L(w))\leq \gamma_{\mathrm{max}}$, $\lambda_{d-1}(\nabla^2 L(w))>\gamma + 4\gamma_{\mathrm{flat}}$, and $|\lambda_d(\nabla^2L(w))|<\gamma_{\mathrm{flat}}$.
        \item \textbf{Slow spinning of flat direction and Hessian}: 
        For any $w\in\cU$, $ \|\nabla(v_d(\nabla^2 L(w)))\|_{\mathrm{Op}}\leq\kappa\gamma/2g_{\max}$ and $\|\nabla^3L(w)\|_{\mathrm{Op}}\leq \kappa\gamma^2/2g_{\max}$.
        \item \textbf{Uniqueness of the river} $\cM$: 
        For any $w\in\cU\setminus\cM$, $\nabla L(w)$ does not align with $v_d(\nabla^2 L(w))$.
    \end{enumerate}
\end{assumption}

We explain Assumption~\ref{ass: regularity} as follows.
The first one requires that the size of the neighborhood is not too small and fixes a uniform gradient bound on $\cU$.
The last one ensures the uniqueness of river in the neighborhood for the simplicity of analysis.
The second and the third assumptions are key to the river-valley model, which we explain in detail.
The second \emph{Eigen-gap} assumption posts conditions on the relative magnitude of the Hessian eigenvalues. It assumes a gap of $\gamma$ between the sharpness of the river direction and the other sharp directions, which formalizes the picture of a flat river and steep mountains. 
The third \emph{Slow spinning} assumption limits the spinning speed of the river by imposing third-order conditions on the loss.
Note that under the \emph{Eigen-gap} assumption, the bound on $\|\nabla^3 L(w)\|_\mathrm{Op}$ implies the bound on the spinning speed of the flat direction.

Throughout our analysis, both $\kappa\ll 1$ and $\gflat \gmax^{-1}\ll 1$ are treated as \emph{very small dimensionless constants}, guaranteeing the flatness as well as slow spinning of the river.
The relation $\kappa\gamma\leq\gflat$ allows the geometric errors caused by the rotation of the flat eigenspace to be absorbed into flat-direction curvature terms, whereas $\gflat\leq\kappa^{1/2}\gmax$ enforces a separation between the flat and sharp curvature scales.
The lower bound $\gamma/\gmax\geq\kappa^{1/32}$ keeps the eigengap sufficiently large relative to the Hessian-variation scale and is used in the uniform sharp-direction stability estimates.
The fixed upper bound $\gamma/\gmax\leq 1/200$ is used only in the neighborhood-containment argument for the interpolated trajectory in Appendix~\ref{subsec: proof gd momentum tracks the river closely}; its numerical value can be traded against the neighborhood radius in item~1.
The fractional exponents and numerical constants in these scale conditions are chosen for the proof and are not optimized.

\begin{definition}[River-valley landscape]
    If the loss function $L$ satisfies Assumption~\ref{ass: existence} and Assumption~\ref{ass: regularity}, we call the corresponding loss landscape a river-valley landscape.
\end{definition}

\paragraph{Reference flow.} 
Next, to formalize how close the iterates are to the river and how fast they flow downstream, we introduce the concept of the \emph{reference flow}. 
Specifically, the reference flow is a Riemannian gradient flow on the river, representing the dynamics of gradient flow on the loss when constrained to the river.
We denote the projection matrix onto the tangent space of the manifold $\cM$ at $w\in\cM$ as $\bP_{\cM}(w)$.
Given an initial point $x\in\cM$, the Riemannian gradient flow is given by
\begin{align}
    \frac{\mathrm{d}}{\mathrm{d}t}x(t) = -\bP_{\cM}(x(t))\nabla L(x(t)),\quad x(0)=x.
    \label{eq: manifold parametrization}
\end{align}
Throughout the paper, $x(t)$ always refers to points on the river $\cM$ or the reference flow, and $t$ is the continuous time index of the reference flow \eqref{eq: manifold parametrization}.

\paragraph{Examples of river-valley loss landscape.}
We provide concrete examples to illustrate the landscape.
The simplest one is a convex function $L(x_1,x_2) = \gamma\cdot x_1^2 / 2 - x_2$, where the river is just $x_1=0$ and it does not spin, i.e., $\kappa=0$.
Another example is a non-convex function 
$$L(x_1,x_2)=(x_2-\sin(x_1))^2-\alpha \cdot x_1$$ 
for some $\alpha\geq 0$. 
We visualize this example in the Figure~\ref{fig: river visualization}. 
The optimization progress for it is essentially determined by how fast the variable $x_1$ grows by tracking the river.

Moreover, in the context of language modeling, \citet{wen2025understanding} introduces a synthetic bigram language model to theoretically verify the existence of such landscape. 
They prove that the sharpness of the next-token prediction loss landscape correlates with uncertainty in the token distributions.  
The deterministic tokens, which are easier to predict, correspond to the flat directions of the loss, while the more stochastic tokens contribute to the sharper directions.

\subsection{Gradient Descent with Heavy-Ball Momentum}

Our theoretical analysis concerns vanilla GD and GD with heavy-ball momentum.
Starting from some $w_0\in\RR^d$, the vanilla GD iterates as following,
\begin{align}
    w_{k+1} &= w_k - \eta\cdot \nabla L(w_k),\tag{GD}\label{eq: gd} 
\end{align}
where $\eta$ denotes the learning rate.
Furthermore, for GD with heavy-ball momentum \citep{polyak1964some}, when initialized at some $(w_0, m_0)\in\mathbb{R}^d\times\mathbb{R}^d$, it iterates as 
\begin{align}
    m_{k+1} = \beta\cdot m_k + (1-\beta)\cdot\nabla L(w_k),\quad w_{k+1} = w_k - \eta\cdot m_{k+1},   \label{eq: gd momentum}\tag{GD-M}
\end{align}
where $\beta\in[0,1)$ is the momentum parameter.
In particular, when $\beta=0$, \eqref{eq: gd momentum} reduces to vanilla \eqref{eq: gd}.
There is another version of GDM where the $1-\beta$ is absent in the update of momentum, but it is equivalent to the above formulation up to simple reparametrization.

\section{Review: What Happens for Vanilla GD?}\label{sec: gd}

Before giving our results on \eqref{eq: gd momentum}, we first review what happens for vanilla 
\eqref{eq: gd} in the river-valley landscape.
As we mentioned in the previously, GD starting from the neighborhood would (i) converge close to the river within distance of order $\cO(\kappa/\gamma)$ (Theorem 3.1 of \citet{wen2025understanding}), and then (ii) closely track the river afterwards.
We recap the results for the second stage as follows.


\begin{theorem}[Informal, GD in river-valley landscape (improved version of Theorem 3.3 in \citealt{wen2025understanding})]\label{thm: gd}
    Suppose Assumptions~\ref{ass: existence} and \ref{ass: regularity} hold. 
    Let $\eta$ be a learning rate such that $\eta<\eta_{\max}^{\mathrm{GD}}$, where
    \begin{align}\label{eq:max_lr_gd}
        \eta_{\max}^{\mathrm{GD}}\approx \frac{2-\cO(\kappa + \gflat\gmax^{-1})}{\gmax}.
    \end{align}
    Then there exists a time index $T_0$ such that \eqref{eq: gd} with initialization $w_0\in\cM$ on the river satisfies that for any step $k$, there exists  $T(k)$ s.t. the following two things hold:
    \begin{enumerate}[nosep, leftmargin=6mm]
        \item GD stays close to the river: $\|x(T_0 + T(k))-w_k\|_2\leq \cO(\kappa/\gamma)$; 
        \item The speed on river is nearly proportional to $\eta$: $|T(k) - \eta k| \leq \cO(\kappa + \eta\gflat)\cdot \eta k$.
    \end{enumerate}
\end{theorem}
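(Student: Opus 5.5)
We argue by strong induction on $k$, maintaining a joint invariant on the distance to the river and on the size of the sharp component of the gradient. For each iterate $w_k$ near $\cM$ we define its projection $x_k\in\cM$, the closest point or equivalently the point where the flat direction through $w_k$ meets the river. We decompose $\nabla L(w_k)=\bpf(w_k)\nabla L(w_k)+\bps(w_k)\nabla L(w_k)$, where $\bpf=v_dv_d^\top$ and $\bps=\bI-\bpf$. The induction hypothesis has three parts:
\begin{enumerate}[nosep, leftmargin=6mm]
\item $\|\bps\nabla L(w_k)\|_2\le C\kappa\, g_{\max}$ once the burn-in of length $T_0$ has passed;
\item $\|w_k-x_k\|_2\le C'\kappa/\gamma$;
\item $w_k\in\cU$.
\end{enumerate}
The river condition (Assumption~\ref{ass: existence}) says exactly that $\bps\nabla L$ vanishes on $\cM$. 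Together with the eigen-gap $\lambda_{d-1}>\gamma$, this lets us convert a bound on $\|\bps\nabla L(w)\|$ into a distance bound of order $\|\bps\nabla L(w)\|/\gamma$ via the implicit function theorem. This gives (2) from (1), and (3) then follows from the ball radius $6g_{\max}/\gamma$ in Assumption~\ref{ass: regularity}.

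\textbf{Sharp-component contraction.} Write $g_k=\nabla L(w_k)$. A Taylor expansion gives
\[
g_{k+1}=(\bI-\eta\nabla^2L(w_k))g_k+r_k,\qquad \|r_k\|\le \tfrac12\eta^2\|\nabla^3L\|_{\mathrm{Op}}\|g_k\|^2\le \cO(\eta^2\kappa\gamma^2 g_{\max}).
\]
We then project with $\bps(w_{k+1})$ and use the spinning bound $\|\nabla v_d\|\le\kappa\gamma/2g_{\max}$ to swap $\bps(w_{k+1})$ for $\bps(w_k)$, at a cost of order $\kappa\gamma\eta\|g_k\|/g_{\max}\cdot\|g_k\|$. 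On the sharp eigenspace, $\bI-\eta\nabla^2L$ has eigenvalues in $[1-\eta\gmax,\,1-\eta(\gamma+4\gflat)]$. This is where the improved threshold enters: we need $|1-\eta\lambda|\le 1-\rho$ with $\rho>0$ uniformly. For $\eta<(2-\epsilon)/\gmax$ the top end gives $1-\eta\gmax\ge -1+\epsilon$, and the bottom end gives contraction $\rho\approx\min(\epsilon,\eta\gamma)$. The perturbations (rotation, third-order remainder, and leakage from the flat component, which is at most $\gflat$-size curvature times $\|g\|$) add a term of order $\cO(\kappa+\gflat/\gmax)\,\eta\gmax\|g\|$ to the one-step map. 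They are absorbed as long as $\epsilon\gtrsim\kappa+\gflat\gmax^{-1}$, which is exactly \eqref{eq:max_lr_gd}. Iterating gives
\[
\|\bps g_{k+1}\|\le(1-\rho)\|\bps g_k\|+\cO(\kappa\,\eta\gamma\, g_{\max}),
\]
whose fixed point is $\cO(\kappa g_{\max})$ when $\rho\gtrsim\eta\gamma$. Starting from $w_0\in\cM$, this bound holds from the start. $T_0$ accounts only for aligning the reference-flow clock with the projection $x_0$.

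\textbf{Main obstacle.} The hardest step is making the contraction uniform near $\eta\gmax\approx2$, where $\rho$ is tiny. The earlier analysis of \citet{wen2025understanding} used a crude norm bound; here one must use that the near-$(-1)$ eigen-directions only flip sign, so the contraction should be measured over two steps. The first thing to try is to track $\|\bps g\|$ along with a one-step-lagged quantity, and show that the perturbation enters only at relative size $\kappa+\gflat/\gmax$ per step. The eigenvalue bound $\lambda_1\le\gmax$ then leaves slack $\epsilon$.

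\textbf{Speed along the river.} Given the invariant, $x_{k+1}-x_k$ equals the projection of $-\eta\nabla L(w_k)$ onto the tangent $\bP_\cM(x_k)$, up to three errors. The sharp component contributes only through the rotation of $\bpf$, giving $\cO(\kappa)\eta\|g\|$. Evaluating the gradient at $w_k$ rather than at $x_k$ gives $\cO(\gflat\cdot\kappa/\gamma)$ relative, which is $\le\cO(\kappa)$ since $\kappa\gamma\le\gflat$ only bounds $\gflat$ from below, so one instead uses $\|\bpf\nabla^2L\|\le\gflat$ together with the bounded distance. Finally, the Euler discretization of \eqref{eq: manifold parametrization} introduces a relative error of $\cO(\eta\gflat)$, because the Riemannian gradient changes at rate governed by the flat curvature. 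Defining $T(k)$ by matching $x_k=x(T_0+T(k))$, and summing these relative errors over $k$ steps, yields $|T(k)-\eta k|\le\cO(\kappa+\eta\gflat)\eta k$. Claim (1) then follows from the distance invariant.
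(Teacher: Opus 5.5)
The induction invariant you propagate is too weak for the second claim. You bound the sharp gradient in absolute terms, $\|\bps\nabla L(w_k)\|_2\le C\kappa g_{\max}$; your Taylor remainder $\cO(\eta^2\kappa\gamma^2 g_{\max})$ and forcing term $\cO(\kappa\eta\gamma g_{\max})$ are both measured against $g_{\max}$. That is enough for closeness, $\|w_k-x_k\|_2=\cO(\kappa g_{\max}/\gamma)$. The speed claim, however, is relative: $|T(k)-\eta k|\le\cO(\kappa+\eta\gflat)\,\eta k$ must hold uniformly. This includes the regime where the flat gradient has become much smaller than $g_{\max}$, which is forced along the reference flow whenever $L$ is bounded below.

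In your speed step the errors are $\cO(\kappa)\,\eta\|\nabla L(w_k)\|_2$ and $(\gflat+\gamma\kappa)\|w_k-x_k\|_2$. Divided by the tangent speed, which is about $\eta\|\bpf\nabla L(w_k)\|_2$, they are of order $\kappa g_{\max}/\|\bpf\nabla L(w_k)\|_2$, and this is unbounded. The paper's time-reparametrization step (Lemma~\ref{lem: tangent vector flat direction}, used inside Lemma~\ref{lem: time index grows almost linearly with learning rate formal}) needs a distance bound proportional to $\|\bpf\nabla L(w)\|_2$. That is exactly the relative dominance $\|\bps\nabla L(w_{k,\tau})\|_2\le 120\kappa\,\|\bpf\nabla L(w_{k,\tau})\|_2$ of Lemma~\ref{lem: sharp dominance}.

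What is missing is the flat-direction counterpart of your sharp estimate, and the comparison between the two. The paper lower-bounds the persistence of the flat gradient:
$\|\bpf g_{k+1}\|_2\ge(1-\eta\gflat)\|\bpf g_k\|_2-\eta\gamma(1+\eta\gmax)\kappa\|g_k\|_2$.
It then runs the induction on $\|\bps g_k\|_2-\alpha\|\bpf g_k\|_2$. Each step multiplies this quantity by $\max\{|1-\eta\gmax|,|1-\eta(\gamma+4\gflat)|\}+2\eta\gamma(1+\eta\gmax)\kappa$, provided this factor is below the flat persistence factor $1-\eta\gflat$. The quantity starts $\le 0$ at $w_0\in\cM$, so it stays $\le 0$. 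In the large-$\eta$ regime this gives $\alpha=2\eta\gamma(1+\eta\gmax)\kappa/\big(2-\eta\gmax-\eta\gflat-2\eta\gamma(1+\eta\gmax)\kappa\big)$.

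The $\gflat\gmax^{-1}$ correction in the threshold comes from this comparison of sharp contraction against flat persistence. It does not come from $\gflat$-size leakage: since $\bps\nabla^2 L\,\bpf=0$ at the same point, the only cross-coupling is rotation of order $\kappa$.

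The same formula bears on your ``main obstacle.'' The paper does not overcome it with a two-step argument; it keeps a constant margin, $\eta\gmax\le 1.9-2\gflat\gmax^{-1}-12\kappa$, and this is where $120\kappa$ comes from. With a margin only of order $\kappa+\gflat\gmax^{-1}$, both your recursion and the paper's give a sharp-to-flat ratio (or fixed point) of order $\kappa\eta\gamma$ divided by the margin, not $\cO(\kappa)$. Your two-step idea is left unverified.

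The rest of your plan matches the paper once the relative invariant is in place: the distance bound via item 2 of Lemma~\ref{lem: existence of projection onto river and further properties}, containment in $\cU$, and the $\cO(\eta\gflat)$ discretization error.
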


See formal version in Appendix~\ref{subsec: formal statement gd}. 
By Theorem~\ref{thm: gd}, when GD closely tracks the river, the larger the learning rate (if tolerable), the faster the iterates move along the river, which means faster optimization progress.
The theory partially demystifies the success and the non-trivial loss curve of WSD scheduler in practice \citep{hu2024minicpm}.
Using a large constant learning rate induces a larger training loss before learning rate decay due to oscillations in the sharp directions, but after the learning rate decay, the loss reflects the true progress it has made (Theorem 3.5 of \citet{wen2025understanding}).


One key factor of Theorem~\ref{thm: gd} is the upper bound on the learning rate, which ensures that the iterates remain close to the river without exploding due to oscillations between the valley.
Intuitively, the bottleneck of using a large learning rate lies in \emph{the curvature of sharp directions} and \emph{the spinning of the river}, for which $\eta_{\max}^{\mathrm{GD}}$ is derived.
Therefore, the dominant term is roughly $\eta_{\max}^{\mathrm{GD}} \approx 2/\gmax$, which matches the maximum learning rate for GD to optimize a quadratic loss whose largest Hessian eigenvalue is $\gmax$.

\begin{remark}[Comparison with \citet{wen2025understanding}]\label{rem: gd comparison}
    Theorem~\ref{thm: gd} improves the largest tolerable learning rate $\eta^{\mathrm{GD}}_{\max}$ compared to the original $\gamma/2\gmax^2$ in \citet{wen2025understanding}. 
    The new proofs are in Appendix~\ref{subsec: improved analysis}. 
    The improvement results from a tighter analysis of sharp direction dynamics.
\end{remark}

Given all the results so far, the natural questions are: how does the inclusion of momentum in the optimization iteration affect the dynamics of tracking the river, and how does the additional momentum parameter interplay with the learning rate?
We answer these questions in Section~\ref{sec: main}.

\section{Analysis of GD with Momentum in River-Valley}\label{sec: main}

\subsection{Main Theory: Momentum Acceleration in River-Valley Landscape}\label{subsec: main results}

We investigate \eqref{eq: gd momentum} near the river by assuming that the initialization $w_0$ is already on the river $w_0\in\cM$.
Our main theorem predicts how close and fast \eqref{eq: gd momentum} can track the river manifold.
For ease of presentation, we state an informal version of the main theorem as follows.


\begin{theorem}[GD-momentum in river-valley (informal version of Theorem~\ref{thm: main formal})]\label{thm: main}
    Suppose that Assumptions~\ref{ass: existence} and \ref{ass: regularity} hold.
    For a momentum parameter $\beta\in(0,0.99]$, let $\eta<\eta^{\mathrm{GD}\text{-}\mathrm{M}}_{\max}$, where
    \begin{align}\label{eq:max_lr_gdm}
        \hspace{-5mm}\eta^{\mathrm{GD}\text{-}\mathrm{M}}_{\max} \approx \left(\frac{1+\beta}{1-\beta} -\mathcal{O}_{\beta}\left(\kappa^{1/16}+\gflat\gmax^{-1}\right)\right)\cdot \frac{2}{\gmax}.
    \end{align}
    Define the post-transient iteration $k_\star:=\max\{0,\lceil\log(\beta\eta\gflat/(1-\beta))/\log\beta\rceil\}$.
    Then for \eqref{eq: gd momentum} with initialization $w_0\in\cM$ on the river and initial momentum $m_0=0$, there is a constant time shift $T_0$ such that for any integer $k\geq k_\star$,
    there exists another $T(k)$ s.t. the following two things hold:
    \begin{enumerate}[nosep, leftmargin=6mm]
        \item GD-M stays close to the river: $\|x(T_0 + T(k))-w_k\|_2 = \cO_\beta(\kappa^{3/4}/\gamma)$;
        \item The speed on the river is proportional to the learning rate: $|T(k) - \eta k| = \cO_\beta((\kappa+\eta\gflat)\cdot\eta k)$.
    \end{enumerate}
\end{theorem}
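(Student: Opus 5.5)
The plan is to follow the two-part structure of the vanilla GD analysis (Theorem~\ref{thm: gd}). First we show by induction that the GD-M iterate stays close to the river while its momentum stays close to the river gradient. Then we compare the projection of the iterate onto $\cM$ with the reference flow \eqref{eq: manifold parametrization}.

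\textbf{Decomposition.} At each step $k$, let $x_k\in\cM$ be the nearest river point to $w_k$. Let $\bpf(x_k)=v_dv_d^\top$ project onto the flat direction and $\bps=I-\bpf$ onto the sharp directions. Taylor-expanding $\nabla L$ around $x_k$ (using the third-derivative bound of Assumption~\ref{ass: regularity}) gives
\[
\nabla L(w_k)=\nabla L(x_k)+\nabla^2L(x_k)(w_k-x_k)+\cO\bigl(\kappa\gamma^2\|w_k-x_k\|_2^2/g_{\max}\bigr).
\]
By Assumption~\ref{ass: existence}, the term $\nabla L(x_k)$ is purely flat.

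We track four quantities: the sharp displacement $s_k=\bps(w_k-x_k)$, the sharp momentum $\bps m_k$, the flat momentum, and the flat progress. In the sharp block, the pair $(s_k,\bps m_k)$ evolves, up to error terms, under the linear heavy-ball map
\[
\begin{pmatrix} s\\ m\end{pmatrix}\mapsto
\begin{pmatrix} I-\eta(1-\beta)H & -\eta\beta\\ (1-\beta)H & \beta\end{pmatrix}\begin{pmatrix} s\\ m\end{pmatrix},
\]
where $H$ is the sharp Hessian block, with eigenvalues $\lambda\in(\gamma,\gmax]$. The characteristic polynomial is $z^2-(1+\beta-\eta(1-\beta)\lambda)z+\beta$. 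Its roots lie strictly inside the unit disk exactly when $\eta(1-\beta)\lambda<2(1+\beta)$, i.e.\ $\eta<\frac{1+\beta}{1-\beta}\cdot\frac{2}{\gmax}$. This is the source of the leading factor in \eqref{eq:max_lr_gdm}.

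Staying a margin $\cO_\beta(\kappa^{1/16}+\gflat\gmax^{-1})$ below the threshold gives a contraction rate $\rho<1$. The rate satisfies $1-\rho\gtrsim$ a power of the margin, uniformly over $\lambda\in[\gamma,\gmax]$; the bound $\gamma\ge\kappa^{1/32}\gmax$ keeps the bottom of the spectrum from degrading $\rho$ too much.

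\textbf{Induction (main obstacle).} The matrices $H$ and $\bps$ rotate as $x_k$ moves, so the product of the step maps is not a power of one normal matrix. Moreover, the heavy-ball matrix is non-normal, so its norm can transiently exceed $1$. We handle this with a Lyapunov (energy) norm $\|\cdot\|_{Q(x)}$ built from the eigen-decomposition of the $2\times2$ blocks, in which one step contracts by $\rho$.

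The induction hypothesis is
\[
\|s_k\|_2\le C_\beta\kappa^{3/4}/\gamma,\qquad \|\bps m_k\|_2\le C_\beta\kappa^{3/4},\qquad \bigl|\langle m_k,v_d\rangle-\|\nabla L(x_k)\|_2\bigr|\ \text{small}.
\]
The sharp-block error terms each step come from three sources:
\begin{itemize}
\item rotation of $v_d$ over one step, of order $\kappa\gamma\,\eta\|m\|/g_{\max}$;
\item leakage of the flat momentum into the sharp subspace as the frame spins;
\item the quadratic Taylor remainder.
\end{itemize}
Each is $\cO(\kappa)$ relative to the scale and is absorbed by the $(1-\rho)$ gain. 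Since $1-\rho$ is only polynomially small in the margin, the best achievable deviation is $\kappa^{3/4}$ instead of $\kappa$; this, together with the $\kappa^{1/16}$ margin, is why the exponents appear.

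The flat momentum is an exponential average of $\nabla L$ along the path. It equilibrates after $k_\star$ steps, when $\beta^{k}$ falls below $\eta\gflat(1-\beta)^{-1}\beta$, starting from $m_0=0$. The flat curvature $|\lambda_d|<\gflat$ then keeps it within $\cO(\eta\gflat+\kappa)$ relative error of $\nabla L(x_k)$. Containment in $\cU$ follows from the radius $6g_{\max}/\gamma$ together with the bound $\gamma\le\gmax/200$.

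\textbf{Speed on the river.} Given the invariants, the flat step is
\[
\langle w_{k+1}-w_k,v_d\rangle=-\eta\|\nabla L(x_k)\|_2\,(1+\cO_\beta(\kappa+\eta\gflat)).
\]
Projecting onto $\cM$, whose tangent is $v_d$, each step advances the reference flow by time $\eta(1+\cO_\beta(\kappa+\eta\gflat))$. We define $T(k)$ as the accumulated flow time, so that $x(T_0+T(k))=x_k$; the shift $T_0$ absorbs the transient $k<k_\star$. Summing the per-step errors gives $|T(k)-\eta k|=\cO_\beta((\kappa+\eta\gflat)\eta k)$. The distance bound $\|x(T_0+T(k))-w_k\|_2=\|w_k-x_k\|_2=\cO_\beta(\kappa^{3/4}/\gamma)$ is exactly the induction invariant.
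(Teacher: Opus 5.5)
Your architecture matches the paper's: heavy-ball $2\times 2$ blocks in the sharp directions with a Lyapunov weight, the flat momentum as an exponential average with the $k_\star$ transient, then a projection onto $\cM$ and a clock. Tracking the sharp displacement from a nearest river point, instead of the paper's sharp gradient $\bps(w_k)\nabla L(w_k)$ and gradient-flow projection $\Phi$, would be an acceptable change of coordinates.

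The real gap is the scale of your invariants. You state them in absolute form: $\|s_k\|_2\le C_\beta\kappa^{3/4}/\gamma$ and $\|\bps m_k\|_2\le C_\beta\kappa^{3/4}$, implicitly times $g_{\max}$. The second conclusion, however, is relative. Your argument needs every post-transient step to advance the reference flow by $\eta(1+\cO_\beta(\kappa+\eta\gflat))$, so the errors in the flat step must be small compared with $\eta\|\nabla L(x_k)\|_2$. Under absolute invariants those errors all have absolute size $\mathrm{poly}(\kappa)\,g_{\max}$:
\begin{itemize}
\item sharp momentum leaking through the $\cO(\kappa)$ angle between $v_d$ and the tangent of $\cM$;
\item the flat-gradient error $(\gflat+\gamma\kappa)\|w_k-x_k\|_2$;
\item the Taylor remainder $\kappa\gamma^2\|w_k-x_k\|_2^2/g_{\max}$.
\end{itemize}
Meanwhile $\|\nabla L(x_k)\|_2$ is not bounded below: if $L$ is bounded below, the river gradient cannot stay bounded away from zero along the reference flow. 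So your speed step fails once $\|\nabla L(x_k)\|_2\ll g_{\max}$. The paper's invariant \eqref{eq: induction 4} is relative: $\max\{\|\bps(w_{k+1})\nabla L(w_{k+1})\|_2,\|\bps(w_{k+1})m_{k+1}\|_2\}\le c\,\|\bpf(w_{k+1})\nabla L(w_{k+1})\|_2$.

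Making the invariant relative is not cosmetic. The denominator $\|\bpf(w_k)\nabla L(w_k)\|_2$ can itself shrink by a factor $1-\underline{q}_k$ at every step, with $\underline{q}_k$ of order $\eta\gflat$. The ratio therefore contracts only if the sharp Lyapunov gain beats this shrinkage. This forces you to prove flat-gradient persistence \eqref{eq: induction 2} and the momentum--moving-average bound \eqref{eq: induction 3} jointly with the sharp bound, inside one induction. It is also exactly where the $\gflat\gmax^{-1}$ part of the margin comes from: the $\mathsf{D}_2(\beta)$ term of $\varsigma$, via \eqref{eq: lower bound}. You quote that term but never use it.

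A second missing piece is the cost of your position-dependent norm $\|\cdot\|_{Q(x)}$. $\|Q\|$ grows like an inverse power of the margin ($\kappa^{-1/8}$ in the paper). As a result, perturbations are amplified by $\|Q\|^{1/2}$. Passing from $Q(x_k)$ to $Q(x_{k+1})$ also costs a factor $1+\cO(\|Q\|^2\|\Delta T\|)$, which is the paper's $1+\mathsf{A}(\beta)\kappa^{3/16}$. The gain has to beat this switching cost, not just the $\cO(\kappa)$ errors you list. This bookkeeping, error $\kappa^{15/16}$ against gain $\kappa^{3/16}$, is what actually produces $\kappa^{3/4}$.

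Finally, a gain that is uniform over $\lambda\in[\gamma,\gmax]$ also needs a lower bound on $\eta$, namely the left endpoints of $\cI_1$ and $\cI_4$. The condition $\gamma\ge\kappa^{1/32}\gmax$ alone does not give it.
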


Here we omit details of the error terms to convey the essential message in a clean way.
Although Theorem~\ref{thm: main} is stated for positive momentum, setting $\beta=0$ in the update rule \eqref{eq: gd momentum} reduces it to \eqref{eq: gd}, and the leading learning-rate threshold in \eqref{eq:max_lr_gdm} is consistent with Theorem~\ref{thm: gd}.
We refer to Theorem~\ref{thm: main formal} for the complete statement of the above theorem.
We outline the proof sketch of Theorem~\ref{thm: main} in Section~\ref{sec: proof outline}, with all details in Appendix~\ref{sec: key proofs of main theorem}.
Appendix~\ref{sec: further discussions} provides an ODE interpretation of Theorem~\ref{thm: main}.
In the remainder of this section, we discuss the interpretation of the main theorem, as well as its predictions.

\subsection{Theoretical Predictions by Theorem~\ref{thm: main} and Numerical Verification}\label{subsec: predictions}

For a constant $\beta\in(0,1)$, e.g. $\beta=0.9$ as the default choice used in practice, $\cO_\beta(\cdot)$ is a constant.
Then since $\kappa$ and $\gflat$ are small, the error terms in Theorem~\ref{thm: main} are negligible.

\begin{figure*}[!htbp]
    \centering
    \begin{minipage}[t]{0.32\textwidth}
        \centering
        \includegraphics[height=0.934\linewidth]{figure/river.pdf}
    \end{minipage}
    \hfill
    \begin{minipage}[t]{0.32\textwidth}
        \centering
        \includegraphics[height=0.934\linewidth]{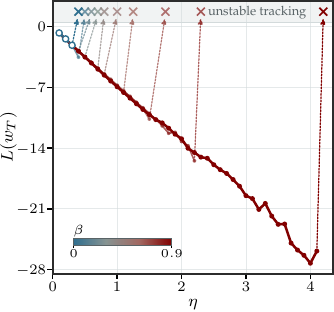}
    \end{minipage}
    \hfill
    \begin{minipage}[t]{0.32\textwidth}
        \centering
        \includegraphics[height=0.934\linewidth]{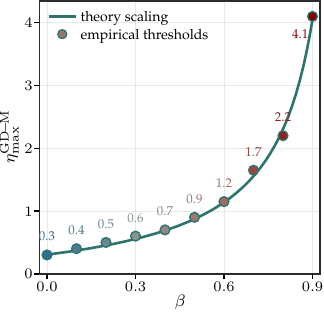}
    \end{minipage}
    \par\vspace{-0.4\baselineskip}
    \begin{minipage}[t]{0.32\textwidth}
        \centering
        {\small\textbf{(a)} Trajectories}
    \end{minipage}
    \hfill
    \begin{minipage}[t]{0.32\textwidth}
        \centering
        {\small\textbf{(b)} Final loss}
    \end{minipage}
    \hfill
    \begin{minipage}[t]{0.32\textwidth}
        \centering
        {\small\textbf{(c)} Stability threshold}
    \end{minipage}
    \caption{\small Numerical verification on
    $L(x_1,x_2)=2(x_2-\sin(x_1))^2-0.1x_1$, initialized at $w_0=(0,1)$.
    \textbf{(a)} During the first $160$ iterations at $\eta=0.4$, GD-M with
    $\beta=0.9$ damps transverse oscillations and advances along the river,
    whereas GD remains in persistent oscillations and makes no sustained
    progress. \textbf{(b)} After $T=1000$, stable runs attain nearly the same
    loss at fixed $\eta$, while larger $\beta$ permits larger stable learning
    rates and hence lower loss. Color encodes $\beta$; arrows point to
    categorical post-threshold markers, not loss values.
    \textbf{(c)} Gradient-colored circles and labels give empirical thresholds; the solid
    curve is $\eta_{\max}^{\mathrm{GD}}(1+\beta)/(1-\beta+0.05)$.}
    \label{fig: experimemnt 1}
\end{figure*}

\paragraph{Heavy-ball momentum: acceleration via stabilization.} 
As discussed in Section~\ref{sec: gd}, a key factor that determines the optimization progress along the river is the largest learning rate that the algorithm can tolerate.
For \eqref{eq: gd}, recall from \eqref{eq:max_lr_gd} that $\eta_{\max}^{\mathrm{GD}}\approx 2/\gmax$.
In contrast, for \eqref{eq: gd momentum}, in the regime of $\gflat \ll \gmax$ (very flat river) and $\kappa \ll 1$ (very slowly spinning river), it follows from \eqref{eq:max_lr_gdm} that the largest learning rate $\eta_{\max}^{\mathrm{GD}\text{-}{\mathrm{M}}}$ is larger than $\eta_{\max}^{\mathrm{GD}}$ by a multiplicative factor of $(1+\beta)/(1-\beta)$.
That is, 
\begin{align}
    \eta^{\mathrm{GD}\text{-}\mathrm{M}}_{\max} \approx \frac{1+\beta}{1-\beta} \cdot \eta^{\mathrm{GD}}_{\max}.\label{eq: predicted largest approximation}
\end{align}
A larger learning rate $\eta$ in-turn results in a faster speed along the river, because by Theorem~\ref{thm: main}, the pace of tracking the river is approximately proportional to $\eta$.

\paragraph{For flat \& slow-spinning river, momentum itself doesn't alter the speed on river much.}
Another interesting prediction by Theorem~\ref{thm: main} is that for a flat and slow spinning river ($\gflat \gmax^{-1}\ll 1$, $\kappa \ll 1$), the speed of tracking the river is not influenced much by the momentum parameter $\beta$ itself.
Indeed, the speed is approximately proportional to the learning rate and is similar to that of \eqref{eq: gd}.
Given that, the benefit of momentum for acceleration of tracking the river is mainly brought by its stabilization effects under an aggressive choice of larger learning rates $\eta$.
But still, if the landscape features a quickly spinning or relatively sharp river, the influences from the factors of $\gflat \eta$ and $\kappa$ are no longer negligible. 
In that case, the factor $\cO_\beta(\cdot)$ would become especially large when $\beta$ is very close to 1, for which large momentum could hurt optimization due to failure of tracking the river closely.
It is an interesting future work to investigate in that regime how momentum works and how the momentum parameter interplays with these geometric properties of the loss landscape in a quantitative manner.

\paragraph{Experiment: a 2-d non-convex loss.} We first verify our theory in a synthetic non-convex loss function $L(x_1,x_2) = 2\cdot(x_2-\sin (x_1))^2 - 0.1\cdot x_1$. 
We run \eqref{eq: gd momentum} with extensive choices of $(\beta,\eta)$ to validate theory. 
Results are in Figure~\ref{fig: experimemnt 1}.
We start from $w_0=(0,1)$, a point near the river, and run \eqref{eq: gd momentum} for $1000$ steps.
In Figure~\ref{fig: experimemnt 1}(a), we plot the first $160$ iterates of \eqref{eq: gd momentum} with $\beta=0.9$ and $\beta=0$ (corresponding to \eqref{eq: gd}), where both use $\eta=0.4$.
We can see \eqref{eq: gd momentum} with $\beta=0.9$ manages to stabilize the trajectory and tracks the river smoothly. 
In contrast, GD continues to oscillate across the valley and makes no sustained progress along the river.
This matches the picture of our theory.

Furthermore, we demonstrate quantitative results.
In Figure~\ref{fig: experimemnt 1}(b), we plot the final loss after $1000$ steps under different $(\beta,\eta)$ pairs.
Here $\beta\in\{0, 0.1,0.2, \ldots,0.8, 0.9\}$, while the learning rate is taken from $0.1$ to the threshold beyond which the iteration loses stable tracking of the river.
Color encodes $\beta$, and arrows connect the stable branches to categorical markers beyond the estimated thresholds; marker heights are not loss values.
We have three key observations from the  results: 
\begin{enumerate}[leftmargin=6mm,nolistsep]
    \item For larger $\beta$, the largest tolerable LR $\eta_{\max}^{\mathrm{GD}\text{-}{\mathrm{M}}}$ is larger.
    \item Given $\beta$, 
    the final loss decreases approximately linearly as the LR increases.
    \item Given a LR $\eta$, for those $\beta$ such that $\eta$ is tolerable, the choice of $\beta$ does not significantly change the final loss.
\end{enumerate} 
All of these results demonstrate our theoretical predictions given by main theorem.
Besides, we test $\beta=0.999$ and it does not converge for any $\eta$, which echos the condition on $\beta$ not being too close to $1$ when the river spins (see previous discussions). 
The detailed constraints on $\beta$ are in Theorem~\ref{thm: main formal}.

Finally, we extract the numerically obtained $\eta_{\max}^{\mathrm{GD}\text{-}{\mathrm{M}}}$ and plot its relation with $\beta$. 
See Figure~\ref{fig: experimemnt 1}(c).
As shown, the gradient-colored empirical thresholds nearly lie on the solid theoretical curve $\eta(\beta) := \eta^{\mathrm{GD}}_{\max}\cdot(1+\beta)/(1-\beta + 0.05)$, matching theoretical prediction \eqref{eq: predicted largest approximation}.
The $0.05$ factor is caused by the spinning and the curvature of the river.

\section{Proof Outline, Key Challenges, and Technical Novelty}\label{sec: proof outline}

In this section we sketch the proof of Theorem~\ref{thm: main}.
We first introduce several key concepts used in the proofs.
Then we outline the main proofs and point out the key challenges.
Then we walk through the key components of the proof, highlighting technical novelties.


\paragraph{Projections onto flat and sharp directions.} Recall that $v_d(\nabla^2 L(w))$ is the river direction of the loss Hessian at $w$.
We abbreviate it as $v_d(w)$.
On the river, $v_d(w)$ aligns with $\nabla L(w)$.
This motivates us to decompose the gradient and the momentum into the projections onto $v_d(w)$ and its orthogonal space. 
To this end, we define projection matrices $\bP_{\mathrm{f}}(w) := v_d(w) v_d(w)^\top$ and $\bP_{\mathrm{s}}(w):= \bI_d- \bP_{\mathrm{f}}(w)$.

\paragraph{Projection onto the river $\cM$.} To formalize the idea of how the iteration $\{w_k\}_{k\in\mathbb{N}}$ tracks the river, we introduce the projection onto the river of any point nearby. 
The definition of it is chosen as the limit of an ODE flow starting from the point to be projected. 
Formally, consider any $w\in\cU$ (the nice set near the river $\cM$, see Assumption~\ref{ass: regularity}). 
We define the ODE flow $\{\phi(w, t):w\in\cU, t\geq 0\}$ as $\phi(w, 0) = w$,
\begin{align}
    \frac{\mathrm{d}}{\mathrm{d}t}\phi(w, t) = -\bps(\phi(w, t))\nabla L(\phi(w, t)),\quad t\geq 0.
\end{align}
As is shown in Lemma~\ref{lem: existence of projection onto river and further properties}, the ODE flow has a limit $\lim_{t\rightarrow+\infty}\phi(w, t)\in\cM$, and we define it as $\Phi(w)$, i.e., the projection of $w$ onto the river. 
Other important properties are listed in Lemma~\ref{lem: existence of projection onto river and further properties}.

\paragraph{Continuous time linear interpolation of trajectory.}
To help characterize how the discrete-time sequence $\{w_k\}_{k\in\mathbb{N}}$ tracks the continuous-time river $\{x(t)\}_{t\geq 0}$, we consider a linear interpolation. 
For any $k\in\mathbb{N}$ and $\tau\in[0,1]$, we let
\begin{align}
    w_{k, \tau} := \tau\cdot w_{k+1} + (1-\tau)\cdot w_k = w_k - \tau\eta\cdot m_{k+1}.\label{eq: interpolation}
\end{align} 
We then obtain a continuous-time process $\{w_{\lfloor t\rfloor, t - \lfloor t\rfloor}\}_{t\geq 0}$.
In the proof, we iteratively show that $\{w_{\lfloor t\rfloor, t - \lfloor t\rfloor}\}_{t\geq 0}\subseteq\mathcal{U}$ to make sure that the projection to the river $\Phi(w_{\lfloor t\rfloor, t - \lfloor t\rfloor})$ is properly defined for any $t\geq 0$\footnote{Concretely, $\cB(w_{\lfloor t\rfloor, t - \lfloor t\rfloor},2g_{\max}/\gamma)\subseteq\mathcal{U}$, $\forall t\geq 0$ to ensure existence of $\Phi(w_{\lfloor t\rfloor, t - \lfloor t\rfloor})$, see Lemma~\ref{lem: existence of projection onto river and further properties}.}.
Thus, such an interpolated continuous-time sequence induces a trajectory on the river, i.e., $\{\Phi(w_{\lfloor t\rfloor, t - \lfloor t\rfloor})\}_{t\geq 0}$.
We initialize the reference flow \eqref{eq: manifold parametrization} at $x(0)=\Phi(w_{0,0})=w_0$ and define $S(t)$ to be the absolute reference-flow time associated with $\Phi(w_{\lfloor t\rfloor, t - \lfloor t\rfloor})$, i.e.,
\begin{align}
    x(S(t)) = \Phi\big(w_{\lfloor t\rfloor, t - \lfloor t\rfloor}\big).\label{eq: x S t phi}
\end{align}


\subsection{Proof Outline and Key Challenges}\label{subsec: outline}

\paragraph{Proof outline.}
    With the above setups, we consider the following route of proof:
\begin{enumerate}[nosep, leftmargin=6mm]
    \item We first prove that for \eqref{eq: gd momentum} 
    the gradient in the flat direction dominates the gradient and the momentum in the sharp directions, and the maximal learning rate is limited by the dynamics in sharp directions so that the iterates in the sharp directions do not explode.
    \item Then, we are able to prove that \eqref{eq: gd momentum} always stays close to the river. In more specific, we show that $w_k$ stays close to its projection onto the river $\Phi(w_k) $. 
    \item Finally, we prove that the absolute time map $S(t)$ grows nearly proportionally to $\eta$, which completes the proof.
\end{enumerate}

\paragraph{Key challenges.}
The essential challenge is to establish the conditions under which the iteration in such a non-convex landscape does not explode, and how the  momentum stabilizes the training process.
This requires a delicate analysis of the dynamics of both the gradient and the momentum in the river direction and the sharp directions. 
But due to the river spinning, the projections of the gradient and the momentum onto these directions interfere with each other in a complicated manner, necessitating a careful induction argument to clearly track all of their dynamics.



\subsection{Key Lemmas and Technical Novelties}


\paragraph{Analysis of projections onto flat and sharp directions.}
We first prove that the dynamics in the flat direction dominate in the sense that the gradient and momentum mostly live in the flat direction.
Also, the dynamics in the sharp direction determine the maximal tolerable learning rate such that the updates do not explode.
This is the following lemma.

\begin{lemma}[Projections onto flat and sharp directions (informal  and short version of Lemma~\ref{lem: flat dominance momentum})]\label{lem: flat dominance momentum main}
    Under the conditions and assumptions of Theorem~\ref{thm: main}, taking the step size $\eta$ and momentum parameter $\beta$ satisfying $\eta\leq \eta^{\mathrm{GD}\text{-}\mathrm{M}}_{\max}$ as defined in Theorem~\ref{thm: main}, for any iteration $k\in\NN$ and $\tau\in[0,1]$, it holds that
    \begin{align}
        \max\Big\{\|\bps(w_{k,\tau})\nabla L(w_{k,\tau})\|_2, \|\bps(w_{k,\tau}) m_{k+1}\|_2\Big\}\leq \cO_{\beta}\big(\kappa^{3/4} \cdot \|\bpf(w_{k,\tau})\nabla L(w_{k,\tau})\|_2\big),
    \end{align}
    where $\kappa$ characterizes the slow spinning speed of the river (item 3 of Assumption~\ref{ass: regularity}).
\end{lemma}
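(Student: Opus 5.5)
We would prove Lemma~\ref{lem: flat dominance momentum main} by a single strong induction on $k$. The inductive hypothesis at step $k$ bundles four facts: the interpolated trajectory $\{w_{j,\tau}\}_{j<k}$ stays in $\cU$; the sharp parts of the gradient and of the momentum are bounded by $C_\beta\kappa^{3/4}$ times the flat gradient; the flat gradient varies slowly, i.e.\ $\|\bpf(w_{j+1})\nabla L(w_{j+1})\|_2$ is within a factor $1\pm\cO_\beta(\kappa+\eta\gflat)$ of $\|\bpf(w_j)\nabla L(w_j)\|_2$; and the momentum's flat part is close to $\bpf\nabla L$ up to a geometrically decaying transient from $m_0=0$. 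We work with the pair $(g_k,m_k)$, where $g_k:=\nabla L(w_k)$. Taylor expanding $g_{k+1}=g_k-\eta H_k m_{k+1}+R_k$ with $H_k=\nabla^2L(w_k)$ and $\|R_k\|_2\le \tfrac12\|\nabla^3 L\|_{\mathrm{Op}}\eta^2\|m_{k+1}\|_2^2$ gives a linear recursion for the stacked vector $(g_k,m_k)$ plus a remainder.

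\textbf{Step 1 (sharp block).} We project the recursion onto the sharp eigenspace of $H_k$ and consider the $2\times2$ heavy-ball companion matrix
\[
\begin{pmatrix}\beta & 1-\beta\\ -\eta\lambda\beta & 1-\eta\lambda(1-\beta)\end{pmatrix}
\]
acting on $(\bps m,\bps g)$ for each eigenvalue $\lambda\in[\gamma,\gmax]$. Its characteristic polynomial is $z^2-(1+\beta-\eta\lambda(1-\beta))z+\beta$, so the spectral radius is below $1$ if and only if $\eta\lambda(1-\beta)<2(1+\beta)$. This is exactly where the threshold $\eta<\frac{1+\beta}{1-\beta}\frac{2}{\gmax}$ comes from. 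Backing off by $\cO_\beta(\kappa^{1/16})$ gives a uniform contraction. Because the matrix is non-normal, we use a fixed $\beta$-dependent similarity transform to obtain a norm $\|\cdot\|_\star$ in which one step contracts by a factor $1-\delta$, with $\delta\gtrsim_\beta\kappa^{1/16}$. The lower bound $\gamma\ge\kappa^{1/32}\gmax$ is what keeps this transform uniformly conditioned over all $\lambda\in[\gamma,\gmax]$.

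\textbf{Step 2 (source terms).} Several effects feed sharp mass into the sharp block each step: the rotation of $v_d$ between $w_k$ and $w_{k+1}$, of size $\|\nabla v_d\|_{\mathrm{Op}}\eta\|m_{k+1}\|_2\le \kappa\gamma\eta\|m\|_2/2g_{\max}$; the Hessian change; the remainder $R_k$; and the cross term $\bps H_k\bpf$, which is controlled by $\nabla^3L$. By induction, each source contributes at most $\cO_\beta(\kappa)\cdot\|\bpf g_k\|_2$. Here we use $\eta\gmax=\cO_\beta(1)$ and the flat dominance at earlier steps, which gives $\|m\|_2\lesssim \|\bpf g\|_2$. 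Summing the geometric series with contraction $1-\delta$ then gives
\[
\|\bps m\|_\star+\|\bps g\|_\star\lesssim_\beta \frac{\kappa}{\delta}\,\|\bpf g\|_2\le \kappa^{1-1/16}\,\|\bpf g\|_2\le\kappa^{3/4}\,\|\bpf g\|_2.
\]
This step needs the flat gradient to change slowly, so that past flat gradients can be compared with the current one. We get this from the flat block: the flat curvature is $|\lambda_d|<\gflat$, so the flat gradient evolves as $\bpf g_{k+1}\approx(1-\cO(\eta\gflat))\bpf g_k$ plus rotation errors. Over the $\sim\delta^{-1}$ steps of memory in the sum, the flat gradient changes by at most $\cO_\beta(\kappa^{1-1/16}+\eta\gflat\delta^{-1})$, which is acceptable because $\gflat\le\kappa^{1/2}\gmax$.

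\textbf{Step 3 (interpolated points and closing).} To pass from $w_k$ to $w_{k,\tau}$, we Taylor expand once more. The gradient at $w_{k,\tau}$ differs from $g_k$ by $\tau\eta H m_{k+1}$, whose sharp part is already small by Step 2. Its flat part is $\cO(\eta\gflat)\|m\|_2$ plus rotation errors, and the projectors move by $\cO(\kappa)$. Containment in $\cU$ follows because the sharp displacement stays $\cO_\beta(\kappa^{3/4}/\gamma)$; this uses item~1 of Assumption~\ref{ass: regularity}. The transient from $m_0=0$ is covered by the same Step 1 computation, since at $k=0$ we have $w_0\in\cM$, so $\bps g_0=0$.

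\textbf{Main obstacle.} We expect Step 1 to be the hardest part, combined with Step 2. The heavy-ball matrix near the threshold is non-normal, with transient growth of order $1/(1-\beta)$, and its eigenvectors vary with $\lambda$ and with $k$ as the Hessian rotates. Obtaining a single Lyapunov norm that contracts uniformly, while tolerating the $\cO(\kappa)$ coupling with the flat block, is what forces the $\kappa^{1/16}$ loss in the threshold. Our first attempt would be the explicit quadratic Lyapunov form obtained from the companion matrix's eigen-decomposition, kept only for $\lambda$ bounded away from the threshold.
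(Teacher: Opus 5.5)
Your architecture is the paper's. Your four bundled facts are its induction conditions (I1)--(I4): sharp control, persistence of $\|\bpf(w_k)\nabla L(w_k)\|_2$, closeness of $\bpf(w_k)m_{k+1}$ to $(1-\beta)\sum_{j=0}^{k}\beta^j\bpf(w_k)\nabla L(w_k)$, and flat dominance. Likewise, the $2\times 2$ heavy-ball blocks with the Schur test give the threshold $2(1+\beta)/(1-\beta)$, spinning sources of relative size $\mathcal{O}(\eta\gamma\kappa)$ are divided by the contraction, and $w_{k,\tau}$ is handled by one more Taylor step. The gap sits exactly at the obstacle you flag, and the construction you propose there would fail, for two reasons.

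\emph{No fixed norm exists.} No $\lambda$-independent quadratic norm can work. For small $\kappa$, the values $\eta\lambda_{k,i}$ the lemma must cover approach both ends of $(0,2(1+\beta)/(1-\beta))$ (take $\gamma=\kappa^{1/32}\gmax$ and $\eta$ near the top of $\mathcal{I}(\beta)$). In the limit, a common $Q\succ 0$ that is non-expanding for the end blocks must make $(1,1)$ and $(1,0)$ $Q$-orthogonal; these are the eigenvectors, in $(m,g)$ coordinates, of the $\eta\lambda=0$ block (eigenvalues $1,\beta$). It must likewise make $(1-\beta,-(1+\beta))$ and $(1-\beta,-2\beta)$ $Q$-orthogonal, the eigenvectors of the threshold block (eigenvalues $-1,-\beta$). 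These two linear conditions force $(1-\beta)Q_{11}+\beta Q_{22}=0$, which is impossible. So the norm must depend on $\lambda_{k,i}$ and on the eigenbasis of $\bps(w_k)\nabla^2L(w_k)\bps(w_k)$, hence on $k$. Your Step~2 sums a geometric series with a fixed factor $1-\delta$; that is unjustified until you bound how much the norm itself moves per step relative to $\delta$.

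\emph{The eigen-decomposition norm breaks down inside the range.} Building the $\lambda$-dependent norm from the companion matrix's eigen-decomposition fails in the interior of the stable range, not at its edge. At the critical-damping values $\eta\lambda=(1\pm\sqrt{\beta})^2/(1-\beta)$, the block has a double eigenvalue and is a nontrivial Jordan block. The eigenvector transform is therefore arbitrarily ill-conditioned nearby, and nothing prevents $\eta\lambda_{k,i}$ from sitting there.

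The missing ingredient is the paper's choice of weight and its drift estimate. For each block it takes the solution of the discrete Lyapunov equation $\boldsymbol{T}_{k,i}^\top\boldsymbol{P}_{k,i}\boldsymbol{T}_{k,i}=\boldsymbol{P}_{k,i}-\boldsymbol{I}_2$, i.e.\ $\boldsymbol{P}_{k,i}=\sum_{j\geq 0}(\boldsymbol{T}_{k,i}^\top)^j\boldsymbol{T}_{k,i}^j$. This is an explicit rational function of $\eta\lambda$, smooth on the whole stability interval, critical damping included. It yields the squared contraction $1-1/\mathrm{Tr}(\boldsymbol{P}_{k,i})$ and $\|\boldsymbol{P}_{k,i}\|_{\mathrm{Op}}\leq\mathrm{Tr}(\boldsymbol{P}_{k,i})\leq(16(1-\beta)^{-3}+6)\kappa^{-1/8}$, once $\eta\lambda\geq\kappa^{1/16}$ and $2(1+\beta)-(1-\beta)\eta\lambda\geq\kappa^{1/16}$. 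The first of these uses the lower bound $\eta>\kappa^{1/32}/\gmax$ in $\mathcal{I}_1$, not only $\gamma\geq\kappa^{1/32}\gmax$. Your Step~2 needs it too, when it bounds $\eta\gamma$ by a constant and then divides by a $\delta$ of order $\kappa^{1/16}$.

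The drift is controlled on the full $2d\times 2d$ operators, never through individual eigenvectors. Set $\widetilde{\boldsymbol{P}}_k:=\boldsymbol{V}_k\boldsymbol{O}\boldsymbol{P}_k\boldsymbol{O}^\top\boldsymbol{V}_k^\top$ and $\mathcal{L}_k(\boldsymbol{X}):=\boldsymbol{X}-\boldsymbol{T}_k^\top\boldsymbol{X}\boldsymbol{T}_k$. Then $\widetilde{\boldsymbol{P}}_k-\widetilde{\boldsymbol{P}}_{k-1}=\mathcal{L}_k^{-1}\big(\boldsymbol{T}_k^\top\widetilde{\boldsymbol{P}}_{k-1}\boldsymbol{T}_k-\boldsymbol{T}_{k-1}^\top\widetilde{\boldsymbol{P}}_{k-1}\boldsymbol{T}_{k-1}\big)$ and $\|\mathcal{L}_k^{-1}(\boldsymbol{X})\|_{\mathrm{Op}}\leq\|\boldsymbol{X}\|_{\mathrm{Op}}\|\widetilde{\boldsymbol{P}}_k\|_{\mathrm{Op}}$. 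Together with $\|\boldsymbol{T}_k-\boldsymbol{T}_{k-1}\|_{\mathrm{Op}}=\mathcal{O}_\beta(\kappa)$, this gives $\|\widetilde{\boldsymbol{P}}_k-\widetilde{\boldsymbol{P}}_{k-1}\|_{\mathrm{Op}}\leq\mathsf{A}(\beta)\kappa^{3/4}$, regardless of critical damping or crossings among the sharp eigenvalues.

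The induction then closes because the contraction $\tfrac{1}{2}\big(\max_i\mathrm{Tr}(\boldsymbol{P}_{k,i})\big)^{-1}$ beats this drift plus the $\mathcal{O}(\eta\gflat)$ per-step decay of the flat gradient, with a margin $\kappa^{3/16}$. This is the condition $\max_i\mathrm{Tr}(\boldsymbol{P}_{k,i})\leq 1/\varsigma$ defining $\mathcal{I}_4$. Spinning sources of size $\kappa^{15/16}$ in the weighted norm, divided by that margin, give the $\kappa^{3/4}$.
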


We refer the readers to Lemma~\ref{lem: flat dominance momentum} in Appendix~\ref{subsec: proof projection onto flat and sharp} for the full version of Lemma~\ref{lem: flat dominance momentum main} as well as the detailed proofs.
One of the main technical novelty here is a delicate induction argument to separately track the magnitude of the projections of the gradient and the momentum onto different directions in the river-valley landscape. 
Compared with the corresponding result for vanilla GD~(Lemma~\ref{lem: sharp dominance}), momentum enlarges the maximum learning rate by a multiplicative factor of $(1+\beta)/(1-\beta)$.
This is the key reason behind the larger learning rate blessed by momentum in river-valley.


\paragraph{GD with momentum tracks the river closely.}
Then we establish that the GD momentum trajectory can track the river closely by the following lemma.
It demonstrates that the interpolated continuous time trajectory $\{w_{\lfloor t\rfloor, t - \lfloor t\rfloor}\}_{t\geq 0}$ has a well-defined projection onto the river and the points in the trajectory are not far away from their projected counterparts.
See proofs in Appendix~\ref{subsec: proof gd momentum tracks the river closely}.

\begin{lemma}[The trajectory tracks the river closely (informal and short version of Lemma~\ref{lem: gd momentum tracks the river closely formal})]\label{lem: gd momentum tracks the river closely}
    Under the conditions and assumptions of Theorem~\ref{thm: main}, taking the learning rate $\eta$ and momentum parameter $\beta$ satisfying $\eta\leq \eta^{\mathrm{GD}\text{-}\mathrm{M}}_{\max}$ as defined in Theorem~\ref{thm: main}, then for any iteration $k\in\NN$ and $\tau\in[0,1]$ it holds that: (i) $\cB(w_{k,\tau},2g_{\max}/\gamma)\subset\cU$ and thus $\Phi(w_{k,\tau})$ exists; (ii) it holds that for any step $k\in\NN$ and $\tau\in[0,1]$,
    \begin{align}
    \|w_{k,\tau}  - \Phi(w_{k,\tau} )\|_2   \leq \cO_{\beta}\left(\frac{\kappa^{3/4} \cdot \|\bpf(w_{k,\tau} )\nabla L(w_{k,\tau} )\|_2}{\gamma + 2\gamma_{\mathrm{flat}}}\right).
\end{align}
\end{lemma}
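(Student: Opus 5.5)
We argue by joint induction over the interpolated time $t=k+\tau$, together with Lemma~\ref{lem: flat dominance momentum main}. The induction hypothesis at time $t$ has two parts: $\cB(w_{k,\tau},2g_{\max}/\gamma)\subset\cU$, and the sharp-direction bounds of Lemma~\ref{lem: flat dominance momentum main}. At $t=0$ we have $w_0\in\cM$, so $\Phi(w_0)=w_0$ and containment holds by item~1 of Assumption~\ref{ass: regularity}. Part (ii) will be a pointwise consequence of the sharp-gradient bound, applied wherever $\Phi$ is defined. Part (i) will then follow from (ii) plus a short continuity step.

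\textbf{Step 1: distance to the river is controlled by the sharp gradient.} Fix $w$ with $\cB(w,2g_{\max}/\gamma)\subset\cU$, so $\Phi(w)$ exists by Lemma~\ref{lem: existence of projection onto river and further properties}. Write $g(s):=\bps(\phi(w,s))\nabla L(\phi(w,s))$. Differentiating along the flow gives
\begin{align}
\tfrac{\mathrm{d}}{\mathrm{d}s}g=-\bps\nabla^2L\,\bps\nabla L+(\text{spinning terms}).
\end{align}
The spinning terms involve $\nabla v_d$ and $\nabla^3L$, contracted against $\bps\nabla L$ and $\bpf\nabla L$. By the eigen-gap, $\bps\nabla^2 L\bps\succeq(\gamma+4\gflat)\bps$ on the sharp subspace. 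By item~3 of Assumption~\ref{ass: regularity}, the spinning terms are at most $\cO(\kappa\gamma/g_{\max})\cdot\|\bpf\nabla L\|_2\|\bps\nabla L\|_2\le \cO(\kappa\gamma)\|g\|_2$. The $\gflat$-margin absorbs these, using $\kappa\gamma\le\gflat$, which gives
\begin{align}
\tfrac{\mathrm{d}}{\mathrm{d}s}\|g(s)\|_2\le-(\gamma+2\gflat)\|g(s)\|_2.
\end{align}
Integrating, $\|w-\Phi(w)\|_2\le\int_0^\infty\|g(s)\|_2\,\mathrm{d}s\le\|g(0)\|_2/(\gamma+2\gflat)$. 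This estimate is presumably already part of Lemma~\ref{lem: existence of projection onto river and further properties}, which I would invoke directly if so. Plugging in $w=w_{k,\tau}$ and the sharp-gradient bound $\|\bps(w_{k,\tau})\nabla L(w_{k,\tau})\|_2\le\cO_\beta(\kappa^{3/4}\|\bpf(w_{k,\tau})\nabla L(w_{k,\tau})\|_2)$ from Lemma~\ref{lem: flat dominance momentum main} gives exactly (ii).

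\textbf{Step 2: containment propagates.} Suppose the hypothesis holds up to time $t$. By (ii) and $\|\nabla L\|_2\le g_{\max}$, we get $\|w_{k,\tau}-\Phi(w_{k,\tau})\|_2\le\cO_\beta(\kappa^{3/4})g_{\max}/\gamma$, and $\Phi(w_{k,\tau})\in\cM$. Each step moves the iterate by $\eta\|m_{k+1}\|_2\le\eta g_{\max}$, because $m_{k+1}$ is a convex combination of past gradients (recall $m_0=0$). Since $\eta\lesssim\frac{1+\beta}{1-\beta}\cdot\frac{2}{\gmax}$ and $\gamma\le\gmax/200$, this gives
\begin{align}
\eta g_{\max}\le\cO\Big(\tfrac{1}{1-\beta}\Big)\cdot\tfrac{g_{\max}}{100\,\gamma}.
\end{align}
This is the one place where the constant $1/200$ is used. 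Any point at interpolated time $t'\in[t,t+1]$ is therefore within $\big(\cO_\beta(\kappa^{3/4})+\text{small}+2\big)g_{\max}/\gamma$ of a river point, which is below $6g_{\max}/\gamma$ once $\kappa$ is small enough. Item~1 of Assumption~\ref{ass: regularity} then gives $\cB(w_{t'},2g_{\max}/\gamma)\subset\cU$, so $\Phi$ is defined on the next segment. Lemma~\ref{lem: flat dominance momentum main} (its full version, proved by the same induction) then supplies the sharp bounds there, and Step~1 closes the induction.

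\textbf{Main obstacle.} The hard part is avoiding circularity between the two lemmas. The sharp-gradient bound needs the iterates in $\cU$, and containment needs the sharp bound. I would handle this with a single induction over $k$, carrying both statements simultaneously and using continuity in $\tau$ within each segment. The $\cO_\beta$ factor in the step length must also be checked to be beaten by the $1/200$ margin, possibly absorbing $1/(1-\beta)$ through $\beta\le0.99$, so this is where I expect constants to need care.
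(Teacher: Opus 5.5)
Your proposal is correct and matches the paper's proof. Part (ii) is item~2 of Lemma~\ref{lem: existence of projection onto river and further properties} combined with the sharp-gradient bound of Lemma~\ref{lem: flat dominance momentum}; part (i) is the same induction, using $\eta\|m_{k+1}\|_2\le 400g_{\max}/\gmax\le 2g_{\max}/\gamma$ (via $\beta\le 0.99$ and $\gamma\le\gmax/200$) to place $\cB(w_{k,\tau},2g_{\max}/\gamma)$ inside $\cB(\Phi(w_k),6g_{\max}/\gamma)\subset\cU$.

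The only difference is in the containment step. The paper bounds $\|w_k-\Phi(w_k)\|_2\le 2g_{\max}/\gamma$ crudely via $\|\bps(w_k)\nabla L(w_k)\|_2\le g_{\max}$ rather than via (ii), so containment never uses the flat-dominance lemma and the circularity you flag does not arise.
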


\paragraph{Time index grows almost linearly w.r.t. learning rate.}
Finally, with the previous results, we show that the absolute time map $\{S(t)\}_{t\geq 0}$ defined in \eqref{eq: x S t phi} grows approximately linearly in $t$, with slope approximately equal to the learning rate $\eta$. See Appendix~\ref{subsec: proof lem time index grows almost linearly with learning rate} for the proof.

\begin{lemma}[Time index grows almost linearly with learning rate (informal and short version of Lemma~\ref{lem: time index grows almost linearly with learning rate formal})]\label{lem: time index grows almost linearly with learning rate}
    Under the conditions and assumptions of Theorem~\ref{thm: main}, taking the learning rate $\eta$ and momentum parameter $\beta$ satisfying $\eta\leq \eta^{\mathrm{GD}\text{-}\mathrm{M}}_{\max}$ as defined in Theorem~\ref{thm: main}, then for any iteration $k\geq \log(\beta\eta\gflat/(1-\beta))/\log\beta$ and $\tau\in(0,1)$, by letting $t=\tau + k$, the derivative of $S(t)$ exists and satisfies $|S'(t) - \eta|\leq \epsilon(\beta)\cdot\eta$.
    Equivalently, this bound holds almost everywhere on each interpolation interval, with the integer breakpoints understood through one-sided derivatives.
    Here  $\epsilon(\beta)$ is defined as $ \epsilon(\beta):=\cO_{\beta}(\kappa + \eta\gflat)$.
\end{lemma}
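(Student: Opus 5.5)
Our plan is to differentiate the defining relation $x(S(t))=\Phi(w_{\lfloor t\rfloor,t-\lfloor t\rfloor})$ in $t$ on each open interval $t=k+\tau$, $\tau\in(0,1)$, where the interpolation is affine: $\frac{\mathrm{d}}{\mathrm{d}t}w_{k,\tau}=-\eta m_{k+1}$. By the chain rule,
\begin{align}
    S'(t)\cdot\frac{\mathrm{d}}{\mathrm{d}s}x(s)\Big|_{s=S(t)} = -\eta\,\partial\Phi(w_{k,\tau})\,m_{k+1},
\end{align}
and $\frac{\mathrm{d}}{\mathrm{d}s}x(s)=-\bP_{\cM}(x)\nabla L(x)$. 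Existence of $\Phi(w_{k,\tau})$ and its differentiability on $\cU$ come from Lemma~\ref{lem: gd momentum tracks the river closely} together with Lemma~\ref{lem: existence of projection onto river and further properties}. Since $\cM$ is one-dimensional and $\nabla L(x)$ is nonzero and aligned with $v_d(x)$ (Assumption~\ref{ass: existence}), we can take inner products with $v_d(x)$. This gives
\[
S'(t)=\eta\,\frac{\langle v_d(x),\partial\Phi(w_{k,\tau})m_{k+1}\rangle}{\langle v_d(x),\nabla L(x)\rangle},
\]
so the task reduces to showing that the numerator equals the denominator up to a relative error $\epsilon(\beta)$.

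The key steps, in order, are as follows.

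\textbf{(i)} Use the properties of $\Phi$ from Lemma~\ref{lem: existence of projection onto river and further properties}. The Jacobian $\partial\Phi(w)$ at a point $w$ within distance $\cO(\kappa^{3/4}/\gamma)\cdot\|\bpf\nabla L\|$ of the river should be close to $\bP_{\cM}(\Phi(w))$. The error should be $\cO(\kappa)$ in the flat direction and $\cO(\kappa)$ for sharp inputs, controlled through the spinning bound $\|\nabla v_d\|\le\kappa\gamma/2g_{\max}$ and $\|\nabla^3L\|$.

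\textbf{(ii)} Decompose $m_{k+1}=\bpf m_{k+1}+\bps m_{k+1}$. By Lemma~\ref{lem: flat dominance momentum main}, the sharp part is $\cO_\beta(\kappa^{3/4})\|\bpf\nabla L(w_{k,\tau})\|$. After multiplication by the $\cO(\kappa)$ off-diagonal part of $\partial\Phi$, it becomes a lower-order relative error.

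\textbf{(iii)} Compare $\bpf m_{k+1}$ with $\bpf\nabla L(x(S(t)))$. Unroll the momentum as
\[
m_{k+1}=\beta^{k+1}m_0+(1-\beta)\sum_{j\le k}\beta^{k-j}\nabla L(w_j).
\]
With $m_0=0$, the truncation error $\beta^{k+1}$ is at most $\beta\eta\gflat/(1-\beta)$ once $k\ge k_\star$; this is precisely the role of $k_\star$. For each $j$, the flat gradient $\langle v_d,\nabla L(w_j)\rangle$ differs from that at the current point by a relative amount. This amount is governed by the flat curvature $|\lambda_d|<\gflat$ times the displacement $\approx\eta(k-j)\|m\|$, plus spinning terms of order $\kappa$. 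Summing against the weights $(1-\beta)\beta^{k-j}$ gives a relative error $\cO(\eta\gflat\sum_i i\beta^i(1-\beta)+\kappa)=\cO_\beta(\eta\gflat+\kappa)$. Finally, replace $\nabla L(w_{k,\tau})$ by $\nabla L(\Phi(w_{k,\tau}))$ using Lemma~\ref{lem: gd momentum tracks the river closely}. The flat component changes by at most $\gflat\cdot\mathrm{dist}$ plus $\cO(\|\nabla^3L\|\,\mathrm{dist}^2)$ and rotation terms, which are again relatively $\cO(\kappa)$.

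\textbf{Main obstacle.} We expect step (iii) to be the main obstacle. The ratio of flat gradients across past iterates must be controlled multiplicatively, and the flat gradient can change direction or sign slowly as the river spins. We would try first to establish a one-step bound $\|\bpf\nabla L(w_{j+1})\|/\|\bpf\nabla L(w_j)\|\in[1-c,1+c]$ with $c=\cO(\eta\gflat+\kappa\eta\gamma)$. We would then chain these bounds geometrically, using $\beta\le0.99$ to absorb the resulting polynomial-in-$1/(1-\beta)$ factors into $\cO_\beta$. At integer breakpoints, the same computation applied to the one-sided derivatives gives the stated almost-everywhere version.
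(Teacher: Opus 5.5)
Your plan is correct and follows essentially the paper's route. Both proofs differentiate $x(S(t))=\Phi(w_{k,\tau})$, replace $\nabla\Phi(w_{k,\tau})m_{k+1}$ by $\bpf(w_{k,\tau})m_{k+1}$ up to a relative $\cO(\kappa)$ (Lemma~\ref{lem: change of projection to the river}), compare the flat momentum with the current flat gradient using $\beta^{k+1}\le\beta\eta\gflat/(1-\beta)$ for $k\ge k_\star$, and pass from $w_{k,\tau}$ to $\Phi(w_{k,\tau})$ through closeness to the river (Lemma~\ref{lem: tangent vector flat direction}). Your scalar projection onto $v_d(x)$, where the paper uses a vector triangle inequality, is an inessential difference; however, your denominator should be $\langle v_d(x),\bP_{\cM}(x)\nabla L(x)\rangle$, which differs from $\langle v_d(x),\nabla L(x)\rangle$ only by a relative $\cO(\kappa^2)$ via Lemma~\ref{lem: tangent direction}.

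The step you flag as the main obstacle is not re-proved inside this lemma in the paper. It is imported from \eqref{eq: flat dominance conclusion 1}--\eqref{eq: flat dominance conclusion 2} of the full Lemma~\ref{lem: flat dominance momentum}, i.e.\ the endpoint induction \eqref{eq: induction 2}--\eqref{eq: induction 3} with $b_k\le\frac{24\beta\mathsf{C}(\beta)}{1-\beta}\eta\gflat$. The geometric chaining you describe converges because $\eta\in\cI_3$ forces $\beta(1+q(\beta))<1$, not merely because $\beta\le 0.99$.
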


Integrating this derivative estimate over the post-transient interpolation intervals and absorbing the finite transient into the constant time shift $T_0$ gives the cumulative time-alignment bound in Theorem~\ref{thm: main formal}.

Combining Lemma~\ref{lem: gd momentum tracks the river closely} and Lemma~\ref{lem: time index grows almost linearly with learning rate} gives Theorem~\ref{thm: main}.

\section{Experiments on Language Model Training}\label{sec: llm experiment}

\paragraph{Experimental setup.}
We train GPT-Neo \citep{gao2020pile} (adjusted to about 20 million parameters) on the TinyStories dataset \citep{eldan2023tinystories}, containing about 0.5 billion tokens.
The training split contains about $2.2$ million examples, and the validation split contains about $22000$ examples.
We use batch size $32$ and context window size $256$.
We conduct two groups of experiments: one with SGD-Momentum, which is closest to our theory, and one with Adam, which tests whether the same mechanism appears in practical language model training.
For the Adam experiments, we train for $1$ epoch, fix $\beta_2=0.95$, and vary the learning rate $\eta$ and the first-order moment parameter $\beta_1$.
For the SGD-M experiments, we use the same model, data processing, and training protocol, but replace Adam by stochastic gradients with heavy-ball momentum and vary the momentum parameter $\beta$.
For both optimizers, we warm up the learning rate linearly for the first $100$ steps, and for the last $15\%$ of training steps, we decay the learning rate to $0.0001$ to reduce the oscillations between the valley and reveal the underlying progress along the river, similar to the WSD scheduler~\citep{hu2024minicpm,wen2025understanding}.
Additional experiment details and full grids are provided in Appendix~\ref{sec: experiment appendix}.

\paragraph{SGD-M: direct evidence for the learning-rate stabilization mechanism.}
We first test SGD-M, which removes Adam preconditioning and is closest to the optimizer analyzed in Theorem~\ref{thm: main}.
For each momentum parameter $\beta$, we grid search the largest learning rate under which the training loss curve does not diverge.
Figure~\ref{fig:sgd-m-sweep} shows that the largest tolerable learning rate $\eta_{\max}^{\mathrm{SGD}\text{-}\mathrm{M}}$ increases with $\beta$, matching the increasing trend predicted by \eqref{eq: predicted largest approximation}.
For visualization, we fit the adjusted scaling law $\eta_{\max}(q)=\eta_0(1+a+q)/(1+b-q)$ with $q=\beta$.
Here $\eta_0$ sets the overall scale, $a$ adds a constant numerator offset, and $b$ shifts the denominator away from its idealized singularity at $q=1$; together, these empirical corrections absorb non-asymptotic effects from river spinning, river curvature, and stochastic gradient noise.
They are descriptive fit parameters rather than additional constants predicted by Theorem~\ref{thm: main}.
Thus, we interpret this sweep as empirical support for the scaling trend rather than an exact realization of the idealized formula.

Table~\ref{tab:hp_loss} gives a complementary fixed-grid comparison.
With the large learning rate $\eta=1$, training without momentum diverges, while training with $\beta=0.9$ or $\beta=0.95$ converges to low training and validation losses.
In contrast, under the same stable learning rate $\eta=0.1$, changing $\beta$ from $0.9$ to $0.95$ only has a marginal effect on the final losses.
This supports the message of our theory that momentum accelerates primarily by stabilizing larger learning rates, rather than by substantially changing the river-direction speed at a fixed learning rate.

\paragraph{Adam: the same stabilization mechanism appears in practical training.}
We next test whether the same qualitative behavior appears for Adam, a standard optimizer for language models.
\par\noindent
We again first grid search the learning rate $\eta$ under different momentum $\beta_1$ for the largest learning rate such that the training loss curve does not diverge.
As shown in Figure~\ref{fig:adam-lr-sweep}, the largest tolerable learning rate under Adam increases as $\beta_1$ grows, which is consistent with the stabilization mechanism predicted by our theory.
When $\beta_1$ approaches $1$, we also observe a decrease in the largest tolerable learning rate, which is interpreted by the role of river curvature and spinning discussed in Section~\ref{subsec: predictions}.
The teal curve uses the same descriptive scaling fit with $q=\beta_1$ to summarize this trend; it is not a theoretical prediction for Adam.

\begin{figure}[t]
  \centering
  \captionsetup{font=footnotesize, skip=2pt}
  \begin{minipage}[t]{0.31\linewidth}
    \vspace{0pt}
    \centering
    \captionsetup{width=\linewidth}
    \begin{minipage}[c][1.55in][c]{\linewidth}
      \centering
      \includegraphics[width=\linewidth]{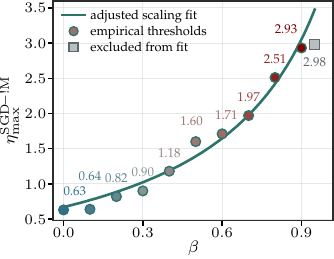}
    \end{minipage}
    \vspace{-1mm}
    \caption{SGD-M thresholds and adjusted scaling fit, with $(\eta_0,a,b)=(0.60,0.54,0.38)$.}
    \label{fig:sgd-m-sweep}
  \end{minipage}
  \hfill
  \begin{minipage}[t]{0.31\linewidth}
    \vspace{0pt}
    \centering
    \captionsetup{width=\linewidth}
    \begin{minipage}[c][1.55in][c]{\linewidth}
      \centering
      \includegraphics[width=\linewidth]{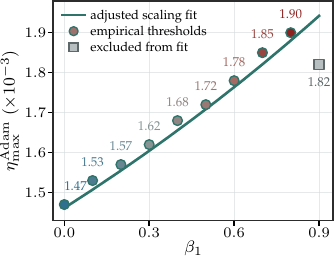}
    \end{minipage}
    \vspace{-1mm}
    \caption{Adam thresholds and adjusted scaling fit, with $(\eta_0,a,b)=(1.20\times 10^{-3},6.16,4.88)$.}
    \label{fig:adam-lr-sweep}
  \end{minipage}
  \hfill
  \begin{minipage}[t]{0.34\linewidth}
    \vspace{-1mm}
    \centering
    \captionsetup{width=\linewidth}
    \begin{minipage}[c][1.55in][c]{\linewidth}
      \hspace{6mm}
      \footnotesize
      \setlength{\tabcolsep}{4.8pt}
      \renewcommand{\arraystretch}{1.1}
      \raisebox{0.7mm}{%
      \begin{tabular}{@{}cccc@{}}
        \toprule[\lightrulewidth]
        $\eta$ & $\beta$ & Train & Val. \\
        \midrule
        $0.1$ & $0$    & 1.867 & 2.066 \\
        $0.1$ & $0.9$  & 1.843 & 2.065 \\
        $0.1$ & $0.95$ & 1.840 & 2.073 \\
        \midrule
        $1$ & $0$    & \textcolor{gray}{div.} & \textcolor{gray}{div.} \\
        $1$ & $0.9$  & 1.594 & 1.781 \\
        $1$ & $0.95$ & 1.602 & 1.783 \\
        \bottomrule[\lightrulewidth]
      \end{tabular}
      }%
    \end{minipage}
    \captionof{table}{Fixed-grid SGD-M final losses.}
    \label{tab:hp_loss}
  \end{minipage}
\end{figure}

We then examine whether the larger stability window translates into better optimization and validation performance.
Figure~\ref{fig:adam-loss-curves} plots the training and validation curves for
learning rates $0.0001$, $0.0006$, and $0.0008$, and momentum parameters
$0$, $0.9$, and $0.95$ over the first $15{,}000$ training steps.
For the small learning rate $\eta=0.0001$, different choices of $\beta_1$ lead to similar loss curves.
For the larger learning rates $\eta=0.0006$ and $\eta=0.0008$, Adam with $\beta_1\in\{0.9,0.95\}$ is more stable and reaches lower training and validation losses than Adam with $\beta_1=0$.
Moreover, with momentum, using a larger learning rate gives lower training and validation losses than using the small learning rate.
These observations again support the acceleration-by-stabilization explanation in practice: momentum improves training mainly by allowing a more aggressive stable learning rate.

\begin{figure}[H]
    \centering
    \captionsetup{font=footnotesize, skip=1pt}
    \includegraphics[width=0.94\linewidth]{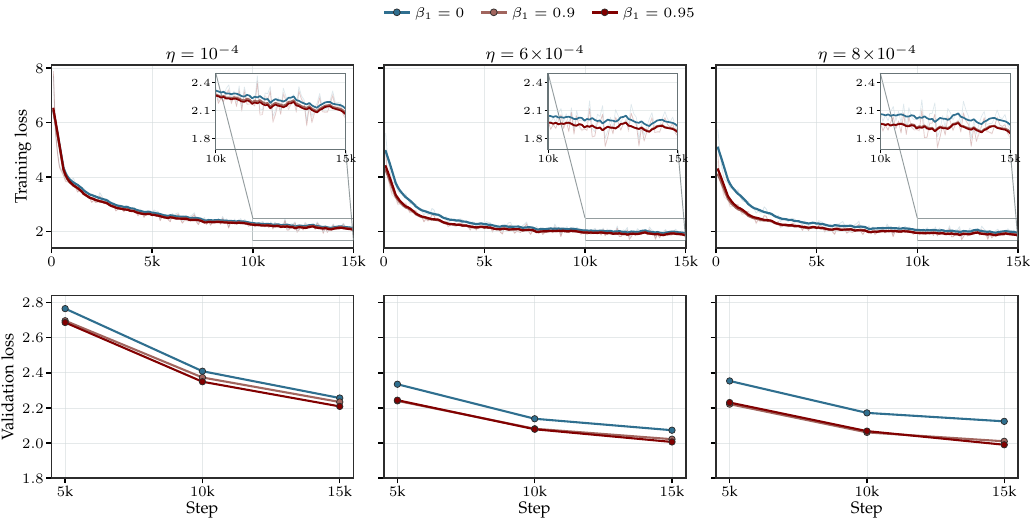}
    \caption{GPT-Neo training on TinyStories \citep{eldan2023tinystories} over the first $15{,}000$ steps; columns correspond to $\eta\in\{0.0001,0.0006,0.0008\}$. Top: per-batch training losses recorded every $100$ steps (faint) and their $900$-step moving averages (solid). Bottom: full-validation losses evaluated at steps $5{,}000$, $10{,}000$, and $15{,}000$.}
    \label{fig:adam-loss-curves}
\end{figure}

\section{Conclusions}

We prove that in river-valley landscape, momentum enables a significantly larger learning rate than vanilla GD by stabilizing oscillations across the sharp valley, thus accelerating progress along the flat, low-loss river manifold.
Our theory is validated by synthetic experiments and language model training.
Future directions include extending the analysis to adaptive optimization methods.

\section*{Acknowledgement}

T.W. acknowledges support from Google.

\FloatBarrier
\newpage
\bibliographystyle{ims}
\bibliography{reference}

\newpage
\appendix

{\color{sectionblue}\tableofcontents}

\newpage

\section{Further Related Works}\label{sec: further related works}

\paragraph{Analysis of heavy-ball momentum.}
Accelerated convergence of heavy-ball momentum (stochastic) gradient descent and its variants has been established in the locally near-quadratic setups, including the convex case \citep{ghadimi2015global,jain2018accelerating, liu2020improved, scieur2020universal, loizou2020momentum, sebbouh2021almost, paquette2021dynamics, li2022last, dang2024noise,bollapragada2024fast, panaccelerated, wei2024accelerated} and the non-convex case but with PL condition  \citep{ wang2021modular, pmlr-v162-wang22p, kassing2024polyak}, as well as the non-convex case with convergence to stationary points \citep{yan2018unified, gitman2019understanding,wangla20, xiong2020analytical, liu2020improved, sebbouh2021almost}. 
Beyond these convergence analysis of heavy-ball momentum, other lines of works try to understanding momentum from different perspectives.
\citet{wang2021modular} analyze the acceleration of heavy-ball momentum for wide ReLU neural networks and deep linear networks.
\citet{Plattner2022, wang2024the} argued the marginal value of the momentum in the small learning rate regime. 
\citet{ghoshimplicit,cattaneo2024implicit, papazov2024leveraging} study implicit bias of heavy-ball momentum to understand its influence on generalization.
\citet{lucasaggregated, chen2023bidirectional, pagliardini2024ademamix} consider modifications of the heavy-ball by mixture several exponential moving averages to take better advantage of past gradients of different patterns. 
\citet{ghoshimplicit, cattaneo2024implicit} study the implicit bias of heavy-ball momentum to understand its influence on generalization.

\paragraph{Analysis of loss landscape.}
Understanding the interactions between the loss landscape and the optimization dynamics as well as the generalization capabilities is key to demystify and further boost the success of deep learning \citep{freeman2017topology, garipov2018loss}.
The most relevant works to ours are \citet{xing2018walk, davis2024gradient, wen2025understanding}, which all consider a ``river-valley'' like landscape of the loss function to understand the behavior of optimizers.
\citet{xing2018walk} proposed the conceptual picture
where SGD locally bounces around the valley on top of a valley floor.
\citet{davis2024gradient} identified
the existence of a  manifold with vanishing
gradient within the sharp directions of the Hessian for smooth loss function that
exhibits fourth-order growth near minimizers.
All these works demonstrate the ubiquity of the river-valley like landscape. 
\citet{zhanggradient, zhang2020adaptive, paneigencurve, pan2022toward} study optimization dynamics in landscapes with large sharpness in certain directions and heavy-tailed noise, all of which hold a landscape similar to the ``river-valley'' one.
\citet{jiangfantastic, gunasekar2018implicit, blanc2020implicit, lihappens, ma2022beyond, lyu2022understanding, liu2023same,andriushchenko2023sgd, lubenign} examine the relationship between the loss landscape and the generalization properties and demonstrate the believe that flatter minima results in better generalization.
Finally, there is a line of works examining the algorithmic implicit bias towards minima with different kinds of properties for different variants of (stochastic) gradient descent, including \citet{soudry2018implicit, ji2019implicit,chizat2020implicit,blanc2020implicit, wudirection, lihappens, arora2022understanding, damianlabel, guand, lyu2022understanding, wen2023sharpness, lubenign, lyudichotomy}.
See also the references therein.

\section{Further Discussions}\label{sec: further discussions}

\paragraph{Interpretation of river-valley loss landscape.}
Here we provide a more concrete realistic situation that induces the river-valley structure. In the standard next-token prediction loss, where the uncertainty of the next token can vary significantly, the variations in the uncertainty can shape this type of loss landscape.
When the next token is predictable with high certainty, a larger learning rate can help the model learn faster.
In contrast, when the next token is inherently ambiguous, e.g., given a phrase like ``I am'', the model must learn a well-calibrated probability distribution, which may require smaller updates.
These fluctuations in uncertainty actually lead to the variations in the sharpness of the loss landscape, giving rise to the river-valley structure \citep{wen2025understanding}.

\paragraph{The stochastic setting.}
Beyond the deterministic setting in this paper, it is also interesting to consider the stochastic setting where gradients are corrupted by noise.
However, it has been shown by \citet{wang2024the} that when the learning rate is small for SGD (or when the batch size is small according to the linear scaling rule \citep{goyal2017accurate}), momentum has limited benefit. See Figure 1 of \citet{wang2024the}.
In contrast, when the batch size increases (more deterministic setting), the gap becomes significant.
Therefore, it is more meaningful to study the benefit of momentum for large batch size, which is close to the deterministic setting we study here.
We leave it for future work to systematically study SGD-momentum in river-valley.

\paragraph{Second-order ODE interpretation.}
We clarify why Theorem~\ref{thm: main} compares GD-momentum with a first-order reference flow on the river, even though momentum methods are often interpreted through a second-order ODE.
The key point is that the apparent mismatch is caused by both the parametrization of momentum and the very-flat-river regime.
Eliminating $m_k$ from \eqref{eq: gd momentum} gives the equivalent two-step recursion
\begin{align}
    w_{k+1}-2w_k+w_{k-1}
    =-(1-\beta)(w_k-w_{k-1})-\eta(1-\beta)\nabla L(w_k).
\end{align}
If $w_k$ is formally viewed as samples of a continuous curve $y(s)$ with time spacing $h$, i.e., $w_k\approx y(kh)$, then the corresponding second-order ODE is
\begin{align}
    \ddot y(s)+a\dot y(s)+ab\nabla L(y(s))=0,
    \qquad a:=\frac{1-\beta}{h},\quad b:=\frac{\eta}{h}.
\end{align}
This differs from the common unnormalized heavy-ball parametrization $w_{k+1}=w_k-\eta\nabla L(w_k)+\beta(w_k-w_{k-1})$, for which the gradient coefficient in the ODE is not multiplied by $1-\beta$.
Our parametrization is the normalized first-moment update used in \eqref{eq: gd momentum}; it is also the parametrization that matches the first-moment part of Adam and allows a controlled comparison with vanilla GD under the same learning-rate parameter $\eta$.

Now focus on the motion along the river, and denote the projected flat gradient by $g_{\mathrm{f}}(x)=\bP_{\cM}(x)\nabla L(x)$ for $x\in\cM$.
In the river-valley regime, the Hessian along the flat direction is small and the river spins slowly, quantified by $\gflat$ and $\kappa$.
Therefore, over a short segment of the river, $g_{\mathrm{f}}(x)$ varies slowly and can be approximated by a local vector $g_{\mathrm{loc}}$.
Projecting the above ODE onto this local flat direction yields the local model
\begin{align}
    \ddot y_{\mathrm{f}}(s)+a\dot y_{\mathrm{f}}(s)+ab g_{\mathrm{loc}}\approx 0.
\end{align}
Solving this linear equation gives
\begin{align}
    \dot y_{\mathrm{f}}(s)
    \approx \exp(-as)\bigl(\dot y_{\mathrm{f}}(0)+b g_{\mathrm{loc}}\bigr)-b g_{\mathrm{loc}}.
\end{align}
Thus, after the momentum burn-in period, the velocity satisfies $\dot y_{\mathrm{f}}(s)\approx -b g_{\mathrm{loc}}$.
Taking the discrete-time parametrization $h=1$, this gives the leading-order update direction $-\eta g_{\mathrm{loc}}$, which is the same first-order speed as vanilla GD with learning rate $\eta$.
The lower bound on $k$ in Theorem~\ref{thm: main} is precisely used to remove the transient momentum term; in the discrete proof, this transient appears through the factor $\beta^{k+1}$.

This explains why the time-tracking statement in Theorem~\ref{thm: main} takes the form $T(k)\approx \eta k$ and has no additional leading factor depending on $\beta$.
Momentum does not directly multiply the river speed under the normalized parametrization \eqref{eq: gd momentum}.
Instead, its main effect is through stability in the sharp directions: it permits a much larger tolerable learning rate, whose leading improvement is the factor $(1+\beta)/(1-\beta)$.
Once this larger $\eta$ is chosen, the first-order tracking relation $T(k)\approx \eta k$ translates the enlarged stable learning rate into faster progress along the river.

Finally, the local constant-gradient calculation above should not be interpreted as a global assumption.
Theorem~\ref{thm: main} compares the GD-momentum trajectory with the changing reference flow on the river,
\begin{align}
    \frac{\mathrm{d}}{\mathrm{d}T}x(T)
    =-\bP_{\cM}(x(T))\nabla L(x(T)).
\end{align}
The reference flow already captures the variation of $\bP_{\cM}(x(T))\nabla L(x(T))$ along the river.
The proof also controls this variation explicitly; for instance, Lemma~\ref{lem: time index grows almost linearly with learning rate formal} bounds the difference between the projected momentum direction and the tangent vector of the reference flow by terms of order $\kappa$ and $\eta\gflat$.
Along the reference flow, the loss satisfies
\begin{align}
    \frac{\mathrm{d}}{\mathrm{d}T}L(x(T))
    =-\|\bP_{\cM}(x(T))\nabla L(x(T))\|_2^2.
\end{align}
Hence, if the loss is lower bounded, the projected flat gradient cannot remain bounded away from zero forever.
The second-order ODE calculation is therefore only a local explanation of the post-burn-in behavior in a very flat segment of the river, while the theorem itself tracks the globally changing river flow under the stated assumptions.

\newpage

\section{Key Proofs of the Main Theorem}\label{sec: key proofs of main theorem}

\subsection{Formal Statement of Theorem~\ref{thm: main}}\label{subsec: formal statement}

\begin{theorem}[GD-momentum in river valley loss landscape (formal version)]\label{thm: main formal}
    Suppose that Assumptions~\ref{ass: existence} and \ref{ass: regularity} hold. 
    Define the following functions of momentum parameter $\beta\in(0,1)$, 
    \begin{align}
        \mathsf{A}(\beta)&:= \frac{16(1+\beta)}{1-\beta}\left(2+\frac{5(1+\beta)}{1-\beta}\right)\left(\frac{8}{(1-\beta)^3}+3\right)^2\left(2+\frac{4(1+\beta)}{1-\beta}+\frac{5(1+\beta)^2}{(1-\beta)^2}\right),\\
        \mathsf{B}(\beta)&:=2\cdot \left(\frac{16}{(1-\beta)^3}+6\right)^{\frac{1}{2}} \cdot \left(\frac{10}{1-\beta} + 1\right)\cdot\frac{2}{1-\beta},\quad \mathsf{C}(\beta):=1+5\cdot \frac{1+\beta}{1-\beta}, \\ 
        \mathsf{D}_1(\beta)&:=2\big(1+\mathsf{A}(\beta) + \mathsf{B}(\beta)\big),\quad \mathsf{D}_2(\beta):=8\big(1+6\mathsf{C}(\beta)\big)\cdot\frac{1+\beta}{1-\beta},
    \end{align}
    and we denote $\varsigma(\beta)\in\RR$ as (for simplicity we sometimes omit the dependency on $\beta$ when it makes no confusion) 
    \begin{align}
        \varsigma(\beta):= \mathsf{D}_1(\beta)\cdot\kappa^{3/16} + \mathsf{D}_2(\beta)\cdot\gflat\gmax^{-1}.\label{eq: varsigma}
    \end{align}
    Now we take the learning rate $\eta$ and momentum parameter $\beta$ satisfying 
    \begin{align}
        \beta \leq \min\big\{0.99,1-\varsigma(\beta)\big\},
        \qquad \eta\in \mathcal{I}(\beta).\label{eq: condition on beta and eta}
    \end{align}
    where $\varsigma(\beta)$ is defined in \eqref{eq: varsigma}, and the interval $\mathcal{I}(\beta)$ is defined as $\mathcal{I}(\beta):=\cI_1\cap  \cI_2\cap  \cI_3\cap  \cI_4$ with\footnote{The exact formula of the right endpoint of $\cI_4$ is given in \eqref{eq: cI 4 upper}. Here we only present the asymptotic expansion for better readability.}
    \begin{align}
         \cI_1&:= \left(\frac{\kappa^{1/32}}{\gmax}, \,\,\frac{2\cdot (1+\beta)-\kappa^{1/16}}{1-\beta}\cdot\frac{1}{\gmax}\right),\label{eq: intervals formal}\\ 
         \cI_2&:= \left(0,\,\,\frac{1}{16(1+6\mathsf{C}(\beta))}\cdot\frac{1}{\gflat}\right),\\
         \cI_3&:=\left(0,\,\,\frac{1-\beta}{8\beta\mathsf{C}(\beta)+2\beta(1+6\mathsf{C}(\beta))}\cdot\frac{1}{\gflat}\right),\\
        \cI_{4}&:=\left(
        \left(\mathsf{D}_1(\beta)\cdot\kappa^{5/32}
        + \mathsf{D}_2(\beta)\cdot \kappa^{15/32}\right)\cdot \frac{2}{\gmax},
        \right. \notag\\
        &\qquad\left.
        \bigg(\frac{1+\beta}{1-\beta}
        -\frac{5\beta^4 -2\beta^3 +6\beta^2 -2\beta + 1}{2(1-\beta)^4}\cdot\varsigma
        - \mathcal{O}(\varsigma^2)\bigg)\cdot \frac{2}{\gmax}
        \right).
    \end{align}
    Define the post-transient iteration
    \begin{align}
        k_\star:=\max\left\{0,\left\lceil\frac{\log(\beta\eta\gflat/(1-\beta))}{\log\beta}\right\rceil\right\}.
    \end{align}
    Then there exists a constant time shift $T_0$ such that the GD-momentum iteration \eqref{eq: gd momentum} with initialization $w_0\in\cM$ on the river and initial momentum $m_0=0$ satisfies that for any integer $k\geq k_\star$, there exists another $T(k)$ such that the following two things hold:
    \begin{enumerate}[nosep, leftmargin=6mm]
        \item GD-Momentum stays close to the river: $\|x(T_0 + T(k))-w_k\|_2\leq 12\mathsf{B}(\beta)\cdot\kappa^{3/4}\cdot g_{\max}/\gamma$;
        \item The speed on the river is approximately proportional to the learning rate: $|T(k) - \eta\cdot k| \leq \epsilon(\beta)\cdot \eta(k-k_\star)\leq \epsilon(\beta)\cdot\eta k$, where
        \begin{align}
        \epsilon(\beta):=9\kappa + \left(6\big(1+6\mathsf{C}(\beta)\big) + 4 + \frac{100\beta\mathsf{C}(\beta)}{1-\beta}\right)\cdot\eta\gflat + o(\kappa + \eta\gflat).
    \end{align}
    \end{enumerate}
\end{theorem}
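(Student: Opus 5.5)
The plan follows the three-step outline of Section~\ref{subsec: outline}: the formal theorem is assembled from the full versions of Lemma~\ref{lem: flat dominance momentum main}, Lemma~\ref{lem: gd momentum tracks the river closely} and Lemma~\ref{lem: time index grows almost linearly with learning rate}, proved jointly by a single induction over the interpolated time $t=k+\tau$. The induction hypothesis at step $k$ contains four items. First, $\cB(w_{j,\tau},2g_{\max}/\gamma)\subset\cU$ for all $j\le k$. Second, the sharp-direction bounds $\|\bps\nabla L(w_{j,\tau})\|_2,\|\bps m_{j+1}\|_2\le \mathsf{B}(\beta)\kappa^{3/4}\|\bpf\nabla L(w_{j,\tau})\|_2$. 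Third, a comparability bound on consecutive flat gradients, $\|\bpf\nabla L(w_{j+1})\|_2/\|\bpf\nabla L(w_j)\|_2\in[1-\cO(\eta\gflat),1+\cO(\eta\gflat)]$, which uses $\cI_2,\cI_3$. Fourth, a bound on the flat component of the momentum relative to the current flat gradient. The base case is $w_0\in\cM$ with $m_0=0$, where the sharp gradient vanishes by Assumption~\ref{ass: existence}.

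For the inductive step on the sharp directions, linearize: $\nabla L(w_{k+1})=\nabla L(w_k)-\eta H_k m_{k+1}+R_k$ with $\|R_k\|_2\le \tfrac12\|\nabla^3L\|_{\mathrm{Op}}\eta^2\|m_{k+1}\|_2^2$. Project with $\bps(w_{k+1})$ and account for the change $\bps(w_{k+1})-\bps(w_k)$, which has size $\cO(\kappa\gamma\eta\|m_{k+1}\|_2/g_{\max})$ by the slow-spinning assumption. This shows that the pair $(\bps\nabla L,\bps m)$ evolves by the heavy-ball companion matrix
\begin{align}
\begin{pmatrix} \bI-\eta(1-\beta)H_s & -\eta\beta H_s\\ (1-\beta)\bI & \beta\bI\end{pmatrix}
\end{align}
plus forcing terms proportional to $\kappa\,\|\bpf\nabla L\|_2$. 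Here $H_s$ has spectrum in $(\gamma,\gmax]$. For $\eta(1-\beta)\lambda<2(1+\beta)$, i.e.\ $\eta\in\cI_1$, its spectral radius is below $1$. Because of the gap $\kappa^{1/16}$ in $\cI_1$ and the lower bound $\gamma\ge\kappa^{1/32}\gmax$, one gets a uniform bound on $\sum_j\|M^j\|$ of order $\mathrm{poly}(1/(1-\beta))\kappa^{-1/16}$-type. Summing the forcing terms (a discrete Duhamel argument) with the third inductive item, which ensures that past flat gradients are comparable to the present one, gives the $\kappa^{3/4}$ bound with constant $\mathsf{A}(\beta),\mathsf{B}(\beta)$. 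The time-varying $H_s$ is handled by freezing it over blocks and absorbing $\|\nabla^3L\|$ errors. This uniform resolvent estimate for a slowly varying non-normal $2\times2$ block system near the stability edge is the main obstacle, and it is why the $\varsigma$-corrected upper endpoint of $\cI_4$ appears. I would first try an explicit quadratic Lyapunov form in each eigen-direction of the frozen $H_s$, with the loss of margin charged to $\varsigma$.

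Closeness to the river then follows from the second item via Lemma~\ref{lem: existence of projection onto river and further properties}. Because $\bps\nabla L$ is strongly monotone with modulus $\gamma+2\gflat$ transversally, we get $\|w-\Phi(w)\|_2\lesssim\|\bps\nabla L(w)\|_2/(\gamma+2\gflat)$, which yields Lemma~\ref{lem: gd momentum tracks the river closely}. It also shows that the neighborhood containment propagates (this is where $\gamma\le\gmax/200$ and item~1 of Assumption~\ref{ass: regularity} are used), since $\kappa^{3/4}g_{\max}/\gamma$ is far below $6g_{\max}/\gamma$. For the speed, differentiate $x(S(t))=\Phi(w_{k,\tau})$ in $\tau$: $S'(t)\,\dot x=-\eta\,\partial\Phi(w_{k,\tau})m_{k+1}$. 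Here $\partial\Phi$ is close to $\bP_\cM$ up to $\cO(\kappa)$. Unrolling, $\bpf m_{k+1}=(1-\beta)\sum_j\beta^{k-j}\bpf\nabla L(w_j)+\beta^{k+1}\cdot 0$ equals $\nabla_\cM L$ up to relative error $\cO(\kappa+\eta\gflat\beta/(1-\beta))$ plus the transient term $\beta^{k+1}$. For $k\ge k_\star$ this transient is at most $\eta\gflat/(1-\beta)$. Matching tangent components gives $|S'(t)-\eta|\le\epsilon(\beta)\eta$. Finally, integrate over $t\in[k_\star,k]$, set $T_0:=S(k_\star)-\eta k_\star$ (so $T(k)=S(k)-T_0$), and combine with the closeness bound to obtain both items of the theorem, using $\|\bpf\nabla L\|_2\le g_{\max}$.
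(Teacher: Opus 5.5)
Your outline is correct and has the same architecture as the paper's proof. It runs an endpoint induction on four items: sharp control, flat-gradient persistence, flat momentum versus the moving average of flat gradients, and flat dominance. Closeness then comes from $\|w-\Phi(w)\|_2\le 2\|\bps(w)\nabla L(w)\|_2/(\gamma+2\gflat)$. Finally you differentiate $x(S(t))=\Phi(w_{k,\tau})$, absorb the $\beta^{k+1}$ transient using $k\ge k_\star$, and set $T_0=S(k_\star)-\eta k_\star$ exactly as in Remark~\ref{rem: post-transient time shift}.

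The genuine difference is the sharp-direction step.

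\textbf{Your route.} You sum the forcing terms against $\sum_j\|M^j\|$ for a frozen companion matrix, which is a Duhamel/resolvent bound. You handle time variation by freezing over blocks, and you rely on multi-step backward comparability of past flat gradients.

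\textbf{The paper's route.} It uses the quadratic Lyapunov form you list only as a first try, and applies it one step at a time.
\begin{itemize}
\item It solves $\boldsymbol{T}_{k,i}^\top\boldsymbol{P}_{k,i}\boldsymbol{T}_{k,i}=\boldsymbol{P}_{k,i}-\boldsymbol{I}_2$ in closed form. This gives contraction factor $1-\tfrac12\overline{q}_k$ with $\overline{q}_k=(\max_i\mathrm{Tr}\,\boldsymbol{P}_{k,i})^{-1}$.
\item It pays for the time variation through $\|\widetilde{\boldsymbol{P}}_k-\widetilde{\boldsymbol{P}}_{k-1}\|_{\mathrm{Op}}\le\mathsf{A}(\beta)\kappa^{3/4}$.
\item It bounds the spinning forcing in the weighted norm by $\mathsf{B}(\beta)\kappa^{15/16}$.
\item It closes the induction by a residual argument. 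The weighted sharp norm minus $3\mathsf{B}(\beta)\kappa^{3/4}\|\bpf(w_k)\nabla L(w_k)\|_2$, taken at step $k+1$, is at most a contraction factor times the same residual at step $k$, which is nonpositive. This needs only one-step flat persistence.
\end{itemize}
The condition that the margin dominate $\mathsf{A}\kappa^{3/16}+\mathsf{B}\kappa^{15/16}+2\eta\gflat(1+6\mathsf{C})$ is exactly $\max_i\mathrm{Tr}\,\boldsymbol{P}_{k,i}\le 1/\varsigma$. Solving that inequality in $\eta\lambda_{k,i}$ yields $\cI_4$.

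\textbf{What each buys.} The paper's local argument avoids any bound on products of slowly varying non-normal matrices. It also directly produces the stated constants: $\mathsf{D}_1$, $\mathsf{D}_2$, $\cI_4$, and $12\mathsf{B}(\beta)=2\cdot 6\mathsf{B}(\beta)$. That last one comes from $3\mathsf{B}$ at endpoints, doubled along interpolation segments, then doubled again by the projection-distance bound. Your Duhamel route would prove the same qualitative theorem, but with different constants and admissible intervals. In particular it would not yield the bound ``with constant $\mathsf{A}(\beta),\mathsf{B}(\beta)$'' as you assert; those constants are specific to the Lyapunov argument.
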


\begin{remark}[Absolute and post-transient time coordinates]\label{rem: post-transient time shift}
    The map $S(t)$ in \eqref{eq: x S t phi} records the absolute reference-flow time associated with the interpolated GD-M trajectory.
    Theorem~\ref{thm: main formal} instead uses a post-transient clock whose ideal value is $\eta k$ at the first post-transient iteration $k_\star$.
    Specifically, define
    \begin{align}
        T_0:=S(k_\star)-\eta k_\star,
        \qquad
        T(k):=S(k)-T_0.
    \end{align}
    Then $T(k_\star)=\eta k_\star$ and, for every $k\geq k_\star$,
    \begin{align}
        T_0+T(k)=S(k),
        \qquad
        x(T_0+T(k))=\Phi(w_k).
    \end{align}
    Thus, $T_0$ absorbs the finite discrepancy accumulated before $k_\star$, while $T(k)=\eta k_\star+S(k)-S(k_\star)$ measures subsequent reference-flow progress from the calibrated baseline $\eta k_\star$.
\end{remark}

\begin{remark}[About the range of learning rate and momentum parameter]
Here we remark on the valid range of learning rate $\eta$ and momentum $\beta$ specified in \eqref{eq: condition on beta and eta} and \eqref{eq: intervals formal}. 
Firstly, the condition on momentum parameter $\beta$ in \eqref{eq: condition on beta and eta} is mild since when both $\kappa$ and $\gflat/\gmax$ are considered sufficiently small in our theory, the right hand side of $\beta\leq 1-\varsigma(\beta)$ is close to $1$. 
That being said, we need $\beta$ not too close to $1$ due to the existence of slow spinning and the small curvature of the river manifold. 

Now given a fixed $\beta$ satisfying the first inequality in \eqref{eq: condition on beta and eta}, we discuss the non-emptiness of the interval $\cI(\beta)$ defined in \eqref{eq: intervals formal}.
The second and third interval constraints $\cI_2$ and $\cI_3$ are flat-direction stability constraints.
Their right endpoints scale with $1/\gflat$, up to $\beta$-dependent constants, and therefore remain compatible with the sharp-direction constraints when the flat curvature $\gflat$ is sufficiently small relative to $\gmax$ and when $\beta$ is not chosen so close to $1$ that these constants dominate.
In this regime, the upper endpoint of $\cI(\beta)$ is the minimum of the four right endpoints, with $\cI_1$ and $\cI_4$ giving the sharp-direction stability threshold.
For the left endpoints, when $\kappa \ll 1$, both left endpoints of $\cI_1$ and $\cI_4$ are close to zero and are smaller than the corresponding right endpoints.
Therefore, under the above scale separation, the intersection $\cI(\beta)=\cI_1\cap \cI_2\cap \cI_3\cap \cI_4$ is non-empty; when the flat-direction constraints are inactive, its right endpoint is captured by $\cI_1$ and $\cI_4$ and approximates $2(1+\beta)/(1-\beta)\cdot 1/\gmax$.
\end{remark}

\begin{proof}[Proof of Theorem~\ref{thm: main formal}]
    The proof of Theorem~\ref{thm: main formal} is outlined in Section~\ref{sec: proof outline} and is illustrated in Figure~\ref{fig: theorem c1 proof architecture}. 
    The proof is decomposed into several key lemmas (Lemmas~\ref{lem: flat dominance momentum}, \ref{lem: gd momentum tracks the river closely formal}, and \ref{lem: time index grows almost linearly with learning rate formal}) whose proofs are provided in the subsequent sections.
    Set $T_0$ and $T(k)$ as in Remark~\ref{rem: post-transient time shift}.
    Lemma~\ref{lem: gd momentum tracks the river closely formal}, together with $\|\bpf(w_k)\nabla L(w_k)\|_2\leq g_{\max}$ and $x(T_0+T(k))=\Phi(w_k)$, proves the first conclusion.
    Moreover, Lemma~\ref{lem: time index grows almost linearly with learning rate formal} holds almost everywhere on $[k_\star,k]$, so
    \begin{align}
        |T(k)-\eta k|
        =|S(k)-S(k_\star)-\eta(k-k_\star)| \leq \int_{k_\star}^{k}|S'(t)-\eta|\,\mathrm{d}t
        \leq \epsilon(\beta)\eta(k-k_\star).
    \end{align}
    This proves the second conclusion.
\end{proof}

\paragraph{Proof architecture.}
Figure~\ref{fig: theorem c1 proof architecture} summarizes the dependency structure of the proof.
We refer to estimates evaluated at the discrete iterates $w_k$ as endpoint bounds: the endpoints of the $k$-th interpolation segment are $w_{k,0}=w_k$ and $w_{k,1}=w_{k+1}$.
The labels (I1)--(I4) refer to the four endpoint induction conditions \eqref{eq: induction 1}--\eqref{eq: induction 4}.
At each iteration, they are updated in the order (I1), (I3), (I2), and finally (I4); the resulting bounds then serve as the induction hypotheses at the next iteration.

\begin{figure}[H]
    \centering
    \begin{tikzpicture}[
        >=stealth,
        every node/.style={font=\scriptsize},
        support/.style={
            draw=black!42,
            fill=black!3,
            rounded corners=2pt,
            align=center,
            text width=4.15cm,
            minimum height=1.02cm,
            inner sep=3pt
        },
        lemma/.style={
            draw=templatecolor!62!black,
            fill=templatecolor!3,
            rounded corners=2pt,
            align=center,
            text width=4.15cm,
            minimum height=1.10cm,
            inner sep=3pt
        },
        keylemma/.style={
            draw=templatecolor!75!black,
            fill=templatecolor!6,
            line width=0.7pt,
            rounded corners=2pt,
            align=center,
            text width=9.6cm,
            minimum height=1.12cm,
            inner sep=4pt
        },
        consequence/.style={
            draw=blue!42!black,
            fill=blue!3,
            rounded corners=2pt,
            align=center,
            text width=5.75cm,
            minimum height=1.12cm,
            inner sep=4pt
        },
        theoremresult/.style={
            draw=templatecolor!82!black,
            fill=templatecolor!8,
            line width=0.8pt,
            rounded corners=2pt,
            align=center,
            text width=10.2cm,
            minimum height=1.18cm,
            inner sep=4pt
        },
        flow/.style={->, draw=black!68, line width=0.65pt},
        flowlabel/.style={
            fill=white,
            inner sep=1.5pt,
            font=\scriptsize\itshape,
            text=black!62
        }
    ]
        \path[draw=black!22, fill=black!1, rounded corners=3pt]
            (-7.25,0.76) rectangle (7.25,-2.38);
        \node[anchor=west, font=\scriptsize\bfseries, text=black!65]
            at (-7.05,0.56) {Shared inputs};

        \node[keylemma, text width=10.8cm] (setup) at (0,-0.10)
        {\textbf{Setup of Theorem~\ref{thm: main formal}}\\
        Assumptions~\ref{ass: existence}--\ref{ass: regularity}, $w_0\in\cM$, $m_0=0$, and admissible $(\eta,\beta)$};

        \node[support] (geometry) at (-3.0,-1.68)
        {\textbf{Appendix D tools}\\
        Geometry, projection, and matrix-stability estimates};
        \node[support] (parameters) at (3.0,-1.68)
        {\textbf{Proposition~\ref{prop: useful properties}}\\
        Uniform bounds implied by the parameter regime};

        \path[draw=black!22, fill=black!1, rounded corners=3pt]
            (-7.25,-3.00) rectangle (7.25,-6.72);
        \node[anchor=west, font=\scriptsize\bfseries, text=black!65]
            at (-7.05,-3.18) {Endpoint induction};

        \node[lemma] (flatvariation) at (-5.0,-3.92)
        {\textbf{Lemma~\ref{lem: flat gradient one step change}}\\
        One-step variation of\\the flat gradient};
        \node[lemma] (auxiliary) at (0,-3.92)
        {\textbf{Lemma~\ref{lem: auxiliary inequality 1}}\\
        Adjacent-step comparison\\for flat-gradient norms};
        \node[lemma] (sharp) at (5.0,-3.92)
        {\textbf{Lemma~\ref{lem: sharp direction}}\\
        Sharp-direction control:\\establishes (I1)};

        \node[lemma] (inductionone) at (-5.0,-5.78)
        {\textbf{Lemma~\ref{lem: induction 1}}\\
        Momentum--moving-average\\error: establishes (I3)};
        \node[lemma] (inductiontwo) at (0,-5.78)
        {\textbf{Lemma~\ref{lem: induction 2}}\\
        Flat-gradient persistence:\\establishes (I2)};
        \node[lemma] (inductionthree) at (5.0,-5.78)
        {\textbf{Lemma~\ref{lem: induction 3}}\\
        Flat direction dominates\\sharp terms: establishes (I4)};

        \node[keylemma] (flatdominance) at (0,-7.90)
        {\textbf{Lemma~\ref{lem: flat dominance momentum}: dominance of the flat direction}\\
        Close the four endpoint invariants for every $k$, then extend them to all interpolation points $w_{k,\tau}$};

        \node[consequence, text width=9.0cm] (tracking) at (0,-9.55)
        {\textbf{Lemma~\ref{lem: gd momentum tracks the river closely formal}}\\
        The interpolated trajectory stays in $\cU$ and close to $\cM$};
        \node[consequence, text width=9.0cm] (time) at (0,-11.20)
        {\textbf{Lemma~\ref{lem: time index grows almost linearly with learning rate formal}}\\
        Near-linear reference-flow clock: $|S'(t)-\eta|\leq\epsilon(\beta)\eta$ a.e.};

        \node[theoremresult] (theorem) at (0,-12.85)
        {\textbf{Theorem~\ref{thm: main formal}}\\
        Lemma~\ref{lem: gd momentum tracks the river closely formal} gives closeness; Lemma~\ref{lem: time index grows almost linearly with learning rate formal} gives progress};

        \draw[flow] (auxiliary.west) -- (flatvariation.east);
        \draw[flow] (flatvariation.south) -- (inductionone.north);
        \draw[flow] (auxiliary.south west) -- (inductionone.north east);
        \draw[flow] (inductionone.east) -- (inductiontwo.west);
        \draw[flow] (inductiontwo.east) -- (inductionthree.west);
        \draw[flow] (sharp.south) -- (inductionthree.north);

        \draw[flow] (0,-2.38) -- node[flowlabel, anchor=west, xshift=3pt]
            {one-step geometric and parameter estimates} (0,-3.00);
        \draw[flow] (0,-6.72) -- node[flowlabel, anchor=west, xshift=3pt]
            {induction closure $+$ interpolation} (flatdominance.north);
        \draw[flow] (flatdominance.south) -- node[flowlabel, anchor=west, xshift=3pt]
            {sharp-direction control $\Rightarrow$ distance bound} (tracking.north);
        \draw[flow] (tracking.south) -- node[flowlabel, anchor=west, xshift=3pt]
            {distance control $\Rightarrow$ tangent comparison} (time.north);
        \draw[flow] (time.south) -- node[flowlabel, anchor=west, xshift=3pt]
            {combine closeness and clock control} (theorem.north);
    \end{tikzpicture}
    \caption{Proof-dependency graph for Theorem~\ref{thm: main formal}.
    Lemma~\ref{lem: sharp direction} supplies (I1); Lemmas~\ref{lem: flat gradient one step change} and \ref{lem: auxiliary inequality 1} support Lemma~\ref{lem: induction 1}, after which Lemmas~\ref{lem: induction 2} and \ref{lem: induction 3} establish (I2) and (I4) in sequence and close the endpoint induction.
    Lemma~\ref{lem: flat dominance momentum} extends these bounds to the interpolated trajectory.
    Lemma~\ref{lem: gd momentum tracks the river closely formal} then proves proximity to the river, and Lemma~\ref{lem: time index grows almost linearly with learning rate formal} converts the flat-direction control into near-linear reference-flow time.
    Arrows inside the central block show the one-step induction dependencies, while the vertical spine records how the resulting endpoint bounds are converted into the two conclusions of Theorem~\ref{thm: main formal}.}
    \label{fig: theorem c1 proof architecture}
\end{figure}

\newpage
\subsection{Proofs for Projections onto Flat and Sharp Directions}\label{subsec: proof projection onto flat and sharp}

\subsubsection{Main Result and Proof Outline}

\begin{lemma}[Dominance of the flat direction]\label{lem: flat dominance momentum}
    Suppose that the momentum iteration \eqref{eq: gd momentum} starts from an initial point $w_0\in\cM$ on the river and an initial momentum $m_0=0$. Under Assumptions~\ref{ass: existence} and \ref{ass: regularity}, taking the learning rate $\eta$ and momentum parameter $\beta$ satisfying the conditions in Theorem~\ref{thm: main formal},
    then for any iteration $k\in\NN$ and $\tau\in[0,1]$, the following conclusions hold simultaneously,
    \begin{align}
    &\|\bpf(w_{k,\tau})\nabla L(w_{k,\tau})\|_2 \geq \big(1-\underline{q}_k\big)\cdot \|\bpf(w_{k})\nabla L(w_{k})\|_2, \label{eq: flat dominance conclusion 1} \\
    &b_{k}\cdot \|\bpf(w_k)\nabla L(w_k)\|_2\geq \Bigg\|(1-\beta)\cdot\sum_{j=0}^{k}\beta^j\cdot \bpf(w_k)\nabla L(w_k) - \bpf(w_k)m_{k+1}\Bigg\|_2, \label{eq: flat dominance conclusion 2} \\
       & \max\Big\{\|\bps(w_{k,\tau})\nabla L(w_{k,\tau})\|_2, \|\bps(w_{k,\tau}) m_{k+1}\|_2\Big\}\leq c_{\mathrm{int}} \cdot \|\bpf(w_{k,\tau})\nabla L(w_{k,\tau})\|_2,\label{eq: flat dominance conclusion 3}
    \end{align}
    where $b_k$, $\underline{q}_k$, and $c_{\mathrm{int}}$ are defined as follows:
    \begin{align}
        c_{\mathrm{int}}&:= 6\mathsf{B}(\beta)\cdot \kappa^{3/4},\\
        b_k&:= 8 \beta\mathsf{C}(\beta)\cdot \eta\gflat\cdot\sum_{j=0}^k\beta^j\cdot\Big(1+2\eta\gflat\cdot \big(1+6\mathsf{C}(\beta)\big)\Big)^j , \\
        \underline{q}_k&:=2\eta\gflat \cdot\Big(1+2\mathsf{C}(\beta)\cdot\big(1+4\eta\gflat\cdot \big(1+6\mathsf{C}(\beta)\big)\big)\Big)\cdot (1-\beta)\cdot\sum_{j=0}^k\beta^j,
    \end{align}
    and the constants $\mathsf{B}(\beta)$ and $\mathsf{C}(\beta)$ are defined in Theorem~\ref{thm: main formal}.
\end{lemma}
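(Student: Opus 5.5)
We argue by strong induction on $k$, following the architecture in Figure~\ref{fig: theorem c1 proof architecture}: first establish the three conclusions at the discrete endpoints $w_k$, then extend them to interpolation points $w_{k,\tau}$. The endpoint invariants (I1)--(I4) are as follows. (I1) is a sharp-direction bound $\max\{\|\bps(w_k)\nabla L(w_k)\|_2,\|\bps(w_k)m_{k+1}\|_2\}\le \tfrac{c_{\mathrm{int}}}{6}\|\bpf(w_k)\nabla L(w_k)\|_2$-type control. (I2) is flat-gradient persistence $\|\bpf(w_{k+1})\nabla L(w_{k+1})\|_2\ge(1-2\eta\gflat(\cdots))\|\bpf(w_k)\nabla L(w_k)\|_2$, together with the matching upper bound with factor $1+2\eta\gflat(1+6\mathsf{C})$. (I3) is the momentum/moving-average error, which is \eqref{eq: flat dominance conclusion 2}. (I4) states that flat dominates sharp at the next step. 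The base case $k=0$ is immediate: $w_0\in\cM$ gives $\bps(w_0)\nabla L(w_0)=0$, and $m_1=(1-\beta)\nabla L(w_0)$ gives zero error in \eqref{eq: flat dominance conclusion 2}.

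For the inductive step, we update (I1), then (I3), (I2) and (I4), in that order.

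\emph{(I1), sharp directions.} Write the sharp components $s_k:=\bps(w_k)\nabla L(w_k)$ and $\bps(w_k)m_{k+1}$ as a two-dimensional linear system. Taylor-expand $\nabla L(w_{k+1})=\nabla L(w_k)-\eta\nabla^2L(w_k)m_{k+1}+O(\|\nabla^3L\|\eta^2\|m\|^2)$. Because of the eigen-gap, $\nabla^2L$ restricted to the sharp space has eigenvalues in $(\gamma,\gmax]$. On each sharp eigendirection with eigenvalue $\lambda$, the pair $(s,\bps m)$ then evolves by the companion matrix $\begin{pmatrix}\beta & 1-\beta\\ -\eta\lambda\beta & 1-\eta\lambda(1-\beta)\end{pmatrix}$, up to forcing terms. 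The forcing comes from (a) the spinning of $\bps$, controlled by $\|\nabla v_d\|\le\kappa\gamma/2g_{\max}$, and (b) third-order terms; both are proportional to $\kappa\|\bpf\nabla L\|$. The spectral radius of this matrix is $<1$ iff $\eta\lambda<2(1+\beta)/(1-\beta)$, which is exactly where $\cI_1$ comes from. I would use the matrix-stability estimates from Appendix D to get a uniform power bound $\|M^j\|\le \mathrm{poly}(1/(1-\beta))\rho^j$, with a margin $1-\rho\gtrsim \kappa^{1/16}$ supplied by $\cI_1$ and $\cI_4$. Summing the geometric series of forcing terms gives a sharp magnitude of order $\kappa\cdot\kappa^{-1/4}$, i.e.\ the constant $\mathsf{B}(\beta)\kappa^{3/4}$. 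To convert the forcing into multiples of the \emph{current} flat gradient we use Lemma~\ref{lem: auxiliary inequality 1}, which compares flat-gradient norms across adjacent steps. \textbf{This step is the main obstacle.} Eigenvalues can approach $2(1+\beta)/((1-\beta)\eta)$, so the contraction is weak. Moreover, the eigenbasis rotates, so the companion matrices do not commute across steps, and non-normal transient growth has to be absorbed using the $\kappa^{1/32}$ eigengap condition.

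\emph{(I3) and (I2), flat direction.} Unroll $\bpf(w_k)m_{k+1}=(1-\beta)\sum_j\beta^j\bpf(w_k)\nabla L(w_{k-j})$. Compare each term to $\bpf(w_k)\nabla L(w_k)$ using Lemma~\ref{lem: flat gradient one step change}: per step, the flat gradient changes relatively by at most $2\eta\gflat(1+6\mathsf{C})$, while the contributions from sharp leakage and from rotation of $\bpf$ are $O(\kappa)$ and $O(\eta\gflat\,\mathsf{C})$. Summing the geometric growth against $\beta^j$ yields exactly the series defining $b_k$; this is Lemma~\ref{lem: induction 1}. Feeding $b_k$ back into the one-step flat update $\bpf\nabla L(w_{k+1})\approx(1-\eta\lambda_d(1-\beta)\sum\beta^j)\bpf\nabla L(w_k)$ gives (I2) with the loss factor $\underline{q}_k$ (Lemma~\ref{lem: induction 2}). $\cI_2$ and $\cI_3$ keep these factors bounded, e.g.\ $\beta(1+2\eta\gflat(1+6\mathsf{C}))<1$. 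Then Lemma~\ref{lem: induction 3} combines (I1) with the lower bound from (I2) to give (I4) at $k+1$.

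\emph{Interpolation.} Finally, for $w_{k,\tau}=w_k-\tau\eta m_{k+1}$, Taylor-expand along the segment. The flat gradient changes by at most a $\tau$-fraction of the one-step change, which yields \eqref{eq: flat dominance conclusion 1}. The sharp gradient at $w_{k,\tau}$ is a convex-type combination of the endpoint sharp quantities plus $O(\kappa)$ rotation error. The momentum $m_{k+1}$ is fixed, so only the projector changes. These bounds lose at most a factor $6$, which accounts for $c_{\mathrm{int}}=6\mathsf{B}\kappa^{3/4}$ in \eqref{eq: flat dominance conclusion 3}. Throughout, each point stays in $\cU$ because the accumulated displacement is bounded, using the neighborhood radius of Assumption~\ref{ass: regularity}.
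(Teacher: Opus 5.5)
Your outline follows the paper's induction architecture for the flat-direction invariants and for the interpolation step. The gap is in the sharp-direction estimate, which you yourself flag as the main obstacle: it is not established, and the route you sketch would not close as stated.

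The sharp state $z_k=(\bps(w_k)m_k,\bps(w_k)\nabla L(w_k))$ obeys a \emph{time-varying} forced recursion $z_{k+1}=\boldsymbol{T}_k z_k+\boldsymbol{E}_k$. Summing a geometric series of forcing terms therefore needs bounds on the products $\boldsymbol{T}_k\boldsymbol{T}_{k-1}\cdots\boldsymbol{T}_{j+1}$, not on powers $M^j$ of one frozen companion matrix.
\begin{itemize}
\item These matrices do not commute, because the eigenbasis and $\bps$ rotate.
\item They are strongly non-normal near the stability edge: their Lyapunov weights are only bounded by $\cO_\beta(\kappa^{-1/8})$.
\item A product of matrices each of spectral radius below $1$ can grow, so a per-matrix power bound does not suffice.
\item The eigengap condition $\gamma/\gmax\geq\kappa^{1/32}$ does not help here; it only supplies the lower margin $\eta\lambda_{k,i}\geq\kappa^{1/16}$.
\item Even for a single matrix, $\|M^j\|\leq\mathrm{poly}(1/(1-\beta))\rho^j$ with the exact spectral radius fails at the double root of the characteristic polynomial, where $M$ is a Jordan block.
\end{itemize}

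The missing idea is the paper's frozen-time Lyapunov metric. Take $\widetilde{\boldsymbol{P}}_k$ solving $\boldsymbol{T}_k^\top\widetilde{\boldsymbol{P}}_k\boldsymbol{T}_k=\widetilde{\boldsymbol{P}}_k-\boldsymbol{I}$; by the $\kappa^{1/16}$ margins of $\cI_1$ its norm is at most $(16/(1-\beta)^3+6)\kappa^{-1/8}$. Its drift satisfies $\|\widetilde{\boldsymbol{P}}_k-\widetilde{\boldsymbol{P}}_{k-1}\|_{\mathrm{Op}}\leq\mathsf{A}(\beta)\kappa^{3/4}$. This follows by applying the inverse Lyapunov operator $X\mapsto\sum_{j\geq0}(\boldsymbol{T}_k^\top)^jX\boldsymbol{T}_k^j$, whose norm on symmetric matrices is at most $\|\widetilde{\boldsymbol{P}}_k\|_{\mathrm{Op}}$, to $\boldsymbol{T}_k^\top\widetilde{\boldsymbol{P}}_{k-1}\boldsymbol{T}_k-\boldsymbol{T}_{k-1}^\top\widetilde{\boldsymbol{P}}_{k-1}\boldsymbol{T}_{k-1}$, and using $\|\boldsymbol{T}_k-\boldsymbol{T}_{k-1}\|_{\mathrm{Op}}=\cO_\beta(\kappa)$. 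The result is a one-step contraction $(1-\tfrac12\overline{q}_k)(1+\mathsf{A}(\beta)\kappa^{3/16})$ in a changing metric, with forcing $\mathsf{B}(\beta)\kappa^{15/16}\|(\beta m_k,(1-\beta)\nabla L(w_k))\|_2$.

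Closing the dominance bound then requires $\tfrac12\overline{q}_k$ to beat three quantities simultaneously: the metric drift $\mathsf{A}(\beta)\kappa^{3/16}$, the forcing $\mathsf{B}(\beta)\kappa^{15/16}$, and the per-step decay $2\eta\gflat(1+6\mathsf{C}(\beta))$ of the flat gradient you divide by. This competition is exactly the trace condition $\max_{i}\mathrm{Tr}(\boldsymbol{P}_{k,i})\leq1/\varsigma$ that defines $\cI_4$, and it is one place where $\beta\leq1-\varsigma(\beta)$ is needed. Your sketch treats $\cI_4$ only as an unspecified margin and never sets up this comparison, so it does not show that the stated hypotheses suffice.

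A smaller point concerns membership in $\cU$. It does not come from a bounded accumulated displacement, since the iterates travel indefinitely along the river. It comes from a one-step argument around the projection: $\|w_k-\Phi(w_k)\|_2\leq2g_{\max}/\gamma$ and $\eta\|m_{k+1}\|_2\leq2g_{\max}/\gamma$, the latter using $\beta\leq0.99$ and $\gamma\leq\gmax/200$. Hence $\cB(w_{k,\tau},2g_{\max}/\gamma)\subset\cB(\Phi(w_k),6g_{\max}/\gamma)\subset\cU$.
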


\begin{proof}[Proof of Lemma~\ref{lem: flat dominance momentum}]
    The key idea is to first prove endpoint bounds by induction over $k$ for four quantities: the gradient and momentum in the sharp and flat directions. We then extend these endpoint bounds to every $\tau\in[0,1]$ by controlling the variation along each interpolation segment.
    
    \paragraph{Induction objective.} We are going to inductively prove the following augmented results: for each $k\in\NN$, the following four induction conditions hold:
    \vspace{-1mm}
    \begin{itemize}[leftmargin=3.5mm]
        \item Firstly, the sharp direction does not explode:
    \begin{align}
         &\left\|\boldsymbol{P}_k^{\frac{1}{2}}\boldsymbol{O}^\top\boldsymbol{V}_k^\top\left(\begin{matrix}
        \bps(w_{k+1})m_{k+1}\\
        \bps(w_{k+1})\nabla L(w_{k+1})
    \end{matrix}
    \right) \right\|_2 \\
    &\qquad \leq\big(1-\tfrac{1}{2}\overline{q}_k\big)\cdot  \big(1+\mathsf{A}(\beta)\cdot \kappa^{3/16}\big)\cdot\left\|\boldsymbol{P}_{k-1}^{\frac{1}{2}}\boldsymbol{O}^\top\boldsymbol{V}_{k-1}^\top\!\!\left(\begin{matrix}
        \bps(w_{k})m_{k}\\
        \bps(w_{k})\nabla L(w_{k}) 
    \end{matrix}
    \right)\right\|_2 \\
    &\qquad \qquad + 
    \mathsf{B}(\beta)\cdot\kappa^{15/16}\cdot \left\|\left(\begin{matrix}
        \beta\cdot m_k\\
        (1-\beta)\cdot \nabla L(w_{k})
    \end{matrix}
    \right)\right\|_2.\label{eq: induction 1}
    \end{align}
    \item Secondly, the flat direction gradient keeps its magnitude: 
    \begin{align}
        \!\!\!\! \|\bpf(w_{k+1})\nabla L(w_{k+1})\|_2 \geq (1-\underline{q}_k)\cdot \|\bpf(w_{k})\nabla L(w_{k})\|_2. \label{eq: induction 2}
    \end{align}
    \item Thirdly, the flat direction momentum resembles the moving average of flat direction gradients: 
    \begin{align}
        \!\!\!\! b_{k}\cdot \|\bpf(w_k)\nabla L(w_k)\|_2 \geq \Bigg\|(1-\beta)\cdot\sum_{j=0}^{k}\beta^j\cdot \bpf(w_k)\nabla L(w_k) - \bpf(w_k)m_{k+1}\Bigg\|_2. \label{eq: induction 3}
    \end{align}
    \item Lastly, the flat direction gradient dominates the sharp direction gradient and momentum: 
    \begin{align}
    \begin{aligned}
        &\left\|\boldsymbol{P}_k^{\frac{1}{2}}\boldsymbol{O}^\top\boldsymbol{V}_k^\top\left(\begin{matrix}
        \bps(w_{k+1})m_{k+1}\\
        \bps(w_{k+1})\nabla L(w_{k+1})
    \end{matrix}
    \right) \right\|_2 \leq  c \cdot  \|\bpf(w_{k+1})\nabla L(w_{k+1})\|_2,\\
     &\max\Big\{\|\bps(w_{k+1})\nabla L(w_{k+1})\|_2, \|\bps(w_{k+1}) m_{k+1}\|_2\Big\} \leq c \cdot \|\bpf(w_{k+1})\nabla L(w_{k+1})\|_2.\label{eq: induction 4}
     \end{aligned}
    \end{align}
    \end{itemize}
    Additionally, the induction condition \eqref{eq: induction 4} holds for a dummy step $k=-1$.
    Here $\mathsf{A}(\beta)$, $\mathsf{B}(\beta)$ are defined in Theorem~\ref{thm: main formal}, $\{\boldsymbol{P}_k,\boldsymbol{O},\boldsymbol{V}_k\}_{k\geq 0}$ are matrices which we introduce later in the proof, $\{b_k\}_{k\geq 0}$, $\{\overline{q}_k\}_{k\geq 0}$, $\{\underline{q}_k\}_{k\geq 0}$ are positive sequences, $c>0$ is a constant, defined as following:
    \begin{align}
        c&:= 3\mathsf{B}(\beta)\cdot \kappa^{3/4},\\
        b_k&:= 8 \beta\mathsf{C}(\beta)\cdot \eta\gflat\cdot\sum_{j=0}^k\beta^j\cdot\Big(1+2\eta\gflat\cdot \big(1+6\mathsf{C}(\beta)\big)\Big)^j , \\
        \underline{q}_k&:=2\eta\gflat \cdot\Big(1+2\mathsf{C}(\beta)\cdot\big(1+4\eta\gflat\cdot \big(1+6\mathsf{C}(\beta)\big)\big)\Big)\cdot (1-\beta)\cdot\sum_{j=0}^k\beta^j,\\
        \overline{q}_k&:=\left(\max_{i\in[d-1]}\frac{(2\beta^3-2\beta^2+3\beta-1)\cdot (\eta\lambda_{k,i})^2 + 2(1-\beta^2)\cdot\eta\lambda_{k,i} + 2(1-\beta)^2(1+\beta)}{ \eta \lambda_{k,i}(1-\beta)^2\cdot(2(1+\beta)-(1-\beta) \eta \lambda_{k,i})}\right)^{-1}.
    \end{align}

    \paragraph{Correctness for step $k=0$.}
    Firstly we show that all of \eqref{eq: induction 1}, \eqref{eq: induction 2}, \eqref{eq: induction 3}, and \eqref{eq: induction 4} are correct for step $k=0$, and additionally \eqref{eq: induction 4} hold for a dummy step $k=-1$. 
    \vspace{-1mm}
    \begin{itemize}[leftmargin=3.5mm]
        \item The correctness of \eqref{eq: induction 1} at step $k=0$ is given by Lemma~\ref{lem: sharp direction} for step $k=0$. 
        \item The correctness of \eqref{eq: induction 3} at step $k=0$ is trivial since the right hand side of \eqref{eq: induction 3} is $0$. 
        \item The correctness of \eqref{eq: induction 4} at step $k=-1$ is also trivial since both sides are $0$.
        \item The correctness of \eqref{eq: induction 2} at step $k=0$ conditioning on the correctness of \eqref{eq: induction 3} at step $k=0$ and \eqref{eq: induction 4} at step $k=-1$ is given by Lemma~\ref{lem: induction 2}. 
        \item Finally the correctness of \eqref{eq: induction 4} at step $k=0$ conditioning on the correctness of \eqref{eq: induction 1},  \eqref{eq: induction 2}, and \eqref{eq: induction 3} at step $k=0$ and \eqref{eq: induction 4} at step $k=-1$ is given by Lemma~\ref{lem: induction 3}.
    \end{itemize}

    \paragraph{Main induction argument.}
    Now suppose that all of \eqref{eq: induction 1}, \eqref{eq: induction 2}, \eqref{eq: induction 3}, and \eqref{eq: induction 4} are correct up to some step $k-1$ with $k\geq 1$.
    To prove the correctness for step $k$, we adopt the following strategy:
    \begin{itemize}[leftmargin=3.5mm]
        \item Condition \eqref{eq: induction 1} can be directly proved correct for all $k$ (proved by Lemma~\ref{lem: sharp direction}). Then:
        \item First prove that Condition \eqref{eq: induction 3} is correct (with Lemma~\ref{lem: induction 1}). 
        \item Then prove that Condition \eqref{eq: induction 2} is correct (with Lemma~\ref{lem: induction 2}). 
        \item Finally prove that Condition \eqref{eq: induction 4} is correct (with Lemma~\ref{lem: induction 3}). 
    \end{itemize}
    In the sequel, we show how to obtain \eqref{eq: induction 2}, \eqref{eq: induction 3}, and \eqref{eq: induction 4} for step $k$ following the above strategy.
    Before the proof, we first collect some useful properties needed here (and later induction proofs) in the following.

    \begin{proposition}[Useful properties]\label{prop: useful properties}
    Under the setup of Theorem~\ref{thm: main formal}, we have the following results:
    \begin{enumerate}[nosep, leftmargin=6mm]
        \item Under the condition that $\eta\in \cI_1$, we have that 
    \begin{align}
        1+2\eta\gmax+\frac{\eta\gamma}{2} \leq 1+5\cdot\frac{1+\beta}{1-\beta}:=\mathsf{C}(\beta).\label{eq: c 3}
    \end{align}
    \item If we denote $q(\beta)$ as following
    \begin{align}
        q(\beta):=2\eta\gflat\cdot \big(1+6\mathsf{C}(\beta)\big),
    \end{align}
    then by $\eta\in \cI_2$, we have that 
        \begin{align}
              \eta \leq  \frac{1}{16 (1+6\mathsf{C}(\beta))}\cdot\frac{1}{\gflat}\quad\Rightarrow \quad q(\beta)\leq \frac{1}{8}.\label{eq: eta condition 2}
        \end{align}
    \item By $\eta\in \cI_3$, we have that for any $k\in\NN$,
        \begin{align}
            \eta \leq  \frac{1-\beta}{8\beta\mathsf{C}(\beta) + 2\beta\cdot(1+6\mathsf{C}(\beta)) }\cdot\frac{1}{\gflat}\quad \Rightarrow\quad b_k\leq 1.\label{eq: eta condition 3}
        \end{align}
    \item By $\eta\in \cI_3$, we also have that for any $k\in\NN$,
        \begin{align}
            b_k\leq (1-\beta)\cdot \sum_{j=0}^k\beta^j. \label{eq: b k weighted sum bound}
        \end{align}
    \item By $\eta\in \cI_3$, we also have that for any $k\in\NN$,
        \begin{align}
            b_k\leq \frac{24\beta\mathsf{C}(\beta)}{1-\beta}\cdot\eta\gflat.\label{eq: b k refined bound}
        \end{align}
    \item Under the condition on $\beta$ in Theorem~\ref{thm: main formal}, we have
        \begin{align}
            c\leq \frac{3}{8}(1-\beta)<\min\left\{1-\beta,\frac{1}{2}\right\}. \label{eq: c small bound}
        \end{align}
    \item By $\eta\in\cI_1\cap\cI_2\cap\cI_3$,  we have that for any $k\in\NN$,
        \begin{align}
            \underline{q}_k \leq \frac{q(\beta)}{2}.\label{eq: q k bound}
        \end{align}
    \end{enumerate}
    \end{proposition}

    \begin{proof}[Proof of Proposition~\ref{prop: useful properties}]
        We prove the seven items one by one in the following.

        \noindent\emph{Item 1.}
        This follows directly from $\eta\in\cI_1$ and the bound $\gamma\leq\gmax/200$ in item~2 of Assumption~\ref{ass: regularity}.

        \noindent\emph{Item 2.}
        This follows directly from the definition of $q(\beta)$ and $\eta\in\cI_2$.

        \noindent\emph{Item 3.}
        By \eqref{eq: eta condition 3}, we have
        \begin{align}
            8\beta\mathsf{C}(\beta)\eta\gflat+\beta q(\beta)\leq 1-\beta.
        \end{align}
        Hence, for any $k\in\NN$,
        \begin{align}
            b_k\leq \frac{8\beta\mathsf{C}(\beta)\eta\gflat}{1-\beta-\beta q(\beta)}\leq 1.
        \end{align}

        \noindent\emph{Item 4.}
        Again by \eqref{eq: eta condition 3}, for any $k\in\NN$,
        \begin{align}
            b_k
            \leq 8\beta\mathsf{C}(\beta)\eta\gflat
            \cdot
            \frac{1-\beta}{1-\beta\cdot(1+q(\beta))}\cdot 
            \sum_{j=0}^k\beta^j \leq (1-\beta)\cdot \sum_{j=0}^k\beta^j. \label{eq: proof b k weighted sum bound}
        \end{align}

        \noindent\emph{Item 5.}
        Again by \eqref{eq: eta condition 3} and $\mathsf{C}(\beta)\geq 1$,
        \begin{align}
            \beta q(\beta)\leq \frac{2\beta(1+6\mathsf{C}(\beta))}{8\beta\mathsf{C}(\beta)+2\beta(1+6\mathsf{C}(\beta))}\cdot(1-\beta)=\frac{1+6\mathsf{C}(\beta)}{1+10\mathsf{C}(\beta)}\cdot(1-\beta)
            \leq \frac{2}{3}\cdot(1-\beta).
        \end{align}
        Therefore,
        \begin{align}
            b_k
            \leq \frac{8\beta\mathsf{C}(\beta)\eta\gflat}{1-\beta-\beta q(\beta)}
            \leq \frac{24\beta\mathsf{C}(\beta)}{1-\beta}\cdot\eta\gflat.
        \end{align}

        \noindent\emph{Item 6.}
        The definitions imply $\mathsf{B}(\beta)\leq\mathsf{D}_1(\beta)$ and $\mathsf{D}_1(\beta)\geq2$. Together with $\beta\leq 1-\varsigma(\beta)$ in Theorem~\ref{thm: main formal}, this gives
        \begin{align}
            \mathsf{B}(\beta)\kappa^{3/16}
            &\leq \mathsf{D}_1(\beta)\kappa^{3/16}
            \leq \varsigma(\beta)
            \leq 1-\beta,\\
            \kappa^{9/16}
            &=\big(\kappa^{3/16}\big)^3
            \leq \mathsf{D}_1(\beta)^{-3}
            \leq \frac{1}{8}.
        \end{align}
        Therefore,
        \begin{align}
            c=3\mathsf{B}(\beta)\kappa^{3/4}
            =3\big(\mathsf{B}(\beta)\kappa^{3/16}\big)\kappa^{9/16}
            \leq \frac{3}{8}(1-\beta)
            <\min\left\{1-\beta,\frac{1}{2}\right\}.
        \end{align}

        \noindent\emph{Item 7.}
        Equation~\eqref{eq: eta condition 2} implies $q(\beta)\leq 1/8$.
        Since $\mathsf{C}(\beta)\geq 1$, for any $k\in\NN$,
        \begin{align}
            \underline{q}_k
            \leq 2\eta\gflat\cdot \Big(1+2\mathsf{C}(\beta)\cdot\big(1+2q(\beta)\big)\Big)\leq \eta\gflat\cdot\big(1+6\mathsf{C}(\beta)\big)
            =\frac{q(\beta)}{2}.
        \end{align}
        This completes the proof of Proposition~\ref{prop: useful properties}.
    \end{proof}

    \noindent
    Now let's get back to the inductive proof of Lemma~\ref{lem: flat dominance momentum}.

    ~\\
    \noindent
    \emph{Proof of \eqref{eq: induction 3} for step $k$.}
    By Lemma~\ref{lem: induction 1}, given the correctness of \eqref{eq: induction 2}, \eqref{eq: induction 3}, and \eqref{eq: induction 4} up to step $k-1$, we have that 
    \begin{align}
        &\left\|(1-\beta)\cdot\sum_{j=0}^{k}\beta^j\cdot \bpf(w_k)\nabla L(w_k) - \bpf(w_k)m_{k+1} \right\|_2\leq \widetilde{b}_k\cdot\|\bpf(w_k)\nabla L(w_k)\|_2,\quad\text{where} \\ 
        &\widetilde{b}_k:=\big(1+2\underline{q}_{k-1}\big)\cdot \Bigg(\beta b_{k-1}+ \beta(1-\beta) \sum_{j=0}^{k-1}\beta^j \underline{q}_{k-1} + \kappa\eta\gamma\beta\Bigg(1+(1-\beta)\sum_{j=0}^{k-1}\beta^j+b_{k-1}\Bigg)\Bigg).
    \end{align}
    Then it suffices to prove that the above $\widetilde{b}_k \leq b_k$.
    Using \eqref{eq: eta condition 2}, \eqref{eq: eta condition 3}, and \eqref{eq: q k bound}, we have the following, 
    \allowdisplaybreaks
    \begin{align}
        \widetilde{b}_k&=\big(1+2\underline{q}_{k-1}\big)\cdot \Bigg(\beta b_{k-1}+ \beta(1-\beta) \sum_{j=0}^{k-1}\beta^j \underline{q}_{k-1} + \kappa\eta\gamma\beta\Bigg(1+(1-\beta)\sum_{j=0}^{k-1}\beta^j+b_{k-1}\Bigg)\Bigg)\\
        &\leq \big(1+q(\beta)\big)\cdot \Bigg(\beta b_{k-1}+ \frac{\beta(1-\beta)}{2} \sum_{j=0}^{k-1}\beta^j q(\beta) + \kappa\eta\gamma\beta\Bigg(1+(1-\beta)\sum_{j=0}^{k-1}\beta^j+b_{k-1}\Bigg)\Bigg)\\
        & \leq 8\beta\mathsf{C}(\beta)\cdot \eta\gflat\cdot\sum_{j=1}^k\beta^j\cdot\big(1+q(\beta)\big)^j  + \beta\cdot \eta\gflat\cdot\big(4+6\mathsf{C}(\beta)\big)\cdot \big(1+q(\beta)\big)\\
        &\leq  8\beta\mathsf{C}(\beta)\cdot \eta\gflat\cdot\sum_{j=1}^k\beta^j\cdot\big(1+q(\beta)\big)^j  +  8\beta\mathsf{C}(\beta)\cdot\eta \gflat \\
        &= 8\beta\mathsf{C}(\beta)\cdot\eta\gflat\cdot\sum_{j=0}^k\beta^j\cdot\big(1+q(\beta)\big)^j\\
        &=b_k,
    \end{align}
    where the first inequality uses \eqref{eq: q k bound}, the second inequality combines (i) the definition of $b_{k-1}$ and (ii) \eqref{eq: eta condition 3} to bound $b_{k-1}\leq 1$, and (iii) item 2 of Assumption~\ref{ass: regularity} to obtain $\kappa\gamma\leq \gflat$.
    The third inequality uses \eqref{eq: eta condition 2} to bound $q(\beta)\leq 1/8$ and uses $\mathsf{C}(\beta)\geq 6$.
    This proves that $\widetilde{b}_k\leq b_k$ and thus the induction condition \eqref{eq: induction 3} hold for step $k$.

    ~\\
    \noindent
    \emph{Proof of \eqref{eq: induction 2} for step $k$.}
    By Lemma~\ref{lem: induction 2}, given the correctness of \eqref{eq: induction 2}, \eqref{eq: induction 3}, and \eqref{eq: induction 4} up to step $k-1$ plus \eqref{eq: induction 3} for step $k$, we have the following,
    \begin{align}
        &\|\bpf(w_{k+1})\nabla L(w_{k+1})\|_2 \geq \big(1-\widetilde{\underline{q}}_k\big)\cdot \|\bpf(w_{k})\nabla L (w_{k})\|_2,\quad\text{where}\\
        &\widetilde{\underline{q}}_k := \eta\gflat\cdot \Bigg((1-\beta)\sum_{j=0}^k\beta^j +b_k\Bigg)\\
        &\quad\quad+ 2\eta\gamma\kappa \big(1+2\eta\gmax+\eta\gamma/2\big)\cdot\Bigg((1-\beta)+\beta \Bigg((1-\beta)\sum_{j=0}^{k-1}\beta^j+b_{k-1}\Bigg) \big(1+2\underline{q}_{k-1}\big)\Bigg).
    \end{align}
    Then it suffices to prove that $\widetilde{\underline{q}}_k\leq \underline{q}_k$.
    To this end, consider the following, 
    \begin{align}
        \widetilde{\underline{q}}_k &\leq \eta\gflat\cdot \Bigg((1-\beta)\cdot\sum_{j=0}^k\beta^j +b_k \\
        &\qquad + 2\mathsf{C}(\beta) \cdot \Bigg((1-\beta)+\beta\cdot \Bigg((1-\beta)\cdot\sum_{j=0}^{k-1}\beta^j+b_{k-1}\Bigg)\cdot \big(1+2q(\beta)\big)\Bigg)\Bigg), \label{eq: tilde q first bound}
    \end{align}
    where we apply (i) item 2 of Assumption~\ref{ass: regularity} to obtain that $\kappa \gamma\leq \gflat$, (ii) inequality \eqref{eq: c 3} to bound $1+\eta\gmax+\eta\gamma/2$, and (iii) inequality \eqref{eq: q k bound} to bound $\underline{q}_{k-1}\leq q(\beta)$.
    To proceed, notice that by the definition of $b_k$,
    \begin{align}
        (1-\beta)\cdot\sum_{j=0}^k\beta^j +b_k &=(1-\beta)\cdot \sum_{j=0}^k\beta^j \\
        &
        \qquad + 8\beta\mathsf{C}(\beta)\cdot \eta\gflat\cdot\sum_{j=0}^k\beta^j\cdot\Big(1+2\eta\gflat\cdot \big(1+6\mathsf{C}(\beta)\big)\Big)^j.\label{eq: beta sum b k}
    \end{align}
    The second term on the right hand side of the above equality can be bounded by
    \begin{align}
        &8\beta\mathsf{C}(\beta)\cdot \eta\gflat\cdot\sum_{j=0}^k\beta^j\cdot\Big(1+2\eta\gflat\cdot \big(1+6\mathsf{C}(\beta)\big)\Big)^j\\
        &\qquad \leq 8\beta\mathsf{C}(\beta)\cdot \eta\gflat\cdot\max_{k'\in\NN}\left\{\frac{\sum_{j=0}^{k'}\beta^j\cdot(1+2\eta\gflat\cdot (1+6\mathsf{C}(\beta)))^j}{\sum_{j=0}^{k'}\beta^j}\right\} \cdot \sum_{j=0}^k\beta^j\\
        &\qquad = 8\beta\mathsf{C}(\beta)\cdot\eta\gflat\cdot \frac{1-\beta}{1-\beta\cdot(1+2\eta\gflat\cdot(1+6\mathsf{C}(\beta)))}\cdot \sum_{j=0}^k\beta^j\\
        &\qquad \leq (1-\beta)\cdot\sum_{j=0}^k\beta^j,\label{eq: bound b k}
    \end{align}
    where the last inequality uses $\eta\in\cI_3$.
    Consequently, by \eqref{eq: beta sum b k} and \eqref{eq: bound b k}, we have that 
    \begin{align}
        (1-\beta)\cdot\sum_{j=0}^k\beta^j +b_k  \leq 2(1-\beta)\cdot\sum_{j=0}^k\beta^j.\label{eq: beta sum plus b k bound}
    \end{align}
    Therefore, combining \eqref{eq: tilde q first bound} and \eqref{eq: beta sum plus b k bound}, we have that 
    \allowdisplaybreaks
    \begin{align}
        \widetilde{\underline{q}}_k&\leq \eta\gflat\cdot \Bigg(2(1-\beta)\cdot\sum_{j=0}^k\beta^j + 2\mathsf{C}(\beta)\Bigg((1-\beta)+\beta\cdot2(1-\beta)\cdot\sum_{j=0}^{k-1}\beta^j\Bigg)\cdot\big(1+2q(\beta)\big)\Bigg) \\
        &\leq \eta\gflat\cdot \Bigg(2(1-\beta)\cdot\sum_{j=0}^k\beta^j + 4\mathsf{C}(\beta)\cdot(1-\beta)\cdot\sum_{j=0}^{k}\beta^j\cdot\big(1+2q(\beta)\big)\Bigg) \\
        &=2\eta\gflat\cdot\Big(1+2\mathsf{C}(\beta)\cdot\big(1+2q(\beta)\big)\Big)\cdot(1-\beta)\cdot \sum_{j=0}^{k}\beta^j\\
        &=\underline{q}_k
    \end{align}
    This proves $\widetilde{\underline{q}}_k\leq \underline{q}_k$, finishing the proof of induction condition \eqref{eq: induction 2} for step $k$. The same comparison holds at $k=0$ under the conventions $b_{-1}=\underline q_{-1}=0$ and $\sum_{j=0}^{-1}\beta^j=0$.
    
    ~\\
    \noindent
    \emph{Proof of \eqref{eq: induction 4} for step $k$.} This follows by Lemma~\ref{lem: induction 3} without extra proofs. Now we have finished the endpoint induction. 
    
    ~\\
    \noindent
    \emph{Interpolation.} It remains to extend the endpoint bounds to $w_{k,\tau}$ for every $\tau\in[0,1]$. Denote
    \begin{align}
        g_j&:=\nabla L(w_j),
        \quad G_j:=\bpf(w_j)g_j,\quad j\in\mathbb{N}, \notag\\
        G_{k,\tau}&:=\bpf(w_{k,\tau})\nabla L(w_{k,\tau}),
        \quad c_{\mathrm{end}}:=3\mathsf{B}(\beta)\kappa^{3/4}.
    \end{align}
    First, Lemma~\ref{lem: induction 2} and the bound $\widetilde{\underline q}_k\leq\underline q_k$ established above give
    \begin{align}
        \|G_{k,\tau}-G_k\|_2
        \leq \widetilde{\underline q}_k\|G_k\|_2
        \leq \underline q_k\|G_k\|_2. \label{eq: interpolated flat gradient variation}
    \end{align}
    This proves \eqref{eq: flat dominance conclusion 1}. Moreover, Proposition~\ref{prop: useful properties} gives $\underline q_k\leq q(\beta)/2\leq1/16$, and hence
    \begin{align}
        \|G_k\|_2\leq \frac{16}{15}\|G_{k,\tau}\|_2. \label{eq: interpolation flat endpoint comparison 1}
    \end{align}
    Lemma~\ref{lem: flat gradient one step change} further yields
    \begin{align}
        \|G_{k+1}\|_2
        &\leq \frac{1}{1-\underline q_k(1+2\underline q_k)}\|G_k\|_2
        \leq \frac{2048}{1785}\|G_{k,\tau}\|_2, \label{eq: interpolation flat endpoint comparison 2}
    \end{align}
    where we used $\underline q_k(1+2\underline q_k)\leq9/128$ and \eqref{eq: interpolation flat endpoint comparison 1}.
    We next record some consequences of the endpoint induction. The momentum recursion, induction conditions \eqref{eq: induction 3}--\eqref{eq: induction 4}, and \eqref{eq: b k weighted sum bound} imply
    \begin{align}
        \|\bps(w_k)m_{k+1}\|_2
        &\leq\beta\|\bps(w_k)m_k\|_2+(1-\beta)\|\bps(w_k)g_k\|_2 \leq c_{\mathrm{end}}\|G_k\|_2, \notag\\
        \|\bpf(w_k)m_{k+1}\|_2
        &\leq\left((1-\beta)\sum_{j=0}^k\beta^j+b_k\right)\|G_k\|_2
        \leq2\|G_k\|_2, \notag\\
        \|m_{k+1}\|_2&\leq\frac52\|G_k\|_2, \notag \\
        \|g_k\|_2&\leq\frac32\|G_k\|_2, \label{eq: interpolation endpoint norm bounds}
    \end{align}
    where we used $c_{\mathrm{end}}\leq1/2$. The same gradient bound holds with $k+1$ in place of $k$. Also, the definition of $\mathsf{B}(\beta)$ and $\eta\in\cI_1$ give
    \begin{align}
        \eta\gamma\leq 2\cdot \frac{1+\beta}{1-\beta},
        \quad
        \max\{\eta\gamma,(\eta\gamma)^2\}\leq\mathsf{B}(\beta).
    \end{align}
    Therefore, since $\kappa\leq1$,
    \begin{align}
        \eta\gamma\kappa\leq \mathsf{B}(\beta)\kappa^{3/4}=\frac{c_{\mathrm{end}}}{3},
        \qquad
        \kappa(\eta\gamma)^2\leq\frac{c_{\mathrm{end}}}{3}. \label{eq: interpolation parameter bounds}
    \end{align}

    Now we can interpolate the momentum bound. By Lemma~\ref{lem: change sharp}, \eqref{eq: interpolation endpoint norm bounds}, \eqref{eq: interpolation parameter bounds}, and \eqref{eq: interpolation flat endpoint comparison 1},
    \begin{align}
        \|\bps(w_{k,\tau})m_{k+1}\|_2
        &\leq \|\bps(w_k)m_{k+1}\|_2+\eta\gamma\kappa\|m_{k+1}\|_2 \notag\\
        &\leq \frac{11}{6}c_{\mathrm{end}}\|G_k\|_2
        \leq \frac{88}{45}c_{\mathrm{end}}\|G_{k,\tau}\|_2
        <2c_{\mathrm{end}}\|G_{k,\tau}\|_2. \label{eq: interpolated sharp momentum bound}
    \end{align}
    It then remains to control the interpolated gradient. The established endpoint bounds imply that $\|g_j\|_2\leq(1+c_{\mathrm{end}})\|G_j\|_2\leq3\|G_j\|_2/2$ for $j\in\{k,k+1\}$. Thus, Lemma~\ref{lem: change sharp} and \eqref{eq: interpolation parameter bounds} give
    \begin{align}
        \|\bps(w_{k,\tau})g_j\|_2
        &\leq \|\bps(w_j)g_j\|_2+\eta\gamma\kappa\|g_j\|_2
        \leq\frac{3}{2}c_{\mathrm{end}}\|G_j\|_2,
        \qquad j\in\{k,k+1\}.
    \end{align}
    Taylor's theorem along the interpolation segment gives
    \begin{align}
        \nabla L(w_{k,\tau})=(1-\tau)g_k+\tau g_{k+1}+R_{k,\tau},
    \end{align}
    where item 3 of Assumption~\ref{ass: regularity} and the uniform momentum bound imply
    \begin{align}
        \|R_{k,\tau}\|_2
        &\leq\frac{\eta^2}{2}\sup_{s\in[0,1]}\|\nabla^3L(w_{k,s})\|_{\mathrm{Op}}\|m_{k+1}\|_2^2\notag\\
        &\leq\frac{\kappa(\eta\gamma)^2}{4}\|m_{k+1}\|_2
        \leq\frac{5}{24}c_{\mathrm{end}}\|G_k\|_2.\label{eq: interpolated gradient remainder}
    \end{align}
    Combining the last three displays with \eqref{eq: interpolation flat endpoint comparison 1} and \eqref{eq: interpolation flat endpoint comparison 2}, we obtain
    \begin{align}
        \|\bps(w_{k,\tau})\nabla L(w_{k,\tau})\|_2
        &\leq\left(\frac{3}{2}\cdot\frac{2048}{1785}
        +\frac{5}{24}\cdot\frac{16}{15}\right)c_{\mathrm{end}}\|G_{k,\tau}\|_2\notag\\
        &<2c_{\mathrm{end}}\|G_{k,\tau}\|_2.\label{eq: interpolated sharp gradient bound}
    \end{align}
    Since $c_{\mathrm{int}}=2c_{\mathrm{end}}=6\mathsf{B}(\beta)\kappa^{3/4}$, \eqref{eq: interpolated sharp momentum bound} and \eqref{eq: interpolated sharp gradient bound} prove \eqref{eq: flat dominance conclusion 3}. Together with the endpoint induction, this completes the proof of Lemma~\ref{lem: flat dominance momentum}.
\end{proof}

\subsubsection{Lemmas for Induction Argument}

\begin{lemma}[Uniform momentum bound]\label{lem: uniform momentum bound}
    Suppose that Assumption~\ref{ass: regularity} holds and that $w_j\in\cU$ for all $j\leq k$.
    If the GD-momentum iteration \eqref{eq: gd momentum} is initialized with $m_0=0$, then
    \begin{align}
        \|m_{k+1}\|_2\leq g_{\max}.
    \end{align}
\end{lemma}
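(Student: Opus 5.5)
Unrolling the momentum recursion with $m_0=0$ gives the explicit representation
\begin{align}
    m_{k+1}=(1-\beta)\sum_{j=0}^{k}\beta^{j}\,\nabla L(w_{k-j}),
\end{align}
which follows by a one-line induction on $k$ from $m_{k+1}=\beta m_k+(1-\beta)\nabla L(w_k)$. The bound is then a direct consequence of the triangle inequality together with the uniform gradient bound from item 1 of Assumption~\ref{ass: regularity}.

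\textbf{Step 1 (representation).} I check the formula at $k=0$, where $m_1=(1-\beta)\nabla L(w_0)$. If it holds for $m_k$, then
\begin{align}
    m_{k+1}=\beta(1-\beta)\sum_{j=0}^{k-1}\beta^j\nabla L(w_{k-1-j})+(1-\beta)\nabla L(w_k),
\end{align}
and reindexing gives the claimed sum.

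\textbf{Step 2 (bound).} Since $w_{k-j}\in\cU$ for every $0\le j\le k$ by hypothesis, item 1 of Assumption~\ref{ass: regularity} gives $\|\nabla L(w_{k-j})\|_2\le g_{\max}$. The triangle inequality then yields
\begin{align}
    \|m_{k+1}\|_2\le(1-\beta)\sum_{j=0}^k\beta^j g_{\max}=(1-\beta^{k+1})g_{\max}\le g_{\max}.
\end{align}

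There is no real obstacle here. The only point requiring care is that the hypothesis must cover all iterates $w_0,\dots,w_k$, and not just $w_k$. This is exactly what is assumed, and in the main induction it is supplied by Lemma~\ref{lem: gd momentum tracks the river closely formal}, which keeps the trajectory inside $\cU$.
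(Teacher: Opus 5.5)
Your proof is correct and takes the same approach as the paper: unroll the recursion to $m_{k+1}=(1-\beta)\sum_{j=0}^{k}\beta^{k-j}\nabla L(w_j)$, then apply the triangle inequality together with the uniform gradient bound $g_{\max}$ on $\cU$ from item~1 of Assumption~\ref{ass: regularity}. Your exact evaluation of the geometric sum as $(1-\beta^{k+1})g_{\max}$ is a harmless sharpening of the same step.
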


\begin{proof}[Proof of Lemma~\ref{lem: uniform momentum bound}]
    By unrolling the momentum recursion,
    \begin{align}
        m_{k+1}=(1-\beta)\cdot \sum_{j=0}^{k}\beta^{k-j}\nabla L(w_j).
    \end{align}
    Since Assumption~\ref{ass: regularity} gives $\|\nabla L(w)\|_2\leq g_{\max}$ for all $w\in\cU$, we obtain
    \begin{align}
        \|m_{k+1}\|_2
        \leq (1-\beta)\cdot\sum_{j=0}^{k}\beta^{k-j}\|\nabla L(w_j)\|_2
        \leq (1-\beta)\cdot\sum_{j=0}^{k}\beta^{k-j}g_{\max}
        \leq g_{\max}.
    \end{align}
    This completes the proof of Lemma~\ref{lem: uniform momentum bound}.
\end{proof}

\begin{lemma}[Analysis of the sharp directions]\label{lem: sharp direction}
    Suppose that Assumptions~\ref{ass: existence} and \ref{ass: regularity} hold.
    Take the learning rate $\eta$ and momentum parameter $\beta$ satisfying
    \begin{align}
        \eta\in \cI_1:= \left(\frac{\kappa^{1/32}}{\gmax}, \,\,\frac{2\cdot (1+\beta)-\kappa^{1/16}}{1-\beta}\cdot\frac{1}{\gmax}\right),\label{eq: I 1}
    \end{align}
    then there exist orthogonal matrices $\boldsymbol{O}, \{\boldsymbol{V}_k\}_{k\geq 0}\subset\RR^{2d\times 2d}$ and positive definite matrices $\{\boldsymbol{P}_k\}_{k\geq 0}\subset\RR^{2d\times 2d}$ such that for any $k\in \NN$, it holds that
    \begin{align}
        \left\|\boldsymbol{P}_k^{\frac{1}{2}}\boldsymbol{O}^\top\boldsymbol{V}_k^\top\left(\begin{matrix}
        \bps(w_{k+1})m_{k+1}\\
        \bps(w_{k+1})\nabla L(w_{k+1})
    \end{matrix}
    \right) \right\|_2 
    & \leq\big(1-\tfrac{1}{2}\overline{q}_k\big)\cdot
    \big(1+\mathsf{A}(\beta)\cdot \kappa^{3/16}\big) \notag\\
    &\qquad\cdot\left\|\boldsymbol{P}_{k-1}^{\frac{1}{2}}\boldsymbol{O}^\top\boldsymbol{V}_{k-1}^\top\!\!\left(\begin{matrix}
        \bps(w_{k})m_{k}\\
        \bps(w_{k})\nabla L(w_{k}) 
        \end{matrix}
    \right)\right\|_2 \notag\\
    &\qquad +
    \mathsf{B}(\beta)\cdot\kappa^{15/16}\cdot \left\|\left(\begin{matrix}
        \beta\cdot m_k\\
        (1-\beta)\cdot \nabla L(w_{k})
    \end{matrix}
    \right)\right\|_2.
    \end{align}
Here $\overline{q}_k$, $\mathsf{A}(\beta)$ and $\mathsf{B}(\beta)$ are defined as
\begin{align}
    \overline{q}_k&:=\left(\max_{i\in[d-1]}\frac{(2\beta^3-2\beta^2+3\beta-1)\cdot (\eta\lambda_{k,i})^2 + 2(1-\beta^2)\cdot\eta\lambda_{k,i} + 2(1-\beta)^2(1+\beta)}{ \eta \lambda_{k,i}(1-\beta)^2\cdot(2(1+\beta)-(1-\beta) \eta \lambda_{k,i})}\right)^{-1},\\
    \mathsf{A}(\beta)&:=     \frac{16(1+\beta)}{1-\beta}\left(2+\frac{5(1+\beta)}{1-\beta}\right)\left(\frac{8}{(1-\beta)^3}+3\right)^2\left(2+\frac{4(1+\beta)}{1-\beta}+\frac{5(1+\beta)^2}{(1-\beta)^2}\right),\\
    \mathsf{B}(\beta)&:=2\cdot \left(\frac{16}{(1-\beta)^3}+6\right)^{\frac{1}{2}} \cdot \left(\frac{10}{1-\beta} + 1\right)\cdot\frac{2}{1-\beta},
\end{align}
and $\lambda_{k,1}\geq \lambda_{k,2}\geq \cdots \geq  \lambda_{k,d-1}$ are the largest $d-1$ eigenvalues of $\nabla^2 L(w_{k})$.
\end{lemma}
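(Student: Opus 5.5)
We plan to linearize the sharp-direction dynamics of \eqref{eq: gd momentum} around the current iterate and treat it as a perturbed heavy-ball recursion on a quadratic. We write $z_k:=(\bps(w_k)m_k,\ \bps(w_k)\nabla L(w_k))$ and aim for $z_{k+1}=\boldsymbol{M}_k z_k+e_k$. Here $\boldsymbol{M}_k$ is the $2d\times 2d$ block matrix
\begin{align}
\boldsymbol{M}_k=\begin{pmatrix}\beta\bI & (1-\beta)\bI\\ -\eta\beta\boldsymbol{H}_k & \bI-\eta(1-\beta)\boldsymbol{H}_k\end{pmatrix},
\qquad \boldsymbol{H}_k:=\bps(w_k)\nabla^2L(w_k)\bps(w_k).
\end{align}
The error $e_k$ collects three contributions: the Taylor remainder of $\nabla L(w_{k+1})-\nabla L(w_k)$, the change of projector $\bps(w_{k+1})-\bps(w_k)$, and the leakage between the flat and sharp blocks. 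Using item~3 of Assumption~\ref{ass: regularity} (the bounds $\|\nabla v_d\|_{\mathrm{Op}}\leq\kappa\gamma/2g_{\max}$ and $\|\nabla^3 L\|_{\mathrm{Op}}\leq\kappa\gamma^2/2g_{\max}$), together with Lemma~\ref{lem: uniform momentum bound} to bound the step length $\eta\|m_{k+1}\|_2\leq \eta g_{\max}$, each of these terms should be of size $\cO(\kappa\eta\gamma)$ or $\cO(\kappa(\eta\gamma)^2)$ times $\|(\beta m_k,(1-\beta)\nabla L(w_k))\|_2$. Since $\eta\gamma\lesssim 2(1+\beta)/(1-\beta)$ on $\cI_1$, this gives the additive $\mathsf{B}(\beta)\kappa^{15/16}$ term after giving away a factor $\kappa^{1/16}$.

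Next we diagonalize. We pick $\boldsymbol{V}_k$ orthogonal so that it block-diagonalizes $\boldsymbol{H}_k$ in the eigenbasis $\{v_i(\nabla^2L(w_k))\}_{i\leq d-1}$. We pick a fixed permutation $\boldsymbol{O}$ that regroups coordinates into $2\times2$ blocks $\begin{pmatrix}\beta&1-\beta\\-\eta\beta\lambda_{k,i}&1-\eta(1-\beta)\lambda_{k,i}\end{pmatrix}$. The characteristic polynomial of each block is $\mu^2-(1+\beta-\eta(1-\beta)\lambda)\mu+\beta$, and by Jury's criterion it is Schur stable iff $0<\eta\lambda(1-\beta)<2(1+\beta)$; this is exactly the threshold in $\cI_1$. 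For each block we then solve the discrete Lyapunov equation $\boldsymbol{B}_i^\top\boldsymbol{P}_{k,i}\boldsymbol{B}_i\preceq(1-\overline q_{k,i})\boldsymbol{P}_{k,i}$ with an explicit $2\times2$ positive definite $\boldsymbol{P}_{k,i}$. The rational expression defining $\overline{q}_k$ is precisely the contraction rate obtained from the explicit solution of $\boldsymbol{B}^\top\boldsymbol{P}\boldsymbol{B}-\boldsymbol{P}=-\boldsymbol{I}$: it equals $(\lambda_{\max}(\boldsymbol{P}))^{-1}$. From this we get $\|\boldsymbol{P}_k^{1/2}\boldsymbol{O}^\top\boldsymbol{V}_k^\top\boldsymbol{M}_k z\|_2\leq(1-\tfrac12\overline q_k)\|\boldsymbol{P}_k^{1/2}\boldsymbol{O}^\top\boldsymbol{V}_k^\top z\|_2$. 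The lower end of $\cI_1$, $\eta\gmax\geq\kappa^{1/32}$ and $\lambda\geq\gamma\geq\kappa^{1/32}\gmax$, and the upper margin $\kappa^{1/16}$ together keep the condition number of $\boldsymbol{P}_{k,i}$ bounded by a polynomial in $1/(1-\beta)$ times negative powers of $\kappa^{1/16}$. This produces the factor $(16/(1-\beta)^3+6)^{1/2}$ in $\mathsf{B}$.

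The main obstacle is the change of norm from step $k-1$ to step $k$. The right-hand side is measured in $\boldsymbol{P}_{k-1},\boldsymbol{V}_{k-1}$, but the contraction uses $\boldsymbol{P}_k,\boldsymbol{V}_k$. We must bound
\begin{align}
\|\boldsymbol{P}_k^{1/2}\boldsymbol{O}^\top\boldsymbol{V}_k^\top z\|_2\leq(1+\mathsf{A}(\beta)\kappa^{3/16})\,\|\boldsymbol{P}_{k-1}^{1/2}\boldsymbol{O}^\top\boldsymbol{V}_{k-1}^\top z\|_2 .
\end{align}
Our approach is to express the Lyapunov solution as a smooth function of $\nabla^2L$ restricted to the sharp space, $\boldsymbol{P}_k=\mathcal{P}(\boldsymbol{H}_k)$, so that no eigenvectors are needed, and then apply matrix perturbation. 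Item~3 of Assumption~\ref{ass: regularity} gives $\|\boldsymbol{H}_k-\boldsymbol{H}_{k-1}\|_{\mathrm{Op}}\leq \kappa\gamma^2\eta\|m_k\|_2/(2g_{\max})+\cO(\kappa\gamma)\leq\cO(\kappa\gamma\cdot\eta\gamma)$. We then divide by the smallest eigenvalue of $\boldsymbol{P}$ and take square roots. The conditioning losses of $\kappa^{-1/16}$ per factor are what reduce $\kappa$ to $\kappa^{3/16}$; the squared factor $(8/(1-\beta)^3+3)^2$ in $\mathsf{A}$ reflects the conditioning of $\boldsymbol{P}$ appearing twice. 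Finally, we combine the contraction, the change of norm, and the error term via the triangle inequality. This yields the claimed recursion for every $k\geq0$, with the convention $m_0=0$ at $k=0$.
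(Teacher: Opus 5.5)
Your outline is essentially the paper's proof. The shared ingredients are the projected recursion $z_{k+1}=\boldsymbol{T}_kz_k+\boldsymbol{E}_k$, the $2\times 2$ block Lyapunov weights with squared-norm contraction followed by $\sqrt{1-x}\le 1-x/2$, the error estimate via $\|\boldsymbol{P}_k^{1/2}\|_{\mathrm{Op}}\|\boldsymbol{E}_k\|_2$ with $\|\boldsymbol{P}_k\|_{\mathrm{Op}}\le(16/(1-\beta)^3+6)\kappa^{-1/8}$, and an eigenvector-free comparison of consecutive weights.

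The paper makes that last step concrete as follows. It uses $\bps(w_k)$ instead of $\boldsymbol{I}$ in $\boldsymbol{T}_k$, so the flat block of $\boldsymbol{T}_k$ is zero and the weight there is $\boldsymbol{I}_2$. This matters for you: your unprojected $\boldsymbol{M}_k$ has a unit eigenvalue on that block, and the flat block of the weight is needed because $z_k$ does not lie in the sharp range at $w_{k-1}$. The paper then sets $\widetilde{\boldsymbol{P}}_k=\sum_{j\ge 0}(\boldsymbol{T}_k^\top)^j\boldsymbol{T}_k^j$ and inverts the Lyapunov operator to get $\|\widetilde{\boldsymbol{P}}_k-\widetilde{\boldsymbol{P}}_{k-1}\|_{\mathrm{Op}}\le\|\widetilde{\boldsymbol{P}}_k\|_{\mathrm{Op}}\|\widetilde{\boldsymbol{P}}_{k-1}\|_{\mathrm{Op}}\bigl(2\|\boldsymbol{T}_{k-1}\|_{\mathrm{Op}}\|\Delta\boldsymbol{T}_k\|_{\mathrm{Op}}+\|\Delta\boldsymbol{T}_k\|_{\mathrm{Op}}^2\bigr)$, with $\|\Delta\boldsymbol{T}_k\|_{\mathrm{Op}}=\mathcal{O}_\beta(\kappa)$.

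There are two harmless slips. First, the rational expression in $\overline{q}_k$ is $\mathrm{Tr}(\boldsymbol{P}_{k,i})$, not $\lambda_{\max}(\boldsymbol{P}_{k,i})$; since the trace dominates the top eigenvalue, your contraction still implies the stated one. Second, $\|\boldsymbol{H}_k-\boldsymbol{H}_{k-1}\|_{\mathrm{Op}}$ contains the rotation term $2\eta\gamma\gamma_{\mathrm{max}}\kappa$. So what you can actually claim is $\eta\|\boldsymbol{H}_k-\boldsymbol{H}_{k-1}\|_{\mathrm{Op}}=\mathcal{O}_\beta(\kappa)$ rather than $\mathcal{O}(\kappa\gamma\cdot\eta\gamma)$, and that is all the perturbation step needs.
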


\begin{proof}[Proof of Lemma~\ref{lem: sharp direction}]
    By the update rule of \eqref{eq: gd momentum}, we have that
    \begin{align}
        \bps(w_{k+1})m_{k+1} &= \beta\cdot \bps(w_k)m_k + (1-\beta)\bps(w_k)\nabla L(w_k)+ \big(\bps(w_{k+1}) - \bps(w_k)\big) m_{k+1},
    \end{align}
    and that
    \allowdisplaybreaks
    \begin{align}
        \bps(w_{k+1})\nabla L(w_{k+1})
        &= \bps(w_{k})\nabla L(w_{k}) - \eta\cdot \bps(w_k)\nabla^2 L(w_k)\bps(w_k)m_{k+1}  \\
        &\qquad -\eta\cdot\int_0^1
        \begin{aligned}[t]
        \Big(&\bps(w_{k,\tau})\nabla^2 L(w_{k,\tau})\bps(w_{k,\tau})\\[-2pt]
        &-\bps(w_k)\nabla^2 L(w_k)\bps(w_k)\Big)m_{k+1}
        \end{aligned}
        \mathrm{d}\tau \\
        &\qquad  \qquad - \eta\cdot \int_0^1\nabla \bps(w_{k,\tau})[m_{k+1}]\nabla L(w_{k,\tau})\mathrm{d}\tau.
    \end{align}
    Therefore, we have the following iteration formula,
    \begin{align}
        \left(\begin{matrix}
        \bps(w_{k+1})m_{k+1}\\
        \bps(w_{k+1})\nabla L(w_{k+1})
    \end{matrix}
    \right) = \boldsymbol{T}_k\left(\begin{matrix}
        \bps(w_{k})m_{k}\\
        \bps(w_{k})\nabla L(w_{k}) 
    \end{matrix}
    \right)+ \boldsymbol{E}_k\label{eq: iteration sharp}
    \end{align}
    where $\boldsymbol{T}_k$ and $\boldsymbol{E}_k$ are defined as following,
    \begin{align}
        \boldsymbol{T}_k&:= \left(\begin{matrix}
       \beta\cdot \bps(w_k)& (1-\beta)\cdot \bps(w_k)\\
       -\eta\beta\cdot \bps(w_k)\nabla^2 L(w_k)\bps(w_k) & \bps(w_k)\big(\boldsymbol{I}_d - \eta(1-\beta)\cdot\nabla^2 L(w_k)\big)\bps(w_k)
    \end{matrix}
    \right),\\
    \boldsymbol{E}_k &:= \left(\begin{matrix}
        E_{k,1}\\
        E_{k,2}
    \end{matrix}
    \right),
    \end{align}
    and $E_{k,1}$ and $E_{k,2}$ are defined as following respectively,
    \begin{align}
        E_{k,1} &:= \big(\bps(w_{k+1}) - \bps(w_k)\big) m_{k+1} \\
        E_{k,2} &:= -\eta\cdot\int_0^1\Big(\bps(w_{k,\tau})\nabla^2 L(w_{k,\tau})\bps(w_{k,\tau}) -  \bps(w_k)\nabla^2 L(w_k)\bps(w_k)\Big)m_{k+1}\mathrm{d}\tau \\
        & \qquad - \eta\cdot \int_0^1\nabla \bps(w_{k,\tau})[m_{k+1}]\nabla L(w_{k,\tau})\mathrm{d}\tau.
    \end{align}
    We will repeatedly use the following one-step variation estimates.
    By Lemma~\ref{lem: uniform momentum bound}, $\|m_{k+1}\|_2\leq g_{\max}$, and hence $\|w_{k,\tau}-w_k\|_2\leq \eta g_{\max}$ for all $\tau\in[0,1]$.
    Therefore, Assumption~\ref{ass: regularity} implies
    \begin{align}
        \|\bps(w_{k,\tau})-\bps(w_k)\|_{\mathrm{Op}}
        &\leq \eta\gamma\kappa,\nonumber\\
        \|\nabla^2L(w_{k,\tau})-\nabla^2L(w_k)\|_{\mathrm{Op}}
        &\leq \frac{1}{2}\eta\gamma^2\kappa.
        \label{eq: one step variation estimates}
    \end{align}
    The same estimates with $(k,\tau)$ replaced by $(k-1,1)$ control the differences between $w_k$ and $w_{k-1}$.
    
    \paragraph{Step 1: Analysis of matrix $\boldsymbol{T}_k$.}
    We denote the eigenvalues of the matrix $\bps(w_k)\nabla^2 L(w_k)\bps(w_k)$ as $\lambda_{k,1}\geq \lambda_{k,2}\geq \cdots \geq  \lambda_{k,d}$, and we denote the corresponding orthonormal eigenvectors as $\{v_{k,i}\}_{i=1}^d$. 
    Specifically, we have that $\lambda_{k,d}=0$ and that $v_{k,i} = v_i(\nabla^2L(w_k))$ for any $i\in [d]$. 
    Then we denote the orthogonal matrix $\boldsymbol{V}_k\in \RR^{2d\times 2d}$ and a diagonal matrix $\boldsymbol{\Lambda}_k\in\RR^{d\times d}$ as 
    \begin{align}
        \boldsymbol{V}_k:=\left(\begin{matrix}
        v_{k,1},\cdots,v_{k,d} & \boldsymbol{0}\\
        \boldsymbol{0} & v_{k,1},\cdots,v_{k,d},
    \end{matrix}
    \right),\quad \boldsymbol{\Lambda}_{k}:=\mathrm{diag}\Big(\lambda_{k,1},\cdots,\lambda_{k,d-1},0\Big)
    \end{align} 
    which allows us to transform the matrix $\boldsymbol{T}_k$ to 
    \begin{align}
        \boldsymbol{T}_k = \boldsymbol{V}_k \left(\begin{matrix}
        \beta\cdot \boldsymbol{I}_{1:d-1} & (1-\beta)\cdot \boldsymbol{I}_{1:d-1}\\
        -\eta\beta\cdot \boldsymbol{\Lambda}_k & \boldsymbol{I}_{1:d-1} - \eta(1-\beta)\cdot\boldsymbol{\Lambda}_{k}
    \end{matrix}
    \right)\boldsymbol{V}_k^\top.
    \end{align}
    Furthermore, we can find another constant orthogonal matrix $\boldsymbol{O}\in\RR^{2d\times 2d}$ such that 
    \begin{align}
        \boldsymbol{T}_k &= \boldsymbol{V}_k \boldsymbol{O}\mathrm{diag}\Big(\boldsymbol{T}_{k,1},\cdots,\boldsymbol{T}_{k,d-1},\boldsymbol{0}_{2\times 2}\Big)\boldsymbol{O}^\top\boldsymbol{V}_k^\top,\quad\text{where}\\
        \boldsymbol{T}_{k,i} &= \left(\begin{matrix}
        \beta & (1-\beta)\\
        -\beta\cdot \eta\lambda_{k,i} & 1 - (1-\beta)\cdot\eta\lambda_{k,i}
    \end{matrix}
    \right)\in \RR^{2\times 2},\quad \forall i\in[d-1].
    \end{align}
    Now for each block $\boldsymbol{T}_{k,i}$ with $i\in[d-1]$, consider the following Lyapunov equation, 
    \begin{align}\label{eq: block Lyapunov equation}
        \boldsymbol{T}_{k,i}^\top \boldsymbol{P}_{k,i}\boldsymbol{T}_{k,i} = \boldsymbol{P}_{k,i}-\boldsymbol{I}_2.
    \end{align}
    By $\eta\in \cI_1$, especially using that $\eta\gmax < 2(1+\beta)/(1-\beta)$, the spectral radius of $\boldsymbol{T}_{k,i}$ satisfies that $\rho(\boldsymbol{T}_{k,i})<1$ for any $k\geq 0$ and $i\in[d-1]$ (see Proposition~\ref{prop: eigen values}).
    Therefore, the above Lyapunov equation admits a unique positive definite solution $\boldsymbol{P}_{k,i}\in\RR^{2\times 2}$, given by 
    \begin{align}
         \boldsymbol{P}_{k,i} \!= \!\!\sum_{j=0}^{+\infty} (\boldsymbol{T}_{k,i}^\top)^j (\boldsymbol{T}_{k,i})^j = \left(\begin{matrix}
\frac{2(1-\beta)+(2\beta^3-2\beta^2+3\beta-1) \eta \lambda_{k,i}}{(1-\beta)^2\cdot(2(1+\beta)-(1-\beta) \eta \lambda_{k,i})} & \frac{-\beta(2 \beta-\eta \lambda_{k,i})}{(1-\beta)\cdot(2(1+\beta)-(1-\beta) \eta \lambda_{k,i})} \\
\frac{-\beta(2 \beta-\eta \lambda_{k,i})}{(1-\beta)\cdot(2(1+\beta)-(1-\beta) \eta \lambda_{k,i})} & \frac{-2(-1+\beta^2-\beta \eta \lambda_{k,i})}{ \eta \lambda_{k,i}(1-\beta)\cdot(2(1+\beta)-(1-\beta) \eta \lambda_{k,i})}
\end{matrix}\right) \succ \boldsymbol{I}_2 .    
    \end{align}
    Also, from the Lyapunov equation we can write $\boldsymbol{P}_{k,i}^{\frac{1}{2}}\boldsymbol{T}_{k,i} = \boldsymbol{Q}_{k,i}(\boldsymbol{P}_{k,i}-\boldsymbol{I}_2)^{\frac{1}{2}}$ for some othogonal matrix $\boldsymbol{Q}_{k,i}$.
   Now we combine these $\boldsymbol{P}_{k,i}$'s to form a block diagonal matrix $\boldsymbol{P}_k\in\RR^{2d\times 2d}$ as
    \begin{align}
        \boldsymbol{P}_k := \mathrm{diag}\Big(\boldsymbol{P}_{k,1},\cdots,\boldsymbol{P}_{k,d-1},\boldsymbol{I}_2\Big).
    \end{align}

    \paragraph{Step 2: Bounding the next step in the sharp direction: main analysis.}
    With the analysis of matrix $\boldsymbol{T}_k$, we can continue the analysis for upper bounding the sharp direction. 
    We have 
    \begin{align}
        \left(\begin{matrix}
        \bps(w_{k+1})m_{k+1}\\
        \bps(w_{k+1})\nabla L(w_{k+1})
    \end{matrix}
    \right) 
    & = \boldsymbol{T}_k\left(\begin{matrix}
        \bps(w_{k})m_{k}\\
        \bps(w_{k})\nabla L(w_{k}) 
    \end{matrix}
    \right)+ \boldsymbol{E}_k  \\
    &=\boldsymbol{V}_k \boldsymbol{O}\mathrm{diag}\Big(\boldsymbol{T}_{k,1},\cdots,\boldsymbol{T}_{k,d-1},\boldsymbol{0}_{2\times 2}\Big)\boldsymbol{O}^\top\boldsymbol{V}_k^\top\left(\begin{matrix}
        \bps(w_{k})m_{k}\\
        \bps(w_{k})\nabla L(w_{k}) 
    \end{matrix}
    \right)+ \boldsymbol{E}_k,
    \end{align}
    and thus 
    \begin{align}
        &\boldsymbol{P}_k^{\frac{1}{2}}\boldsymbol{O}^\top\boldsymbol{V}_k^\top\left(\begin{matrix}
        \bps(w_{k+1})m_{k+1}\\
        \bps(w_{k+1})\nabla L(w_{k+1})
    \end{matrix}
    \right) 
    \\
    &\qquad  = \boldsymbol{P}_k^{\frac{1}{2}}\mathrm{diag}\Big(\boldsymbol{T}_{k,1},\cdots,\boldsymbol{T}_{k,d-1},\boldsymbol{0}_{2\times 2}\Big)\boldsymbol{O}^\top\boldsymbol{V}_k^\top\left(\begin{matrix}
        \bps(w_{k})m_{k}\\
        \bps(w_{k})\nabla L(w_{k}) 
    \end{matrix}    \right) + \boldsymbol{P}_k^{\frac{1}{2}}\boldsymbol{O}^\top\boldsymbol{V}_k^\top \boldsymbol{E}_k.
    \end{align}
    Consider the $\|\cdot\|_2$-norm on both sides, we have
    \begin{align}
        &\left\|\boldsymbol{P}_k^{\frac{1}{2}}\boldsymbol{O}^\top\boldsymbol{V}_k^\top\left(\begin{matrix}
        \bps(w_{k+1})m_{k+1}\\
        \bps(w_{k+1})\nabla L(w_{k+1})
        \end{matrix}
        \right) \right\|_2 \label{eq: iteration origin}\\
        &\qquad \leq \left\|\boldsymbol{P}_k^{\frac{1}{2}}\mathrm{diag}\Big(\boldsymbol{T}_{k,1},\cdots,\boldsymbol{T}_{k,d-1},\boldsymbol{0}_{2\times 2}\Big)\boldsymbol{O}^\top\boldsymbol{V}_k^\top\left(\begin{matrix}
            \bps(w_{k})m_{k}\\
            \bps(w_{k})\nabla L(w_{k}) 
        \end{matrix}    \right) \right\|_2 + \left\|\boldsymbol{P}_k^{\frac{1}{2}}\boldsymbol{O}^\top\boldsymbol{V}_k^\top \boldsymbol{E}_k\right\|_2.
    \end{align}
    In {\bf Step 2} we focus on the first term on the right-hand side.
    For any $x\in\RR^2$, the Lyapunov equation \eqref{eq: block Lyapunov equation} implies
    \begin{align}
        \|\boldsymbol{P}_{k,i}^{1/2}\boldsymbol{T}_{k,i}x\|_2^2
        = x^\top(\boldsymbol{P}_{k,i}-\boldsymbol{I}_2)x.
    \end{align}
    Since $\boldsymbol{P}_{k,i}\succeq \boldsymbol{I}_2$ and
    $\boldsymbol{P}_{k,i}\preceq \|\boldsymbol{P}_{k,i}\|_{\mathrm{Op}}\boldsymbol{I}_2$, we have
    $\boldsymbol{I}_2\succeq \|\boldsymbol{P}_{k,i}\|_{\mathrm{Op}}^{-1}\boldsymbol{P}_{k,i}$ and hence
    \begin{align}
        \boldsymbol{P}_{k,i}-\boldsymbol{I}_2
        \preceq
        \left(1-\|\boldsymbol{P}_{k,i}\|_{\mathrm{Op}}^{-1}\right)\boldsymbol{P}_{k,i}.
    \end{align}
    Therefore, by the definition of $\boldsymbol{P}_k$,
    \begin{align}
        &\left\|\boldsymbol{P}_k^{\frac{1}{2}}\mathrm{diag}\Big(\boldsymbol{T}_{k,1},\cdots,\boldsymbol{T}_{k,d-1},\boldsymbol{0}_{2\times 2}\Big)\boldsymbol{O}^\top\boldsymbol{V}_k^\top\left(\begin{matrix}
            \bps(w_{k})m_{k}\\
            \bps(w_{k})\nabla L(w_{k}) 
        \end{matrix}    \right) \right\|_2^2 \nonumber\\
        &\qquad \leq
        \max_{i\in[d-1]}\left(1-\|\boldsymbol{P}_{k,i}\|_{\mathrm{Op}}^{-1}\right)
        \left\|\boldsymbol{P}_{k}^{\frac{1}{2} }\boldsymbol{O}^\top \boldsymbol{V}_{k}^\top\left(\begin{matrix}
            \bps(w_{k})m_{k}\\
            \bps(w_{k})\nabla L(w_{k}) 
        \end{matrix}    \right) \right\|_2^2.
    \end{align}
    Since $\|\boldsymbol{P}_{k,i}\|_{\mathrm{Op}}\leq \mathrm{Tr}(\boldsymbol{P}_{k,i})$, we have
    \begin{align}
        \max_{i\in[d-1]}\left(1-\|\boldsymbol{P}_{k,i}\|_{\mathrm{Op}}^{-1}\right)
        \leq 1-\left(\max_{i\in[d-1]}\mathrm{Tr}(\boldsymbol{P}_{k,i})\right)^{-1}
        :=1-\overline{q}_k,
    \end{align}
    where $\overline{q}_k$ is the squared-norm contraction parameter. Taking square roots and using $\sqrt{1-x}\leq 1-x/2$ for $x\in[0,1]$, we obtain
    \begin{align}
        &\left\|\boldsymbol{P}_k^{\frac{1}{2}}\mathrm{diag}\Big(\boldsymbol{T}_{k,1},\cdots,\boldsymbol{T}_{k,d-1},\boldsymbol{0}_{2\times 2}\Big)\boldsymbol{O}^\top\boldsymbol{V}_k^\top\left(\begin{matrix}
            \bps(w_{k})m_{k}\\
            \bps(w_{k})\nabla L(w_{k}) 
        \end{matrix}    \right) \right\|_2 \nonumber\\
        &\qquad \leq
        \big(1-\tfrac{1}{2}\overline{q}_k\big)
        \left\|\boldsymbol{P}_{k}^{\frac{1}{2} }\boldsymbol{O}^\top \boldsymbol{V}_{k}^\top\left(\begin{matrix}
            \bps(w_{k})m_{k}\\
            \bps(w_{k})\nabla L(w_{k}) 
        \end{matrix}    \right) \right\|_2. \label{eq: first term sharp}
    \end{align}
    The trace term in $\overline{q}_k$ is given by
    \begin{align}
        \mathrm{Tr}(\boldsymbol{P}_{k,i}) &= \frac{2(1-\beta)+(2\beta^3-2\beta^2+3\beta-1) \eta \lambda_{k,i}}{(1-\beta)^2\cdot(2(1+\beta)-(1-\beta) \eta \lambda_{k,i})} -\frac{2(-1+\beta^2-\beta \eta \lambda_{k,i})}{ \eta \lambda_{k,i}(1-\beta)\cdot(2(1+\beta)-(1-\beta) \eta \lambda_{k,i})}\\
        & =  \frac{(2\beta^3-2\beta^2+3\beta-1)\cdot (\eta\lambda_{k,i})^2 + 2(1-\beta^2)\cdot\eta\lambda_{k,i} + 2(1-\beta)^2(1+\beta)}{ \eta \lambda_{k,i}(1-\beta)^2\cdot(2(1+\beta)-(1-\beta) \eta \lambda_{k,i})}\\
        & = \frac{(2\beta^3-2\beta^2+3\beta-1)\cdot (\eta\lambda_{k,i})^2 + 2(1-\beta^2)\cdot\eta\lambda_{k,i} + 2(1-\beta)^2(1+\beta)}{ \eta \lambda_{k,i}(1-\beta)^2\cdot D(\eta\lambda_{k,i})},
        \label{eq: trace P k i}
    \end{align}
    where we denote $D(x):=2(1+\beta)-(1-\beta)x$ for any $x\in\RR$ for simplicity.

    By $\eta\in\cI_1$, we have that
    \begin{align}
        \eta\leq \frac{2(1+\beta)-\kappa^{1/16}}{(1-\beta)\cdot \gmax} \leq \frac{2(1+\beta)-\kappa^{1/16}}{(1-\beta)\cdot  \lambda_{k,i}},\quad \forall k\geq 0,\,i\in[d-1], \\ 
        \eta\geq \frac{\kappa^{1/32}}{\gmax} \geq \frac{\kappa^{1/32}}{ \lambda_{k,i}}\cdot\frac{\gamma}{\gmax} \geq\frac{\kappa^{1/16}}{ \lambda_{k,i}},\quad \forall k\geq 0,\,i\in[d-1],
    \end{align}
    where we have used item 2 of Assumption~\ref{ass: regularity} that $\gamma/\gmax\geq \kappa^{1/32}$.
    Therefore, we can lower bound
    \begin{align}
       D(\eta\lambda_{k,i}) \geq \kappa^{1/16},\quad \eta\lambda_{k,i} \geq \kappa^{1/16},\quad \forall k\geq 0,\,i\in[d-1].
    \end{align}
    Moreover 
    \begin{align}
        \eta\lambda_{k,i}\le \eta \gamma_{\max } \leq \frac{2(1+\beta)}{1-\beta} \leq \frac{4}{1-\beta},\quad \forall k\geq 0,\,i\in[d-1].
    \end{align}
    
    Now we can bound $\|\boldsymbol{P}_{k,i}\|_{\mathrm{Op}}$ by 
    \begin{align}
    \|\boldsymbol{P}_{k,i}\|_{\mathrm{Op}} \le \mathrm{Tr} (\boldsymbol{P}_{k,i})&\le \frac{2(1-\beta) + 2\eta\lambda_{k,i}}{(1-\beta)^2D(\eta\lambda_{k,i})} + \frac{2(1+\beta)}{\eta\lambda_{k,i}D(\eta\lambda_{k,i})}+\frac{2\beta}{(1-\beta)D(\eta\lambda_{k,i})}\\
    &\le \frac{2}{1-\beta}\kappa^{-1/16} + \frac{8}{(1-\beta)^3}\kappa^{-1/16} + 4\kappa^{-1/8}+\frac{2}{1-\beta}\kappa^{-1/16} \\
    &\le \left(\frac{16}{(1-\beta)^3}+6\right)\kappa^{-1/8} \label{eq: upper bound on sub-block P operator norm}
    \end{align}
    We now define the ambient Lyapunov weight by rotating $\boldsymbol{P}_k$ in the basis of $\boldsymbol{V}_k \boldsymbol{O}$:
\begin{align}
    \widetilde{\boldsymbol{P}}_k:=\boldsymbol{V}_k\boldsymbol{O} \boldsymbol{P}_k\boldsymbol{O}^\top \boldsymbol{V}_k^\top .
\end{align}
Then 
\begin{align}
    \left\|\boldsymbol{P}_k^{1/2}\boldsymbol{O}^\top \boldsymbol{V}_k^\top \left(\begin{matrix}
            \bps(w_{k})m_{k}\\
            \bps(w_{k})\nabla L(w_{k}) 
        \end{matrix}    \right)\right\|_2^2
    =
    \left(\begin{matrix}
            \bps(w_{k})m_{k}\\
            \bps(w_{k})\nabla L(w_{k}) 
        \end{matrix}    \right)^\top\widetilde{\boldsymbol{P}}_k\left(\begin{matrix}
            \bps(w_{k})m_{k}\\
            \bps(w_{k})\nabla L(w_{k}) 
        \end{matrix}    \right)
\end{align}
and 
\begin{align}
    \boldsymbol{T}_k^\top\widetilde{\boldsymbol{P}}_k\boldsymbol{T}_k
    =
    \widetilde{\boldsymbol{P}}_k-\boldsymbol{I}_{2d},
    \qquad
    \widetilde{\boldsymbol{P}}_k\succeq \boldsymbol{I}_{2d}
\end{align}
by transforming the Lyapunov equation~\eqref{eq: block Lyapunov equation}.
We next prove the following perturbation bound:
\begin{align}\label{eq: bound on difference of tilde P_k}
    \|\widetilde{\boldsymbol{P}}_k-\widetilde{\boldsymbol{P}}_{k-1}\|_{\mathrm{Op}}
    \le
    \mathsf{A}(\beta)\kappa^{3/4}.
\end{align}
For better organizing the proof, we introduce following constants in $\beta$:
\begin{align}
    \mathsf{R}_\beta:=\frac{2(1+\beta)}{1-\beta}, \quad   \mathsf{C}_{P}(\beta):=\frac{16}{(1-\beta)^3}+6,\quad
    \mathsf{C}_{T}(\beta):=2+\mathsf{R}_\beta, \quad   \mathsf{C}_{\Delta}(\beta):=
    \mathsf{R}_\beta\left(2+\frac52\mathsf{R}_\beta\right),
\end{align}
and 
\begin{align}
    \mathsf{A}(\beta)
    &:=
    \mathsf{C}_{P}(\beta)^2
    \left(
        2\mathsf{C}_{T}(\beta)\mathsf{C}_{\Delta}(\beta)
        +
        \mathsf{C}_{\Delta}(\beta)^2
    \right)\\
    &=\frac{16(1+\beta)}{1-\beta}\left(2+\frac{5(1+\beta)}{1-\beta}\right)\left(\frac{8}{(1-\beta)^3}+3\right)^2\left(2+\frac{4(1+\beta)}{1-\beta}+\frac{5(1+\beta)^2}{(1-\beta)^2}\right).
\end{align}
Next we define the Lyapunov operator
\begin{align}
     L_k(\boldsymbol{X}):=\boldsymbol{X}-\boldsymbol{T}_k^\top \boldsymbol{X}\boldsymbol{T}_k.
\end{align}
Since $\rho(\boldsymbol{T}_k)<1$ by \textbf{Step 1}, \( L_k\) is invertible and
\begin{align}
      L_k^{-1}(\boldsymbol{X})
    =
    \sum_{j=0}^{\infty}
    (\boldsymbol{T}_k^\top)^j\boldsymbol{X}\boldsymbol{T}_k^j .
\end{align}
By definition of $\widetilde{\boldsymbol{P}}_k$ we have
\begin{align}\label{eq: Lyapunov operator on P_k}
      L_k(\widetilde{\boldsymbol{P}}_k)=\boldsymbol{I}_{2d},
    \quad   L_{k}^{-1}(\boldsymbol{I}_{2d}) = \widetilde{\boldsymbol{P}}_k, \quad
      L_{k-1}(\widetilde{\boldsymbol{P}}_{k-1})=\boldsymbol{I}_{2d}. 
\end{align}
For any symmetric matrix \(\boldsymbol{X}=\boldsymbol{X}^\top\), we have
\begin{align}
    -\|\boldsymbol{X}\|_{\mathrm{Op}}\boldsymbol{I}_{2d}
    \preceq \boldsymbol{X}
    \preceq
    \|\boldsymbol{X}\|_{\mathrm{Op}}\boldsymbol{I}_{2d} \quad \Rightarrow     \quad -\|\boldsymbol{X}\|_{\mathrm{Op}}\widetilde{\boldsymbol{P}}_k
    \preceq
      L_k^{-1}(\boldsymbol{X})
    \preceq
    \|\boldsymbol{X}\|_{\mathrm{Op}}\widetilde{\boldsymbol{P}}_k.
\end{align}
Therefore
\begin{align}
    \left\|  L_k^{-1}(\boldsymbol{X})\right\|_{\mathrm{Op}}
    \le
    \|\boldsymbol{X}\|_{\mathrm{Op}}\|\widetilde{\boldsymbol{P}}_k\|_{\mathrm{Op}}.
\end{align}
Apply $L_k$ on $\widetilde{\boldsymbol{P}}_k-\widetilde{\boldsymbol{P}}_{k-1}$ and by identity~\eqref{eq: Lyapunov operator on P_k}, we obtain
\begin{align}
      L_k(\widetilde{\boldsymbol{P}}_k-\widetilde{\boldsymbol{P}}_{k-1})
    &=
    \boldsymbol{T}_k^\top\widetilde{\boldsymbol{P}}_{k-1}\boldsymbol{T}_k
    -
    \boldsymbol{T}_{k-1}^\top\widetilde{\boldsymbol{P}}_{k-1}\boldsymbol{T}_{k-1}.
\end{align}
Let $\Delta \boldsymbol{T}_k:=\boldsymbol{T}_k-\boldsymbol{T}_{k-1}$, then
\begin{align}
    \boldsymbol{T}_k^\top\widetilde{\boldsymbol{P}}_{k-1}\boldsymbol{T}_k
    -
    \boldsymbol{T}_{k-1}^\top\widetilde{\boldsymbol{P}}_{k-1}\boldsymbol{T}_{k-1}
    =
    \boldsymbol{T}_{k-1}^\top\widetilde{\boldsymbol{P}}_{k-1}\Delta \boldsymbol{T}_k
    +
    \Delta \boldsymbol{T}_k^\top\widetilde{\boldsymbol{P}}_{k-1}\boldsymbol{T}_{k-1}
    +
    \Delta \boldsymbol{T}_k^\top\widetilde{\boldsymbol{P}}_{k-1}\Delta \boldsymbol{T}_k .
\end{align}
Thus
\begin{align}
    \left\|\widetilde{\boldsymbol{P}}_k-\widetilde{\boldsymbol{P}}_{k-1}\right\|_{\mathrm{Op}}
    &= \left\|L_k^{-1}(L_k(\widetilde{\boldsymbol{P}}_k-\widetilde{\boldsymbol{P}}_{k-1}))\right\|_{\mathrm{Op}} \notag\\
    &\le
    \|\widetilde{\boldsymbol{P}}_k\|_{\mathrm{Op}}
    \|\widetilde{\boldsymbol{P}}_{k-1}\|_{\mathrm{Op}}
    \left(
        2\|\boldsymbol{T}_{k-1}\|_{\mathrm{Op}}\|\Delta \boldsymbol{T}_k\|_{\mathrm{Op}}
        +
        \|\Delta \boldsymbol{T}_k\|_{\mathrm{Op}}^2
    \right).
\end{align}
From definition we have 
\begin{align}
    \|\widetilde{\boldsymbol{P}}_t\|_{\mathrm{Op}} = \|\boldsymbol{V}_t\boldsymbol{O} \boldsymbol{P}_t\boldsymbol{O}^\top \boldsymbol{V}_t^\top\|_{\mathrm{Op}}=\|\boldsymbol{P}_t\| _{\mathrm{Op}},\quad t \in \{k-1,k\}.
\end{align}
And from equation~\eqref{eq: upper bound on sub-block P operator norm}
\begin{align}
    \left\|\boldsymbol{P}_t\right\|_{\mathrm{Op}}=\max \left\{1, \max _{i \in[d-1]}\left\|\boldsymbol{P}_{t, i}\right\|_{\mathrm{Op}}\right\} \leq \mathsf{C}_{P}(\beta) \kappa^{-1 / 8} .
\end{align}
It remains to bound $\|\boldsymbol{T}_{k-1}\|_{\mathrm{Op}}$ and \(\|\Delta \boldsymbol{T}_k\|_{\mathrm{Op}}\). 
Recall
\begin{align}
    \boldsymbol{T}_t&=
    \begin{pmatrix}
        \beta  \bps(w_t) & (1-\beta) \bps(w_t)\\
        -\eta\beta \bps(w_t) \nabla^2L(w_t) \bps(w_t) &  \bps(w_t)-\eta(1-\beta)\bps(w_t) \nabla^2L(w_t) \bps(w_t)
    \end{pmatrix}\\
    &=\left(\begin{array}{cc}
    \beta \bps(w_t) & (1-\beta) \bps(w_t) \\
    0 & \bps(w_t)
    \end{array}\right) \notag\\
    &\qquad+\eta\left(\begin{array}{cc}
    0 & 0 \\
    -\beta \bps(w_t) \nabla^2L(w_t) \bps(w_t) & -(1-\beta) \bps(w_t) \nabla^2L(w_t) \bps(w_t)
\end{array}\right) .
\end{align}
Therefore,
\begin{align}
    \left\|\boldsymbol{T}_{k-1}\right\|_{\mathrm{Op}} \leq 2+\eta \gamma_{\max }\le
    \mathsf{C}_{T}(\beta),
\end{align}
where we used $\|\bps(w_t)\|_{\mathrm{Op}}=1, \|\bps(w_t) \nabla^2L(w_t) \bps(w_t)\|_{\mathrm{Op}}\le\gamma_{\mathrm{max}}$.
Similarly, for $\Delta \boldsymbol{T}_{k}$:
\begin{align}
    \|\Delta \boldsymbol{T}_k\|_{\mathrm{Op}}
    &\le
    2\|\bps(w_k)-\bps(w_{k-1})\|_{\mathrm{Op}}
    \\
    &\qquad+
    \eta\|\bps(w_k)\nabla^2L(w_{k})\bps(w_k)-\bps(w_{k-1})\nabla^2L(w_{k-1})\bps(w_{k-1})\|_{\mathrm{Op}}.
\end{align}
Using the one-step variation estimates in \eqref{eq: one step variation estimates},
\begin{align}
    \|\bps(w_k)-\bps(w_{k-1})\|_{\mathrm{Op}}
    \le
    \eta\gamma\kappa,
    \quad
    \|\nabla^2L(w_k)-\nabla^2L(w_{k-1})\|_{\mathrm{Op}}
    \le
    \frac12\eta\gamma^2\kappa,
\end{align}
and moreover,
\begin{align}
    &\left\|
    \begin{aligned}
    &\bps(w_k)\nabla^2L(w_{k})\bps(w_k)\\[-2pt]
    &\qquad-\bps(w_{k-1})\nabla^2L(w_{k-1})\bps(w_{k-1})
    \end{aligned}
    \right\|_{\mathrm{Op}}\\
    &\qquad \le
    2\gamma_{\max}\|\bps(w_k)-\bps(w_{k-1})\|_{\mathrm{Op}}+
    \|\nabla^2L(w_k)-\nabla^2L(w_{k-1})\|_{\mathrm{Op}}
    \\
    &\qquad \le
    2\eta\gamma\gamma_{\max}\kappa
    +
    \frac12\eta\gamma^2\kappa .
\end{align}
Consequently,
\begin{align}
    \|\Delta \boldsymbol{T}_k\|_{\mathrm{Op}}
    &\le
    \eta\gamma\kappa
    \left(
        2+2\eta\gamma_{\max}+\frac12\eta\gamma
    \right)\le
    \mathsf{R}_\beta
    \left(
        2+\frac52\mathsf{R}_\beta
    \right)\kappa
    =
    \mathsf{C}_{\Delta}(\beta)\kappa,
\end{align}
where we used \(\gamma\le\gamma_{\max}\) and
\begin{align}
    \eta\gamma_{\max}
    \le
    \frac{2(1+\beta)}{1-\beta}
    =
    \mathsf{R}_\beta .
\end{align}
Putting these estimates together gives
\begin{align}
    \left\|\widetilde{\boldsymbol{P}}_k-\widetilde{\boldsymbol{P}}_{k-1}\right\|_{\mathrm{Op}}
    &\le
    \mathsf{C}_{P}(\beta)^2\kappa^{-1/4}
    \left(
        2\mathsf{C}_{T}(\beta)\mathsf{C}_{\Delta}(\beta)\kappa
        +
        \mathsf{C}_{\Delta}(\beta)^2\kappa^2
    \right)
    \\
    &\le
    \mathsf{C}_{P}(\beta)^2
    \left(
        2\mathsf{C}_{T}(\beta)\mathsf{C}_{\Delta}(\beta)
        +
        \mathsf{C}_{\Delta}(\beta)^2
    \right)\kappa^{3/4}
    \\
    &=
    \mathsf{A}(\beta)\kappa^{3/4},
\end{align}
where the second inequality uses \(\kappa\le1\).  This proves~\eqref{eq: bound on difference of tilde P_k}.
Finally, define the sharp-direction state vector
\begin{align}
    \boldsymbol{z}_k^{\mathrm{sh}}
    :=
    \begin{pmatrix}
        \bps(w_k)m_k\\
        \bps(w_k)\nabla L(w_k)
    \end{pmatrix}.
\end{align}
Then
\begin{align}
    \left\|\boldsymbol{P}_k^{1/2}\boldsymbol{O}^\top \boldsymbol{V}_k^\top
    \boldsymbol{z}_k^{\mathrm{sh}}\right\|_2^2
    &=
    (\boldsymbol{z}_k^{\mathrm{sh}})^\top
    \widetilde{\boldsymbol{P}}_k\boldsymbol{z}_k^{\mathrm{sh}}
    \\
    &\le
    (\boldsymbol{z}_k^{\mathrm{sh}})^\top
    \widetilde{\boldsymbol{P}}_{k-1}\boldsymbol{z}_k^{\mathrm{sh}}
    +
    \|\widetilde{\boldsymbol{P}}_k-\widetilde{\boldsymbol{P}}_{k-1}\|_{\mathrm{Op}}
    \|\boldsymbol{z}_k^{\mathrm{sh}}\|_2^2
    \\
    &\le
    \left(
        1+\mathsf{A}(\beta)\kappa^{3/4}
    \right)
    (\boldsymbol{z}_k^{\mathrm{sh}})^\top
    \widetilde{\boldsymbol{P}}_{k-1}\boldsymbol{z}_k^{\mathrm{sh}}
    \\
    &=
    \left(
        1+\mathsf{A}(\beta)\kappa^{3/4}
    \right)
    \left\|\boldsymbol{P}_{k-1}^{1/2}\boldsymbol{O}^\top \boldsymbol{V}_{k-1}^\top
    \boldsymbol{z}_k^{\mathrm{sh}}\right\|_2^2.
\end{align}
Taking square roots and using
\(1+\mathsf{A}(\beta)\kappa^{3/4}>1, \kappa^{3/4}\le\kappa^{3/16}\) gives
\begin{align}\label{eq: step 2 conclusion}
    \left\|\boldsymbol{P}_k^{1/2}\boldsymbol{O}^\top \boldsymbol{V}_k^\top
    \boldsymbol{z}_k^{\mathrm{sh}}\right\|_2
    \le
    \left(
        1+\mathsf{A}(\beta)\kappa^{3/16}
    \right)
    \left\|\boldsymbol{P}_{k-1}^{1/2}\boldsymbol{O}^\top \boldsymbol{V}_{k-1}^\top
    \boldsymbol{z}_k^{\mathrm{sh}}\right\|_2 .
\end{align}

    \paragraph{Step 3: Bounding the next step in the sharp direction: remaining terms.}
    It then remains to control the remaining terms due to the spinning of the river, i.e., $\|\boldsymbol{P}_k^{1/2}\boldsymbol{O}^\top\boldsymbol{V}_k^\top\boldsymbol{E}_k\|_2$.
    To this end, consider 
    \begin{align}
        \left\|\boldsymbol{P}_k^{\frac{1}{2}}\boldsymbol{O}^\top\boldsymbol{V}_k^\top\boldsymbol{E}_k\right\|_2 \leq \left\|\boldsymbol{P}_k^{\frac{1}{2}}\right\|_{\mathrm{Op}}\cdot \|\boldsymbol{E}_k\|_2.
    \end{align}
    For the first term $\|\boldsymbol{P}_k^{1/2}\|_{\mathrm{Op}}$, as we have shown in \textbf{Step 2}, it holds that 
    \begin{align}
        \|\boldsymbol{P}_k^{1/2}\|_{\mathrm{Op}} = \|\boldsymbol{P}_k\|_{\mathrm{Op}}^{1/2} \leq \left(\frac{16}{(1-\beta)^3}+6\right)^{\frac{1}{2}}\cdot \kappa^{-1/16}.\label{eq: b k inverse bound}
    \end{align}
    For the second term $\|\boldsymbol{E}_k\|_2$, by its definition in \eqref{eq: iteration sharp}, we have that 
    \begin{align}
        \|\boldsymbol{E}_k\|_2 \leq \|E_{k,1}\|_2 + \|E_{k,2}\|_2,
    \end{align}
    where we can bound $\|E_{k,1}\|_2$ and $\|E_{k,2}\|_2$ respectively.
    Firstly, the term $\|E_{k,1}\|_2$ can be bounded by 
    \begin{align}
        \|E_{k,1}\|_2 &\leq \left\|\big(\bps(w_{k+1}) - \bps(w_k)\big) m_{k+1}\right\|_2 \leq \kappa\eta\gamma \cdot \big(\beta\cdot\|m_k\|_2+(1-\beta)\cdot\|\nabla L(w_k)\|_2\big),\label{eq: e k 1}
    \end{align}
    where the second inequality uses \eqref{eq: one step variation estimates}.
    Secondly, the term $\|E_{k,2}\|_2$ can be  bounded  by 
    \allowdisplaybreaks
    \begin{align}
        \|E_{k,2}\|_2 &\leq \left\|\eta\cdot\int_0^1\Big(\bps(w_{k,\tau})\nabla^2 L(w_{k,\tau})\bps(w_{k,\tau}) -  \bps(w_k)\nabla^2 L(w_k)\bps(w_k)\Big)m_{k+1}\mathrm{d}\tau\right\|_2 \\
        &\qquad + \left\|\eta\cdot \int_0^1\nabla \bps(w_{k,\tau})[m_{k+1}]\nabla L(w_{k,\tau})\mathrm{d}\tau\right\|_2,\label{eq: e k 2}
    \end{align}
    We further have the following bounds for the right hand side of the above, 
    \begin{align}
        &\left\|\eta\cdot\int_0^1\Big(\bps(w_{k,\tau})\nabla^2 L(w_{k,\tau})\bps(w_{k,\tau}) -  \bps(w_k)\nabla^2 L(w_k)\bps(w_k)\Big)m_{k+1}\mathrm{d}\tau\right\|_2 \\
        &\qquad  \leq \left\|\eta\cdot\int_0^1\big(\bps(w_{k,\tau}) - \bps(w_k)\big)\nabla^2 L(w_{k,\tau})\bps(w_{k,\tau}) m_{k+1}\mathrm{d}\tau\right\|_2 \\
        &\qquad \qquad + \left\|\eta\cdot\int_0^1\bps(w_k)\big(\nabla^2 L(w_{k,\tau}) - \nabla^2 L(w_k)\big)\bps(w_{k,\tau}) m_{k+1}\mathrm{d}\tau\right\|_2 \\
        &\qquad\qquad + \left\|\eta\cdot\int_0^1\bps(w_k)\nabla^2 L(w_k)\big(\bps(w_{k,\tau})-\bps(w_{k}) \big) m_{k+1}\mathrm{d}\tau\right\|_2  \\
        &\qquad \leq 2\eta\cdot \eta\gamma\kappa\cdot \gmax\cdot \|m_{k+1}\|_2 + \eta\cdot \frac{1}{2}\eta\gamma^2\kappa\cdot \|m_{k+1}\|_2\\
        &\qquad \leq \eta\gamma\kappa\cdot\big(2\eta\gmax + \eta\gamma/2\big)\cdot \big(\beta\cdot\|m_k\|_2+(1-\beta)\cdot\|\nabla L(w_k)\|_2\big),\label{eq: e k 2 1}
    \end{align}
    where the second inequality uses \eqref{eq: one step variation estimates} together with item 2 of Assumption~\ref{ass: regularity}.
    Also,
    \begin{align}
        \!\!\!\!\!\!\!\!\!\!\!\!\!\!\!\!\!\!\!\!\left\|\eta\cdot \int_0^1\nabla \bps(w_{k,\tau})[m_{k+1}]\nabla L(w_{k,\tau})\mathrm{d}\tau\right\|_2 \leq \eta\gamma\kappa\cdot \big(\beta\cdot\|m_k\|_2+(1-\beta)\cdot\|\nabla L(w_k)\|_2\big),\label{eq: e k 2 2}
    \end{align}
    where we apply Lemma~\ref{lem: river spinning}.
    Consequently, by \eqref{eq: b k inverse bound}, \eqref{eq: e k 1}, \eqref{eq: e k 2}, \eqref{eq: e k 2 1}, and \eqref{eq: e k 2 2}, we have
    \begin{align}
        \left\|\boldsymbol{P}_k^{\frac{1}{2}}\boldsymbol{O}^\top\boldsymbol{V}_k^\top\boldsymbol{E}_k\right\|_2
        &\leq \left(\frac{16}{(1-\beta)^3}+6\right)^{\frac{1}{2}}
        \eta\gamma\cdot\big(2+\eta\gmax+\eta\gamma/2\big) \notag\\
        &\qquad\cdot \kappa^{15/16}\cdot
        \big(\beta\|m_k\|_2+(1-\beta)\|\nabla L(w_k)\|_2\big)\\
        &\leq \mathsf{B}(\beta)\cdot \kappa^{15/16}\cdot \left\|\left(\begin{matrix}
        \beta \cdot m_k\\
        (1-\beta)\cdot \nabla L(w_{k})
    \end{matrix}
    \right)\right\|_2,\label{eq: step 3 conclusion}
    \end{align}
    where we define $\mathsf{B}(\beta)$ as 
    \begin{align}
        \mathsf{B}(\beta):=2\cdot \left(\frac{16}{(1-\beta)^3}+6\right)^{\frac{1}{2}} \cdot \left(\frac{10}{1-\beta} + 1\right)\cdot\frac{2}{1-\beta}.
    \end{align}

    \paragraph{Step 4: Concluding the proof.}
    Finally, combining \eqref{eq: first term sharp}, \eqref{eq: step 2 conclusion} and \eqref{eq: step 3 conclusion} , we can conclude that     
    \begin{align}
        &\left\|\boldsymbol{P}_k^{\frac{1}{2}}\boldsymbol{O}^\top\boldsymbol{V}_k^\top\left(\begin{matrix}
        \bps(w_{k+1})m_{k+1}\\
        \bps(w_{k+1})\nabla L(w_{k+1})
    \end{matrix}
    \right) \right\|_2 \\
    &\qquad \leq\big(1-\tfrac{1}{2}\overline{q}_k\big)\cdot  \big(1+\mathsf{A}(\beta)\cdot \kappa^{3/16}\big)\cdot\left\|\boldsymbol{P}_{k-1}^{\frac{1}{2}}\boldsymbol{O}^\top\boldsymbol{V}_{k-1}^\top\!\!\left(\begin{matrix}
        \bps(w_{k})m_{k}\\
        \bps(w_{k})\nabla L(w_{k}) 
    \end{matrix}
    \right)\right\|_2 \\
    &\qquad \qquad + 
    \mathsf{B}(\beta)\cdot \kappa^{15/16}\cdot \left\|\left(\begin{matrix}
        \beta\cdot m_k\\
        (1-\beta)\cdot \nabla L(w_{k})
    \end{matrix}
    \right)\right\|_2
    \end{align}
    This completes the proof of Lemma~\ref{lem: sharp direction}.
\end{proof}

\begin{lemma}[One-step variation of the flat-direction gradient]\label{lem: flat gradient one step change}
    Suppose that Assumptions~\ref{ass: existence} and \ref{ass: regularity} hold.
    Suppose that the momentum iteration \eqref{eq: gd momentum} starts from an initial point $w_0\in\cM$ on the river and an initial momentum $m_0=0$.
    Take the learning rate $\eta$ and momentum parameter $\beta$ satisfying $\eta\in \cI_1\cap \cI_2\cap \cI_3$ and $\beta\leq1-\varsigma(\beta)$.
    Suppose that the induction conditions \eqref{eq: induction 2}, \eqref{eq: induction 3}, and \eqref{eq: induction 4} hold up to step $k-1$ for some $k\geq 1$.
    Then
    \begin{align}
        \|\bpf(w_k)\nabla L(w_k)-\bpf(w_{k-1})\nabla L(w_{k-1})\|_2
        \leq \underline{q}_{k-1}\big(1+2\underline{q}_{k-1}\big)\cdot \|\bpf(w_k)\nabla L(w_k)\|_2 .
    \end{align}
\end{lemma}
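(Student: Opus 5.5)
The plan is to bound the difference directly by integrating along the interpolation segment $w_{k-1,\tau}=w_{k-1}-\tau\eta m_k$, $\tau\in[0,1]$. This first gives a bound relative to $\|G_{k-1}\|_2$, where $G_j:=\bpf(w_j)\nabla L(w_j)$. I will then convert it to a bound relative to $\|G_k\|_2$ using the smallness of $\underline q_{k-1}$. I deliberately avoid invoking Lemma~\ref{lem: induction 2}, since in the dependency graph it sits downstream of the present lemma. Only the hypotheses \eqref{eq: induction 2}--\eqref{eq: induction 4} up to step $k-1$ are used. By the fundamental theorem of calculus,
\begin{align}
    G_k-G_{k-1}=-\eta\int_0^1\Big(\nabla\bpf(w_{k-1,\tau})[m_k]\nabla L(w_{k-1,\tau})+\bpf(w_{k-1,\tau})\nabla^2L(w_{k-1,\tau})\,m_k\Big)\mathrm{d}\tau .
\end{align}
I will split $m_k=\bpf(w_{k-1})m_k+\bps(w_{k-1})m_k$ and treat three contributions.

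\emph{(a) Flat part of the Hessian term.} Replace $\bpf(w_{k-1,\tau})$ by $\bpf(w_{k-1})$ at cost $\eta\gamma\kappa$, and $\nabla^2L(w_{k-1,\tau})$ by $\nabla^2 L(w_{k-1})$ at cost $\tfrac12\eta\gamma^2\kappa$, both via \eqref{eq: one step variation estimates}. Since $v_d(w_{k-1})$ is an eigenvector with $|\lambda_d|<\gflat$, this leaves at most $\eta\gflat\|\bpf(w_{k-1})m_k\|_2$. By \eqref{eq: induction 3} at step $k-1$, this is at most $\eta\gflat\big((1-\beta)\sum_{j=0}^{k-1}\beta^j+b_{k-1}\big)\|G_{k-1}\|_2$.

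\emph{(b) Sharp part.} At the base point, $\bpf(w_{k-1})\nabla^2L(w_{k-1})\bps(w_{k-1})=0$. Hence only rotation and Hessian-variation errors survive, bounded by $\eta\gamma\kappa(2\eta\gmax+\eta\gamma/2)\|m_k\|_2$ exactly as in \eqref{eq: e k 2 1}. In addition, the $\gflat$-weighted sharp momentum $\eta\gflat\|\bps(w_{k-1})m_k\|_2$ is controlled by \eqref{eq: induction 4} at step $k-1$ together with the recursion $m_k=\beta m_{k-1}+(1-\beta)\nabla L(w_{k-1})$.

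\emph{(c) Spinning term.} Lemma~\ref{lem: river spinning} bounds $\|\eta\nabla\bpf[m_k]\nabla L\|_2\le\eta\gamma\kappa\|m_k\|_2$, as in \eqref{eq: e k 2 2}.

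Finally, $\|m_k\|_2\le\big((1-\beta)+\beta\big((1-\beta)\sum_{j=0}^{k-2}\beta^j+b_{k-2}\big)(1+2\underline q_{k-2})\big)\|G_{k-1}\|_2$ up to $c$-corrections. This uses \eqref{eq: induction 3}, \eqref{eq: induction 4} at step $k-2$ and \eqref{eq: induction 2} at step $k-2$, the last to compare $\|G_{k-2}\|_2$ with $\|G_{k-1}\|_2$.

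Collecting (a)--(c) and using \eqref{eq: c 3} to absorb $1+2\eta\gmax+\eta\gamma/2\le\mathsf C(\beta)$ yields $\|G_k-G_{k-1}\|_2\le\widetilde{\underline q}_{k-1}\|G_{k-1}\|_2$. This is the same expression $\widetilde{\underline q}$ as in the proof of \eqref{eq: induction 2}, with $c$-terms absorbed using \eqref{eq: c small bound} and $\kappa\gamma\le\gflat$. The comparison carried out there gives $\widetilde{\underline q}_{k-1}\le\underline q_{k-1}$. I expect the main obstacle to be the bookkeeping in this collection step: the rotation errors carry $\gmax$ rather than $\gflat$, and they must be shown to fit inside the factor $2\eta\gamma\kappa\,\mathsf C(\beta)$. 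The index shift from step $k-2$ to $k-1$ in the momentum bound must also be handled consistently with $b_{k-1}$ and $\underline q_{k-1}$.

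For the conversion, the triangle inequality gives $\|G_{k-1}\|_2\le\|G_k\|_2+\underline q_{k-1}\|G_{k-1}\|_2$, so $\|G_{k-1}\|_2\le(1-\underline q_{k-1})^{-1}\|G_k\|_2$. By \eqref{eq: q k bound} and \eqref{eq: eta condition 2}, $\underline q_{k-1}\le 1/16\le 1/2$, hence $(1-\underline q_{k-1})^{-1}\le 1+2\underline q_{k-1}$. This yields the claimed bound $\|G_k-G_{k-1}\|_2\le\underline q_{k-1}(1+2\underline q_{k-1})\|G_k\|_2$.
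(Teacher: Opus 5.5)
Your proposal is correct and follows essentially the same route as the paper. Both arguments apply the fundamental theorem of calculus along $w_{k-1,\tau}$, control the flat-Hessian term by \eqref{eq: induction 3} at step $k-1$, and bound the rotation, Hessian-variation and spinning terms by $\eta\gamma\kappa$-factors times $\beta\|m_{k-1}\|_2+(1-\beta)\|\nabla L(w_{k-1})\|_2$. Both then use the algebraic comparison $\widetilde{\underline q}_{k-1}\le\underline q_{k-1}$ and convert from $\|G_{k-1}\|_2$ to $\|G_k\|_2$; you get the conversion from the difference bound itself, while the paper uses \eqref{eq: induction 2} at step $k-1$, which is equivalent.

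One small correction: the extra term $\eta\gflat\|\bps(w_{k-1})m_k\|_2$ in (b) is not actually there, because $\bpf(w_{k-1})\nabla^2L(w_{k-1})\bps(w_{k-1})=0$. It would be harmless anyway, since the comparison has slack of order $\eta\gflat\mathsf{C}(\beta)(1-\beta)$. The boundary cases $k\in\{1,2\}$ should be handled through the conventions $b_{-1}=\underline q_{-1}=0$ together with $m_0=0$ and $w_0\in\cM$.
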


\begin{proof}[Proof of Lemma~\ref{lem: flat gradient one step change}]
    Denote $G_\ell:=\bpf(w_\ell)\nabla L(w_\ell)$ for each $\ell$.
    For the boundary case $k=1$, we use the convention that $\sum_{j=0}^{-1}\beta^j=0$, $b_{-1}=0$, and $\underline{q}_{-1}=0$.
    We first bound the perturbation from $G_{k-1}$ to $G_k$ in terms of $G_{k-1}$.
    By Taylor's formula along the segment $\{w_{k-1,\tau}\}_{\tau\in[0,1]}$,
    \allowdisplaybreaks
    \begin{align}
        G_k-G_{k-1}
        &= -\eta\cdot \bpf(w_{k-1})\nabla^2 L(w_{k-1})\bpf(w_{k-1})m_k \nonumber\\
        &\qquad - \eta\cdot \int_0^1\Big(\bpf(w_{k-1,\tau})\nabla^2 L(w_{k-1,\tau})\bpf(w_{k-1,\tau}) \nonumber\\
        &\qquad\qquad- \bpf(w_{k-1})\nabla^2 L(w_{k-1})\bpf(w_{k-1})\Big)m_k\mathrm{d}\tau \nonumber\\
        &\qquad - \eta\cdot \int_0^1\nabla \bpf(w_{k-1,\tau})[m_k]\nabla L(w_{k-1,\tau})\mathrm{d}\tau .
    \end{align}
    The first term on the right-hand side is controlled by induction condition~\eqref{eq: induction 3} at step $k-1$:
    \begin{align}
        &\eta\cdot\|\bpf(w_{k-1})\nabla^2 L(w_{k-1})\bpf(w_{k-1})m_k\|_2 \nonumber\\
        &\qquad \leq \eta\gflat\cdot\Bigg((1-\beta)\sum_{j=0}^{k-1}\beta^j+b_{k-1}\Bigg)\cdot \|G_{k-1}\|_2 .\label{eq: one step flat gradient first term}
    \end{align}
    For the two integral terms, the same estimates as in \eqref{eq: e k 2 1} and Lemma~\ref{lem: river spinning}, with the index shifted from $k$ to $k-1$, give
    \begin{align}
        &\left\|\eta\cdot \int_0^1\Big(\bpf(w_{k-1,\tau})\nabla^2 L(w_{k-1,\tau})\bpf(w_{k-1,\tau}) - \bpf(w_{k-1})\nabla^2 L(w_{k-1})\bpf(w_{k-1})\Big)m_k\mathrm{d}\tau\right\|_2 \nonumber\\
        &\qquad\leq \eta\gamma\kappa\cdot\big(2\eta\gmax+\eta\gamma/2\big)\cdot\big(\beta\|m_{k-1}\|_2+(1-\beta)\|\nabla L(w_{k-1})\|_2\big),\label{eq: one step flat gradient second term}\\
        &\left\|\eta\cdot \int_0^1\nabla \bpf(w_{k-1,\tau})[m_k]\nabla L(w_{k-1,\tau})\mathrm{d}\tau\right\|_2 \nonumber\\
        &\qquad\leq \eta\gamma\kappa\cdot\big(\beta\|m_{k-1}\|_2+(1-\beta)\|\nabla L(w_{k-1})\|_2\big).\label{eq: one step flat gradient third term}
    \end{align}
    We next justify the full-vector bound used for the last two displays. For $k\geq2$, induction condition~\eqref{eq: induction 3} at step $k-2$ and Lemma~\ref{lem: auxiliary inequality 1} give
    \begin{align}
        \|\bpf(w_{k-2})m_{k-1}\|_2
        \leq \Bigg((1-\beta)\sum_{j=0}^{k-2}\beta^j+b_{k-2}\Bigg)\|G_{k-2}\|_2\leq R_{k-2}\|G_{k-1}\|_2,
    \end{align}
    where
    \begin{align}
        R_{k-2}:=\Bigg((1-\beta)\sum_{j=0}^{k-2}\beta^j+b_{k-2}\Bigg)\big(1+2\underline{q}_{k-2}\big).
    \end{align}
    For $k\geq3$, the momentum recursion and induction condition~\eqref{eq: induction 4} at step $k-3$ yield
    \begin{align}
        \|\bps(w_{k-2})m_{k-1}\|_2
        &\leq \beta\|\bps(w_{k-2})m_{k-2}\|_2
        +(1-\beta)\|\bps(w_{k-2})\nabla L(w_{k-2})\|_2\nonumber\\
        &\leq c\|G_{k-2}\|_2
        \leq c\big(1+2\underline q_{k-2}\big)\|G_{k-1}\|_2.
    \end{align}
    The same bound also holds for $k=2$ because $m_0=0$ and $w_0\in\cM$ imply that
    $\bps(w_0)m_1=(1-\beta)\bps(w_0)\nabla L(w_0)=0$. Consequently,
    \begin{align}
        \|m_{k-1}\|_2
        \leq\Big(R_{k-2}+c\big(1+2\underline q_{k-2}\big)\Big)\|G_{k-1}\|_2.
    \end{align}
    Induction condition~\eqref{eq: induction 4} at step $k-2$ also gives
    \begin{align}
        \|\nabla L(w_{k-1})\|_2\leq(1+c)\|G_{k-1}\|_2.
    \end{align}
    Hence, using \eqref{eq: c small bound}, $\underline q_{k-2}\leq1/2$, and $R_{k-2}\geq1-\beta$, we obtain
    \begin{align}
        &\beta\|m_{k-1}\|_2+(1-\beta)\|\nabla L(w_{k-1})\|_2 \nonumber\\
        &\qquad\leq\Big(\beta R_{k-2}+(1-\beta)+c\big(1+2\beta\underline q_{k-2}\big)\Big)\|G_{k-1}\|_2\nonumber\\
        &\qquad\leq2\big((1-\beta)+\beta R_{k-2}\big)\|G_{k-1}\|_2\nonumber\\
        &\qquad=2\Bigg((1-\beta)+\beta\Bigg((1-\beta)\sum_{j=0}^{k-2}\beta^j+b_{k-2}\Bigg)\big(1+2\underline{q}_{k-2}\big)\Bigg)\|G_{k-1}\|_2.\label{eq: one step flat gradient momentum bound}
    \end{align}
    Here the second inequality follows because
    \begin{align}
        c\big(1+2\beta\underline q_{k-2}\big)
        \leq(1-\beta)(1+\beta)
        \leq(1-\beta)+\beta R_{k-2}.
    \end{align}
    For the boundary case $k=1$, $m_0=0$ and $w_0\in\cM$ imply
    $\nabla L(w_0)=\bpf(w_0)\nabla L(w_0)=G_0$.
    Thus the left-hand side of \eqref{eq: one step flat gradient momentum bound} equals $(1-\beta)\|G_0\|_2$, so the same bound holds under the stated conventions.
    Combining \eqref{eq: one step flat gradient first term}, \eqref{eq: one step flat gradient second term}, \eqref{eq: one step flat gradient third term}, and \eqref{eq: one step flat gradient momentum bound}, we obtain
    \begin{align}
        \|G_k-G_{k-1}\|_2\leq \widehat{q}_{k-1}\cdot \|G_{k-1}\|_2,\label{eq: one step flat gradient preliminary}
    \end{align}
    where
    \begin{align}
        \widehat{q}_{k-1}
        &:=\eta\gflat\cdot \Bigg((1-\beta)\sum_{j=0}^{k-1}\beta^j+b_{k-1}\Bigg) \nonumber\\
        &\qquad +2\eta\gamma\kappa\cdot\big(1+2\eta\gmax+\eta\gamma/2\big)\cdot\Bigg((1-\beta)+\beta\Bigg((1-\beta)\sum_{j=0}^{k-2}\beta^j+b_{k-2}\Bigg)\big(1+2\underline{q}_{k-2}\big)\Bigg).
    \end{align}
    It remains to compare $\widehat{q}_{k-1}$ with $\underline{q}_{k-1}$.
    By item 2 of Assumption~\ref{ass: regularity} and Proposition~\ref{prop: useful properties}, $\kappa\gamma\leq \gflat$, $1+2\eta\gmax+\eta\gamma/2\leq\mathsf{C}(\beta)$, and, for $k\geq 2$, $\underline{q}_{k-2}\leq q(\beta)$; for $k=1$, this bound holds by the convention $\underline{q}_{-1}=0$.
    Moreover, \eqref{eq: b k weighted sum bound} gives $b_t\leq(1-\beta)\sum_{j=0}^t\beta^j$ for every $t\geq 0$.
    Therefore,
    \begin{align}
        &(1-\beta)+\beta\Bigg((1-\beta)\sum_{j=0}^{k-2}\beta^j+b_{k-2}\Bigg)\big(1+2\underline{q}_{k-2}\big) \nonumber\\
        &\qquad\leq 2(1-\beta)\sum_{j=0}^{k-1}\beta^j\big(1+2q(\beta)\big).\label{eq: one step flat gradient second factor}
    \end{align}
    Hence,
    \begin{align}
        \widehat{q}_{k-1}
        &\leq \eta\gflat\cdot\Bigg(2(1-\beta)\sum_{j=0}^{k-1}\beta^j + 2\mathsf{C}(\beta)\cdot2(1-\beta)\sum_{j=0}^{k-1}\beta^j\cdot\big(1+2q(\beta)\big)\Bigg) \nonumber\\
        &=2\eta\gflat\cdot\Big(1+2\mathsf{C}(\beta)\big(1+2q(\beta)\big)\Big)\cdot(1-\beta)\sum_{j=0}^{k-1}\beta^j \nonumber\\
        &=\underline{q}_{k-1}.\label{eq: one step flat gradient q comparison}
    \end{align}
    Combining \eqref{eq: one step flat gradient preliminary} and \eqref{eq: one step flat gradient q comparison} gives
    \begin{align}
        \|G_k-G_{k-1}\|_2\leq \underline{q}_{k-1}\cdot \|G_{k-1}\|_2 .
    \end{align}
    Finally, induction condition~\eqref{eq: induction 2} at step $k-1$ and Lemma~\ref{lem: auxiliary inequality 1} imply
    \begin{align}
        \|G_{k-1}\|_2\leq \big(1+2\underline{q}_{k-1}\big)\cdot\|G_k\|_2 .
    \end{align}
    The desired conclusion follows by substituting the last display into the previous one.
\end{proof}

\begin{lemma}[Induction argument 1]\label{lem: induction 1}
    Suppose that Assumptions~\ref{ass: existence} and \ref{ass: regularity} hold.
    Suppose that the momentum iteration \eqref{eq: gd momentum} starts from an initial point $w_0\in\cM$ on the river and an initial momentum $m_0=0$.
    Take the learning rate $\eta$ and momentum parameter $\beta$ satisfying $\eta\in \cI_1\cap \cI_2\cap \cI_3$.
    Also, suppose that the induction conditions \eqref{eq: induction 2}, \eqref{eq: induction 3}, and \eqref{eq: induction 4} hold up to step $k-1$ for some $k\geq 1$, and that \eqref{eq: induction 4} holds for the dummy step $k=-1$.
    Then for step $k$,
    \begin{align}
        &\left\|(1-\beta)\cdot\sum_{j=0}^{k}\beta^j\cdot \bpf(w_k)\nabla L(w_k) - \bpf(w_k)m_{k+1} \right\|_2\leq \widetilde{b}_k\cdot\|\bpf(w_k)\nabla L(w_k)\|_2,\quad\text{where} \\ 
        &\widetilde{b}_k:=\big(1+2\underline{q}_{k-1}\big)\cdot \Bigg(\beta b_{k-1}+ \beta(1-\beta) \sum_{j=0}^{k-1}\beta^j \underline{q}_{k-1} + \kappa\eta\gamma\beta\Bigg(1+(1-\beta)\sum_{j=0}^{k-1}\beta^j+b_{k-1}\Bigg)\Bigg).
    \end{align}
\end{lemma}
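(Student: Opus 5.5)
The plan is to unroll one step of the momentum recursion and compare it with the one-step shift of the target quantity. Write $G_\ell:=\bpf(w_\ell)\nabla L(w_\ell)$ and $S_k:=(1-\beta)\sum_{j=0}^k\beta^j$, so that $S_k=\beta S_{k-1}+(1-\beta)$. Since $m_{k+1}=\beta m_k+(1-\beta)\nabla L(w_k)$ and $\bpf(w_k)\nabla L(w_k)=G_k$, we have
\begin{align}
S_kG_k-\bpf(w_k)m_{k+1}=\beta\big(S_{k-1}G_k-\bpf(w_k)m_k\big).
\end{align}
Adding and subtracting the step-$(k-1)$ quantities splits the bracket into three pieces:
\begin{align}
S_{k-1}G_k-\bpf(w_k)m_k
&=\big(S_{k-1}G_{k-1}-\bpf(w_{k-1})m_k\big)+S_{k-1}(G_k-G_{k-1})\\
&\qquad-\big(\bpf(w_k)-\bpf(w_{k-1})\big)m_k.
\end{align}

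The three pieces are bounded as follows.
\begin{itemize}
\item The first piece is at most $b_{k-1}\|G_{k-1}\|_2$ by \eqref{eq: induction 3} at step $k-1$.
\item For the second piece, I would invoke Lemma~\ref{lem: flat gradient one step change} in its preliminary form $\|G_k-G_{k-1}\|_2\leq\underline q_{k-1}\|G_{k-1}\|_2$, which is established inside its proof. This gives $S_{k-1}\underline q_{k-1}\|G_{k-1}\|_2$.
\item For the third piece, the one-step variation estimate \eqref{eq: one step variation estimates} gives $\|\bpf(w_k)-\bpf(w_{k-1})\|_{\mathrm{Op}}=\|\bps(w_k)-\bps(w_{k-1})\|_{\mathrm{Op}}\leq\eta\gamma\kappa$. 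It then remains to bound $\|m_k\|_2$ by $(1+S_{k-1}+b_{k-1})\|G_{k-1}\|_2$.
\end{itemize}

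To bound $\|m_k\|_2$, split $m_k$ into flat and sharp parts at $w_{k-1}$. The flat part is at most $(S_{k-1}+b_{k-1})\|G_{k-1}\|_2$ by \eqref{eq: induction 3}. The sharp part satisfies $\|\bps(w_{k-1})m_k\|_2\leq\beta\|\bps(w_{k-1})m_{k-1}\|_2+(1-\beta)\|\bps(w_{k-1})\nabla L(w_{k-1})\|_2\leq c\|G_{k-1}\|_2\leq\|G_{k-1}\|_2$. This uses \eqref{eq: induction 4} at step $k-2$, the dummy step $-1$ when $k=1$, and \eqref{eq: c small bound}. For $k=1$ the sharp part is even zero, since $m_0=0$ and $w_0\in\cM$.

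Combining the three pieces yields
\begin{align}
\Big\|S_kG_k-\bpf(w_k)m_{k+1}\Big\|_2\leq\Big(\beta b_{k-1}+\beta S_{k-1}\underline q_{k-1}+\kappa\eta\gamma\beta\big(1+S_{k-1}+b_{k-1}\big)\Big)\|G_{k-1}\|_2.
\end{align}
Finally, I convert $\|G_{k-1}\|_2$ into $\|G_k\|_2$. Induction condition \eqref{eq: induction 2} at step $k-1$ gives $\|G_k\|_2\geq(1-\underline q_{k-1})\|G_{k-1}\|_2$. Lemma~\ref{lem: auxiliary inequality 1}, used in the same way as in the proof of Lemma~\ref{lem: flat gradient one step change}, then gives $\|G_{k-1}\|_2\leq(1+2\underline q_{k-1})\|G_k\|_2$; this is valid since $\underline q_{k-1}\leq1/16$ by \eqref{eq: q k bound}. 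That produces exactly $\widetilde b_k$.

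The main obstacle is a possible circularity in the second piece. Lemma~\ref{lem: flat gradient one step change} is stated under hypotheses that use only induction conditions up to step $k-1$, so invoking it is legitimate. However, its conclusion is phrased relative to $\|G_k\|_2$. I therefore need its intermediate bound relative to $\|G_{k-1}\|_2$, namely \eqref{eq: one step flat gradient preliminary} together with \eqref{eq: one step flat gradient q comparison}. Using the final stated form instead would introduce an extra factor $(1+2\underline q_{k-1})$ in the wrong place. Everything else is bookkeeping with the boundary conventions $b_{-1}=\underline q_{-1}=0$ and empty sums equal to zero.
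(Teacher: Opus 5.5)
Your proposal is correct and matches the paper's proof essentially step for step. The shared steps are:
\begin{itemize}
\item the same three-way split after peeling off one momentum step;
\item \eqref{eq: induction 3} at step $k-1$ for the first piece;
\item the $\eta\gamma\kappa$ bound on $\|\bpf(w_k)-\bpf(w_{k-1})\|_{\mathrm{Op}}$, together with a flat/sharp split of $m_k$ controlled by \eqref{eq: induction 3} and \eqref{eq: induction 4};
\item the final conversion via Lemma~\ref{lem: auxiliary inequality 1}.
\end{itemize}
The only difference is that the paper applies the stated conclusion of Lemma~\ref{lem: flat gradient one step change} directly to the second piece. That conclusion already yields exactly $\underline{q}_{k-1}(1+2\underline{q}_{k-1})\|\bpf(w_k)\nabla L(w_k)\|_2$, so the circularity or extra-factor obstacle you flag does not arise, and you do not need the intermediate bound from inside that lemma's proof.
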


\begin{proof}[Proof of Lemma~\ref{lem: induction 1}]
By definition, we have the following decomposition of the left hand side,
\allowdisplaybreaks
    \begin{align}
        &\left\|(1-\beta)\cdot\sum_{j=0}^{k}\beta^j\cdot \bpf(w_k)\nabla L(w_k) - \bpf(w_k)m_{k+1} \right\|_2\\
        &\qquad = \left\|(1-\beta)\cdot\sum_{j=1}^{k}\beta^j\cdot \bpf(w_k)\nabla L(w_k) - \beta\cdot \bpf(w_k) m_k\right\|_2 \\
        &\qquad \leq  \left\|\beta\cdot \Bigg((1-\beta)\cdot\sum_{j=0}^{k-1}\beta^j\cdot \bpf(w_{k-1})\nabla L(w_{k-1}) - \bpf(w_{k-1})m_{k}\Bigg)\right\|_2 \\
        &\qquad\qquad + \left\|\beta(1-\beta)\cdot\sum_{j=0}^{k-1}\beta^j\cdot \Big(\bpf(w_k)\nabla L(w_k) - \bpf(w_{k-1})\nabla L(w_{k-1})\Big) \right\|_2\\
        &\qquad\qquad + \left\|\beta\cdot \Big(\bpf(w_{k-1}) - \bpf(w_k)\Big) m_k\right\|_2.
    \end{align}
    Now using induction condition~\eqref{eq: induction 3} at step $k-1$ and Lemma~\ref{lem: flat gradient one step change}, we can derive that
    \begin{align}
        &\left\|(1-\beta)\cdot\sum_{j=0}^{k}\beta^j\cdot \bpf(w_k)\nabla L(w_k) - \bpf(w_k)m_{k+1} \right\|_2\\
        &\qquad \leq  \Bigg(\beta\cdot b_{k-1}+ \beta(1-\beta)\cdot \sum_{j=0}^{k-1}\beta^j\cdot \underline{q}_{k-1}\Bigg)\cdot(1+2\underline{q}_{k-1})\cdot \|\bpf(w_k)\nabla L(w_k)\|_2 \\
        &\qquad\qquad + \beta\cdot \left\|\Big(\bpf(w_{k-1}) - \bpf(w_k)\Big) m_k\right\|_2,
    \end{align}
    where we have also applied Lemma~\ref{lem: auxiliary inequality 1}.
    Furthermore, the last term on the right hand side above is bounded by the following,
    \begin{align}
        \left\|\Big(\bpf(w_{k-1}) - \bpf(w_k)\Big) m_k\right\|_2 &\leq \kappa \eta\gamma\cdot \|m_k\|_2  \\
        &\leq \kappa \eta\gamma\cdot \|\bps(w_{k-1})m_k\|_2  + \kappa \eta\gamma\cdot \|\bpf(w_{k-1})m_k\|_2,
    \end{align}
    where the first inequality uses Lemma~\ref{lem: change sharp}.
    On the one hand, by the induction conditions \eqref{eq: induction 2} and \eqref{eq: induction 3} at step $k-1$, applying Lemma~\ref{lem: auxiliary inequality 1}, we have that 
    \begin{align}
        \|\bpf(w_{k-1})m_k\|_2  \leq \Bigg((1-\beta)\cdot\sum_{j=0}^{k-1}\beta^j+b_{k-1}\Bigg)\cdot (1+2\underline{q}_{k-1})\cdot \|\bpf(w_k)\nabla L(w_k)\|_2. 
    \end{align}
    On the other hand, by the induction condition \eqref{eq: induction 4} at step $k-2$ and  condition \eqref{eq: induction 2} at step $k-1$, 
    \begin{align}
         \|\bps(w_{k-1})m_k\|_2  &\leq \|\bps(w_{k-1})m_{k-1}\|_2 + \|\bps(w_{k-1})\nabla L(w_{k-1})\|_2 \\ &\leq 2c\cdot \|\bpf(w_{k-1}) \nabla L(w_{k-1})\|_2 \\
         &\leq (1+2\underline{q}_{k-1})\cdot \|\bpf(w_k) \nabla L(w_{k})\|_2 ,
    \end{align}
    where the last inequality applies Lemma~\ref{lem: auxiliary inequality 1} and \eqref{eq: c small bound}.
    Therefore, by combining all the above inequalities, we can conclude the proof of Lemma~\ref{lem: induction 1}.
\end{proof}

\begin{lemma}[Induction argument 2]\label{lem: induction 2}
    Suppose that Assumptions~\ref{ass: existence} and \ref{ass: regularity} hold.
    Also, suppose that the induction conditions \eqref{eq: induction 2}, \eqref{eq: induction 3}, and \eqref{eq: induction 4} hold up to step $k-1$, plus that the induction condition \eqref{eq: induction 3} hold for step $k$, where $k\geq 1$, then it holds that for such a step $k$ and any $\tau\in[0,1]$,
    \begin{align}
        &\|\bpf(w_{k,\tau})\nabla L(w_{k,\tau})
        -\bpf(w_k)\nabla L(w_k)\|_2
        \leq\widetilde{\underline q}_k\|\bpf(w_k)\nabla L(w_k)\|_2, \notag\\
        &\|\bpf(w_{k,\tau})\nabla L(w_{k,\tau})\|_2
        \geq \big(1-\widetilde{\underline{q}}_k\big)\cdot
        \|\bpf(w_{k})\nabla L (w_{k})\|_2,\quad\text{where}\\
        &\widetilde{\underline{q}}_k := \eta\gflat\cdot \Bigg((1-\beta)\sum_{j=0}^k\beta^j +b_k\Bigg)\\
        &\qquad\quad+ 2\eta\gamma\kappa \big(1+2\eta\gmax+\eta\gamma/2\big)\cdot\Bigg((1-\beta)+\beta \Bigg((1-\beta)\sum_{j=0}^{k-1}\beta^j+b_{k-1}\Bigg) (1+2\underline{q}_{k-1})\Bigg).
    \end{align}
    Moreover, the above conclusion also holds for step $k=0$ supposing that the induction condition \eqref{eq: induction 4} holds for step $k=-1$ and that the induction condition \eqref{eq: induction 3} holds for step $k=0$.
\end{lemma}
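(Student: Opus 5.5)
The plan mirrors the proof of Lemma~\ref{lem: flat gradient one step change}, but over the partial segment $\{w_{k,s}\}_{s\in[0,\tau]}$ rather than the full step. Write $G_{k,\tau}:=\bpf(w_{k,\tau})\nabla L(w_{k,\tau})$ and $G_k:=G_{k,0}$. Since $w_{k,s}=w_k-s\eta m_{k+1}$, the fundamental theorem of calculus gives
\begin{align}
    G_{k,\tau}-G_k
    = -\tau\eta\,\bpf(w_k)\nabla^2L(w_k)\bpf(w_k)m_{k+1}
    -\eta\int_0^\tau \Big(\bpf(w_{k,s})\nabla^2L(w_{k,s})\bpf(w_{k,s})-\bpf(w_k)\nabla^2L(w_k)\bpf(w_k)\Big)m_{k+1}\,\mathrm{d}s
    -\eta\int_0^\tau \nabla\bpf(w_{k,s})[m_{k+1}]\nabla L(w_{k,s})\,\mathrm{d}s.
\end{align}
Since $\tau\le 1$, it suffices to bound each term as in the full step. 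The second claimed inequality then follows from the first by the reverse triangle inequality, $\|G_{k,\tau}\|_2\ge\|G_k\|_2-\|G_{k,\tau}-G_k\|_2$.

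\emph{Leading term.} On the flat direction, $\|\bpf\nabla^2L\bpf\|_{\mathrm{Op}}\le\gflat$, so this term is at most $\eta\gflat\|\bpf(w_k)m_{k+1}\|_2$. By induction condition~\eqref{eq: induction 3} at step $k$ (assumed) and the triangle inequality, $\|\bpf(w_k)m_{k+1}\|_2\le\big((1-\beta)\sum_{j=0}^k\beta^j+b_k\big)\|G_k\|_2$. This gives exactly the first summand of $\widetilde{\underline q}_k$.

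\emph{Spinning terms.} The variation estimates \eqref{eq: one step variation estimates} (with $\|w_{k,s}-w_k\|_2\le\eta g_{\max}$ by Lemma~\ref{lem: uniform momentum bound}) bound the first integral by $\eta\gamma\kappa(2\eta\gmax+\eta\gamma/2)\|m_{k+1}\|_2$, as in \eqref{eq: e k 2 1}. Lemma~\ref{lem: river spinning} bounds the second integral by $\eta\gamma\kappa\|m_{k+1}\|_2$, as in \eqref{eq: e k 2 2}. Combined, these give $\eta\gamma\kappa(1+2\eta\gmax+\eta\gamma/2)\,(\beta\|m_k\|_2+(1-\beta)\|\nabla L(w_k)\|_2)$.

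\emph{Main obstacle.} The remaining step is expressing $\beta\|m_k\|_2+(1-\beta)\|\nabla L(w_k)\|_2$ in terms of $\|G_k\|_2$ with the exact factor in $\widetilde{\underline q}_k$. This repeats the argument giving \eqref{eq: one step flat gradient momentum bound} with indices shifted up by one.
\begin{itemize}
    \item \emph{Flat part of $m_k$.} Use \eqref{eq: induction 3} at step $k-1$, then transfer from $G_{k-1}$ to $G_k$ via \eqref{eq: induction 2} at step $k-1$ and Lemma~\ref{lem: auxiliary inequality 1}: $\|G_{k-1}\|_2\le(1+2\underline q_{k-1})\|G_k\|_2$.
    \item \emph{Sharp part of $m_k$.} Expand the momentum recursion and use \eqref{eq: induction 4} at steps $k-2$ and $k-1$, which costs $c$ times these norms.
    \item \emph{Gradient.} Use \eqref{eq: induction 4} at step $k-1$: $\|\nabla L(w_k)\|_2\le(1+c)\|G_k\|_2$.
\end{itemize}
The $c$-contributions are absorbed into a factor $2$ using $c\le\frac38(1-\beta)$ from \eqref{eq: c small bound}, together with $(1-\beta)\sum_{j=0}^{k-1}\beta^j+b_{k-1}\ge 1-\beta$ (for $k\ge2$; the general case needs checking). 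This produces $2\big((1-\beta)+\beta((1-\beta)\sum_{j=0}^{k-1}\beta^j+b_{k-1})(1+2\underline q_{k-1})\big)\|G_k\|_2$, matching the second summand of $\widetilde{\underline q}_k$.

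\emph{Boundary cases.} For $k=1$, use $\bps(w_0)m_1=0$. For $k=0$, $m_0=0$ and $w_0\in\cM$ give $\beta\|m_0\|_2+(1-\beta)\|\nabla L(w_0)\|_2=(1-\beta)\|G_0\|_2$. These are consistent with the conventions $b_{-1}=\underline q_{-1}=0$ and empty sums equal to zero, and only \eqref{eq: induction 4} at the dummy step $-1$ and \eqref{eq: induction 3} at step $0$ are needed.
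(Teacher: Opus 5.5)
Your proposal is correct and follows essentially the same route as the paper. You use the same Taylor expansion along the (partial) segment with $\bpf\nabla^2L=\bpf\nabla^2L\bpf$, and control the leading term by \eqref{eq: induction 3} at step $k$. You bound the two spinning terms via \eqref{eq: one step variation estimates} and Lemma~\ref{lem: river spinning} by $\eta\gamma\kappa(1+2\eta\gmax+\eta\gamma/2)(\beta\|m_k\|_2+(1-\beta)\|\nabla L(w_k)\|_2)$, and convert this to $\|\bpf(w_k)\nabla L(w_k)\|_2$ through \eqref{eq: induction 2}--\eqref{eq: induction 4} at earlier steps and Lemma~\ref{lem: auxiliary inequality 1}, exactly as the paper does. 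The point you flagged as needing checking is harmless: $(1-\beta)\sum_{j=0}^{k-1}\beta^j\geq 1-\beta$ already holds for every $k\geq 1$, and in fact $c(1+2\beta\underline{q}_{k-1})\leq\frac{27}{64}(1-\beta)$ fits under the $(1-\beta)$ term alone, so the factor $2$ follows without that lower bound.
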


\begin{proof}[Proof of Lemma~\ref{lem: induction 2}]
    Suppose that the induction conditions \eqref{eq: induction 2}, \eqref{eq: induction 3}, and \eqref{eq: induction 4} hold up to step $k-1$, plus that the induction condition \eqref{eq: induction 3} holds for step $k$, where $k\geq 1$.
    For readability, we display the calculation at $\tau=1$. For a general $\tau\in[0,1]$, replace every integral over $[0,1]$ below by an integral over $[0,\tau]$ and multiply the corresponding nonintegral first-order term by $\tau$. Since all subsequent estimates use the triangle inequality and uniform bounds along the segment, and since $\tau\leq1$, the same upper bound $\widetilde{\underline q}_k$ follows.
    At $\tau=1$,
    \allowdisplaybreaks
    \begin{align}
        &\bpf(w_{k+1})\nabla L(w_{k+1}) \\
        &\qquad= \bpf(w_{k})\nabla L(w_{k}) -\eta\cdot\int_0^1\bpf(w_{k,\tau})\nabla^2 L(w_{k,\tau})\bpf(w_{k,\tau})m_{k+1}\mathrm{d}\tau \\
        &\qquad\qquad- \eta\cdot \int_0^1\nabla \bpf(w_{k,\tau})[m_{k+1}]\nabla L(w_{k,\tau})\mathrm{d}\tau \\
        &\qquad=\bpf(w_{k})\nabla L(w_{k}) - \eta\cdot \bpf(w_k)\nabla^2 L(w_k)\bpf(w_k)m_{k+1}  \\
        &\qquad\qquad -\eta\cdot\int_0^1\Big(\bpf(w_{k,\tau})\nabla^2 L(w_{k,\tau})\bpf(w_{k,\tau}) -  \bpf(w_k)\nabla^2 L(w_k)\bpf(w_k)\Big)m_{k+1}\mathrm{d}\tau \\
        & \qquad\qquad - \eta\cdot \int_0^1\nabla \bpf(w_{k,\tau})[m_{k+1}]\nabla L(w_{k,\tau})\mathrm{d}\tau \\
        &\qquad= \bpf(w_{k})\nabla L(w_{k}) - \eta\cdot(1-\beta)\cdot\sum_{j=0}^k\beta^j\cdot \bpf(w_k)\nabla^2 L(w_k)\bpf(w_k)\bpf(w_k)\nabla L (w_k)  \\
        &\qquad\qquad +\eta\cdot \bpf(w_k)\nabla^2 L(w_k)\bpf(w_k)\Bigg((1-\beta)\cdot\sum_{j=0}^k\beta^j\cdot\bpf(w_k)\nabla L (w_k) -\bpf(w_k)m_{k+1}\Bigg) \\
        &\qquad\qquad -\eta\cdot\int_0^1\Big(\bpf(w_{k,\tau})\nabla^2 L(w_{k,\tau})\bpf(w_{k,\tau}) -  \bpf(w_k)\nabla^2 L(w_k)\bpf(w_k)\Big)m_{k+1}\mathrm{d}\tau \\
        &\qquad \qquad - \eta\cdot \int_0^1\nabla \bpf(w_{k,\tau})[m_{k+1}]\nabla L(w_{k,\tau})\mathrm{d}\tau,
    \end{align}
    Therefore, we can lower bound $\|\bpf(w_{k+1})\nabla L(w_{k+1})\|_2$ as
    \begin{align}
        &\|\bpf(w_{k+1})\nabla L(w_{k+1})\|_2 \\
        &\qquad\geq \Bigg(1-\eta\gflat\cdot (1-\beta)\cdot\sum_{j=0}^k\beta^j - \eta\gflat\cdot b_k\Bigg)\cdot \|\bpf(w_{k})\nabla L(w_{k})\|_2  \\
        &\qquad\qquad - \left\|\eta\cdot\int_0^1\Big(\bpf(w_{k,\tau})\nabla^2 L(w_{k,\tau})\bpf(w_{k,\tau}) -  \bpf(w_k)\nabla^2 L(w_k)\bpf(w_k)\Big)m_{k+1}\mathrm{d}\tau\right\|_2 \\
        &\qquad\qquad -  \left\|\eta\cdot\int_0^1\nabla \bpf(w_{k,\tau})[m_{k+1}]\nabla L(w_{k,\tau})\mathrm{d}\tau\right\|_2,
    \end{align}
    where we have applied item 2 of Assumption~\ref{ass: regularity} and the induction condition \eqref{eq: induction 3} at step $k$.
    Moreover, for the last two terms on the right hand above, we have the following upper bounds,
    \allowdisplaybreaks
    \begin{align}
        &\left\|\eta\cdot\int_0^1\Big(\bpf(w_{k,\tau})\nabla^2 L(w_{k,\tau})\bpf(w_{k,\tau}) -  \bpf(w_k)\nabla^2 L(w_k)\bpf(w_k)\Big)m_{k+1}\mathrm{d}\tau\right\|_2 \label{eq: proof induction argument 1}\\
        &\qquad \leq \eta\gamma\kappa\cdot\big(2\eta\gmax + \eta\gamma/2\big)\cdot \big(\beta\cdot\|m_k\|_2+(1-\beta)\cdot\|\nabla L(w_k)\|_2\big)\\
        &\qquad \leq \eta\gamma\kappa\cdot\big(2\eta\gmax + \eta\gamma/2\big)\cdot \Big(\beta\cdot\big(\|\bpf(w_{k-1})m_k\|_2 + \|\bps(w_{k-1})m_k\|_2\big)\\
        &\qquad\qquad +(1-\beta)\cdot\big(\|\bpf(w_k)\nabla L(w_k)\|_2 + \|\bps(w_k)\nabla L(w_k)\|_2\big)\Big)\\
        &\qquad \leq \eta\gamma\kappa\cdot(1+c) \big(2\eta\gmax + \eta\gamma/2\big)\Big(\beta\cdot\|\bpf(w_{k-1})m_k\|_2 +(1-\beta)\cdot\|\bpf(w_k)\nabla L(w_k)\|_2 \Big)\\
        &\qquad \leq  2\eta\gamma\kappa\cdot\big(2\eta\gmax + \eta\gamma/2\big)\\
        &\qquad\qquad \cdot\Bigg((1-\beta)+\beta\cdot \Bigg((1-\beta)\cdot\sum_{j=0}^{k-1}\beta^j+b_{k-1}\Bigg)\cdot (1+2\underline{q}_{k-1})\Bigg)\cdot \|\bpf(w_k)\nabla L(w_k)\|_2,
    \end{align}
    where the first inequality follows from the same arguments as in \eqref{eq: e k 2 1}, the third inequality applies induction condition \eqref{eq: induction 4} at step $k-1$ and step $k-2$, and the last inequality uses induction conditions \eqref{eq: induction 2} and \eqref{eq: induction 3} at step $k-1$, together with \eqref{eq: c small bound}.
    Similarly, we have that
    \begin{align}
         &\left\|\eta\cdot\int_0^1\nabla \bpf(w_{k,\tau})[m_{k+1}]\nabla L(w_{k,\tau})\mathrm{d}\tau\right\|_2& \\
         &\qquad \leq \eta\gamma\kappa\cdot \Big(\beta\cdot\|m_k\|_2+(1-\beta)\cdot\|\nabla L(w_k)\|_2\Big)\\
        &\qquad \leq 2\eta\gamma\kappa\cdot\Bigg((1-\beta)+\beta\cdot \Bigg((1-\beta)\sum_{j=0}^{k-1}\beta^j+b_{k-1}\Bigg) (1+2\underline{q}_{k-1})\Bigg)\cdot \|\bpf(w_k)\nabla L(w_k)\|_2,
    \end{align}
    where the first inequality uses Lemma~\ref{lem: river spinning}, and the second inequality applies the same argument for deriving the above \eqref{eq: proof induction argument 1}.
    Consequently, for every $\tau\in[0,1]$, we can conclude that
    \begin{align}
        &\|\bpf(w_{k,\tau})\nabla L(w_{k,\tau})
        -\bpf(w_k)\nabla L(w_k)\|_2
        \leq\widetilde{\underline q}_k\|\bpf(w_k)\nabla L(w_k)\|_2, \notag\\
        &\|\bpf(w_{k,\tau})\nabla L(w_{k,\tau})\|_2
        \geq \big(1-\widetilde{\underline{q}}_k\big)\cdot
        \|\bpf(w_{k})\nabla L (w_{k})\|_2,\quad\text{where}\\
        &\widetilde{\underline{q}}_k := \eta\gflat\cdot \Bigg((1-\beta)\sum_{j=0}^k\beta^j +b_k\Bigg)\\
        &\qquad\quad+ 2\eta\gamma\kappa \big(1+2\eta\gmax+\eta\gamma/2\big)\cdot\Bigg((1-\beta)+\beta \Bigg((1-\beta)\sum_{j=0}^{k-1}\beta^j+b_{k-1}\Bigg) (1+2\underline{q}_{k-1})\Bigg).
    \end{align}
    Finally, for the case of step $k=0$ given the correctness of induction conditions \eqref{eq: induction 3} at step $k=0$ and \eqref{eq: induction 4} at step $k=-1$, the conclusion can be proved similarly by additionally using the fact that $m_0=0$.
    This completes the proof of Lemma~\ref{lem: induction 2}.
\end{proof}

\begin{lemma}[Induction argument 3]\label{lem: induction 3}
    Suppose that the momentum iteration \eqref{eq: gd momentum} starts from an initial point $w_0\in\cM$ on the river and an initial momentum $m_0=0$.
    Suppose that Assumptions~\ref{ass: existence} and \ref{ass: regularity} hold. 
    Also, suppose that the induction conditions \eqref{eq: induction 1}, \eqref{eq: induction 2}, and \eqref{eq: induction 3} hold up to step $k$, plus that the induction condition \eqref{eq: induction 4} hold up to step $k-1$, where $k\geq 0$.
    Take the learning rate $\eta$ and the momentum parameter $\beta$ satisfying $\eta\in \cI_1\cap \cI_2\cap \cI_3\cap \cI_4$ (see definitions in Theorem~\ref{thm: main formal}).
    Then it holds that for such a step $k$,
    \begin{align}
    &\left\|\boldsymbol{P}_k^{\frac{1}{2}}\boldsymbol{O}^\top\boldsymbol{V}_k^\top\left(\begin{matrix}
        \bps(w_{k+1})m_{k+1}\\
        \bps(w_{k+1})\nabla L(w_{k+1})
    \end{matrix}
    \right) \right\|_2 \leq  3\mathsf{B}(\beta)\cdot\kappa^{3/4}\cdot  \|\bpf(w_{k+1})\nabla L(w_{k+1})\|_2,
    \end{align}
    and consequently it holds that
    \begin{align}
         &\max\Big\{\|\bps(w_{k+1})\nabla L(w_{k+1})\|_2, \|\bps(w_{k+1})  m_{k+1}\|_2\Big\} \\
         &\qquad \leq 3\mathsf{B}(\beta)\cdot\kappa^{3/4}\cdot \|\bpf(w_{k+1})\nabla L(w_{k+1})\|_2.
    \end{align}
    Here the matrices $\{\boldsymbol{P}_k, \boldsymbol{O}, \boldsymbol{V}_k\}_{k\geq 0}\subset\RR^{2d\times 2d}$ are defined in Lemma~\ref{lem: sharp direction}.
\end{lemma}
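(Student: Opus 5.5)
We argue by unrolling the one-step sharp-direction recursion \eqref{eq: induction 1}, which holds for all steps by Lemma~\ref{lem: sharp direction}, and comparing the result with the flat gradient through \eqref{eq: induction 2}.

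Write $Z_{j}:=\|\boldsymbol{P}_j^{1/2}\boldsymbol{O}^\top\boldsymbol{V}_j^\top z^{\mathrm{sh}}_{j+1}\|_2$ and $G_j:=\|\bpf(w_j)\nabla L(w_j)\|_2$. The multiplicative factor in \eqref{eq: induction 1} is $\rho_j:=(1-\tfrac12\overline q_j)(1+\mathsf{A}(\beta)\kappa^{3/16})$. The first task is to show that $\rho_j$ is bounded away from $1$ uniformly. The quantity $\overline q_j^{-1}$ is the maximal trace of $\boldsymbol{P}_{j,i}$, and it blows up only when $D(\eta\lambda_{j,i})\to0$ or $\eta\lambda_{j,i}\to0$. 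The right endpoint of $\cI_4$ is designed precisely so that $\tfrac12\overline q_j\ge 2\mathsf{A}(\beta)\kappa^{3/16}$, or something comparable to a constant multiple of $\varsigma$. Its left endpoint keeps $\eta\lambda_{j,i}\gtrsim\kappa^{5/32}$ from below. Under these two constraints we get
\[
\rho_j\le 1-\theta,\qquad \theta\gtrsim \mathsf{D}_1\kappa^{3/16}.
\]
I would verify this by expanding $\mathrm{Tr}(\boldsymbol{P}_{j,i})$ near the right endpoint, where $D$ is small: there the trace behaves like $C_\beta/D$, and $D\gtrsim(1-\beta)^2\varsigma$ comes out of the $\cI_4$ formula.

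Next we bound the forcing term. Under the induction hypotheses \eqref{eq: induction 2}--\eqref{eq: induction 4} up to step $k$, the estimates in \eqref{eq: interpolation endpoint norm bounds} give $\beta\|m_j\|_2+(1-\beta)\|\nabla L(w_j)\|_2\le 3G_j$. So the forcing term is at most $3\mathsf{B}\kappa^{15/16}G_j$. Unrolling from $Z_{-1}=0$, which holds because $w_0\in\cM$ and $m_0=0$, yields
\[
Z_k\le 3\mathsf{B}\kappa^{15/16}\sum_{j\le k}(1-\theta)^{k-j}G_j .
\]
By \eqref{eq: induction 2} and Lemma~\ref{lem: auxiliary inequality 1}, we have $G_j\le\prod_{i=j}^{k}(1+2\underline q_i)\,G_{k+1}$. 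Since $\underline q_i\lesssim\eta\gflat$, and since $\gflat/\gmax$ is dominated by $\kappa^{3/16}$-scale terms in $\varsigma$, each ratio $(1-\theta)(1+2\underline q)$ stays at most $1-\theta/2$. The geometric sum is therefore bounded by $2/\theta\lesssim\kappa^{-3/16}$. This gives
\[
Z_k\le 3\mathsf{B}\kappa^{15/16-3/16}G_{k+1}=3\mathsf{B}\kappa^{3/4}G_{k+1}.
\]
Here the constant $\mathsf{D}_1\ge 2$ absorbs the factor $2$.

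The second claim then follows because $\boldsymbol{P}_k\succeq\boldsymbol{I}$, so $\|z^{\mathrm{sh}}_{k+1}\|_2\le Z_k$, and each component is bounded by the full vector.

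The main obstacle is the first step: extracting from the explicit $\cI_4$ endpoint a clean lower bound $\overline q_j\ge c\,\varsigma$ that beats both $\mathsf{A}\kappa^{3/16}$ and the flat drift $2\underline q$. I would try a Taylor expansion of the trace in $D$ around $D=0$, and check the small-$\eta\lambda$ regime separately using the left endpoint of $\cI_4$.
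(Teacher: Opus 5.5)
Your proposal is correct, but it is organized differently from the paper's proof.

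\textbf{How the paper argues.} The paper does not unroll the recursion. It combines \eqref{eq: induction 1} and \eqref{eq: induction 2} at step $k$ into one inequality for $Z_k-c\,G_{k+1}$. It bounds the forcing by $Z_{k-1}+3G_k$: the sharp part via $\boldsymbol{P}_{k-1}\succeq\boldsymbol{I}$, the flat part via \eqref{eq: induction 3}. With $c=3\mathsf{B}(\beta)\kappa^{3/4}$, the closure condition \eqref{eq: lower bound} then gives $Z_k-cG_{k+1}\le(\rho_k+\mathsf{B}(\beta)\kappa^{15/16})(Z_{k-1}-cG_k)\le 0$. The final step uses only the first (weighted) bound of \eqref{eq: induction 4} at step $k-1$. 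This is a local invariant-preservation step that fits the one-step induction architecture, and $\cI_4$ is read off directly from the closure condition.

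\textbf{How your route differs.} You run a discrete variation-of-constants argument from $Z_{-1}=0$, consuming the whole history of hypotheses, which the lemma grants. This makes the scaling transparent: forcing of size $\kappa^{15/16}$ divided by a contraction margin of size $\kappa^{3/16}$. It also shows that the stated constant has slack.

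\textbf{Bookkeeping to fix.} Two points in your margin accounting need correcting, though neither breaks the argument.
\begin{enumerate}
\item $\cI_4$ is constructed so that $\eta\in\cI_4$ forces $\max_i\mathrm{Tr}(\boldsymbol{P}_{j,i})\le 1/\varsigma$ for every sharp eigenvalue in $[\gamma,\gmax]$. Hence $\overline q_j\ge\varsigma$ holds uniformly in $j$, and no Taylor expansion is needed. The margin is not of order $\mathsf{D}_1\kappa^{3/16}$, however. The factor $1+\mathsf{A}(\beta)\kappa^{3/16}$ consumes most of $\varsigma/2$, leaving only $1-\rho_j\ge(1+\mathsf{B}(\beta))\kappa^{3/16}+\tfrac12\mathsf{D}_2(\beta)\gflat/\gmax$.
\item The flat drift is not absorbed because $\gflat/\gmax$ is small relative to $\kappa^{3/16}$. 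It is absorbed because $\eta\in\cI_1$ gives $2\underline q_j\le 2\eta\gflat(1+6\mathsf{C}(\beta))<\tfrac12\mathsf{D}_2(\beta)\gflat/\gmax$, so the $\mathsf{D}_2$ term in $\varsigma$ cancels it exactly.
\end{enumerate}
Your claim that each ratio is at most $1-\theta/2$ can fail when the $\gflat$ terms dominate. What does hold is $\rho_j(1+2\underline q_j)\le 1-(1+\mathsf{B}(\beta))\kappa^{3/16}$. Pushing $G_j$ forward to $G_{k+1}$ also produces one more factor $1+2\underline q_j\le 9/8$ than there are $\rho$'s. With these corrections you get $Z_k\le 3\mathsf{B}(\beta)\kappa^{3/4}\cdot\frac{9/8}{1+\mathsf{B}(\beta)}\,G_{k+1}$, which is stronger than required.

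A minor point: the forcing at step $j$ uses \eqref{eq: induction 4} only at step $j-1\le k-1$, not ``up to step $k$'' as you wrote.
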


\begin{proof}[Proof of Lemma~\ref{lem: induction 3}]
    By the induction conditions  \eqref{eq: induction 1} and \eqref{eq: induction 2} for step $k$, we have that
    \begin{align}
        &\left\|\boldsymbol{P}_k^{\frac{1}{2}}\boldsymbol{O}^\top\boldsymbol{V}_k^\top\left(\begin{matrix}
        \bps(w_{k+1})m_{k+1}\\
        \bps(w_{k+1})\nabla L(w_{k+1})
    \end{matrix}
    \right) \right\|_2 \\
    &\qquad \leq\big(1-\tfrac{1}{2}\overline{q}_k\big)\cdot  \big(1+\mathsf{A}(\beta)\cdot \kappa^{3/16}\big)\cdot\left\|\boldsymbol{P}_{k-1}^{\frac{1}{2}}\boldsymbol{O}^\top\boldsymbol{V}_{k-1}^\top\!\!\left(\begin{matrix}
        \bps(w_{k})m_{k}\\
        \bps(w_{k})\nabla L(w_{k}) 
    \end{matrix}
    \right)\right\|_2 \\
    &\qquad \qquad + 
    \mathsf{B}(\beta)\cdot\kappa^{15/16}\cdot \left\|\left(\begin{matrix}
        \beta\cdot m_k\\
        (1-\beta)\cdot \nabla L(w_{k})
    \end{matrix}
    \right)\right\|_2,\label{eq: proof induction 3 1}
    \end{align}
    and that 
    \begin{align}
        \|\bpf(w_{k+1})\nabla L(w_{k+1})\|_2 \geq \Big(1-2\eta\gflat \cdot \big(1+6\mathsf{C}(\beta)\big)\Big)\cdot \|\bpf(w_{k})\nabla L(w_{k})\|_2.\label{eq: proof induction 3 2}
    \end{align}
    Note that here we also use $\eta\in\cI_1\cap\cI_2\cap\cI_3$ to apply the following bounds (see Proposition~\ref{prop: useful properties}) to obtain \eqref{eq: proof induction 3 2} from \eqref{eq: induction 2},
    \begin{align}
        1+2\eta\gmax+\frac{\eta\gamma}{2}\leq \mathsf{C}(\beta),\quad \underline{q}_k \leq 2\eta\gflat \cdot\big(1+6\mathsf{C}(\beta)\big).
    \end{align}
    By combining \eqref{eq: proof induction 3 1} and \eqref{eq: proof induction 3 2}, we have that, for any coefficient $c\in[0,1]$,
    \begin{align}
        &\left\|\boldsymbol{P}_k^{\frac{1}{2}}\boldsymbol{O}^\top\boldsymbol{V}_k^\top\left(\begin{matrix}
        \bps(w_{k+1})m_{k+1}\\
        \bps(w_{k+1})\nabla L(w_{k+1})
    \end{matrix}
    \right) \right\|_2  - c\cdot \|\bpf(w_{k+1})\nabla L(w_{k+1})\|_2 \\
        &\qquad  \leq \big(1-\tfrac{1}{2}\overline{q}_k\big)\cdot  \big(1+\mathsf{A}(\beta)\cdot \kappa^{3/16}\big)\cdot\left\|\boldsymbol{P}_{k-1}^{\frac{1}{2}}\boldsymbol{O}^\top\boldsymbol{V}_{k-1}^\top\!\!\left(\begin{matrix}
        \bps(w_{k})m_{k}\\
        \bps(w_{k})\nabla L(w_{k}) 
    \end{matrix}
    \right)\right\|_2 \\
    &\qquad\qquad - c\cdot \Big(1-2\eta\gflat \cdot \big(1+6\mathsf{C}(\beta)\big)\Big)\cdot \|\bpf(w_{k})\nabla L(w_{k})\|_2\\
    &\qquad\qquad + \mathsf{B}(\beta)\cdot \kappa^{15/16}\cdot \left\|\left(\begin{matrix}
        \beta\cdot m_k\\
        (1-\beta)\cdot \nabla L(w_{k})
    \end{matrix}
    \right)\right\|_2.\label{eq: proof induction 3 3}
    \end{align}
    Notice that the last term above can be bounded by the following,
    \begin{align}
        \left\|\left(\begin{matrix}
        \beta\cdot m_k\\
        (1-\beta)\cdot \nabla L(w_{k})
    \end{matrix}
    \right)\right\|_2 \leq \left\|\left(\begin{matrix}
        \beta\cdot \bps(w_{k}) m_k\\
        (1-\beta) \bps(w_{k}) \nabla L(w_{k})
    \end{matrix}
    \right)\right\|_2 + \left\|\left(\begin{matrix}
        \beta\cdot  \bpf(w_{k}) m_k\\
        (1-\beta)\bpf(w_{k}) \nabla L(w_{k})
    \end{matrix}
    \right)\right\|_2.
    \end{align}
    On the one hand, for the first term above, we have that 
    \begin{align}
        \left\|\left(\begin{matrix}
        \beta\cdot \bps(w_{k}) m_k\\
        (1-\beta)\cdot \bps(w_{k}) \nabla L(w_{k})
    \end{matrix}
    \right)\right\|_2 &\leq  \left\|\boldsymbol{P}_{k-1}^{\frac{1}{2}}\boldsymbol{O}^\top\boldsymbol{V}_{k-1}^\top\left(\begin{matrix}
        \bps(w_{k}) m_k\\
        \bps(w_{k}) \nabla L(w_{k})
    \end{matrix}
    \right)\right\|_2.
    \end{align}
    On the other hand, for the second term, we have, 
    \begin{align}
        &\left\|\left(\begin{matrix}
        \beta\cdot \bpf(w_{k}) m_k\\
        (1-\beta)\cdot \bpf(w_{k}) \nabla L(w_{k})
    \end{matrix}
    \right)\right\|_2 \\
    &\qquad \leq \|\beta\cdot \bpf(w_{k}) m_k\|_2 + \|\bpf(w_{k}) \nabla L(w_{k})\|_2 \\
    & \qquad \leq \left\|\beta\cdot \bpf(w_{k}) m_k - (1-\beta)\cdot\sum_{j=1}^{k} \beta^j \cdot \bpf(w_k)\nabla L(w_k)\right\|_2 \\
    &\qquad \qquad+ \left\|(1-\beta)\cdot\sum_{j=1}^{k}\beta^j \cdot \bpf(w_k)\nabla L(w_k)\right\|_2 + \|\bpf(w_{k}) \nabla L(w_{k})\|_2 \\
    &\qquad \leq \big(b_k + 2\big)\cdot \|\bpf(w_{k}) \nabla L(w_{k})\|_2 \\
    &\qquad \leq 3\cdot \|\bpf(w_{k}) \nabla L(w_{k})\|_2,
    \end{align}
    where the third inequality uses induction condition~\eqref{eq: induction 2} at step $k$ and that $b_k\leq 1$ due to $\eta\in\cI_3$ (see \eqref{eq: eta condition 3} in Proposition~\ref{prop: useful properties}).
    Consequently, we obtain that
    \begin{align}
        &\left\|\left(\begin{matrix}
        \beta\cdot m_k\\
        (1-\beta)\cdot \nabla L(w_{k})
    \end{matrix}
    \right)\right\|_2  \leq  \left\|\boldsymbol{P}_{k-1}^{\frac{1}{2}}\boldsymbol{O}^\top\boldsymbol{V}_{k-1}^\top\left(\begin{matrix}
        \bps(w_{k}) m_k\\
        \bps(w_{k}) \nabla L(w_{k})
    \end{matrix}
    \right)\right\|_2 + 3\cdot \|\bpf(w_{k}) \nabla L(w_{k})\|_2.\label{eq: proof induction 3 3+}
    \end{align}
    Combining \eqref{eq: proof induction 3 3+} with \eqref{eq: proof induction 3 3}, we can conclude that 
    \begin{align}
        &\left\|\boldsymbol{P}_k^{\frac{1}{2}}\boldsymbol{O}^\top\boldsymbol{V}_k^\top\left(\begin{matrix}
        \bps(w_{k+1})m_{k+1}\\
        \bps(w_{k+1})\nabla L(w_{k+1})
    \end{matrix}
    \right) \right\|_2 - c\cdot \|\bpf(w_{k+1})\nabla L(w_{k+1})\|_2 \\
        &\qquad  \leq \bigg(\big(1-\tfrac{1}{2}\overline{q}_k\big)\cdot\Big(1+\mathsf{A}(\beta)\cdot\kappa^{3/16}\Big) + \mathsf{B}(\beta)\cdot \kappa^{15/16}\bigg)\cdot\left\|\boldsymbol{P}_{k-1}^{\frac{1}{2}}\boldsymbol{O}^\top\boldsymbol{V}_{k-1}^\top\!\!\left(\begin{matrix}
        \bps(w_{k})m_{k}\\
        \bps(w_{k})\nabla L(w_{k})  
    \end{matrix}
    \right)\right\|_2 \\
    &\qquad\qquad - \bigg(c\cdot \Big(1-2\eta\gflat \cdot \big(1+6\mathsf{C}(\beta)\big)\Big) - 3 \mathsf{B}(\beta)\cdot \kappa^{15/16}\bigg)\cdot \|\bpf(w_{k})\nabla L(w_{k})\|_2.
    \end{align}
    Set $c_{\mathrm{end}}:=3\mathsf{B}(\beta)\kappa^{3/4}$ and take $c=c_{\mathrm{end}}$ in the preceding inequality. The induction closes once the coefficient multiplying $c_{\mathrm{end}}$ has the following lower bound:
    \begin{align}
        \tfrac{1}{2}\overline{q}_k\cdot\Big(1+\mathsf{A}(\beta)\cdot\kappa^{3/16}\Big)
        -\mathsf{A}(\beta)\cdot\kappa^{3/16} -\mathsf{B}(\beta)\cdot\kappa^{15/16}
        -2\eta\gflat\cdot\big(1+6\mathsf{C}(\beta)\big) 
        \geq \kappa^{3/16}.\label{eq: lower bound}
    \end{align}
    Indeed, multiplying this bound by $c_{\mathrm{end}}$ gives
    \begin{align}
        c_{\mathrm{end}}\kappa^{3/16}
        =3\mathsf{B}(\beta)\kappa^{15/16},
    \end{align}
    so the right-hand side of the preceding inequality is at most the sharp-direction contraction factor times the induction residual at step $k-1$.
    Since the norm-contraction term in \eqref{eq: lower bound} is $\tfrac{1}{2}\overline{q}_k$, a sufficient condition for that inequality is
    \begin{align}
        \max_{i\in[d-1]} \mathrm{Tr}(\boldsymbol{P}_{k,i}) \leq \frac{1}{\mathsf{D}_1(\beta)\cdot \kappa^{3/16} + \mathsf{D}_2(\beta)\cdot \gflat\gmax^{-1}}:=\frac{1}{\varsigma}.\label{eq: inequality for I 5}
    \end{align}
    Indeed, \eqref{eq: inequality for I 5} implies $\tfrac{1}{2}\overline{q}_k\geq \varsigma/2$. Using the definitions below, $\eta\gflat<2(1+\beta)(1-\beta)^{-1}\gflat\gmax^{-1}$ from $\eta\in\cI_1$, and $\kappa\leq1$, the left-hand side of \eqref{eq: lower bound} is at least
    \begin{align}
        &\big(1+\mathsf{A}(\beta)+\mathsf{B}(\beta)\big)\kappa^{3/16}
        +4\big(1+6\mathsf{C}(\beta)\big)\frac{1+\beta}{1-\beta}\frac{\gflat}{\gmax}
        -\mathsf{A}(\beta)\kappa^{3/16} 
        -\mathsf{B}(\beta)\kappa^{15/16}
        -2\eta\gflat\big(1+6\mathsf{C}(\beta)\big) \notag\\
        &\qquad \geq \big(1+\mathsf{B}(\beta)\big)\kappa^{3/16}
        -\mathsf{B}(\beta)\kappa^{15/16}
        \geq \kappa^{3/16}.
    \end{align}
    Here we denote $\mathsf{D}_1(\beta)$ and $\mathsf{D}_2(\beta)$ respectively as 
    \begin{align}
        \mathsf{D}_1(\beta) &:= 2\big(1 + \mathsf{A}(\beta) + \mathsf{B}(\beta)\big), \\
        \mathsf{D}_2(\beta) &:= 8\cdot \big(1+6\mathsf{C}(\beta)\big) \cdot\frac{1+\beta}{1-\beta}.
    \end{align}
    Solving the above inequality \eqref{eq: inequality for I 5} is equivalent to the following inequality on $\eta$,
    \begin{align}
        \frac{(2\beta^3-2\beta^2+3\beta-1)\cdot (\eta\lambda_{k,i})^2 + 2(1-\beta^2)\cdot\eta\lambda_{k,i} + 2(1-\beta)^2(1+\beta)}{ \eta \lambda_{k,i}(1-\beta)^2\cdot(2(1+\beta)-(1-\beta) \eta \lambda_{k,i})} \leq \frac{1}{\varsigma},\forall i\in[d-1].
    \end{align}
    This actually can be simplified to the following $d-1$ quadratic inequalities on $\eta$, 
    \begin{align}
        \mathsf{P}(\eta\lambda_{k,i}) &= c_2 \cdot (\eta\lambda_{k,i})^2 + c_1 \cdot \eta\lambda_{k,i} + c_0, \quad \text{where } i\in[d-1],  \\ 
    c_2 &= a(1-\beta)^3 + \bigl(2\beta^3 - 2\beta^2 + 3\beta - 1\bigr), \\
c_1 &= 2(1+\beta)(1-\beta)\bigl(1 - a(1-\beta)\bigr), \\
c_0 &= 2(1+\beta)(1-\beta)^2 .
    \end{align}
    Given the condition on $\beta$ that $\beta\leq 1-\varsigma = 1 - \mathsf{D}_1(\beta)\cdot \kappa^{3/16} - \mathsf{D}_2(\beta)\cdot \gflat\gmax^{-1}$, solving the above inequalities gives 
    \begin{align}
\frac{(1-\beta)\left(
  \begin{aligned}
  &(1+\beta)(a(1-\beta)-1)\\[-2pt]
  &\quad-\sqrt{(1+\beta)\,R(\beta)}
  \end{aligned}
  \right)}
  {a(1-\beta)^3 + (2\beta^3 - 2\beta^2 + 3\beta - 1)}
  \leq \eta\lambda_{k,i} \leq \frac{(1-\beta)\left(
  \begin{aligned}
  &(1+\beta)(a(1-\beta)-1)\\[-2pt]
  &\quad+\sqrt{(1+\beta)\,R(\beta)}
  \end{aligned}
  \right)}
  {a(1-\beta)^3 + (2\beta^3 - 2\beta^2 + 3\beta - 1)},
    \end{align}
    where we let $a:=1/\varsigma$ and $R(\beta)$ is defined as 
    \begin{align}
        R(\beta) := (a^2+2a-4)\cdot \beta^3
-(a^2+4a-4)\cdot \beta^2 +(-a^2+6a-5)\cdot \beta+ (a^2-4a+3).    
    \end{align}
    This effectively requires that 
    \begin{align}
        \eta\lambda_{k, 1} &\leq \frac{(1-\beta)((1+\beta)(a(1-\beta)-1) + \sqrt{(1+\beta)\,R(\beta)})}
{a(1-\beta)^3 + (2\beta^3 - 2\beta^2 + 3\beta - 1)}, \quad \text{and}\\ 
\eta\lambda_{k,d-1} &\geq \frac{(1-\beta)((1+\beta)(a(1-\beta)-1) - \sqrt{(1+\beta)\,R(\beta)})}
{a(1-\beta)^3 + (2\beta^3 - 2\beta^2 + 3\beta - 1)}.
    \end{align}
    For the first inequality above, since $\lambda_{k,1}\leq \gmax$, we obtain that $\eta$ need to satisfy 
    \begin{align}
        \eta &\leq  \frac{(1-\beta)((1+\beta)(a(1-\beta)-1) + \sqrt{(1+\beta)\,R(\beta)})}
{a(1-\beta)^3 + (2\beta^3 - 2\beta^2 + 3\beta - 1)} \cdot\frac{1}{\gmax} \\
&=  \bigg(\underbrace{\frac{2\cdot (1+\beta)}{1-\beta}}_{\displaystyle{\text{main term}}} - \underbrace{\frac{5\beta^4 -2\beta^3 +6\beta^2 -2\beta + 1}{(1-\beta)^4}\cdot\varsigma - \mathcal{O}(\varsigma^2)}_{\displaystyle{\text{small terms caused by river spinning}}}\bigg)  \cdot \frac{1}{\gmax}.\label{eq: cI 4 upper}
    \end{align}
    For the second inequality above, since $\lambda_{k,d-1}\geq \gamma$, we obtain that $\eta$ need to satisfy 
    \begin{align}
        \eta &\geq  \frac{(1-\beta)((1+\beta)(a(1-\beta)-1) - \sqrt{(1+\beta)\,R(\beta)})}
{a(1-\beta)^3 + (2\beta^3 - 2\beta^2 + 3\beta - 1)} \cdot\frac{1}{\gamma}.
    \end{align}
    By Assumption~\ref{ass: regularity} that $\gamma/\gmax \geq \kappa^{1/32}$ and that $\gflat / \gmax \leq \kappa^{1/2}$, an upper bound of the right hand side of above inequality is
    \begin{align}
         &\frac{(1-\beta)((1+\beta)(a(1-\beta)-1) - \sqrt{(1+\beta)\,R(\beta)})}
{a(1-\beta)^3 + (2\beta^3 - 2\beta^2 + 3\beta - 1)} \cdot\frac{1}{\gamma} \\ 
 &\qquad \leq  \frac{(1-\beta)((1+\beta)(a(1-\beta)-1) - \sqrt{(1+\beta)\,R(\beta)})}
{a(1-\beta)^3 + (2\beta^3 - 2\beta^2 + 3\beta - 1)} \cdot\frac{\kappa^{-1/32}}{\gmax} \\ 
& \qquad = \left(1+ \frac{\beta^2+3}{2(1-\beta^2)} \cdot \varsigma + \mathcal{O}(\varsigma^2)\right)\cdot \varsigma\cdot \frac{\kappa^{-1/32}}{\gmax} \\ 
& \qquad =\left(1+ \frac{\beta^2+3}{2(1-\beta^2)} \cdot \varsigma + \mathcal{O}(\varsigma^2)\right)\cdot  \left(\mathsf{D}_1(\beta)\cdot\kappa^{5/32} + \mathsf{D}_2(\beta)\cdot \frac{\gflat}{\gmax}\cdot\frac{1}{\kappa^{1/32}} \right) \cdot \frac{1}{\gmax} \\ 
&\qquad \leq \left(\mathsf{D}_1(\beta)\cdot\kappa^{5/32} + \mathsf{D}_2(\beta)\cdot \kappa^{15/32} \right) \cdot \frac{2}{\gmax},
    \end{align}
    This is the last interval $\cI_4$. By $\eta\in\cI_4$, we have
    \begin{align}
        \left(\mathsf{D}_1(\beta)\cdot\kappa^{5/32}
        + \mathsf{D}_2(\beta)\cdot \kappa^{15/32}\right)
        \cdot \frac{2}{\gmax}
        &\leq \eta \notag\\
        &\leq \bigg(\frac{1+\beta}{1-\beta}
        -\frac{5\beta^4 -2\beta^3 +6\beta^2 -2\beta + 1}{2(1-\beta)^4}\cdot\varsigma
        - \mathcal{O}(\varsigma^2)\bigg)\cdot \frac{2}{\gmax}.
    \end{align}
    Thus, \eqref{eq: lower bound} holds. The induction hypothesis at step $k-1$ (including the dummy initial step when $k=0$) makes the corresponding residual nonpositive. Therefore,
    \begin{align}
        &\left\|\boldsymbol{P}_k^{\frac{1}{2}}\boldsymbol{O}^\top\boldsymbol{V}_k^\top\left(\begin{matrix}
        \bps(w_{k+1})m_{k+1}\\
        \bps(w_{k+1})\nabla L(w_{k+1})
    \end{matrix}
    \right) \right\|_2 \leq  3\mathsf{B}(\beta)\cdot\kappa^{3/4}\cdot  \|\bpf(w_{k+1})\nabla L(w_{k+1})\|_2.
    \end{align}
    This completes the proof of Lemma~\ref{lem: induction 3}.
\end{proof}

\begin{lemma}[Auxiliary inequality 1]\label{lem: auxiliary inequality 1}
    Suppose that induction condition \eqref{eq: induction 2} hold for some $\underline{q}_k\leq 1/2$ at step $k$, then it holds that 
    \begin{align}
        \|\bpf(w_{k})\nabla L(w_{k})\|_2 \leq (1+2\underline{q}_k)\cdot  \|\bpf(w_{k+1})\nabla L(w_{k+1})\|_2 .
    \end{align}
\end{lemma}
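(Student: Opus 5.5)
The bound follows directly from \eqref{eq: induction 2} by rearranging it and then replacing the factor $1/(1-\underline{q}_k)$ with the linear factor $1+2\underline{q}_k$. Write $a:=\|\bpf(w_{k})\nabla L(w_{k})\|_2$ and $b:=\|\bpf(w_{k+1})\nabla L(w_{k+1})\|_2$. Then \eqref{eq: induction 2} at step $k$ reads $b\geq (1-\underline{q}_k)\, a$.

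\textbf{Step 1: rearrange.} Since $\underline{q}_k\leq 1/2$, we have $1-\underline{q}_k\geq 1/2>0$. Dividing by it gives
\begin{align}
    a\leq \frac{b}{1-\underline{q}_k}.
\end{align}

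\textbf{Step 2: linearize the factor.} It suffices to show $\frac{1}{1-x}\leq 1+2x$ for $x\in[0,1/2]$. Because $1-x>0$, this is equivalent to
\begin{align}
    1\leq (1+2x)(1-x)=1+x-2x^2,
\end{align}
that is, to $x(1-2x)\geq 0$. This holds on $[0,1/2]$, and $\underline{q}_k\geq 0$ by its definition as a positive sequence. Applying it with $x=\underline{q}_k$ yields $a\leq (1+2\underline{q}_k)\, b$, which is the claim.

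There is no real obstacle. The only points to check are that $\underline{q}_k$ is nonnegative and that the hypothesis $\underline{q}_k\leq 1/2$ is exactly what makes both the division and the elementary inequality valid. In the uses of this lemma the hypothesis is guaranteed by \eqref{eq: q k bound} together with $q(\beta)\leq 1/8$.
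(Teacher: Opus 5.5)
Your proof is correct and follows essentially the same route as the paper, which rearranges \eqref{eq: induction 2} and invokes $1/(1-x)\leq 1+2x$ for $0\leq x\leq 1/2$. Your version also verifies that elementary inequality and notes why $\underline{q}_k\geq 0$, two details the paper leaves implicit.
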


\begin{proof}[Proof of Lemma~\ref{lem: auxiliary inequality 1}]
This follows from the fact that $1/(1-x)\leq 1+2x$ for $0\leq x\leq 1/2$.
\end{proof}

\subsection{Remaining Proofs in the Outline (Section~\ref{sec: proof outline})}

\subsubsection{Proof of Lemma~\ref{lem: gd momentum tracks the river closely}}\label{subsec: proof gd momentum tracks the river closely}

We first give the formal version of Lemma~\ref{lem: gd momentum tracks the river closely} in the following.

\begin{lemma}[The trajectory tracks the river closely]\label{lem: gd momentum tracks the river closely formal}
    Under the conditions and assumptions of Theorem~\ref{thm: main formal}, taking the learning rate $\eta$ and momentum parameter $\beta\leq 0.99$ satisfying $\eta\in\cI(\beta)$ defined in Theorem~\ref{thm: main formal}, then for any iteration $k\in\NN$ and $\tau\in[0,1]$ it holds that: (i) $\cB(w_{k,\tau},2g_{\max}/\gamma)\subset\cU$ and thus $\Phi(w_{k,\tau})$ exists; (ii) it holds that for any step $k\in\NN$ and $\tau\in[0,1]$,
    \begin{align}
    \|w_{k,\tau}  - \Phi(w_{k,\tau} )\|_2 \leq \frac{12\mathsf{B}(\beta)\cdot\kappa^{3/4}\cdot  \|\bpf(w_{k,\tau} )\nabla L(w_{k,\tau} )\|_2}{\gamma + 2\gamma_{\mathrm{flat}}}.
\end{align}
\end{lemma}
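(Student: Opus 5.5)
We prove (i) and (ii) together by induction along the interpolated continuous-time trajectory $t\mapsto w_{\lfloor t\rfloor,t-\lfloor t\rfloor}$. The tool is Lemma~\ref{lem: existence of projection onto river and further properties}, which we take to provide the following. Whenever $\cB(w,2g_{\max}/\gamma)\subset\cU$, the flow $\phi(w,\cdot)$ stays in $\cU$ and converges to $\Phi(w)\in\cM$. Its length is controlled by the sharp-direction gradient, since along $\phi$ the quantity $\bps\nabla L$ contracts at rate at least $\gamma+2\gflat$ (up to spinning errors). This contraction uses the eigengap $\lambda_{d-1}>\gamma+4\gflat$ together with $|\lambda_d|<\gflat$. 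Integrating the speed $\|\bps(\phi)\nabla L(\phi)\|_2$ then gives
\begin{align}
\|w-\Phi(w)\|_2\leq \frac{2\|\bps(w)\nabla L(w)\|_2}{\gamma+2\gflat}.
\end{align}
Combined with conclusion \eqref{eq: flat dominance conclusion 3} of Lemma~\ref{lem: flat dominance momentum}, namely $\|\bps(w_{k,\tau})\nabla L(w_{k,\tau})\|_2\leq 6\mathsf{B}(\beta)\kappa^{3/4}\|\bpf(w_{k,\tau})\nabla L(w_{k,\tau})\|_2$, this immediately yields (ii) with constant $12\mathsf{B}(\beta)$. So (ii) reduces to (i) plus the projection lemma.

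\textbf{Proving (i).} We argue by a continuity/first-exit argument. Let $t^\ast$ be the supremum of times $t$ such that $\cB(w_{\lfloor s\rfloor,s-\lfloor s\rfloor},2g_{\max}/\gamma)\subset\cU$ for all $s\leq t$. At $t=0$ we have $w_0\in\cM$, so item~1 of Assumption~\ref{ass: regularity} gives even $\cB(w_0,6g_{\max}/\gamma)\subset\cU$. Hence $t^\ast>0$, and we aim to show $t^\ast=\infty$.

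On $[0,t^\ast)$, Lemma~\ref{lem: flat dominance momentum} applies, since its proof only uses the points lying in $\cU$. Together with the display above and $\|\bpf\nabla L\|_2\leq g_{\max}$, it gives
\begin{align}
\|w_{k,\tau}-\Phi(w_{k,\tau})\|_2\leq 12\mathsf{B}(\beta)\kappa^{3/4}g_{\max}/\gamma.
\end{align}
By Proposition~\ref{prop: useful properties} (item 6), $\mathsf{B}(\beta)\kappa^{3/4}\leq (1-\beta)/8$, so this distance is at most $\tfrac{3}{2}g_{\max}/\gamma$, and in fact much smaller. Since $\Phi(w_{k,\tau})\in\cM$, item~1 gives $\cB(\Phi(w_{k,\tau}),6g_{\max}/\gamma)\subset\cU$. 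The triangle inequality then yields $\cB(w_{k,\tau},4g_{\max}/\gamma)\subset\cU$, which is a strict margin over the required radius $2g_{\max}/\gamma$.

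The trajectory is continuous: within a segment it moves by at most $\eta\|m_{k+1}\|_2\leq\eta g_{\max}$ by Lemma~\ref{lem: uniform momentum bound}, and it is linear in $\tau$. The strict margin therefore persists slightly past $t^\ast$, contradicting finiteness of $t^\ast$. If the one-step jump is too large to run a first-exit argument at the level of integer steps, we instead use continuity in $\tau$ directly. Here the constraint $\gamma\leq\gmax/200$ together with $\eta\gmax\lesssim 4/(1-\beta)$ and $\beta\leq 0.99$ keeps $\eta g_{\max}$ comparable to $g_{\max}/\gamma$ only up to explicit constants. This is exactly where that numerical condition enters.

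\textbf{Main obstacle.} The delicate point is the circularity between the two lemmas. Lemma~\ref{lem: flat dominance momentum} needs the iterates, and all interpolated points along segments, to lie in $\cU$ in order to invoke the Hessian and spinning bounds. Containment in $\cU$, in turn, needs the distance bound that comes from flat dominance. I would handle this by running the endpoint induction of Lemma~\ref{lem: flat dominance momentum} jointly with the containment claim, step by step in $k$. At step $k$, assume containment for all $s\leq k$. The segment $\{w_{k,\tau}\}$ stays within $\eta g_{\max}$ of $w_k$, which lies at distance at least $4g_{\max}/\gamma$ from $\partial\cU$. Because $\eta\gamma\leq 2(1+\beta)\gamma/((1-\beta)\gmax)\leq 400/200=2$, whole segments stay inside $\cU$ before flat dominance on them has been established. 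Flat dominance then upgrades the bound back to a $4g_{\max}/\gamma$ margin at $w_{k+1}$, which closes the induction.
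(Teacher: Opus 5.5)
Your proof is correct. Part (ii) is handled exactly as in the paper: item 2 of Lemma~\ref{lem: existence of projection onto river and further properties} combined with \eqref{eq: flat dominance conclusion 3} gives the constant $2\cdot 6\mathsf{B}(\beta)=12\mathsf{B}(\beta)$.

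For part (i) you take a different route. You use flat dominance to place each endpoint $w_k$ within $O(\mathsf{B}(\beta)\kappa^{3/4})\,g_{\max}/\gamma$ of the river, which gives a $4g_{\max}/\gamma$ margin inside $\cU$. You then spend $\eta g_{\max}\le 2g_{\max}/\gamma$ of that margin on the next segment. This forces you to interleave containment with the endpoint induction of Lemma~\ref{lem: flat dominance momentum}.

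The paper never uses flat dominance for (i). It relies only on the crude bound $\|w_k-\Phi(w_k)\|_2\le 2\|\bps(w_k)\nabla L(w_k)\|_2/(\gamma+2\gflat)\le 2g_{\max}/\gamma$, which holds because $\|\nabla L\|_2\le g_{\max}$ on $\cU$. It also centers the ball at the river point $\Phi(w_k)$ rather than at $w_k$: $\|w_{k,\tau}-\Phi(w_k)\|_2\le \eta g_{\max}+2g_{\max}/\gamma\le 4g_{\max}/\gamma$, so $\cB(w_{k,\tau},2g_{\max}/\gamma)\subset\cB(\Phi(w_k),6g_{\max}/\gamma)\subset\cU$.

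So the circularity you flag as the main obstacle does not actually arise. Containment follows from a self-contained induction, and Lemma~\ref{lem: flat dominance momentum} can then be used as a black box. The cost is that the paper uses the full radius $6g_{\max}/\gamma$ of item~1 with no slack. Your route would work with a radius only slightly above $4g_{\max}/\gamma$, at the expense of entangling the two inductions. Both arguments rest on the same numerical fact, $\eta\gamma\le 2$, which comes from $\gamma\le\gmax/200$ and $\beta\le 0.99$.

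One caveat: the continuous first-exit version in your second paragraph of (i) does not work as written. The bound of Lemma~\ref{lem: flat dominance momentum} at $w_{k,\tau}$ is derived from endpoint bounds at $w_{k+1}$ and from Taylor expansions over the whole segment. It therefore cannot be invoked on $[0,t^\ast)$ when $t^\ast$ lies strictly inside a segment. The discrete joint induction in your last paragraph is the version to keep.
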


\begin{proof}[Proof of Lemma~\ref{lem: gd momentum tracks the river closely formal}]
    We prove Lemma~\ref{lem: gd momentum tracks the river closely formal} here by the following two steps.

\paragraph{Step 1: The GD trajectory has a well-defined projection onto the river.}
We first prove that the interpolated GD trajectory $\{w_{\lfloor t\rfloor, t - \lfloor t\rfloor}\}_{t\geq 0}$ can be projected onto the river $\cM$ by induction.
    According to Lemma~\ref{lem: existence of projection onto river and further properties}, it suffices to check that any point $w\in\{w_{\lfloor t\rfloor, t - \lfloor t\rfloor}\}_{t\geq 0}$ satisfies $\cB(w, 2g_{\max}/\gamma)\subset \cU$.

    For the initial point $w_0\in\cM$, it holds directly due to item 1 of Assumption~\ref{ass: regularity}.
    Then assume that $\{w_{\lfloor t\rfloor, t - \lfloor t\rfloor}\}_{0\leq t\leq k}$ for some $k\in\mathbb{N}$ satisfies the desired property, which validates the projection of $w_k$ onto $\cM$.
    Under this induction hypothesis, $w_j\in\cU$ for all $j\leq k$, and Lemma~\ref{lem: uniform momentum bound} gives $\|m_{k+1}\|_2\leq g_{\max}$.
    For step $k+1$, we have that
\begin{align}
    \|w_{k+1} - \Phi(w_k)\|
    &\leq \|w_{k+1} - w_k\|_2 + \|w_k - \Phi(w_k)\|_2 \notag\\
    &= \eta\cdot\|m_{k+1}\|_2 + \|w_k - \Phi(w_k)\|_2 \\
    &\leq \frac{1+\beta}{1-\beta}\cdot\frac{2}{\gmax}\cdot g_{\max}
    + \frac{2 g_{\max}}{\gamma} \notag\\
    &\leq \frac{400g_{\max}}{\gmax} + \frac{2 g_{\max}}{\gamma}
    \leq \frac{4g_{\max}}{\gamma}
\end{align}
Here the first displayed upper bound uses $\eta\in\cI_1$, the uniform momentum bound, and item 2 of Lemma~\ref{lem: existence of projection onto river and further properties}.
The next inequality uses $\beta\leq0.99$, while the final inequality uses the scale separation $\gamma\leq\gmax/200$ in item 2 of Assumption~\ref{ass: regularity}.
Moreover, since $w_{k,\tau}=(1-\tau)w_k+\tau w_{k+1}$, the induction hypothesis and the preceding bound give
\begin{align}
    \|w_{k,\tau}-\Phi(w_k)\|_2
    \leq (1-\tau)\|w_k-\Phi(w_k)\|_2
    +\tau\|w_{k+1}-\Phi(w_k)\|_2
    \leq \frac{4g_{\max}}{\gamma}.
\end{align}
Consequently, $\cB(w_{k,\tau},2g_{\max}/\gamma)$ is contained in $\cB(\Phi(w_k),6g_{\max}/\gamma)\subset\cU$ for every $\tau\in[0,1]$, where the last inclusion follows from item 1 of Assumption~\ref{ass: regularity} and $\Phi(w_k)\in\cM$.
This extends the desired property from $t\in[0,k]$ to $t\in[0,k+1]$.


\paragraph{Step 2: Control the distance between $w_k$ and $\Phi(w_k)$.}
By item 2 of Lemma~\ref{lem: existence of projection onto river and further properties}, we can obtain that for any $k\in\mathbb{N}$ and $\tau\in[0,1]$,
\begin{align}
    \|w_{k,\tau} - \Phi(w_{k,\tau} )\|_2 \leq \frac{2\|\bps(w_{k,\tau} )\nabla L(w_{k,\tau} )\|_2}{\gamma + 2\gamma_{\mathrm{flat}}}.\label{eq: gd tracks the river closely 1}
\end{align}
This shows that the distance between $w_{k,\tau} $ and its projection $\Phi(w_{k,\tau} )$ is bounded by the norm of the gradient in the sharp direction at $w_{k,\tau} $.
With this, we now further invoke Lemma~\ref{lem: flat dominance momentum} to show that the gradient is almost fully living in the flat direction, meaning that the norm of the gradient in the sharp direction is dominated by its norm in the flat direction.
More specifically, with $\eta\in\cI(\beta)$, it holds that 
\begin{align}
    \|\bps(w_{k,\tau} )\nabla L(w_{k,\tau} )\|_2 \leq 6\mathsf{B}(\beta)\cdot\kappa^{3/4}\cdot  \|\bpf(w_{k,\tau} )\nabla L(w_{k,\tau} )\|_2,\label{eq: gd tracks the river closely 2}
\end{align}
for any $k\in\mathbb{N}$.
With \eqref{eq: gd tracks the river closely 1} and \eqref{eq: gd tracks the river closely 2}, we finally obtain that 
\begin{align}
    \|w_{k,\tau}  - \Phi(w_{k,\tau} )\|_2 \leq \frac{12\mathsf{B}(\beta)\cdot\kappa^{3/4}\cdot \|\bpf(w_{k,\tau} )\nabla L(w_{k,\tau} )\|_2}{\gamma + 2\gamma_{\mathrm{flat}}},
\end{align}
for any $k\in\mathbb{N}$. 
Combing Step 1 and Step 2, we can conclude the proof of Lemma~\ref{lem: gd momentum tracks the river closely}.
\end{proof} 

\subsubsection{Proof of Lemma~\ref{lem: time index grows almost linearly with learning rate}}
\label{subsec: proof lem time index grows almost linearly with learning rate}

We first give the formal version of Lemma~\ref{lem: time index grows almost linearly with learning rate} in the following.

\begin{lemma}[Time index grows almost linearly with learning rate]\label{lem: time index grows almost linearly with learning rate formal}
    Under the conditions and assumptions of Theorem~\ref{thm: main formal}, taking the learning rate $\eta$ and the momentum parameter $\beta\leq 0.99$ satisfying $\eta\in\cI(\beta)$ defined in Theorem~\ref{thm: main formal}, then for any iteration $k\geq \log(\beta\eta\gflat/(1-\beta))/\log\beta$ and $\tau\in(0,1)$, by letting $t=\tau + k$, the derivative of $S(t)$ exists and satisfies $|S'(t) - \eta|\leq \epsilon(\beta)\cdot\eta$.
    Equivalently, this bound holds almost everywhere on each interpolation interval, with the integer breakpoints understood through one-sided derivatives.
    Here  $\epsilon(\beta)$ is defined as
    \begin{align}
        \epsilon(\beta):=9\kappa + \Bigg(6\big(1+6\mathsf{C}(\beta)\big)+ 4 + \frac{100\beta\mathsf{C}(\beta)}{1-\beta}\Bigg)\cdot\eta\gflat + o(\kappa + \eta\gflat).
    \end{align}
\end{lemma}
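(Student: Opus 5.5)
On each open interval $t\in(k,k+1)$ the interpolation $w_{k,\tau}=w_k-\tau\eta m_{k+1}$ is affine in $\tau$ with velocity $-\eta m_{k+1}$. By Lemma~\ref{lem: gd momentum tracks the river closely formal}, $\Phi$ is defined and smooth along the whole segment (Lemma~\ref{lem: existence of projection onto river and further properties}). Hence $t\mapsto\Phi(w_{\lfloor t\rfloor,t-\lfloor t\rfloor})$ is differentiable there, with derivative $-\eta\,\partial\Phi(w_{k,\tau})m_{k+1}$. Since $x(S(t))=\Phi(w_{k,\tau})$ and the reference flow moves with speed $\|\bP_{\cM}\nabla L\|_2\neq0$ on $\cM$, the chain rule gives
\begin{align}
S'(t)\,\bP_{\cM}(x(S(t)))\nabla L(x(S(t)))=\eta\,\partial\Phi(w_{k,\tau})m_{k+1}.
\end{align}
So $S'(t)$ is the ratio of two tangent vectors at $x:=\Phi(w_{k,\tau})$. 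The task therefore reduces to showing that $\partial\Phi(w_{k,\tau})m_{k+1}=(1+\cO(\epsilon))\,\bP_{\cM}(x)\nabla L(x)$ in norm and direction.

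\textbf{Step 1: reduce the numerator to the flat part of the momentum.} I will use the structure of $\partial\Phi$ from Lemma~\ref{lem: existence of projection onto river and further properties}. On $\cM$, $\partial\Phi$ is the tangent projection $\bP_{\cM}$. Off $\cM$, within distance $\cO(\kappa^{3/4}g_{\max}/\gamma)$, it differs from $\bpf(w_{k,\tau})$ by $\cO(\kappa)$ in operator norm, using the slow-spinning bounds in item 3 of Assumption~\ref{ass: regularity}. I then split $m_{k+1}=\bpf(w_{k,\tau})m_{k+1}+\bps(w_{k,\tau})m_{k+1}$. The sharp part has size at most $c_{\mathrm{int}}\|G_{k,\tau}\|_2=\cO(\kappa^{3/4})\|G_{k,\tau}\|_2$ by Lemma~\ref{lem: flat dominance momentum}. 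It is also nearly annihilated by $\partial\Phi$, up to an $\cO(\kappa)$ leakage factor, so its contribution is $o(\kappa)$.

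\textbf{Step 2: the flat momentum is nearly the flat gradient.} By \eqref{eq: flat dominance conclusion 2}, $\bpf(w_k)m_{k+1}$ differs from $(1-\beta^{k+1})G_k$ by at most $b_k\|G_k\|_2$, and $b_k\le \frac{24\beta\mathsf{C}(\beta)}{1-\beta}\eta\gflat$ by \eqref{eq: b k refined bound}. The hypothesis $k\ge\log(\beta\eta\gflat/(1-\beta))/\log\beta$ gives $\beta^{k+1}\le \beta\eta\gflat/(1-\beta)$, so the transient is also $\cO(\eta\gflat)$. Next I move from $w_k$ to $w_{k,\tau}$: the projector changes by $\eta\gamma\kappa$ (Lemma~\ref{lem: change sharp}), and \eqref{eq: interpolated flat gradient variation} gives $\|G_{k,\tau}-G_k\|_2\le \underline q_k\|G_k\|_2$ with $\underline q_k\le \eta\gflat(1+6\mathsf{C}(\beta))$. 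Together these give $\bpf(w_{k,\tau})m_{k+1}=G_{k,\tau}+\cO(\kappa+\eta\gflat)\|G_{k,\tau}\|_2$.

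\textbf{Step 3: compare $G_{k,\tau}$ with the river gradient.} I need $G_{k,\tau}$ at $w_{k,\tau}$ to match $\bP_{\cM}(x)\nabla L(x)$ at $x=\Phi(w_{k,\tau})$. The distance $\|w_{k,\tau}-x\|_2=\cO(\kappa^{3/4}\|G_{k,\tau}\|_2/\gamma)$ from Lemma~\ref{lem: gd momentum tracks the river closely formal} is combined with two facts. First, the derivative of $\bpf\nabla L$ is bounded by $\gflat$ plus spinning terms $\kappa\gamma$ (item 3 of Assumption~\ref{ass: regularity}). Second, $\bP_{\cM}(x)$ differs from $\bpf(x)$ by $\cO(\kappa)$, because the tangent of $\cM$ is close to $v_d$. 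The resulting relative error is $\cO(\kappa^{7/4}+\kappa^{3/4}\gflat/\gamma)$, which is absorbed into $\cO(\kappa)$ using $\gflat\le\kappa^{1/2}\gmax$ and $\gamma\ge\kappa^{1/32}\gmax$. Taking norms of both sides of the chain-rule identity and dividing then yields $|S'(t)-\eta|\le\epsilon(\beta)\eta$. The constants $9\kappa$ and $6(1+6\mathsf{C})+4+100\beta\mathsf{C}/(1-\beta)$ come from summing the error sources above; the last coefficient arises from the bound $24\beta\mathsf{C}/(1-\beta)$ on $b_k$ after the normalization loss.

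\textbf{Main obstacle.} I expect the hardest part to be Step 1 and the first half of Step 3: obtaining a quantitative $\cO(\kappa)$ bound on $\partial\Phi(w)-\bP_{\cM}(\Phi(w))$ off the manifold, together with the alignment of $\bP_{\cM}$ with $\bpf$. My first attempt will be to differentiate the projection flow $\phi(w,t)$ in $w$. The linearized flow contracts sharp components at rate $\ge\gamma$, while forcing from spinning is of order $\kappa\gamma$. Integrating in $t$ should give the $\cO(\kappa)$ bound, with an extra $\cO(\kappa^{3/4})$ factor coming from the initial distance. Since the statement takes $S'$ to be scalar, I must also verify that the two tangent vectors are parallel. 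This holds because $\cM$ is one-dimensional, so there is only a sign to check, and that sign is positive because the flat momentum points along $G$.
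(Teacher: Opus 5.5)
Your proposal is correct and follows essentially the same route as the paper. The paper also differentiates $x(S(t))=\Phi(w_{k,\tau})$ and bounds $\|x'(S(t))+\nabla\Phi(w_{k,\tau})m_{k+1}\|_2$ by chaining four estimates: Jacobian versus $\bpf$; flat momentum versus flat gradient (via \eqref{eq: flat dominance conclusion 2}, \eqref{eq: b k refined bound} and the $\beta^{k+1}$ transient); the interpolation variation of $\bpf\nabla L$; and the tangent-versus-flat-gradient comparison of Lemma~\ref{lem: tangent vector flat direction}, which rests on exactly your two facts in Step~3.

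The only substantive difference is that your ``main obstacle'' is already supplied by Lemma~\ref{lem: change of projection to the river}. Its directional form $\|\nabla\Phi(w)u-\bpf(w)u\|_2\le 5\kappa\|\bpf(w)u\|_2$ annihilates the sharp part of $m_{k+1}$ outright, so no linearized-flow argument is needed. Note also that on $\cM$ the Jacobian is the oblique projection onto the tangent line along $v_d^\perp$, not $\boldsymbol{P}_{\cM}$ as you wrote; the two differ only by $\cO(\kappa)$, so your operator-norm version combined with Lemma~\ref{lem: flat dominance momentum} still suffices.

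Finally, to land on exactly $9\kappa$, route the projector-change error $\eta\gamma\kappa$ into the $\eta\gflat$ coefficient via $\kappa\gamma\le\gflat$, as the paper does, rather than into the $\kappa$ term.
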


\begin{proof}[Proof of Lemma~\ref{lem: time index grows almost linearly with learning rate formal}]
    
We prove the lemma by the following steps.

\paragraph{Step 1: Differentiate two sides of \eqref{eq: x S t phi}.}
Except $t\in\mathbb{N}$, $\Phi(w_{\lfloor t\rfloor, t - \lfloor t\rfloor})$ is a continuously differentiable function of $t$.
As for the $t\in\mathbb{N}$, $\Phi(w_{\lfloor t\rfloor, t - \lfloor t\rfloor})$ is continuous.
Then by the implicit function theorem, we know that $S(t)$ is also a continuously differentiable function of $t$ whenever $t\notin\mathbb{N}$. 
Therefore, for any $t=k+\tau$ with $k\in\mathbb{N}$ and $\tau\in(0,1)$, we can differentiate Equation~\eqref{eq: x S t phi} and apply the chain rule to obtain that 
\begin{align}
    x'(S(t))S'(t) = \nabla \Phi(w_{k,\tau}) \frac{\mathrm{d}}{\mathrm{d}t}w_{\lfloor t\rfloor, t - \lfloor t\rfloor} = -\eta\cdot \nabla \Phi(w_{k,\tau})m_{k+1},\label{eq: time index grows almost linearly with learning rate 1}
\end{align}
where the last equality is due to \eqref{eq: interpolation}, and $\nabla \Phi(\cdot)\in\mathbb{R}^{d\times d}$ is the Jacobian matrix of $\Phi(\cdot):\mathbb{R}^d\mapsto\mathbb{R}^d$.
With \eqref{eq: time index grows almost linearly with learning rate 1}, in order to prove Lemma~\ref{lem: time index grows almost linearly with learning rate formal}, which controls the magnitude of $S'(t)$, it suffices to compare the two vectors $x'(S(t))$ and $-\nabla \Phi(w_{k,\tau})m_{k+1}$.
For simplicity, we denote 
\begin{align}
    u(t):=x'(S(t)),
\end{align}
which is the tangent vector of the manifold under the parametrization \eqref{eq: manifold parametrization} $\{x(t)\}_{t\geq 0}$ at reference-flow time $S(t)$.

\paragraph{Step 2: Compare the two vectors $u(t)$ and $-\nabla \Phi(w_{k,\tau}) m_{k+1}$.}
Consider the decomposition:
\begin{align}
    &\|u(t)-(-\nabla \Phi(w_{k,\tau})m_{k+1})\|_2  \\
    &\qquad \leq \underbrace{\|u(t) - (-\bpf(w_{k,\tau})m_{k+1})\|_2}_{\displaystyle{\text{Term (i)}}} + \underbrace{\|\nabla \Phi(w_{k,\tau})m_{k+1}-\bpf(w_{k,\tau})m_{k+1}\|_2}_{\displaystyle{\text{Term (ii)}}}.\label{eq: time index grows almost linearly with learning rate 2}
\end{align}
The approach is to show that along the \eqref{eq: gd momentum} trajectory the right hand side of \eqref{eq: time index grows almost linearly with learning rate 2} is bounded by $\cO(\kappa\cdot\|u(t)\|_2)$, which means that the compared two vectors are actually quite similar.
We approach Term (i) and Term (ii) in \eqref{eq: time index grows almost linearly with learning rate 2} respectively in the following.

\paragraph{Step 2.1: Bound Term (i) in \eqref{eq: time index grows almost linearly with learning rate 2}.}
Bounding this term is nearly about to say that the momentum in the flat direction is similar to the tangent vector of the projection of $w_{k,\tau}$ on the river.
To put it formal, consider the following decomposition,
\begin{align}
    \|u(t)-(-\bpf(w_{k,\tau}) m_{k+1})\|_2 &\leq \|u(t)-(- \bpf(w_{k,\tau})\nabla L(w_{k,\tau}))\|_2 \\
    &\qquad + \| \bpf(w_{k,\tau})\nabla L(w_{k,\tau})- \bpf(w_k)\nabla L(w_k)\|_2\\
    &\qquad + \|\bpf(w_k)\nabla L(w_k) - \bpf(w_{k, \tau})m_{k+1}\|_2.\label{eq: time index grows almost linearly with learning rate 3}
\end{align}
The first term in \eqref{eq: time index grows almost linearly with learning rate 3} measures how close the flat-direction gradient at $w_{k,\tau}$ is to the tangent vector of the river at $\Phi(w_{k,\tau})$; recall that $u(t)=x'(S(t))$ and $x(S(t))=\Phi(w_{k,\tau})$.
The second term in \eqref{eq: time index grows almost linearly with learning rate 3} is the difference of the gradient in the flat direction between time $k$ and $k+\tau$.
The last term in \eqref{eq: time index grows almost linearly with learning rate 3} characterizes the difference between the momentum projected onto the flat direction and the gradient in the flat direction.
We handle these terms respectively in the following. 

For the first term, we invoke Lemma~\ref{lem: tangent vector flat direction} with $\mathsf{F}(\beta):=12\mathsf{B}(\beta)$ and $\iota:=\kappa^{3/4}$. Define
\begin{align}
    \rho_\beta(\kappa)
    :=12\mathsf{B}(\beta)\kappa^{3/4}
    \frac{\gflat+\gamma\kappa}{\gamma+2\gflat}.
\end{align}
By item 2 of Assumption~\ref{ass: regularity},
\begin{align}
    \rho_\beta(\kappa)
    &\leq12\mathsf{B}(\beta)\left(\kappa^{39/32}+\kappa^{7/4}\right).
\end{align}
Moreover, the condition on $\beta$ in Theorem~\ref{thm: main formal} gives $\mathsf{B}(\beta)\kappa^{3/16}\leq1-\beta\leq1$. Hence
\begin{align}
    \rho_\beta(\kappa)
    \leq12\left(\kappa^{33/32}+\kappa^{25/16}\right)
    =o(\kappa).
\end{align}
In particular, $\rho_\beta(\kappa)<1$ in the sufficiently small $\kappa$ regime. Let
\begin{align}
    \zeta_\beta(\kappa)
    :=\frac{\rho_\beta(\kappa)(1+4\kappa)}{1-\rho_\beta(\kappa)}
    =o(\kappa).
\end{align}
The conclusion of Lemma~\ref{lem: gd momentum tracks the river closely formal} now verifies the distance hypothesis of Lemma~\ref{lem: tangent vector flat direction}, which yields
\begin{align}
    \|u(t) - (-\bpf(w_{k,\tau})\nabla L(w_{k,\tau}))\|_2
    \leq \big(4\kappa+\zeta_\beta(\kappa)\big)\|u(t)\|_2.\label{eq: time index grows almost linearly with learning rate 4}
\end{align}
For the second term in \eqref{eq: time index grows almost linearly with learning rate 3}, \eqref{eq: interpolated flat gradient variation}, \eqref{eq: interpolation flat endpoint comparison 1}, and Proposition~\ref{prop: useful properties} show that
\begin{align}
     \| \bpf(w_{k,\tau})\nabla L(w_{k,\tau})- \bpf(w_k)\nabla L(w_k)\|_2 \leq 6\eta\gflat\cdot \big(1+6\mathsf{C}(\beta)\big)\cdot  \| \bpf(w_{k,\tau})\nabla L(w_{k,\tau})\|_2,
\end{align}
which, combined with \eqref{eq: time index grows almost linearly with learning rate 4}, further gives that 
\begin{align}
     &\| \bpf(w_{k,\tau})\nabla L(w_{k,\tau})- \bpf(w_k)\nabla L(w_k)\|_2  \\
     &\qquad \leq 6\eta\gflat\cdot \big(1+6\mathsf{C}(\beta)\big)\cdot \big(1+4\kappa+\zeta_\beta(\kappa)\big)\cdot \|u(t)\|_2.\label{eq: time index grows almost linearly with learning rate 5}
\end{align}
Finally, for the last term in \eqref{eq: time index grows almost linearly with learning rate 3}, we have the following upper bound, 
\begin{align}
    & \|\bpf(w_k)\nabla L(w_k) - \bpf(w_{k, \tau})m_{k+1}\|_2 \\
    &\qquad \leq \|\bpf(w_k)\nabla L(w_k) - \bpf(w_{k})m_{k+1}\|_2 +\|\bpf(w_k) m_{k+1} - \bpf(w_{k,\tau})m_{k+1}\|_2.\label{eq: p f nabla l momentum bound}
\end{align}
For the first term on the right hand side above, we have that 
\begin{align}
    &\|\bpf(w_k)\nabla L(w_k) - \bpf(w_{k})m_{k+1}\|_2 \\
    &\qquad \leq \Bigg\|(1-\beta)\cdot\sum_{j=0}^{k}\beta^j\cdot \bpf(w_k)\nabla L(w_k) - \bpf(w_k)m_{k+1}\Bigg\|_2 + \beta^{k+1}\cdot \|\bpf(w_k)\nabla L (w_k)\|_2 \\
    &\qquad \leq \left(\frac{24\beta\mathsf{C}(\beta)}{1-\beta}\cdot  \eta\gflat  + \beta^{k+1}\right)\cdot \|\bpf(w_k)\nabla L(w_k)\|_2 \\
    &\qquad \leq \frac{25\beta\mathsf{C}(\beta)}{1-\beta}\cdot \eta\gflat\cdot \|\bpf(w_k)\nabla L(w_k)\|_2,\label{eq: bpf nabla l momentum}
\end{align}
where the second inequality uses the second conclusion of Lemma~\ref{lem: flat dominance momentum} and \eqref{eq: b k refined bound}, and the last inequality uses $k\geq \log(\beta\eta\gflat / (1-\beta)) / \log\beta$.
For the second term on the right hand side of \eqref{eq: p f nabla l momentum bound}, we have
\begin{align}
    \|\bpf(w_k)m_{k+1} - \bpf(w_{k,\tau})m_{k+1}\|_2 &\leq \eta\gamma\kappa\cdot \|m_{k+1}\|_2 \label{eq: bpf interpolation momentum}\\
    &\leq \eta\gamma\kappa \cdot\|\bpf(w_k)m_{k+1}\|_2 + \eta\gamma\kappa\cdot \|\bpf(w_k)\nabla L(w_k)\|_2 \\
    &\leq \eta\gamma\kappa\cdot \left(2 + \frac{25\beta\mathsf{C}(\beta)}{1-\beta}\cdot \eta\gflat\right)\cdot  \|\bpf(w_k)\nabla L(w_k)\|_2 \\
    &\leq  \eta\gflat\cdot \left(2 + \frac{25\beta\mathsf{C}(\beta)}{1-\beta}\cdot \eta\gflat\right)\cdot  \|\bpf(w_k)\nabla L(w_k)\|_2,
\end{align}
where the first inequality applies Lemma~\ref{lem: change sharp}, the second inequality applies the last conclusion of Lemma~\ref{lem: flat dominance momentum}, the third inequality uses \eqref{eq: bpf nabla l momentum}, and the last inequality uses item 2 of Assumption~\ref{ass: regularity}.
With \eqref{eq: bpf nabla l momentum} and \eqref{eq: bpf interpolation momentum}, we can upper bound the last term in \eqref{eq: time index grows almost linearly with learning rate 3} as 
\allowdisplaybreaks
\begin{align}
&\|\bpf(w_k)\nabla L(w_k) - \bpf(w_{k, \tau})m_{k+1}\|_2 \\&\qquad \leq \left(2 + \frac{50\beta\mathsf{C}(\beta)}{1-\beta}\right)\cdot \eta\gflat\cdot  \|\bpf(w_k)\nabla L(w_k)\|_2\\
&\qquad \leq  \left(2 + \frac{50\beta\mathsf{C}(\beta)}{1-\beta}\right)\cdot \eta\gflat\\
&\qquad\qquad \cdot\Big(1+6\eta\gflat \big(1+6\mathsf{C}(\beta)\big)\Big)\cdot \big(1+4\kappa+\zeta_\beta(\kappa)\big)\cdot \|u(t)\|_2\\
&\qquad \leq \left(4 + \frac{100\beta\mathsf{C}(\beta)}{1-\beta} \right)\cdot \eta\gflat\cdot  \big(1+4\kappa+\zeta_\beta(\kappa)\big)\cdot \|u(t)\|_2.\label{eq: time index grows almost linearly with learning rate 6}
\end{align}
Here the first inequality is by \eqref{eq: bpf nabla l momentum} and \eqref{eq: bpf interpolation momentum}, the second inequality is by \eqref{eq: time index grows almost linearly with learning rate 4} and \eqref{eq: time index grows almost linearly with learning rate 5}, and the last inequality is by $\eta\in\cI_2$ (see the second conclusion of Proposition~\ref{prop: useful properties}).
Consequently, combining \eqref{eq: time index grows almost linearly with learning rate 4}, \eqref{eq: time index grows almost linearly with learning rate 5}, and \eqref{eq: time index grows almost linearly with learning rate 6}, we conclude that the right hand side of \eqref{eq: time index grows almost linearly with learning rate 3} can be upper bounded by 
\begin{align}
    &\|u(t)-(-\bpf(w_{k,\tau}) m_{k+1})\|_2 \\
    &\qquad \leq \left(4\kappa + \Bigg(6\big(1+6\mathsf{C}(\beta)\big)+ 4 + \frac{100\beta\mathsf{C}(\beta)}{1-\beta}\Bigg)\cdot\eta\gflat + o(\kappa + \eta\gflat)\right)\cdot \|u(t)\|_2.\label{eq: conclusion of step 2.1 lemma linear growth}
\end{align}
Here $o(\cdot)$ also absorbs $\zeta_\beta(\kappa)=o(\kappa)$ and terms with multiplicative factor $\eta\gflat\kappa$ or higher orders.

\paragraph{Step 2.2: Bound Term (ii) in \eqref{eq: time index grows almost linearly with learning rate 2}.}
Bounding this term means to prove that the Jacobian of the projection onto the river is similar to the projection onto the flat direction. 
This is shown through Lemma~\ref{lem: change of projection to the river}, by which we have that 
\begin{align}
    \|\nabla \Phi(w_{k,\tau})m_{k+1}-\bpf(w_{k,\tau})m_{k+1}\|_2 &\leq 5\kappa\cdot \|\bpf(w_{k,\tau})m_{k+1}\|_2\\
    &\leq  5\kappa\cdot \big(\|\bpf(w_{k,\tau})m_{k+1} - (-u(t))\|_2 +  \|-u(t)\|_2\big)\\
    &\leq \Big(5\kappa + o(\kappa +\eta\gflat)\Big)\cdot\|u(t)\|_2,
\end{align}
where the first inequality is by Lemma~\ref{lem: change of projection to the river} and the last inequality is by Step 2.1, i.e., \eqref{eq: conclusion of step 2.1 lemma linear growth}.

\paragraph{Step 3: Summary up.}
In conclusion, combining Step 1 and Step 2, we can arrive at 
\begin{align}
\left|S'(t) - \eta\right| \leq \epsilon(\beta)\cdot\eta, 
\end{align}
where the coefficient $\epsilon(\beta)$ is defined as following,
\begin{align}
    \epsilon(\beta):=9\kappa + \Bigg(6\big(1+6\mathsf{C}(\beta)\big)+ 4 + \frac{100\beta\mathsf{C}(\beta)}{1-\beta}\Bigg)\cdot\eta\gflat + o(\kappa + \eta\gflat).
\end{align}
This completes the proof of Lemma~\ref{lem: time index grows almost linearly with learning rate}.
\end{proof}

\newpage 

\section{Technical Lemmas}

\subsection{Analysis for Eigen-Decomposition}

In this section, we study the eigen-decomposition of matrix in the form of  
\begin{align}
    \boldsymbol{T} := \left(\begin{matrix}
         \beta & 1-\beta \\
         -\beta\cdot \eta\lambda & 1 - (1-\beta)\cdot \eta\lambda  
    \end{matrix}\right)\in \mathbb{R}^{2\times 2},\label{eq: etails of eigen decomposition A}
\end{align}
where $\eta, \lambda\geq 0$ and $\beta\in(0,1)$. 
The characteristic polynomial $\psi_{\boldsymbol{T}}(\alpha)$ of $\boldsymbol{T}$ is given by 
\begin{align}
    \psi_{\boldsymbol{T}}(\alpha) = \det(\alpha\cdot  \boldsymbol{I}_2 - \boldsymbol{T}) = \alpha^2 - \big((1+\beta)-(1-\beta)\cdot \eta\lambda\big)\cdot \alpha + \beta. \label{eq: characteristic polynomial of T}
\end{align}
The discriminant of $\psi_{\boldsymbol{T}}$ is then given by 
\begin{align}
    \Delta_{\boldsymbol{T}} = (1-\beta)^2\cdot (1-\eta\lambda)^2 - 4\beta(1-\beta)\cdot \eta\lambda = (1-\beta)[(\eta\lambda -1)^2 - \beta(1+\eta\lambda)^2].\label{eq: details of eigen decomposition determinant}
\end{align}
Therefore, the two eigenvalues of $\boldsymbol{T}$ is calculated as 
\begin{align}
    \alpha_{\boldsymbol{T}, \pm}=\frac{(1+\beta)-(1-\beta)\cdot \eta\lambda \pm \sqrt{(1-\beta)^2\cdot(1-\eta\lambda)^2 - 4 \beta(1-\beta)\cdot \eta\lambda}}{2}\in\mathbb{C}
\end{align}











\begin{lemma}[Schur unit-disk stability test for a quadratic]\label{lem: Schur unit-disk stability test for a quadratic}
    A real-coefficient quadratic
\begin{align}
    z^2+a_1 z+a_0
\end{align}
has both roots strictly inside the unit disk $(|z|<1)$ \textbf{iff} the Jury (Schur) inequalities hold:
\begin{align}
 \left|a_0\right|<1, \quad 1+a_1+a_0>0, \quad 1-a_1+a_0>0 .   
\end{align}
\end{lemma}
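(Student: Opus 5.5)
The argument splits by the sign of the discriminant $\Delta:=a_1^2-4a_0$; the factorization $p(z)=(z-z_1)(z-z_2)$ with $p(z):=z^2+a_1z+a_0$ is the main tool. Throughout I use $z_1z_2=a_0$ and $p(\pm1)=(1\mp z_1)(1\mp z_2)$, so that $1+a_1+a_0=p(1)$ and $1-a_1+a_0=p(-1)$.

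\emph{Necessity.} Suppose $|z_1|,|z_2|<1$. Then $|a_0|=|z_1||z_2|<1$. If the roots are a complex-conjugate pair, $p(\pm1)=|1\mp z_1|^2>0$, since $\pm1$ is not a root. If the roots are real, each factor $1\mp z_i$ is positive because $|z_i|<1$, so $p(\pm1)>0$.

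\emph{Sufficiency.} Assume the three Jury inequalities hold.
\begin{itemize}
\item If $\Delta<0$, the roots are conjugate, so $|z_1|^2=z_1\bar z_1=a_0$. Hence $|z_i|^2=a_0<1$ by the first inequality, and both roots lie inside the unit disk.
\item If $\Delta\ge 0$, the roots are real. I first locate the vertex. Adding the two inequalities $p(1)>0$ and $p(-1)>0$ gives $a_0>-1$. Subtracting them gives $|a_1|<1+a_0<2$, so $-a_1/2\in(-1,1)$. The parabola opens upward, has its minimum at $-a_1/2\in(-1,1)$, and is positive at $\pm1$. Both real roots therefore lie strictly between $-1$ and $1$: the roots bracket the vertex, and neither can lie outside $[-1,1]$ without $p$ changing sign between that root and $\pm1$, which contradicts $p(\pm1)>0$.
\end{itemize}

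The only delicate step is this real-root case. The intermediate value theorem plus vertex-location argument is what makes the implication go through, and I will state it carefully.
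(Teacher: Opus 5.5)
Your proof is correct and complete. The paper gives no proof of this lemma. It quotes it as the classical Jury (Schur--Cohn) criterion and uses only the sufficiency direction, in Proposition~\ref{prop: eigen values}. There $a_0=\beta\in(0,1)$, and the other two inequalities reduce to $\eta\lambda>0$ and $\eta\lambda<2(1+\beta)/(1-\beta)$. So your argument replaces a citation rather than competing with a proof: the stability claim for the blocks $\boldsymbol{T}_{k,i}$ no longer depends on an outside reference, at the cost of a few lines.

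Two small points.

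\begin{itemize}
\item You write that \emph{subtracting} $p(1)>0$ and $p(-1)>0$ gives $|a_1|<1+a_0$. Strict inequalities of the same direction cannot be subtracted. The bound comes from rearranging each one separately: $a_1>-(1+a_0)$ and $a_1<1+a_0$. The next step, $1+a_0<2$, uses $a_0<1$ from the first Jury inequality; say that explicitly.
\item The real-root case can be closed without locating the vertex. The function $p$ is $\le 0$ exactly on $[z_1,z_2]$, and $p(\pm1)>0$, so neither $1$ nor $-1$ lies in $[z_1,z_2]$. Hence both roots lie in $(-1,1)$, or both exceed $1$, or both are below $-1$. The last two options force $a_0=z_1z_2>1$, contradicting $|a_0|<1$. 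This also disposes of the double-root case without separate comment.
\end{itemize}
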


\begin{proposition}[Spectral radius of matrix $\boldsymbol{T}$]\label{prop: eigen values}
    Regarding the matrix $\boldsymbol{T}$, the following holds: 
    \begin{align}
        \eta \lambda<\frac{2(1+\beta)}{1-\beta}\quad  \Longrightarrow \quad \rho(\boldsymbol{T})<1.
    \end{align}
    Moreover by the discrete-time Lyapunov stability criterion:
    \begin{align}
        \rho(\boldsymbol{T})<1 \quad \Longleftrightarrow \quad \exists \boldsymbol{P} \succ  0 \text \,\,{ s.t. }\,\,\boldsymbol{T}^{\top} \boldsymbol{P} \boldsymbol{T}-\boldsymbol{P} = -\boldsymbol{I}\prec 0.
    \end{align}
\end{proposition}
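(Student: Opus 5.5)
We reduce the spectral statement to the Schur test of Lemma~\ref{lem: Schur unit-disk stability test for a quadratic}, applied to the characteristic polynomial \eqref{eq: characteristic polynomial of T}. Writing it as $z^2+a_1z+a_0$, we have $a_0=\beta$ and $a_1=-\big((1+\beta)-(1-\beta)\eta\lambda\big)$. We then check the three Jury inequalities one at a time.

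\textbf{First inequality.} We have $|a_0|=\beta<1$, since $\beta\in(0,1)$.

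\textbf{Second inequality.} Here
\begin{align}
1+a_1+a_0=1-(1+\beta)+(1-\beta)\eta\lambda+\beta=(1-\beta)\eta\lambda .
\end{align}
This is strictly positive exactly when $\eta\lambda>0$. The degenerate case $\eta\lambda=0$ must be treated separately. In that case $\boldsymbol{T}$ has eigenvalues $\beta$ and $1$, so $\rho(\boldsymbol{T})=1$. The implication therefore only holds for $\eta\lambda>0$, which is the regime in which the proposition is used: Lemma~\ref{lem: sharp direction} applies it with $\eta\lambda_{k,i}\ge\kappa^{1/16}>0$. We will state this positivity assumption explicitly.

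\textbf{Third inequality.} Here
\begin{align}
1-a_1+a_0=1+(1+\beta)-(1-\beta)\eta\lambda+\beta=2(1+\beta)-(1-\beta)\eta\lambda .
\end{align}
This is positive precisely when $\eta\lambda<2(1+\beta)/(1-\beta)$, which is the hypothesis. Together with the first two checks, the Schur test gives $\rho(\boldsymbol{T})<1$.

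\textbf{Lyapunov equivalence.} We use the standard discrete Lyapunov argument in both directions.
\begin{itemize}
\item If $\rho(\boldsymbol{T})<1$, the series $\boldsymbol{P}=\sum_{j\ge0}(\boldsymbol{T}^\top)^j\boldsymbol{T}^j$ converges, because $\|\boldsymbol{T}^j\|\le C r^j$ for any $r\in(\rho(\boldsymbol{T}),1)$ by Gelfand's formula. It satisfies $\boldsymbol{P}\succeq\boldsymbol{I}\succ0$ and, by telescoping, $\boldsymbol{T}^\top\boldsymbol{P}\boldsymbol{T}-\boldsymbol{P}=-\boldsymbol{I}$.
\item Conversely, suppose such a $\boldsymbol{P}\succ0$ exists, and take an eigenpair $\boldsymbol{T}v=\mu v$ with $v\in\mathbb{C}^2\setminus\{0\}$. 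Evaluating the equation at $v$ gives $(|\mu|^2-1)\,v^*\boldsymbol{P}v=-\|v\|^2<0$. Since $v^*\boldsymbol{P}v>0$, this forces $|\mu|<1$.
\end{itemize}

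The only real subtlety is the boundary case $\eta\lambda=0$ in the second Jury inequality. The rest is routine algebra.
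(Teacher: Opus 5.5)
Your proposal is correct and follows the paper's proof essentially verbatim: both apply the Jury/Schur test (Lemma~\ref{lem: Schur unit-disk stability test for a quadratic}) to the characteristic polynomial and check the same three inequalities. Where the paper merely cites the discrete Lyapunov criterion you prove both directions, and your caveat about $\eta\lambda=0$ is right: there $\rho(\boldsymbol{T})=1$, so the implication genuinely needs $\eta\lambda>0$, which the paper's proof assumes implicitly and which holds where it is applied in Lemma~\ref{lem: sharp direction}.
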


\begin{proof}[Proof of Proposition~\ref{prop: eigen values}]
We check the Schur unit-disk stability test Lemma~\ref{lem: Schur unit-disk stability test for a quadratic} for the characteristic polynomial ~\eqref{eq: characteristic polynomial of T}:
\begin{align}
    |a_0| &=|\beta|<1 , \\
    1+a_1+a_0 &=  1-(1+\beta-(1-\beta) \eta\lambda) +\beta >0 \Leftrightarrow (1-\beta) \eta\lambda>0 ,\\
    1-a_1+a_0 &=  1+(1+\beta-(1-\beta) \eta\lambda) +\beta>0 \Leftrightarrow \eta\lambda<\frac{2(1+\beta)}{1-\beta} .
\end{align}
This proves Proposition~\ref{prop: eigen values}.
\end{proof}

\begin{proposition}[Sensitivity of Lyapunov equation solution $\boldsymbol{P}$]\label{prop: sensitivity of P}
    Given $\boldsymbol{P}_1, \boldsymbol{P}_2$ with shared parameter $\eta $ and $\beta$ and different $\lambda_1,\lambda_2$.
    Denote 
    \begin{align}
        D(\lambda)&:=2(1+\beta)-(1-\beta) \eta \lambda \\
        C(\lambda_1, \lambda_2)&:=-\frac{2}{\lambda_1\lambda_2\eta^2}\cdot \left(2(1+\beta)^2+\left(\beta^2-1\right)(\lambda_1 + \lambda_2)\eta-\beta \lambda_1\lambda_2\eta^2 \right).
    \end{align}
    Then the difference operator norm can be bounded by:
    \begin{align}
        \|\boldsymbol{P}_1 - \boldsymbol{P}_2\|_{\mathrm{Op}} \le \frac{|\lambda_1-\lambda_2|\eta}{|D(\lambda_1) D(\lambda_2)|}\cdot \sqrt{\frac{8 \beta^2\left(\beta^2+1\right)^2}{(1-\beta)^2}\left(\frac{2 \beta^2}{(1-\beta)^2}+1\right)+C^2(\lambda_1,\lambda_2)}.
    \end{align}
\end{proposition}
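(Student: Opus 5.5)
Since $\boldsymbol{P}_1$ and $\boldsymbol{P}_2$ are given by the explicit closed form of the Lyapunov solution $\boldsymbol{P}_{k,i}$ from Step~1 of the proof of Lemma~\ref{lem: sharp direction}, with only $\lambda$ changing, I would compute $\boldsymbol{P}_1-\boldsymbol{P}_2$ entrywise. I would then bound the operator norm of this symmetric $2\times 2$ matrix by its Frobenius norm:
\begin{align}
    \|\boldsymbol{P}_1-\boldsymbol{P}_2\|_{\mathrm{Op}}\leq \big(\Delta_{11}^2+2\Delta_{12}^2+\Delta_{22}^2\big)^{1/2}.
\end{align}
Here $\Delta_{ij}$ denotes the $(i,j)$ entry of the difference.

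To organize the computation, I would put each entry over the common denominator $D(\lambda_1)D(\lambda_2)$. The $(2,2)$ entry carries an extra factor $\eta\lambda$ in its denominator, so there I would use $\eta^2\lambda_1\lambda_2 D(\lambda_1)D(\lambda_2)$.

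\emph{Diagonal $(1,1)$ entry.} The numerator $2(1-\beta)+(2\beta^3-2\beta^2+3\beta-1)\eta\lambda$ is affine in $\lambda$, and $D$ is affine as well. The cross difference $N(\lambda_1)D(\lambda_2)-N(\lambda_2)D(\lambda_1)$ of two affine functions equals $(\lambda_1-\lambda_2)\eta$ times a determinant of their coefficients, so $\Delta_{11}$ factors exactly as $(\lambda_1-\lambda_2)\eta/(D(\lambda_1)D(\lambda_2))$ times a $\beta$-only constant.

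\emph{Off-diagonal entry.} The same argument applies to $-\beta(2\beta-\eta\lambda)/((1-\beta)D(\lambda))$. The factor is $\beta\cdot 2(1+\beta)\ldots$ up to simplification; I expect the coefficient $\beta(\beta^2+1)/(1-\beta)$ type terms appearing in the claimed bound.

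\emph{$(2,2)$ entry.} Its numerator $-2(-1+\beta^2-\beta\eta\lambda)$ is affine and its denominator is $\eta\lambda(1-\beta)D(\lambda)$, which is quadratic. The cross difference $N(\lambda_1)\lambda_2 D(\lambda_2)-N(\lambda_2)\lambda_1 D(\lambda_1)$ vanishes at $\lambda_1=\lambda_2$, so it is divisible by $(\lambda_1-\lambda_2)$. The quotient is a symmetric polynomial in $(\lambda_1,\lambda_2)$ of degree at most 2. Dividing by $\eta^2\lambda_1\lambda_2$ should reproduce exactly $C(\lambda_1,\lambda_2)$, namely $-\tfrac{2}{\lambda_1\lambda_2\eta^2}\big(2(1+\beta)^2+(\beta^2-1)(\lambda_1+\lambda_2)\eta-\beta\lambda_1\lambda_2\eta^2\big)$, possibly up to a factor of $(1-\beta)$ that I will track.

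Having each $\Delta_{ij}$ in the form $(\lambda_1-\lambda_2)\eta/(D_1D_2)$ times a coefficient, I would sum the squares. The $(1,1)$ and two off-diagonal contributions would be bounded together by the $\beta$-only term $\frac{8\beta^2(\beta^2+1)^2}{(1-\beta)^2}\big(\frac{2\beta^2}{(1-\beta)^2}+1\big)$, if necessary using crude inequalities such as $(1+\beta)\le 2$ and $|2\beta^3-2\beta^2+3\beta-1|\le$ something explicit. The $(2,2)$ term contributes $C^2$.

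The main obstacle is purely algebraic. First, the polynomial division in the $(2,2)$ entry must be carried out carefully to confirm the exact form of $C$. Second, the $(1,1)$ and $(1,2)$ coefficients must be dominated by the stated $\beta$-expression. I would verify the $(1,1)$ determinant first, since its coefficient must fit inside the stated bound. If it does not match exactly, I would allow the bound to absorb it via elementary inequalities valid for $\beta\in(0,1)$.
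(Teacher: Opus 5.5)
Your proposal is correct and follows the paper's proof: the paper writes $\boldsymbol{P}_1-\boldsymbol{P}_2=\frac{(\lambda_1-\lambda_2)\eta}{D(\lambda_1)D(\lambda_2)}\boldsymbol{M}(\lambda_1,\lambda_2)$ using the closed-form Lyapunov solution, then bounds the operator norm by the Frobenius norm of $\boldsymbol{M}$. Your hedges are unnecessary, since the affine cross-differences give the coefficients exactly as $4\beta^2(\beta^2+1)/(1-\beta)^2$ and $2\beta(\beta^2+1)/(1-\beta)$, and the $(1-\beta)$ factor cancels in the $(2,2)$ entry to give exactly $C(\lambda_1,\lambda_2)$, so the Frobenius norm yields the stated bound without any crude inequalities.
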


\begin{proof}[Proof of Proposition~\ref{prop: sensitivity of P}]
By definition, we have
\begin{align}
    \boldsymbol{P}_{1}-\boldsymbol{P}_{2}&=\frac{(\lambda_1-\lambda_2)\eta}{D(\lambda_1) D(\lambda_2)} \cdot \boldsymbol{M}(\lambda_1, \lambda_2),\\ 
     \boldsymbol{M}(\lambda_1, \lambda_2)&:=\left(\begin{array}{cc}
\frac{4 \beta^2(\beta^2+1)}{(\beta-1)^2}  &-\frac{2 \beta(\beta^2+1)}{1-\beta} \\
\frac{2 \beta(\beta^2+1)}{1-\beta} & C(\lambda_1, \lambda_2)
\end{array}\right),\\
C(\lambda_1, \lambda_2)&:=-\frac{2}{\lambda_1\lambda_2\eta^2}\cdot \left(2(1+\beta)^2+\left(\beta^2-1\right)(\lambda_1 + \lambda_2)\eta-\beta \lambda_1\lambda_2\eta^2 \right).
\end{align}
Therefore we have the following upper bound,
\begin{align}
    \left\|\boldsymbol{P}_{1}-\boldsymbol{P}_{2}\right\|_{\mathrm{Op}}&\le\frac{|\lambda_1-\lambda_2|\eta}{|D(\lambda_1) D(\lambda_2)|}\cdot \|\boldsymbol{M}(\lambda_1, \lambda_2)\|_{F}\\
    &=\frac{|\lambda_1-\lambda_2|\eta}{|D(\lambda_1) D(\lambda_2)|}\cdot \sqrt{\frac{8 \beta^2\left(\beta^2+1\right)^2}{(1-\beta)^2}\cdot\left(\frac{2 \beta^2}{(1-\beta)^2}+1\right)+C^2(\lambda_1,\lambda_2)}.
\end{align}
This completes the proof of Proposition~\ref{prop: sensitivity of P}.
\end{proof}

\subsection{Basics of the River}

Recall that given $w\in\cU$, the projection ODE flow is defined as 
\begin{align}
    \phi(w, 0) = w, \quad \frac{\mathrm{d}}{\mathrm{d}t}\phi(w, t) = -\bps(\phi(w, t))\nabla L(\phi(w, t)),\quad t\geq 0.\label{eq: projection ode flow restate}
\end{align}

\begin{lemma}[Existence of projection onto the river \& further properties]\label{lem: existence of projection onto river and further properties}
    Under Assumptions~\ref{ass: existence} and \ref{ass: regularity}, for any $w$ satisfying $\cB(w, 2g_{\max}/\gamma)\subset\mathcal{U}$, it holds that 
    \vspace{-1mm}
    \begin{enumerate}[nosep, leftmargin=6mm]
        \item $\Phi(w):=\lim_{t\rightarrow\infty}\phi(w, t)$ exists and $\Phi(w)\in\cM$; 
        \item it holds that 
        \begin{align}
            \|w - \Phi(w)\|_2 \leq \frac{2\|\bps(w)\nabla L(w)\|_2}{\gamma + 2\gamma_{\mathrm{flat}}}; 
        \end{align}
        \item the movement along the ODE flow \eqref{eq: projection ode flow restate} decays exponentially, that is,
        \begin{align}
            \|\bps(\phi(w, t))\nabla L(\phi(w, t))\|_2^2 \leq \exp(-\gamma t/2)\cdot \|\bps(w)\nabla L(w)\|_2;
        \end{align}
        \item finally, the Jacobian $\nabla \Phi(w)$ is well defined.
    \end{enumerate}
\end{lemma}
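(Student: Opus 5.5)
The plan is to study the projection flow $\phi(w,t)$ through the scalar quantity $h(t):=\|\bps(\phi(w,t))\nabla L(\phi(w,t))\|_2$. The first step is to show that $h$ decays exponentially at rate of order $\gamma$, as long as the trajectory stays in $\cU$. Differentiating along the flow gives
\[
\frac{\mathrm{d}}{\mathrm{d}t}\big(\bps\nabla L\big)=-\bps\nabla^2L\,\bps\nabla L+\nabla\bps[\dot\phi]\nabla L .
\]
The first term is contractive: on the range of $\bps$ the Hessian has eigenvalues at least $\lambda_{d-1}>\gamma+4\gflat$. The second term is a spinning error. Item 3 of Assumption~\ref{ass: regularity} bounds $\|\nabla\bps\|_{\mathrm{Op}}$ by $\kappa\gamma/g_{\max}$, and $\|\nabla L\|_2\le g_{\max}$, so this error is at most $\kappa\gamma\,h$ (using $\|\dot\phi\|_2=h$). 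Absorbing it into the contraction gives $\tfrac{\mathrm{d}}{\mathrm{d}t}h^2\le -2(\gamma+4\gflat-\kappa\gamma)h^2$, and $\kappa\gamma\le\gflat$ leaves room to spare. This yields item 3 (the stated form with rate $\gamma/2$ is weaker). The whole step is carried out under a continuity/bootstrap argument that the flow has not yet left $\cB(w,2g_{\max}/\gamma)$.

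The second step bounds the path length:
\[
\int_0^\infty\|\dot\phi\|_2\,\mathrm{d}t=\int_0^\infty h(t)\,\mathrm{d}t\le \frac{h(0)}{\gamma+2\gflat}\le \frac{2h(0)}{\gamma+2\gflat}.
\]
Because $h(0)\le g_{\max}$, the path length is below $2g_{\max}/\gamma$, so the flow indeed stays inside the ball. This closes the bootstrap, and the ball lies in $\cU$ by hypothesis. Finite path length makes $\phi(w,t)$ Cauchy, so the limit $\Phi(w)$ exists. At the limit $h=0$, which means $\nabla L(\Phi(w))$ is parallel to $v_d$. Uniqueness of the river (item 4 of Assumption~\ref{ass: regularity}) then gives $\Phi(w)\in\cM$. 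This proves item 1, and item 2 follows since $\|w-\Phi(w)\|_2$ is at most the path length. One degenerate case needs care: if $\nabla L=0$ at the limit, I would handle it by the convention of Assumption~\ref{ass: existence} or rule it out via the gradient alignment on $\cM$.

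For item 4 I would use smooth dependence of ODE solutions on initial data, so $\nabla_w\phi(w,t)$ exists for every finite $t$. The work is to show that $\nabla_w\phi(w,t)$ converges uniformly as $t\to\infty$. The variational equation is $\dot J=-\nabla(\bps\nabla L)(\phi)J$. Near $\cM$, the linearization of the vector field $\bps\nabla L$ equals $\bps\nabla^2L\bps$ plus $O(\kappa\gamma)$ corrections. This is contractive on sharp directions and nearly zero along $v_d$. Hence $\dot J$ decays exponentially and $J(t)$ converges, with the limit equal to $\nabla\Phi(w)$.

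I expect this last step to be the main obstacle. The linearization is not exactly block-diagonal: the flat block carries $O(\gflat+\kappa\gamma)$ terms that do not decay and could in principle accumulate. The fix I would try first is to note that the driving term $\nabla(\bps\nabla L)(\phi)$ is multiplied by exponentially decaying factors inherited from $h(t)$. I would split $J=\bpf J+\bps J$, show that $\bps J$ decays at rate $\gamma$, and bound the increments of $\bpf J$ by $\kappa h(t)$ times the size of $\bps J$, which is integrable.
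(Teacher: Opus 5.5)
Your argument for items 1--3 is correct and self-contained. The paper gives no proof of its own here and simply defers to Lemmas C.4--C.5 of \citet{wen2025understanding}. The following steps all go through:
\begin{itemize}
\item the differential inequality for $h^2$;
\item the bootstrap via the path length $\int_0^\infty h\,\mathrm{d}t\le h(0)/(\gamma+3\gflat)\le g_{\max}/\gamma$;
\item the Cauchy argument;
\item the appeal to item 4 of Assumption~\ref{ass: regularity}.
\end{itemize}
You in fact get constant $1$ instead of $2$ in item 2. One caveat: item 3 as printed is inhomogeneous (squared norm on the left, unsquared on the right), and it fails at $t=0$ whenever $\|\bps(w)\nabla L(w)\|_2>1$. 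What you prove is the evidently intended homogeneous bound, not literally the display.

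The genuine gap is in item 4. It sits where you place the obstacle, but your diagnosis of where the non-decaying coupling lives is reversed. Write $a:=\nabla v_d[u]$, so $a\perp v_d$. The linearized field is
\[
\nabla(\bps\nabla L)(x)\,u=\bps\nabla^2L(x)\,\bps u-(v_d^\top\nabla L)\,a-v_d\,\big(a^\top\bps\nabla L\big).
\]
The flat row, $-v_d\,(a^\top\bps\nabla L)$, is $O(\kappa\gamma h(t)/g_{\max})\|u\|_2$ and does decay. No $\gflat$ term appears at all, since $\bps\nabla^2 L$ annihilates $v_d$. The non-decaying coupling is instead the sharp-row term $-(v_d^\top\nabla L)\,a$, of size up to $\kappa\gamma\|u\|_2/2$. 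It keeps feeding the flat part of $J$ into $\bps J$ forever.

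Hence your claim that $\bps J$ decays at rate $\gamma$ is false. In fact $\bps(\phi(w,t))J(t)\to\bps(\Phi(w))\nabla\Phi(w)$, which is generically nonzero. Differentiating $\bps\nabla L\equiv 0$ along $\cM$ gives $\bps\nabla^2L\,\bps\tau=(v_d^\top\nabla L)\,\nabla v_d[\tau]$ for the tangent $\tau$. So $T\cM$ is tilted off $v_d$ by $O(\kappa)$ whenever the river spins; this is why Lemma~\ref{lem: change of projection to the river} carries $5\kappa$ rather than $0$. Likewise, the increments of $\bpf J$ are controlled by $\kappa\gamma h(t)\|J\|/g_{\max}$, not by a multiple of $\|\bps J\|$.

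The repair is to show that $\dot J$, rather than $\bps J$, decays exponentially.
\begin{itemize}
\item The flat part of $\dot J$ is already $O(h(t))\|J\|$.
\item Differentiating the variational equation once more, $\bps\dot J$ obeys a linear equation with coercivity at least $\gamma+4\gflat-\kappa\gamma/2$ and forcing $O(h(t))\|J\|$. The forcing comes from $\dot\phi$, the rotation of $\bps$, and the flat part of $\dot J$. The second derivatives of the field that this requires are bounded on the compact ball by smoothness of $L$.
\item Gr\"onwall then keeps $\|J\|$ bounded, and $J(t)$ converges exponentially.
\item The convergence is uniform for initial points near $w$; your path-length bound leaves room for this. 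The limit is therefore $\nabla\Phi(w)$.
\end{itemize}
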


\begin{proof}[Proof of Lemma~\ref{lem: existence of projection onto river and further properties}]
    See Lemmas C.4 and C.5 in \cite{wen2025understanding} for a proof of Lemma~\ref{lem: existence of projection onto river and further properties}.
\end{proof}

\subsection{Analysis of the River and the Projections}

\begin{lemma}[River spinning]\label{lem: river spinning}
    Under Assumptions~\ref{ass: existence} and \ref{ass: regularity}, it holds that for any vector $v$ and weight $w$, 
    \begin{align}
        \|\nabla \bps(w)[v]\|_{\mathrm{Op}}\leq \frac{\gamma\kappa}{g_{\max}}\cdot\|v\|_2.
    \end{align}
    The same conclusion also holds for the flat direction projection $\bpf$.
\end{lemma}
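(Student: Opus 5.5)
The plan is to reduce the statement to the slow-spinning bound on the flat eigenvector in item~3 of Assumption~\ref{ass: regularity}, using only that $\bps(w)=\bI_d-\bpf(w)$ and $\bpf(w)=v_d(w)v_d(w)^\top$. As in the rest of the paper, I read the claim for $w\in\cU$, since the assumptions only control $L$ on $\cU$.

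\emph{Step 1: $\bpf$ and $\bps$ are differentiable on $\cU$.} By item~2 of Assumption~\ref{ass: regularity}, for every $w\in\cU$ we have $|\lambda_d(\nabla^2L(w))|<\gflat$ and $\lambda_{d-1}(\nabla^2L(w))>\gamma+4\gflat$. Hence $\lambda_d$ is a simple eigenvalue, separated from the rest of the spectrum by more than $\gamma+3\gflat>0$. Standard analytic perturbation theory for a simple eigenvalue then gives the following.
\begin{itemize}
\item The spectral projector $\bpf(w)$ is a $C^1$ function of $\nabla^2L(w)$, hence of $w$, because $L$ is smooth. This can also be seen from the Riesz contour-integral representation on a circle separating $\lambda_d$ from the other eigenvalues.
\item Locally, one can choose a $C^1$ unit eigenvector $v_d(w)$. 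The sign ambiguity of $v_d$ does not matter, because $\bpf=v_dv_d^\top$ is invariant under $v_d\mapsto -v_d$.
\end{itemize}
Since $\bps=\bI_d-\bpf$, we get $\nabla\bps(w)[v]=-\nabla\bpf(w)[v]$. So the two operator norms coincide, and it suffices to treat $\bpf$. This also gives the final sentence of the lemma for free.

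\emph{Step 2: product rule.} Fix a direction $v$ and write $\dot v_d:=\nabla v_d(w)[v]$. Differentiating $\bpf=v_dv_d^\top$ gives
\begin{align}
\nabla\bpf(w)[v]=\dot v_d\,v_d^\top+v_d\,\dot v_d^\top .
\end{align}
Since $\|v_d\|_2=1$, each rank-one term has operator norm $\|\dot v_d\|_2$. By the triangle inequality, $\|\nabla\bpf(w)[v]\|_{\mathrm{Op}}\le 2\|\dot v_d\|_2$.

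A sharper constant is available, though not needed. Differentiating $\|v_d\|_2^2=1$ gives $\dot v_d\perp v_d$, so the matrix above is a symmetric matrix with eigenvalues $\pm\|\dot v_d\|_2$. Its operator norm is therefore exactly $\|\dot v_d\|_2$.

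\emph{Step 3: apply item~3.} Item~3 of Assumption~\ref{ass: regularity} gives
\begin{align}
\|\dot v_d\|_2\le\|\nabla v_d(w)\|_{\mathrm{Op}}\|v\|_2\le\frac{\kappa\gamma}{2g_{\max}}\|v\|_2 .
\end{align}
Combining this with Step~2 yields $\|\nabla\bps(w)[v]\|_{\mathrm{Op}}=\|\nabla\bpf(w)[v]\|_{\mathrm{Op}}\le \gamma\kappa\|v\|_2/g_{\max}$, which is the claim.

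\emph{Expected obstacle.} The only non-routine point is Step~1, i.e.\ making $\nabla v_d$ well defined, which is implicit in item~3. The eigengap from item~2 handles this. For a fully self-contained version, one could instead bound $\nabla\bpf$ through the third-derivative bound: by Davis--Kahan / resolvent perturbation, $\|\nabla\bpf[v]\|_{\mathrm{Op}}\lesssim\|\nabla^3L(w)[v]\|_{\mathrm{Op}}/\mathrm{gap}$, and with $\|\nabla^3L\|_{\mathrm{Op}}\le\kappa\gamma^2/(2g_{\max})$ and a gap larger than $\gamma$, the same order of bound follows.
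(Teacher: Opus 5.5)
Your proof is correct. The paper gives no argument of its own here; it simply defers to Lemma C.2 of \citet{wen2025understanding}. Your route is the natural direct proof: differentiate $\bpf=v_dv_d^\top$ by the product rule, bound each of the two rank-one terms by $\|\nabla v_d(w)[v]\|_2$, invoke item~3 of Assumption~\ref{ass: regularity}, and note that $\nabla\bps[v]=-\nabla\bpf[v]$. The constants are clearly calibrated for exactly this computation, since the factor $2$ from the two terms turns $\kappa\gamma/(2g_{\max})$ into $\kappa\gamma/g_{\max}$. Reading the claim for $w\in\cU$ is also the right interpretation, since that is where the assumption controls $\nabla v_d$ and where the lemma is ever applied.
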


\begin{proof}[Proof of Lemma~\ref{lem: river spinning}]
    Please refer to Lemma C.2 in \cite{wen2025understanding} for a proof of Lemma~\ref{lem: river spinning}.
\end{proof}

\begin{lemma}[Change of projection matrix]\label{lem: change sharp}
    Under Assumptions~\ref{ass: existence} and \ref{ass: regularity}, consider the GD-momentum trajectory \eqref{eq: gd momentum} initialized with $m_0=0$.
    Suppose that $w_j\in\cU$ for all $j\leq k$ and $w_{k,s}\in\cU$ for all $s\in[0,1]$.
    For any $k\in\mathbb{N}$ and $\tau, \tau'\in[0,1]$, it holds that
    \begin{align}
        \|\bps(w_{k,\tau}) - \bps(w_{k,\tau'})\|_{\mathrm{Op}}\leq \eta\gamma\kappa.
    \end{align}
    The same conclusion also holds for the flat direction projection $\bpf$.
\end{lemma}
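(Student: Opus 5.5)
The statement to prove is Lemma~\ref{lem: change sharp}: the projection matrix varies by at most $\eta\gamma\kappa$ in operator norm along a single interpolation segment. The approach is to integrate the derivative of $\bps$ along the straight segment joining $w_{k,\tau'}$ and $w_{k,\tau}$. By \eqref{eq: interpolation}, $w_{k,s}=w_k-s\eta m_{k+1}$, so the segment lies inside $\{w_{k,s}:s\in[0,1]\}\subset\cU$, which is assumed. The fundamental theorem of calculus then gives
\begin{align}
    \bps(w_{k,\tau})-\bps(w_{k,\tau'}) = -\int_{\tau'}^{\tau}\nabla\bps(w_{k,s})[\eta m_{k+1}]\,\mathrm{d}s.
\end{align}
This uses that $\bps(w)=\bI_d-v_d(w)v_d(w)^\top$ is smooth on $\cU$, which holds because item~2 of Assumption~\ref{ass: regularity} gives a simple bottom eigenvalue separated by a positive gap. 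Taking operator norms and using $|\tau-\tau'|\le 1$ gives
\begin{align}
    \|\bps(w_{k,\tau})-\bps(w_{k,\tau'})\|_{\mathrm{Op}}\le \eta\sup_{s\in[0,1]}\|\nabla\bps(w_{k,s})[m_{k+1}]\|_{\mathrm{Op}}.
\end{align}

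Next apply Lemma~\ref{lem: river spinning} at each $w_{k,s}\in\cU$ with $v=m_{k+1}$. This bounds the integrand by $(\gamma\kappa/g_{\max})\|m_{k+1}\|_2$. It remains to control $\|m_{k+1}\|_2$. Since $w_j\in\cU$ for all $j\le k$ and $m_0=0$, Lemma~\ref{lem: uniform momentum bound} gives $\|m_{k+1}\|_2\le g_{\max}$. Multiplying these bounds, the factors of $g_{\max}$ cancel and leave $\eta\gamma\kappa$, as claimed.

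For $\bpf=\bI_d-\bps$ the derivatives agree up to sign, so $\nabla\bpf[v]=-\nabla\bps[v]$ and the identical bound follows. The only real subtlety is that Lemma~\ref{lem: river spinning} is a pointwise statement valid only inside $\cU$. This is exactly why the hypothesis requires $w_{k,s}\in\cU$ for the whole segment; in the main argument that containment is supplied by Lemma~\ref{lem: gd momentum tracks the river closely formal}, Step~1. Otherwise the proof is a direct mean-value estimate.
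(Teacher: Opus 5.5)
Your proposal is correct and follows the paper's own proof: you integrate $\nabla\bps$ along the segment $w_{k,s}=w_k-s\eta m_{k+1}$, bound the integrand with Lemma~\ref{lem: river spinning}, and use Lemma~\ref{lem: uniform momentum bound} to cancel the $g_{\max}$ factors. Your added remarks on why $\bps$ is smooth on $\cU$ (the eigengap in item~2 of Assumption~\ref{ass: regularity}) and on why the whole segment must lie in $\cU$ are accurate clarifications of points the paper leaves implicit.
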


\begin{proof}[Proof of Lemma~\ref{lem: change sharp}]
    By Lemma~\ref{lem: uniform momentum bound}, $\|m_{k+1}\|_2\leq g_{\max}$.
    Since $w_{k,\tau}-w_{k,\tau'}=-(\tau-\tau')\eta m_{k+1}$, Lemma~\ref{lem: river spinning} gives
    \begin{align}
        \|\bps(w_{k,\tau})-\bps(w_{k,\tau'})\|_{\mathrm{Op}}
        &\leq \int_{\min\{\tau,\tau'\}}^{\max\{\tau,\tau'\}}\|\nabla\bps(w_{k,s})[-\eta m_{k+1}]\|_{\mathrm{Op}}\,\mathrm{d}s\\
        &\leq |\tau-\tau'|\eta\frac{\gamma\kappa}{g_{\max}}\|m_{k+1}\|_2
        \leq \eta\gamma\kappa.
    \end{align}
    The proof for $\bpf$ is identical.
\end{proof}

\begin{lemma}[Tangent direction of the river]\label{lem: tangent direction}
    Under Assumptions~\ref{ass: existence} and \ref{ass: regularity}, for any point $w\in\cM$ on the river, it holds that 
    \begin{align}
        \|\boldsymbol{P}_{\cM}(w)\nabla L(w) - \nabla L(w)\|_2\leq 4\kappa\cdot\|\boldsymbol{P}_{\cM}(w)\nabla L(w) \|_2.
    \end{align}
\end{lemma}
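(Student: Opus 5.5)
The plan is to show that the tangent line of $\cM$ at $w$ makes an angle of order $\kappa$ with the flat direction $v_d(w)$. Since $\nabla L(w)$ is parallel to $v_d(w)$ on the river, the claim then reduces to elementary trigonometry.

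\emph{Step 1 (tangent vectors are annihilated by a derivative).} By Assumption~\ref{ass: existence}, every $w'\in\cM$ satisfies $F(w'):=\bps(w')\nabla L(w')=0$. Take any unit tangent vector $u\in T_w\cM$ and a smooth curve in $\cM$ through $w$ with velocity $u$. Differentiating $F\equiv0$ along this curve gives
\begin{align}
    \nabla\bps(w)[u]\,\nabla L(w)+\bps(w)\nabla^2L(w)\,u=0.
\end{align}
Only the inclusion ``tangent vectors lie in the kernel of $\nabla F(w)$'' is needed, so no transversality argument is required.

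\emph{Step 2 (tangent is nearly flat).} Decompose $u=a\,v_d(w)+s$ with $s\perp v_d(w)$. Since $v_d(w)$ is an eigenvector of $\nabla^2L(w)$, we have $\bps(w)\nabla^2L(w)v_d(w)=0$. Hence $\bps\nabla^2L\,u=\bps\nabla^2L\bps\, s$. On the range of $\bps$, the operator $\bps\nabla^2L\bps$ has all eigenvalues at least $\lambda_{d-1}(\nabla^2L(w))>\gamma$ by item~2 of Assumption~\ref{ass: regularity}. Step~1 together with Lemma~\ref{lem: river spinning} and $\|\nabla L(w)\|_2\leq g_{\max}$ then give
\begin{align}
    \gamma\|s\|_2\leq \|\nabla\bps(w)[u]\nabla L(w)\|_2\leq \frac{\gamma\kappa}{g_{\max}}\|u\|_2\, g_{\max}=\gamma\kappa.
\end{align}
So $\|s\|_2\leq\kappa$ and $|a|\geq\sqrt{1-\kappa^2}$. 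The angle $\theta$ between $u$ and $v_d(w)$ therefore satisfies $\tan\theta\leq\kappa/\sqrt{1-\kappa^2}$.

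\emph{Step 3 (conclude).} Write $\nabla L(w)=\pm\|\nabla L(w)\|_2 v_d(w)$. Then $\bP_{\cM}(w)\nabla L(w)=(u^\top\nabla L(w))u$ has norm $\|\nabla L(w)\|_2\cos\theta$. The residual $\nabla L(w)-\bP_{\cM}(w)\nabla L(w)$ has norm $\|\nabla L(w)\|_2\sin\theta$. Their ratio is $\tan\theta\leq\kappa/\sqrt{1-\kappa^2}\leq4\kappa$, using that $\kappa$ is small (for instance $\kappa\leq 1/2$ suffices). If $\nabla L(w)=0$, both sides vanish and the bound is trivial.

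\emph{Main obstacle.} The only delicate point is Step~2. It requires inverting $\bps\nabla^2L\bps$ on the sharp subspace using the eigen-gap, and bounding the spinning term $\nabla\bps[u]\nabla L$ through Lemma~\ref{lem: river spinning}. The factor $g_{\max}$ from that lemma must cancel exactly against $\|\nabla L(w)\|_2\leq g_{\max}$, which is what yields a bound of $\kappa$ free of scale.
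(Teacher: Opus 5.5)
Your argument is correct. The paper does not prove this lemma itself; it only cites Lemma C.10 of \citet{wen2025understanding}. Your write-up is therefore a self-contained replacement for that citation rather than a variant of an argument in the paper.

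Your route is to differentiate the river identity $\bps(w)\nabla L(w)\equiv 0$ along $\cM$. You then invert $\bps\nabla^2L\bps$ on the sharp subspace using $\lambda_{d-1}(\nabla^2L(w))>\gamma$. Finally, you bound the spinning term by Lemma~\ref{lem: river spinning} and $\|\nabla L(w)\|_2\le g_{\max}$, so the $g_{\max}$ factors cancel. This gives the sharper ratio $\kappa/\sqrt{1-\kappa^2}$, which implies the stated $4\kappa$.

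Two points you leave implicit are both supplied by item~2 of Assumption~\ref{ass: regularity}. First, $\kappa\le 1/2$ is not an extra hypothesis, since $\kappa^{1/32}\le\gamma/\gmax\le 1/200$. Second, $\lambda_d$ is a simple, isolated eigenvalue on $\cU$, so $\bps$ is differentiable and the derivative in Step~1 is legitimate.

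Your direct differentiation is also the more efficient tool among those in the paper. Going instead through Lemma~\ref{lem: change of projection to the river}, using that $\nabla\Phi(w)$ maps into $T_w\cM$ when $w\in\cM$, would only place $\nabla L(w)$ within $5\kappa\|\nabla L(w)\|_2$ of the tangent line. That yields a constant slightly above $5\kappa$, not $4\kappa$.
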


\begin{proof}[Proof of Lemma~\ref{lem: tangent direction}]
    Please see Lemma C.10 in \cite{wen2025understanding} for a proof of Lemma~\ref{lem: tangent direction}.
\end{proof}

\begin{lemma}[Auxiliary inequality 2]\label{lem: auxiliary 2}
    Under Assumptions~\ref{ass: existence} and \ref{ass: regularity},
    let $w$ satisfy $\cB(w,2g_{\max}/\gamma)\subset\cU$.
    Then
    \begin{align}
        \|\bpf(w)\nabla L(w) - \nabla L(\Phi(w)) \|\leq (\gflat + \gamma\kappa)\cdot \|w-\Phi(w)\|_2.
    \end{align}
\end{lemma}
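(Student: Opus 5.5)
We need to prove Lemma (Auxiliary inequality 2): $\|\bpf(w)\nabla L(w)-\nabla L(\Phi(w))\|\le(\gflat+\gamma\kappa)\|w-\Phi(w)\|_2$.

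The plan is to move from $\Phi(w)$ back to $w$ along the projection flow $\phi(w,t)$ and integrate the derivative of the flat-gradient field $G(u):=\bpf(u)\nabla L(u)$. On the river, Assumption~\ref{ass: existence} gives $\nabla L(\Phi(w))=\bpf(\Phi(w))\nabla L(\Phi(w))=G(\Phi(w))$, because $\nabla L(\Phi(w))$ is parallel to $v_d(\Phi(w))$. It therefore suffices to bound $\|G(w)-G(\Phi(w))\|_2$.

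\textbf{Step 1: integral representation.} By item 1 of Lemma~\ref{lem: existence of projection onto river and further properties}, $\phi(w,t)\to\Phi(w)$. By item 3 of that lemma, the speed $\|\dot\phi\|_2=\|\bps\nabla L(\phi)\|_2$ decays exponentially, so the trajectory has finite length. We write
\begin{align}
G(\Phi(w))-G(w)=\int_0^\infty \Big(\nabla^2 L(\phi)\,\bpf(\phi)\,\dot\phi+\nabla\bpf(\phi)[\dot\phi]\,\nabla L(\phi)\Big)\,\mathrm dt ,
\end{align}
after symmetrizing the first term as $\bpf\nabla^2L\,\dot\phi$. Since $\bpf$ projects onto an eigenvector of $\nabla^2 L$, the two commute, so $\bpf\nabla^2 L\,\dot\phi=\nabla^2 L\,\bpf\,\dot\phi$. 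Because $\dot\phi=-\bps(\phi)\nabla L(\phi)$ lies in the sharp subspace, this term vanishes exactly.

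\textbf{Step 2: bound the spinning term.} The only remaining contribution is $\nabla\bpf(\phi)[\dot\phi]\nabla L(\phi)$. Lemma~\ref{lem: river spinning} gives $\|\nabla\bpf[\dot\phi]\|_{\mathrm{Op}}\le \gamma\kappa\|\dot\phi\|_2/g_{\max}$, and $\|\nabla L(\phi)\|_2\le g_{\max}$ because the flow stays in $\cU$. This stays-in-$\cU$ claim is guaranteed by the ball hypothesis $\cB(w,2g_{\max}/\gamma)\subset\cU$ together with item 2 of Lemma~\ref{lem: existence of projection onto river and further properties}, which bounds the displacement by $2g_{\max}/\gamma$. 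Combining these gives
\begin{align}
\|G(w)-G(\Phi(w))\|_2\le \gamma\kappa\int_0^\infty\|\dot\phi\|_2\,\mathrm dt .
\end{align}
This already yields the $\gamma\kappa$ part of the target bound, but with the arc length of the flow rather than the chord $\|w-\Phi(w)\|_2$.

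\textbf{Step 3: relate arc length to chord — the main obstacle.} Arc length and chord can differ in principle, so I would instead estimate the $\gflat$ term directly rather than via arc length. Write
\[
G(w)-\nabla L(\Phi(w))=\bpf(w)\big(\nabla L(w)-\nabla L(\Phi(w))\big)+(\bpf(w)-\bpf(\Phi(w)))\nabla L(\Phi(w)),
\]
and handle the two pieces as follows.
\begin{itemize}
\item \emph{First piece.} By Taylor's theorem along the straight segment from $\Phi(w)$ to $w$, this equals $\bpf(w)\int\nabla^2L\,\mathrm ds\,(w-\Phi(w))$. Replacing $\bpf(w)\nabla^2L(u)$ by $\bpf(u)\nabla^2L(u)$ costs a term controlled by the slow-spinning bound. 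The operator norm of $\bpf(u)\nabla^2 L(u)$ is $|\lambda_d(u)|<\gflat$, so this piece is at most $\gflat\|w-\Phi(w)\|_2$ plus a spinning error.
\item \emph{Second piece.} The mean value theorem with Lemma~\ref{lem: river spinning} bounds it by $\frac{\gamma\kappa}{g_{\max}}\|w-\Phi(w)\|_2\,g_{\max}=\gamma\kappa\|w-\Phi(w)\|_2$.
\end{itemize}

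The delicate point is absorbing the commutator error from the first piece, $(\bpf(w)-\bpf(u))\nabla^2L(u)$, which involves $\gmax$. I would try to kill it using the orthogonality structure: $w-\Phi(w)$ is nearly sharp, and $\bpf(w)$ applied to the sharp part of $\nabla^2L(u)(w-\Phi(w))$ is small. If exact constants fail, the fallback is to cite the analogous Lemma C.x of \citet{wen2025understanding}, where this inequality is established with the same constant.
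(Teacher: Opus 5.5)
\textbf{The proposal has a genuine gap: it never establishes the inequality with the stated constant.}

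\emph{Steps 1--2.} These bound the difference by $\gamma\kappa$ times the \emph{arc length} of $\phi(w,\cdot)$. As you note yourself, arc length is not controlled by the chord $\|w-\Phi(w)\|_2$ with the needed constant.

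\emph{Step 3.} Here the loss comes from how you split the first piece. Replacing $\bpf(w)\nabla^2L(u)$ by $\bpf(u)\nabla^2L(u)$ costs $\|\bpf(w)-\bpf(u)\|_{\mathrm{Op}}\|\nabla^2L(u)\|_{\mathrm{Op}}\le \gamma\kappa\gmax\|w-u\|_2/g_{\max}$. After integrating and using $\|w-\Phi(w)\|_2\le 2g_{\max}/\gamma$, this estimate is of order $\kappa\gmax\|w-\Phi(w)\|_2$. That exceeds the $\gamma\kappa$ budget by a factor $\gmax/\gamma\ge 200$, and it is not dominated by $\gflat$ in general, since $\gflat$ may be as small as $\kappa\gamma$.

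\emph{The proposed repair.} Near-sharpness of $w-\Phi(w)$ is not the relevant mechanism. The bad term is an artifact of evaluating the projector and the Hessian at different points. The two pieces of your split are each large on sharp inputs and nearly cancel, whatever the direction of $w-\Phi(w)$.

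\emph{The fallback.} Citing \citet{wen2025understanding} is exactly what the paper does; it gives no argument beyond pointing to their Lemma C.11. But it means your proposal itself supplies no proof.

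\textbf{How to close it.} Reuse your Step 1 product-rule computation, but along the chord instead of the flow. Set $u_s:=\Phi(w)+s\,(w-\Phi(w))$ and $G(u):=\bpf(u)\nabla L(u)$. Then
\[
\frac{\mathrm{d}}{\mathrm{d}s}G(u_s)=\nabla\bpf(u_s)[w-\Phi(w)]\,\nabla L(u_s)+\lambda_d(\nabla^2L(u_s))\,\bpf(u_s)\,(w-\Phi(w)),
\]
because $\bpf(u)\nabla^2L(u)=\lambda_d(\nabla^2L(u))\,\bpf(u)$ when both factors are evaluated at the same point.
\begin{itemize}
\item The first term is at most $\gamma\kappa\|w-\Phi(w)\|_2$, by Lemma~\ref{lem: river spinning} and $\|\nabla L\|_2\le g_{\max}$ on $\cU$.
\item The second term is at most $\gflat\|w-\Phi(w)\|_2$, by item 2 of Assumption~\ref{ass: regularity}.
\item The segment stays in $\cU$: item 2 of Lemma~\ref{lem: existence of projection onto river and further properties} together with $\|\bps(w)\nabla L(w)\|_2\le g_{\max}$ gives $\|w-\Phi(w)\|_2\le 2g_{\max}/\gamma$, so the segment lies in $\cB(w,2g_{\max}/\gamma)\subset\cU$.
\end{itemize}
Integrating over $s\in[0,1]$ and using $G(\Phi(w))=\nabla L(\Phi(w))$ gives exactly the constant $\gflat+\gamma\kappa$. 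This also explains where the $\gflat$ term comes from: along the flow the Hessian term cancels, but along the chord it does not.

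Alternatively, within your own Step 3, freeze the Hessian at $w$ instead of moving the projector to $u$. Then $\bpf(w)\nabla^2L(w)$ has norm below $\gflat$. The remainder $\bpf(w)(\nabla^2L(u)-\nabla^2L(w))$ is controlled by the third-derivative bound $\kappa\gamma^2/(2g_{\max})$, giving an error of order $\gamma\kappa$ rather than $\gmax\kappa$.
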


\begin{proof}[Proof of Lemma~\ref{lem: auxiliary 2}]
    Please see Lemma C.11 in \cite{wen2025understanding} for a proof of Lemma~\ref{lem: auxiliary 2}.
\end{proof}

\begin{lemma}[Tangent vector and gradient in the flat direction]\label{lem: tangent vector flat direction}
     Under Assumptions~\ref{ass: existence} and \ref{ass: regularity}, suppose that $w$ with $\cB(w,2g_{\max}/\gamma)\subset\cU$ satisfies that 
     \begin{align}
         \|w - \Phi(w)\|_2\leq \frac{\mathsf{F}(\beta)\cdot\iota\cdot\|\bpf(w)\nabla L(w)\|_2}{\gamma + 2\gflat},
     \end{align}
     for some function $\mathsf{F}(\beta)$ and $\iota>0$. Define
     \begin{align}
         \rho_{\beta,\iota}:=\mathsf{F}(\beta)\cdot\iota\cdot
         \frac{\gflat+\gamma\kappa}{\gamma+2\gflat},
     \end{align}
     and suppose that $\rho_{\beta,\iota}<1$. Then it holds that
     \begin{align}
         \left\|\frac{\mathrm{d}}{\mathrm{d}T}x(T(w)) + \bpf(w)\nabla L(w)\right\|_2 \leq \left(4\kappa + \frac{\rho_{\beta,\iota}(1+4\kappa)}{1-\rho_{\beta,\iota}}\right)\cdot \left\|\frac{\mathrm{d}}{\mathrm{d}T}x(T(w))\right\|_2,
     \end{align}
     where $x(T(w))=\Phi(w)$.
\end{lemma}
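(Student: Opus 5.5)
The plan is to compare the reference-flow velocity at $\Phi(w)$ with the flat-direction gradient at $w$, passing through $\nabla L(\Phi(w))$. Write $x' := \frac{\mathrm{d}}{\mathrm{d}T}x(T(w)) = -\boldsymbol{P}_{\cM}(\Phi(w))\nabla L(\Phi(w))$, which is the definition of the reference flow \eqref{eq: manifold parametrization} at the point $\Phi(w)\in\cM$. Then split
\begin{align}
    \|x' + \bpf(w)\nabla L(w)\|_2
    \leq \|\boldsymbol{P}_{\cM}(\Phi(w))\nabla L(\Phi(w)) - \nabla L(\Phi(w))\|_2
    + \|\nabla L(\Phi(w)) - \bpf(w)\nabla L(w)\|_2 .
\end{align}

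\emph{First term.} Lemma~\ref{lem: tangent direction}, applied at the river point $\Phi(w)$, bounds it by $4\kappa\|x'\|_2$.

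\emph{Second term.} Lemma~\ref{lem: auxiliary 2} (valid since $\cB(w,2g_{\max}/\gamma)\subset\cU$) bounds it by $(\gflat+\gamma\kappa)\|w-\Phi(w)\|_2$. The distance hypothesis then gives at most $\rho_{\beta,\iota}\|\bpf(w)\nabla L(w)\|_2$.

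\emph{Closing the bound.} It remains to express $\|\bpf(w)\nabla L(w)\|_2$ in terms of $\|x'\|_2$; this is the only nontrivial step, since the estimate is circular. Define $\Delta:=\|x'+\bpf(w)\nabla L(w)\|_2$. By the triangle inequality $\|\bpf(w)\nabla L(w)\|_2\leq \|x'\|_2+\Delta$. Cleaner still, bound the second term using $\|\nabla L(\Phi(w))\|_2\leq(1+4\kappa)\|x'\|_2$, which also follows from Lemma~\ref{lem: tangent direction}. Setting $E:=\|\nabla L(\Phi(w))-\bpf(w)\nabla L(w)\|_2$, we get
\begin{align}
    E\leq \rho_{\beta,\iota}\big(\|\nabla L(\Phi(w))\|_2+E\big)\leq \rho_{\beta,\iota}\big((1+4\kappa)\|x'\|_2+E\big).
\end{align}
Rearranging, which is legitimate because $\rho_{\beta,\iota}<1$, yields
\begin{align}
    E\leq \frac{\rho_{\beta,\iota}(1+4\kappa)}{1-\rho_{\beta,\iota}}\|x'\|_2 .
\end{align}
Adding the $4\kappa\|x'\|_2$ bound from the first term gives exactly the claimed constant. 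The main care point is the orientation: Lemma~\ref{lem: tangent direction} must be applied at $\Phi(w)$, with $\boldsymbol{P}_{\cM}\nabla L$ identified with $-x'$.
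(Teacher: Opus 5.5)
Your proposal is correct and matches the paper's proof step for step. It uses the same split through $\nabla L(\Phi(w))$, Lemma~\ref{lem: tangent direction} at $\Phi(w)$ for the $4\kappa\|x'\|_2$ term, Lemma~\ref{lem: auxiliary 2} with the distance hypothesis for the $\rho_{\beta,\iota}$ term, and the same absorption via $\rho_{\beta,\iota}<1$. The only cosmetic difference is that the paper first solves for $\|\bpf(w)\nabla L(w)\|_2\leq \frac{1+4\kappa}{1-\rho_{\beta,\iota}}\|x'\|_2$ while you rearrange directly in $E$, which gives the same constant.
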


\begin{proof}[Proof of Lemma~\ref{lem: tangent vector flat direction}]
    According to Assumption~\ref{ass: existence}, the tangent vector can be alternatively represented as 
    \begin{align}
        \frac{\mathrm{d}}{\mathrm{d}T}x(t)= - \boldsymbol{P}_{\cM}(x(t))\nabla L(x(t)). \label{eq: proof tangent flat 1}
    \end{align}
    By Lemma~\ref{lem: tangent direction}, we have that 
    \begin{align}
        \|\boldsymbol{P}_{\cM}(x(T(w)))\nabla L(x(T(w))) - \nabla L(x(T(w)))\|_2 \leq 4\kappa\cdot \|\boldsymbol{P}_{\cM}(x(T(w)))\nabla L(x(T(w))) \|_2.\label{eq: proof tangent flat 2}
    \end{align}
    By combining \eqref{eq: proof tangent flat 1} and \eqref{eq: proof tangent flat 2}, we have that 
    \begin{align}
        \left\|\frac{\mathrm{d}}{\mathrm{d}T}x(T(w)) + \bpf(w)\nabla L(w)\right\|_2 \leq  4\kappa \cdot \left\|\frac{\mathrm{d}}{\mathrm{d}T}x(T(w))\right\|_2 + \|\bpf(w)\nabla L(w) - \nabla L(\Phi(w))\|_2.  \label{eq: proof tangent flat 3}
    \end{align}
    Now invoking Lemma~\ref{lem: auxiliary 2}, we have that 
    \begin{align}
        \|\bpf(w)\nabla L(w) - \nabla L(\Phi(w)) \|&\leq (\gflat + \gamma\kappa)\cdot \|w-\Phi(w)\|_2 \\
        &\leq \rho_{\beta,\iota}\cdot \|\bpf(w)\nabla L(w)\|_2. \label{eq: proof tangent flat 4}
     \end{align}
    By the triangle inequality and $\rho_{\beta,\iota}<1$, this further gives
     \begin{align}
        \|\bpf(w)\nabla L(w)\|_2
        &\leq \frac{1}{1-\rho_{\beta,\iota}}\cdot \|\nabla L(\Phi(w))\|_2 \\
        &\leq \frac{1+4\kappa}{1-\rho_{\beta,\iota}}\cdot \left\|\frac{\mathrm{d}}{\mathrm{d}T}x(T(w))\right\|_2. \label{eq: proof tangent flat 5}
    \end{align}
    Consequently, with \eqref{eq: proof tangent flat 3}, \eqref{eq: proof tangent flat 4}, and \eqref{eq: proof tangent flat 5}, we have that 
    \begin{align}
        \left\|\frac{\mathrm{d}}{\mathrm{d}T}x(T(w)) + \bpf(w)\nabla L(w)\right\|_2 \leq \left(4\kappa + \frac{\rho_{\beta,\iota}(1+4\kappa)}{1-\rho_{\beta,\iota}}\right)\cdot\left\|\frac{\mathrm{d}}{\mathrm{d}T}x(T(w))\right\|_2.
    \end{align}
    This completes the proof of Lemma~\ref{lem: tangent vector flat direction}.
\end{proof}

\begin{lemma}[Jacobian of projection to the river]\label{lem: change of projection to the river}
    Under Assumptions~\ref{ass: existence} and \ref{ass: regularity}, it holds that for any $w$ such that $\cB(w,2g_{\max}/\gamma)\subset \cU$ and any direction $u$, 
    \begin{align}
        \|\nabla \Phi(w) u - \bpf(w)u\|_2\leq 5\kappa\cdot\|\bpf(w)u\|_2.
    \end{align}
\end{lemma}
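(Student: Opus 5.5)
We would prove the statement by differentiating the projection map and comparing its Jacobian with that of the flat projector. Since $\Phi(w)=\lim_{t\to\infty}\phi(w,t)$, we write $\nabla\Phi(w)=\lim_{t\to\infty}J(t)$, where $J(t):=\nabla_w\phi(w,t)$. This Jacobian solves the variational equation
\begin{align}
\dot J(t) = -\big(\bps(\phi)\nabla^2L(\phi) + \nabla\bps(\phi)[\,\cdot\,]\nabla L(\phi)\big)J(t),\qquad J(0)=\bI_d,
\end{align}
with $\phi=\phi(w,t)$. Existence and well-definedness of $\nabla\Phi(w)$ are supplied by item 4 of Lemma~\ref{lem: existence of projection onto river and further properties}. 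We track $J(t)u$ through the decomposition $J(t)u=a(t)+s(t)$, where $a(t):=\bpf(\phi)J(t)u$ and $s(t):=\bps(\phi)J(t)u$. The claim then reduces to two facts: $s(t)\to0$, and the flat component stays within relative error $5\kappa$ of its initial value $a(0)=\bpf(w)u$, read in the appropriate moving frame.

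\emph{Sharp part.} Differentiating $s(t)$ and using the eigen-gap $\lambda_{d-1}>\gamma+4\gflat$, the dominant term is $-\bps\nabla^2L\bps\, s$, which contracts at rate at least $\gamma$. The other terms come from rotation: $\frac{d}{dt}\bps(\phi)$ and $\nabla\bps[\cdot]\nabla L$. By Lemma~\ref{lem: river spinning} and $\|\dot\phi\|\le\|\bps\nabla L\|$, both are bounded by $\frac{\gamma\kappa}{g_{\max}}\big(\|\bps\nabla L\|\,\|J u\|+g_{\max}\|Ju\|\big)$, hence by $\cO(\gamma\kappa)(\|a\|+\|s\|)$. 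A Grönwall estimate then gives $\|s(t)\|\le e^{-\gamma t/2}\|s(0)\|+\cO(\kappa)\sup_{r\le t}\|a(r)\|$. Because the rotation does not vanish, the bound does not force $s(t)\to0$ exactly; what we obtain is that $\limsup_t\|s(t)\|$ is at most $\cO(\kappa)\sup\|a\|$.

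\emph{Flat part.} By Assumption~\ref{ass: existence}, $\bps\nabla^2L$ annihilates $v_d$ up to rotation terms, and $\bpf\bps=0$. So $\dot a$ consists only of terms like $\frac{d}{dt}\bpf(\phi)\,Ju$ and $\bpf\nabla\bps[Ju]\nabla L$. The first is bounded by $\frac{\gamma\kappa}{g_{\max}}\|\bps\nabla L(\phi)\|\,\|Ju\|$. Since $\|\bps\nabla L(\phi(w,t))\|$ decays exponentially at rate $\gamma/4$ (item 3 of Lemma~\ref{lem: existence of projection onto river and further properties}) and is at most $g_{\max}$, integrating over time gives a total contribution of $\cO(\kappa)\sup\|Ju\|$. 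The second term, $\bpf\nabla\bps[Ju]\nabla L$, is the hard one, because $\nabla L$ itself need not decay along the flow. Here we would use the structure of $\nabla\bps[Ju]$: differentiating $\bpf=v_dv_d^\top$ gives $\bpf(\nabla\bps[Ju])\bpf=0$, and that part cancels. What remains is $\bpf\nabla\bps[Ju]\bps\nabla L$, which again carries the decaying factor $\bps\nabla L$. Its integral is therefore $\cO(\kappa)\sup\|Ju\|$ as well.

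\emph{Closing the argument.} Combining the two parts in a bootstrap, we assume $\sup_t\|J(t)u\|\le 2\|\bpf(w)u\|+\|\bps(w)u\|$. The error accounting then yields $\|\nabla\Phi(w)u-\bpf(w)u\|\le 5\kappa\|\bpf(w)u\|$, with the constant assembled from $4\kappa$ (tangent versus $v_d$, as in Lemma~\ref{lem: tangent direction}) plus the integrated rotation errors. The main obstacle is the sharp-input component $\bps(w)u$. The right-hand side involves only $\|\bpf(w)u\|$, so we must show that the image of $\bps(w)u$ under $\nabla\Phi(w)$ is also small relative to $\|\bpf(w)u\|$. This does not follow from the estimates above: the sharp component decays, but its rotation-induced leakage into the flat direction is $\cO(\kappa)\|\bps(w)u\|$, not $\cO(\kappa)\|\bpf(w)u\|$. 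We would first try to prove the stronger exact structural fact that $\nabla\Phi(w)$ kills a $(d-1)$-dimensional subspace close to $\mathrm{range}(\bps(w))$, via invariance of $\Phi$ along the flow lines. If that fails, the statement likely needs $\|u\|$ on the right-hand side, and we would check the source lemma in \citet{wen2025understanding}.
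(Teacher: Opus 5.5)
Your proposal does not prove the stated inequality, and it stops at a real obstacle. Under your bootstrap $\sup_t\|J(t)u\|_2\le 2\|\bpf(w)u\|_2+\|\bps(w)u\|_2$, every error term you list is $\cO(\kappa)\sup_t\|J(t)u\|_2$. So your closing error accounting gives $\cO(\kappa)\big(\|\bpf(w)u\|_2+\|\bps(w)u\|_2\big)$, not $5\kappa\|\bpf(w)u\|_2$; the constant $5$ is also asserted rather than derived.

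The right-hand side of the lemma vanishes on $\mathrm{range}\,\bps(w)$. So the lemma forces the exact identity $\nabla\Phi(w)\bps(w)=0$, i.e.\ $\ker\nabla\Phi(w)=v_d(w)^\perp$. No Gr\"onwall estimate can produce an exact zero. Your fallback is also too weak. A kernel that is merely close to $\mathrm{range}\,\bps(w)$, at angle $\theta$, only gives $\|\nabla\Phi(w)\bps(w)u\|_2=\cO(\theta)\|\bps(w)u\|_2$. Flow invariance $\Phi(\phi(w,t))=\Phi(w)$ only kills the single direction $\bps(w)\nabla L(w)$ of $v_d(w)^\perp$. The paper gives no argument of its own here; it cites Lemmas C.8--C.9 of \citet{wen2025understanding}.

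Your moving-frame computation, including the cancellation $\bpf\,\nabla\bpf[\cdot]\,\bpf=0$, is the right tool, and done exactly it settles the question. Set $f:=\langle v_d(\phi),J(t)u\rangle$, $s:=\bps(\phi)J(t)u$ and $g_s:=\bps(\phi)\nabla L(\phi)$. Then
\begin{align}
\dot f=f\,\langle\nabla v_d[v_d],g_s\rangle+\langle\nabla v_d[s],g_s\rangle-\langle\nabla v_d[g_s],s\rangle .
\end{align}
Also $f(\infty)=\langle v_d(\Phi(w)),\nabla\Phi(w)u\rangle$, and $\nabla\Phi(w)u\in T_{\Phi(w)}\cM$, which meets $v_d(\Phi(w))^\perp$ only at $0$. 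Hence $\nabla\Phi(w)u=0$ iff $f(\infty)=0$.

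For $u\perp v_d(w)$ we have $f(0)=0$, so exact annihilation requires the two antisymmetric terms to integrate to zero against the integrating factor. They vanish identically when $w\in\cM$ (then $g_s\equiv 0$) and when $d=2$ (then $s\parallel g_s$). For $d\ge 3$, however, they are the Frobenius obstruction to integrability of $v_d^\perp$. If $\nabla\Phi\,\bps\equiv 0$ on an open set, the fibres of $\Phi$ would be integral manifolds of $v_d^\perp$. That is impossible wherever $\langle\nabla v_d[X],Y\rangle\neq\langle\nabla v_d[Y],X\rangle$ for some $X,Y\perp v_d$, e.g.\ when $\nabla^3L[v_1,v_2,v_d]\neq 0$ and $\lambda_1\neq\lambda_2$. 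The eigen-gap and slow-spinning conditions only bound this term by $\kappa\gamma\|s\|_2\|g_s\|_2/g_{\max}$; they do not make it vanish.

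So your suspicion is justified. For an absolute constant $C$, this route yields
\begin{align}
\|\nabla\Phi(w)u-\bpf(w)u\|_2\le C\kappa\Big(\|\bpf(w)u\|_2+\frac{\|\bps(w)\nabla L(w)\|_2}{g_{\max}}\,\|\bps(w)u\|_2\Big),
\end{align}
and not, in general, the stated form when $d\ge 3$. This weaker bound is enough where the paper uses the lemma. That is Step~2.2 of the proof of Lemma~\ref{lem: time index grows almost linearly with learning rate formal}, with $u=m_{k+1}$. By Lemma~\ref{lem: flat dominance momentum}, the sharp part of $m_{k+1}$ is $\cO_\beta(\kappa^{3/4})$ times the flat gradient, so the extra term is absorbed into the $o(\kappa+\eta\gflat)$ remainder.
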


\begin{proof}[Proof of Lemma~\ref{lem: change of projection to the river}]
    See Lemmas C.8 and C.9 in \cite{wen2025understanding} for a proof of Lemma~\ref{lem: change of projection to the river}.
\end{proof}

\subsection{Matrix Inequalities}

\begin{lemma}[Eigenvalue perturbation of symmetric matrices (Corollary 4.3.15 in \cite{horn2012matrix})]\label{lem: eigen perturbation}
    Let $\boldsymbol{\Sigma}$ and $\widehat{\boldsymbol{\Sigma}}\in\mathbb{R}^{d\times d}$ be two symmetric matrices with real eigenvalues $\lambda_1\geq \cdots\geq \lambda_d$ and $\widehat{\lambda}_1\geq \cdots\geq \widehat{\lambda}_d$ respectively. Then for any $i\in[d]$, it holds that 
    \begin{align}
        \Big|\lambda_i - \widehat{\lambda}_i\Big| \leq \Big\|\boldsymbol{\Sigma} - \widehat{\boldsymbol{\Sigma}}\Big\|_\mathrm{Op}.
    \end{align}
\end{lemma}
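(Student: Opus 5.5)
This is Weyl's inequality, a standard result (Horn and Johnson, Corollary 4.3.15); the most economical route is through the Courant--Fischer min-max characterization. Write $\boldsymbol{E}:=\widehat{\boldsymbol{\Sigma}}-\boldsymbol{\Sigma}$ and $\epsilon:=\|\boldsymbol{E}\|_{\mathrm{Op}}$. Since $\boldsymbol{E}$ is symmetric, for every unit vector $x$ we have $|x^\top\boldsymbol{E}x|\leq\epsilon$, so
\begin{align}
    x^\top\boldsymbol{\Sigma}x-\epsilon\leq x^\top\widehat{\boldsymbol{\Sigma}}x\leq x^\top\boldsymbol{\Sigma}x+\epsilon
\end{align}
holds for all $\|x\|_2=1$.

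Next I would invoke Courant--Fischer, which states that for a symmetric matrix the $i$-th largest eigenvalue equals
\begin{align}
    \lambda_i=\max_{\dim S=i}\ \min_{x\in S,\ \|x\|_2=1} x^\top\boldsymbol{\Sigma}x ,
\end{align}
and the same formula holds for $\widehat{\lambda}_i$ with $\widehat{\boldsymbol{\Sigma}}$ in place of $\boldsymbol{\Sigma}$. Fix a subspace $S$ of dimension $i$. Taking the minimum over unit $x\in S$ in the upper inequality gives
\begin{align}
    \min_{x\in S,\ \|x\|_2=1} x^\top\widehat{\boldsymbol{\Sigma}}x\leq \min_{x\in S,\ \|x\|_2=1} x^\top\boldsymbol{\Sigma}x+\epsilon .
\end{align}
Taking the maximum over all such $S$ then yields $\widehat{\lambda}_i\leq\lambda_i+\epsilon$. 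The lower inequality, treated the same way, gives $\widehat{\lambda}_i\geq\lambda_i-\epsilon$. Together these two bounds are exactly $|\lambda_i-\widehat{\lambda}_i|\leq\|\boldsymbol{\Sigma}-\widehat{\boldsymbol{\Sigma}}\|_{\mathrm{Op}}$.

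There is no real obstacle in this argument. The only step that needs care is the claim that a pointwise inequality $f\leq g+\epsilon$ survives taking $\min$ and then $\max$; this holds because both operations are monotone and commute with adding a constant. If we do not want to assume Courant--Fischer, we can prove it by the usual dimension-counting argument: any $i$-dimensional subspace must intersect the span of the eigenvectors $v_i,\dots,v_d$ nontrivially, since $i+(d-i+1)>d$. This shows that the min-max value is at most $\lambda_i$, and choosing $S=\mathrm{span}(v_1,\dots,v_i)$ attains it.
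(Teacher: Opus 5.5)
Your argument is correct, and it is essentially the standard proof of the cited result: the paper does not prove this lemma itself and only cites Horn and Johnson, where the corollary follows from Weyl's inequalities obtained via the Courant--Fischer min-max characterization. That is the same route you take, pushing the pointwise bound $|x^\top \boldsymbol{\Sigma} x - x^\top\widehat{\boldsymbol{\Sigma}} x|\leq \|\boldsymbol{\Sigma}-\widehat{\boldsymbol{\Sigma}}\|_{\mathrm{Op}}$ through the min and then the max. Your dimension-counting justification of Courant--Fischer is also sound, so the proposal is a complete, self-contained proof.
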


\begin{lemma}[Davis--Kahan $\sin(\theta)$ theorem \citep{davis1970rotation}]
\label{thm:davis-kahan-sin-theta}
Let $\boldsymbol{\Sigma},\widehat{\boldsymbol{\Sigma}}\in\mathbb{R}^{p\times p}$ be symmetric, with eigenvalues
$\lambda_1\ge \cdots \ge \lambda_p$ and $\widehat{\lambda}_1\ge \cdots \ge \widehat{\lambda}_p$ respectively.
Fix $1\le r\le s\le p$, let $d:=s-r+1$, and let
\begin{align}
\boldsymbol{V}=(v_r,v_{r+1},\ldots,v_s)\in\mathbb{R}^{p\times d},
\qquad
\widehat{\boldsymbol{V}}=(\widehat{v}_r,\widehat{v}_{r+1},\ldots,\widehat{v}_s)\in\mathbb{R}^{p\times d}
\end{align}
have orthonormal columns satisfying
\begin{align}
\boldsymbol{\Sigma} v_j=\lambda_j v_j,\qquad
\widehat{\boldsymbol{\Sigma}}\,\widehat{v}_j=\widehat{\lambda}_j\,\widehat{v}_j,
\qquad j=r,r+1,\ldots,s.
\end{align}
Define
\begin{align}
\Delta := \min\Big\{\max\{0,\lambda_s-\widehat{\lambda}_{s+1}\},\;\max\{0,\widehat{\lambda}_{r-1}-\lambda_r\}\Big\},
\end{align}
where $\widehat{\lambda}_0:=+\infty$ and $\widehat{\lambda}_{p+1}:=-\infty$.
Then for any unitary invariant norm $\|\cdot\|_\ast$,
\begin{align}
\Delta\cdot \bigl\|\sin\Theta(\boldsymbol{V},\widehat{\boldsymbol{V}})\bigr\|_\ast \;\le\; \bigl\|\widehat{\boldsymbol{\Sigma}}-\boldsymbol{\Sigma}\bigr\|_\ast.
\end{align}
Here $\Theta(\boldsymbol{V},\widehat{\boldsymbol{V}})\in\mathbb{R}^{d\times d}$ is diagonal with
\begin{align}
\Theta(\boldsymbol{V},\widehat{\boldsymbol{V}})_{j,j}=\arccos(\sigma_j),\quad j\in[d],
\end{align}
and $\Theta(\boldsymbol{V},\widehat{\boldsymbol{V}})_{i,j}=0$ for $i\neq j$, where
$\sigma_1\ge \sigma_2\ge \cdots \ge \sigma_d$ are the singular values of $\widehat{\boldsymbol{V}}^{\top}\boldsymbol{V}$.
The matrix $\sin\Theta(\boldsymbol{V},\widehat{\boldsymbol{V}})$ is defined entrywise by
$[\sin\Theta]_{i,j}=\sin(\Theta_{i,j})$.
\end{lemma}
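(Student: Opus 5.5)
We will follow the classical Davis--Kahan argument, recast as a Sylvester-equation estimate. First, if $\Delta=0$ there is nothing to prove, so we assume $\Delta>0$. Complete $\widehat{\boldsymbol{V}}$ to an orthonormal eigenbasis of $\widehat{\boldsymbol{\Sigma}}$, and let $\widehat{\boldsymbol{V}}_\perp$ collect the eigenvectors $\widehat v_j$ with $j<r$ or $j>s$. Let $\widehat{\boldsymbol{\Lambda}}_\perp$ be the corresponding diagonal eigenvalue matrix, and let $\boldsymbol{\Lambda}=\mathrm{diag}(\lambda_r,\ldots,\lambda_s)$.

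The first step is the standard fact that the nonzero singular values of $\boldsymbol{X}:=\widehat{\boldsymbol{V}}_\perp^\top\boldsymbol{V}$ are exactly the $\sin\theta_j$. To see this, note $\boldsymbol{V}^\top\boldsymbol{V}=\boldsymbol{I}$ and $\widehat{\boldsymbol{V}}\widehat{\boldsymbol{V}}^\top+\widehat{\boldsymbol{V}}_\perp\widehat{\boldsymbol{V}}_\perp^\top=\boldsymbol{I}$, so
\begin{align}
\boldsymbol{X}^\top\boldsymbol{X}=\boldsymbol{I}-(\widehat{\boldsymbol{V}}^\top\boldsymbol{V})^\top(\widehat{\boldsymbol{V}}^\top\boldsymbol{V}),
\end{align}
whose eigenvalues are $1-\sigma_j^2=\sin^2\theta_j$. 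Hence $\|\sin\Theta\|_\ast=\|\boldsymbol{X}\|_\ast$; any extra zero singular values coming from the rectangular shape do not affect a unitarily invariant norm.

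Next, define $\boldsymbol{Y}:=\widehat{\boldsymbol{V}}_\perp^\top(\widehat{\boldsymbol{\Sigma}}-\boldsymbol{\Sigma})\boldsymbol{V}$. Using $\widehat{\boldsymbol{V}}_\perp^\top\widehat{\boldsymbol{\Sigma}}=\widehat{\boldsymbol{\Lambda}}_\perp\widehat{\boldsymbol{V}}_\perp^\top$ and $\boldsymbol{\Sigma}\boldsymbol{V}=\boldsymbol{V}\boldsymbol{\Lambda}$, we obtain the Sylvester identity
\begin{align}
\widehat{\boldsymbol{\Lambda}}_\perp\boldsymbol{X}-\boldsymbol{X}\boldsymbol{\Lambda}=\boldsymbol{Y}.
\end{align}
Since $\boldsymbol{Y}$ is a compression of $\widehat{\boldsymbol{\Sigma}}-\boldsymbol{\Sigma}$ by matrices with orthonormal columns, and unitarily invariant norms satisfy $\|\boldsymbol{A}\boldsymbol{B}\boldsymbol{C}\|_\ast\le\|\boldsymbol{A}\|_{\mathrm{Op}}\|\boldsymbol{B}\|_\ast\|\boldsymbol{C}\|_{\mathrm{Op}}$, we get $\|\boldsymbol{Y}\|_\ast\le\|\widehat{\boldsymbol{\Sigma}}-\boldsymbol{\Sigma}\|_\ast$.

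The main step, and the only delicate one, is to invert the Sylvester equation with constant exactly $1/\Delta$ even though $\widehat{\boldsymbol{\Lambda}}_\perp$ has eigenvalues on both sides of the block. Here we use the geometry encoded in $\Delta$.
\begin{itemize}
\item The eigenvalues of $\boldsymbol{\Lambda}$ lie in the interval $[\lambda_s,\lambda_r]$.
\item Every $\widehat\lambda_j$ with $j>s$ satisfies $\widehat\lambda_j\le\widehat\lambda_{s+1}\le\lambda_s-\Delta$.
\item Every $\widehat\lambda_j$ with $j<r$ satisfies $\widehat\lambda_j\ge\widehat\lambda_{r-1}\ge\lambda_r+\Delta$.
\end{itemize}
Set the center $c:=(\lambda_r+\lambda_s)/2$ and the half-width $\varrho:=(\lambda_r-\lambda_s)/2$. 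Then $\|\boldsymbol{\Lambda}-c\boldsymbol{I}\|_{\mathrm{Op}}\le\varrho$, while every diagonal entry of $\widehat{\boldsymbol{\Lambda}}_\perp-c\boldsymbol{I}$ has absolute value at least $\varrho+\Delta$. In particular $\widehat{\boldsymbol{\Lambda}}_\perp-c\boldsymbol{I}$ is invertible with $\|(\widehat{\boldsymbol{\Lambda}}_\perp-c\boldsymbol{I})^{-1}\|_{\mathrm{Op}}\le(\varrho+\Delta)^{-1}$. Subtracting $c\boldsymbol{X}$ from both terms of the Sylvester identity and rearranging gives
\begin{align}
\boldsymbol{X}=(\widehat{\boldsymbol{\Lambda}}_\perp-c\boldsymbol{I})^{-1}\big(\boldsymbol{Y}+\boldsymbol{X}(\boldsymbol{\Lambda}-c\boldsymbol{I})\big).
\end{align}
Taking $\|\cdot\|_\ast$ and using the same operator-norm submultiplicativity yields
\begin{align}
(\varrho+\Delta)\|\boldsymbol{X}\|_\ast\le\|\boldsymbol{Y}\|_\ast+\varrho\|\boldsymbol{X}\|_\ast,
\end{align}
which rearranges to $\Delta\|\boldsymbol{X}\|_\ast\le\|\boldsymbol{Y}\|_\ast$.

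Combining the three steps gives $\Delta\,\|\sin\Theta(\boldsymbol{V},\widehat{\boldsymbol{V}})\|_\ast\le\|\widehat{\boldsymbol{\Sigma}}-\boldsymbol{\Sigma}\|_\ast$. The only place requiring care is this centering trick: it is what lets the two-sided separation be handled with constant $1$, instead of the generic $\pi/2$-type constant that arises for arbitrarily interlaced spectra.
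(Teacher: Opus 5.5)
Your proof is correct. The paper gives no argument of its own for this lemma; it simply quotes Davis and Kahan (1970). Your route is the standard proof of the $\sin\Theta$ theorem behind that citation: pass to $\boldsymbol{X}=\widehat{\boldsymbol{V}}_\perp^\top\boldsymbol{V}$, derive the Sylvester identity $\widehat{\boldsymbol{\Lambda}}_\perp\boldsymbol{X}-\boldsymbol{X}\boldsymbol{\Lambda}=\boldsymbol{Y}$, and invert it with constant $1/\Delta$ via the midpoint shift.

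Two details are worth making explicit. First, completing $\widehat{\boldsymbol{V}}$ by eigenvectors carrying exactly the labels $\widehat\lambda_j$, $j\notin\{r,\dots,s\}$, uses that $\mathrm{span}(\widehat{\boldsymbol{V}})^\perp$ is $\widehat{\boldsymbol{\Sigma}}$-invariant with the complementary eigenvalue multiset. Second, in the degenerate case $r=1$, $s=p$ one has $\Delta=+\infty$ and $\sin\Theta=0$, so the inequality needs the convention $\infty\cdot 0=0$; this is an artifact of the statement, not of your argument.
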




    





\newpage

\section{Proofs for Improved Analysis of Vanilla GD}
\label{sec: proof gd}

\subsection{Formal Statement of Theorem~\ref{thm: gd}}\label{subsec: formal statement gd}

\begin{theorem}[GD in river-valley loss landscape (improved version of Theorem 3.2 in \citealt{wen2025understanding})]\label{thm: gd formal}
    Suppose Assumptions~\ref{ass: existence} and \ref{ass: regularity} hold. 
    Let $\eta$ be a learning rate such that $\eta<\eta_{\max}^{\mathrm{GD}}$, where
    \begin{align}\label{eq:max_lr_gd_formal}
        \eta_{\max}^{\mathrm{GD}}= \frac{1.9 - 2\gflat\gmax^{-1} - 12\kappa}{\gmax}.
    \end{align}
    Then there exists a time index $T_0$ such that  iteration \eqref{eq: gd} with initialization $w_0\in\cM$ on the river satisfies that for any step $k$, there exists another $T(k)$ s.t. the following two things hold:
    \begin{enumerate}[nosep, leftmargin=6mm]
        \item GD stays close to the river: $\|x(T_0 + T(k))-w_k\|_2\leq \cO(\kappa \cdot g_{\max}/\gamma)$; 
        \item The speed on the river is nearly proportional to $\eta$: $|T(k) - \eta\cdot k| \leq \epsilon\cdot \eta k$, with $\epsilon:=\cO(\kappa + \eta\gflat)$.
    \end{enumerate}
\end{theorem}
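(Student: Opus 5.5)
The plan is to specialize the momentum machinery of Appendix~\ref{sec: key proofs of main theorem} to $\beta=0$, where the iteration is vanilla \eqref{eq: gd}. The state then collapses to the single vector $\bps(w_k)\nabla L(w_k)$, and the Lyapunov weighting can be replaced by the plain Euclidean norm. Concretely, I would first prove a GD analogue of Lemma~\ref{lem: sharp direction}. I expand
\begin{align}
\bps(w_{k+1})\nabla L(w_{k+1}) = \bps(w_k)\big(\bI_d-\eta\nabla^2L(w_k)\big)\bps(w_k)\nabla L(w_k) + E_k,
\end{align}
where $E_k$ collects the spinning and Hessian-variation terms. As in Step~3 of Lemma~\ref{lem: sharp direction}, these are bounded via Lemma~\ref{lem: river spinning} and the one-step variation estimates, giving $\|E_k\|_2\le \eta\gamma\kappa(2+2\eta\gmax+\eta\gamma/2)\|\nabla L(w_k)\|_2$. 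The leading operator restricted to the sharp subspace has eigenvalues $1-\eta\lambda_{k,i}$ with $\lambda_{k,i}\in(\gamma,\gmax]$. Its operator norm is therefore $\max\{1-\eta\gamma,\eta\gmax-1\}$, which is strictly below one when $\eta<2/\gmax$. The Lyapunov matrix is simply the identity, so no $\kappa^{-1/8}$ loss from $\|\boldsymbol{P}_k\|_{\mathrm{Op}}$ occurs. This is why the error terms here are linear in $\kappa$ rather than in fractional powers.

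Next, I would run the endpoint induction of Lemma~\ref{lem: flat dominance momentum} with only three invariants: sharp contraction, flat-gradient persistence (the analogue of \eqref{eq: induction 2}), and sharp-vs-flat dominance $\|\bps\nabla L(w_{k+1})\|_2\le c\|\bpf\nabla L(w_{k+1})\|_2$ with $c=\cO(\kappa)$. The momentum-averaging invariant \eqref{eq: induction 3} is vacuous. Flat persistence follows as in Lemma~\ref{lem: induction 2}, with factor $1-\eta\gflat-\cO(\eta\gamma\kappa)$. Dominance closes as in Lemma~\ref{lem: induction 3}: writing $r:=\max\{1-\eta\gamma,\eta\gmax-1\}$, I need
\begin{align}
r + \cO(\kappa)\;\le\; (1-c)\big(1-\eta\gflat-\cO(\kappa)\big)\quad\text{and}\quad c(1-r)\gtrsim \eta\gamma\kappa(1+\eta\gmax).
\end{align}
The upper end of this requirement, namely $\eta\gmax-1\le 1-\eta\gflat-\cO(\kappa)$ after absorbing the spinning terms with explicit constants (using $\eta\gamma\kappa\le\kappa\eta\gmax\le2\kappa$), yields the threshold $\eta\gmax<1.9-2\gflat\gmax^{-1}-12\kappa$. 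The slack $0.1$ is spent to make $1-r\ge 0.05$ a fixed constant, so that $c=\cO(\kappa)$ rather than $\cO(\kappa/(2-\eta\gmax))$. The lower end, $1-\eta\gamma<1$, is automatic. Finally, I would extend the bounds to interpolation points exactly as in the interpolation step of Lemma~\ref{lem: flat dominance momentum}.

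With dominance established, the conclusions follow verbatim from Lemmas~\ref{lem: gd momentum tracks the river closely formal} and \ref{lem: time index grows almost linearly with learning rate formal} specialized to $\beta=0$. Containment in $\cU$ uses $\eta g_{\max}\le 2g_{\max}/\gmax$. The distance bound follows from Lemma~\ref{lem: existence of projection onto river and further properties} item~2, which gives $\|w_k-\Phi(w_k)\|_2\le 2c\,g_{\max}/\gamma=\cO(\kappa g_{\max}/\gamma)$. For the clock, there is no momentum transient ($k_\star=0$) and $m_{k+1}=\nabla L(w_k)$. Term (i) then gives $\cO(\kappa+\eta\gflat)$ via Lemma~\ref{lem: tangent vector flat direction}, and Term (ii) gives $5\kappa$ via Lemma~\ref{lem: change of projection to the river}. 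Setting $T_0:=S(0)$ and integrating $|S'-\eta|\le\epsilon\eta$ finishes the proof.

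The main obstacle is the constant bookkeeping in the dominance step near the edge $\eta\gmax\approx 2$. There the contraction margin $1-r$ degenerates, so the spinning error, which is proportional to $\|\nabla L\|_2\approx\|\bpf\nabla L\|_2$, must be absorbed against a margin that shrinks together with the flat gradient's own decay $\eta\gflat$. My first attempt would be to fix the margin at $0.05$ and verify that the explicit constants $2$ and $12$ in \eqref{eq:max_lr_gd_formal} suffice. If they do not, I would fall back on the Lyapunov-weighted argument with $\beta=0$.
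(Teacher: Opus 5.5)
Your proposal is correct and follows essentially the paper's route: plain Euclidean control of $\bps\nabla L$ with contraction factor $\max_i|1-\eta\lambda_{k,i}|$ (no Lyapunov weight is needed because the one-step operator is symmetric), flat persistence $1-\eta\gflat$, and spinning errors of size $\cO(\eta\gamma\kappa)\|\nabla L(w_k)\|_2$. The residual $\|\bps\nabla L\|_2-\alpha\|\bpf\nabla L\|_2$ is then closed with $\alpha\le 120\kappa$ using the $0.1$ margin, followed by the momentum-free tracking and clock arguments with $T_0=S(0)$.

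The one thing to fix is your first displayed condition, which fails for small $\eta$. There the available margin $1-r-\eta\gflat$ is only $\Theta(\eta\gamma)$, so you must drop the $(1-c)$ factor and keep the errors as $e=\cO(\eta\gamma\kappa)$ rather than $\cO(\kappa)$. This gives the single requirement $c\,\big(1-r-\eta\gflat-e(1+c)\big)\ge e(1+c)$, which holds with $c=\cO(\kappa)$ uniformly in $\eta$; this is the paper's Case~1, where $\alpha\le 7\kappa$.
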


\subsection{Proofs for Improved Analysis}\label{subsec: improved analysis}

The only new ingredient needed to improve the largest tolerable learning rate of GD over \citet{wen2025understanding} is the following sharp-direction estimate.
It provides a tighter learning-rate condition under which the sharp-direction dynamics remain stable.
Once this estimate is established, the remaining projection, trajectory-tracking, and time-reparametrization arguments follow the same steps as in the proof of Theorem~\ref{thm: main formal}, specialized to vanilla GD, with all momentum-specific terms removed.
We therefore present only the proof of this key lemma.
In particular, GD has no momentum transient: using the absolute time map $S(t)$ from \eqref{eq: x S t phi}, we take $T_0=S(0)$ and $T(k)=S(k)-T_0$, and integrate the corresponding derivative estimate over $[0,k]$ to obtain $|T(k)-\eta k|\leq\epsilon\eta k$.

\begin{lemma}[Gradient norm in the flat direction dominates]\label{lem: sharp dominance}
    Under Assumptions~\ref{ass: existence} and \ref{ass: regularity}, with learning rate $\eta\leq (1.9 - 2\gflat\gmax^{-1} - 12\kappa)/\gmax$, it holds that 
    for any $k\in\mathbb{N}$ and $\tau\in[0,1]$,
    \begin{align}
        \|\bps(w_{k,\tau})\nabla L(w_{k,\tau})\|_2 \leq 120\kappa \cdot \|\bpf(w_{k,\tau})\nabla L(w_{k,\tau})\|_2,\label{eq: sharp dominance}
    \end{align}
\end{lemma}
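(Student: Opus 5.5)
We would prove Lemma~\ref{lem: sharp dominance} by specializing the endpoint-induction scheme of Lemma~\ref{lem: flat dominance momentum} to $\beta=0$. In that case the momentum coordinates disappear and only two endpoint invariants remain: a flat-persistence bound $\|\bpf(w_{k+1})\nabla L(w_{k+1})\|_2\ge(1-\underline q)\|\bpf(w_k)\nabla L(w_k)\|_2$ with $\underline q=\cO(\eta\gflat+\eta\gamma\kappa)$, and the dominance bound $\|\bps(w_{k})\nabla L(w_{k})\|_2\le c\,\|\bpf(w_{k})\nabla L(w_{k})\|_2$ with $c=60\kappa$ at endpoints. The base case holds because $w_0\in\cM$, so $\bps(w_0)\nabla L(w_0)=0$.

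Write $g_k^{\mathrm s}:=\bps(w_k)\nabla L(w_k)$. As in the proof of Lemma~\ref{lem: sharp direction}, Taylor expansion along the segment gives
\begin{align}
g_{k+1}^{\mathrm s}=\bps(w_k)\big(\boldsymbol I-\eta\nabla^2L(w_k)\big)\bps(w_k)\,g_k^{\mathrm s}+E_k .
\end{align}
For the error we use Lemmas~\ref{lem: river spinning} and~\ref{lem: change sharp} together with the third-derivative bound, which give $\|E_k\|_2\le \eta\gamma\kappa(2+2\eta\gmax+\eta\gamma/2)\|\nabla L(w_k)\|_2$. The key change relative to \citet{wen2025understanding} is that we do not route the sharp component through a $\gamma$-strong-convexity contraction. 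Instead we bound the linear part directly by its operator norm
\begin{align}
\max_{i\le d-1}|1-\eta\lambda_{k,i}|\le \max\{1-\eta\gamma,\ \eta\gmax-1\}=:1-\delta .
\end{align}
Under $\eta\gmax\le 1.9-2\gflat\gmax^{-1}-12\kappa$ we get $\eta\gmax-1\le 0.9$, and since $\eta\gamma$ is small this gives $\delta\ge\min\{\eta\gamma,\,0.1\}$. Using $\eta\gmax<2$ we also have $\eta\gamma\kappa(2+2\eta\gmax+\eta\gamma/2)\le 7\eta\gamma\kappa$. Combining with the persistence invariant, the recursion becomes
\begin{align}
\|g^{\mathrm s}_{k+1}\|_2\le\Big((1-\delta)c+7\eta\gamma\kappa(1+c)\Big)\|\bpf(w_k)\nabla L(w_k)\|_2 .
\end{align}
The right-hand side should be at most $c(1-\underline q)\|\bpf(w_k)\nabla L(w_k)\|_2$, which then yields the dominance invariant at step $k+1$.

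The main obstacle is the regime $\eta\gmax$ close to $1.9$, where the contraction in the top direction is only about $0.1$, independent of $\gamma$. In the small-$\eta$ regime, where $\delta=\eta\gamma$, the spinning error $7\eta\gamma\kappa$ is absorbed because $c=60\kappa\gg 7\kappa/\delta\cdot\eta\gamma$ and $\underline q\lesssim\eta\gflat\ll\eta\gamma$. This last step needs $\gflat\le\kappa^{1/2}\gmax$ together with the lower bound $\eta\gmax\ge\kappa^{1/32}$ in the relevant range; if that fails, we would fall back to $\delta\ge\eta\gamma-\eta\gflat$ using the eigen-gap $\lambda_{d-1}>\gamma+4\gflat$. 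In the large-$\eta$ regime the margin $0.1-\underline q$ must dominate $7\eta\gamma\kappa/c$. This is exactly where the terms $2\gflat\gmax^{-1}$ (absorbing $\underline q\le\eta\gflat\cdot\cO(1)$) and $12\kappa$ (absorbing spinning terms of order $\eta\gmax\kappa$) in the threshold are used, so we would tune the constants there first.

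The final step extends the endpoint bound to every $w_{k,\tau}$, exactly as in the interpolation part of Lemma~\ref{lem: flat dominance momentum}. Lemma~\ref{lem: change sharp} costs $\eta\gamma\kappa\|\nabla L\|_2$, the Taylor remainder costs $\kappa(\eta\gamma)^2$, and the flat-gradient variation costs $\underline q$. Together these at most double the constant, which takes the endpoint constant $60\kappa$ to the stated $120\kappa$.
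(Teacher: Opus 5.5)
Your proposal is correct and is essentially the paper's argument. The paper bounds the sharp update by $\max\{|1-\eta\gmax|,|1-\eta(\gamma+4\gflat)|\}$ plus an $\cO(\eta\gamma\kappa)\|\nabla L(w_k)\|_2$ spinning error and lower-bounds the flat update by $1-\eta\gflat$ minus the same error; it then shows the residual $\|\bps\nabla L\|_2-\alpha\|\bpf\nabla L\|_2$ stays nonpositive starting from $w_0\in\cM$, which is your ratio induction in different packaging (for general $\tau$ the paper simply asserts that the same one-step estimate holds, rather than running a separate interpolation step).

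One correction: the eigen-gap route $\delta\geq\eta(\gamma+4\gflat)$ is what the paper uses, and it needs no lower bound on $\eta\gmax$. Moreover, $\gamma\leq\gmax/200$ and $\eta\gmax<1.9$ give $\eta\gamma<0.01$, so $1-\eta(\gamma+4\gflat)>0.9\geq\eta\gmax-1$; hence the bottom sharp direction always sets the contraction, and the large-$\eta$ regime you flag as the main obstacle is vacuous.
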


\begin{proof}[Proof of Lemma~\ref{lem: sharp dominance}]
We present the proof for $\tau=1$, but the proof holds for general $\tau\in[0,1]$. 
    To facilitate presentation, we break the proof into three steps.

    \paragraph{Step 1: upper bounding $\|\bps(w_{k+1})\nabla L(w_{k+1})\|_2$.}
    By the fundamental theorem of calculus,
    \begin{align}
        &\bps(w_{k+1})\nabla L(w_{k+1}) - \bps(w_{k})\nabla L(w_{k})\\
        &\qquad = -\eta\cdot \int_{0}^1 \nabla \bps(w_{k,\tau})[\nabla L(w_k)]\nabla L(w_{k,\tau}) + \bps(w_{k,\tau})\nabla^2 L(w_{k,\tau})\nabla L(w_{k})\mathrm{d}\tau \\
        &\qquad = -\eta\cdot \int_{0}^1 \nabla \bps(w_{k,\tau})[\nabla L(w_k)]\nabla L(w_{k,\tau}) + \bps(w_{k,\tau})\nabla^2 L(w_{k,\tau})\bps(w_{k,\tau})\nabla L(w_{k})\mathrm{d}\tau\\
        &\qquad = -\eta\cdot \int_{0}^1\bps(w_{k,\tau})\nabla^2 L(w_{k,\tau}) \bps(w_{k,\tau})\bps(w_k)\nabla L(w_{k})\mathrm{d}\tau + \mathrm{Err}_1^{\mathrm{s}} + \mathrm{Err}_2^{\mathrm{s}}.\label{eq: proof lem sharp dominance 0}
    \end{align}
    where the first equality is from the fundamental theorem of calculus, the second equality uses the fact that $\bps(w_{k,\tau})\nabla^2 L(w_{k, \tau})\bpf(w_{k,\tau})=\mathbf{0}$.
    The terms $\mathrm{Err}_1$ and $\mathrm{Err}_2$ are given respectively by
    \begin{align}
        \mathrm{Err}_1^{\mathrm{s}} &= -\eta\cdot \int_{0}^1  \bps(w_{k,\tau})\nabla^2 L(w_{k,\tau})\bps(w_{k,\tau})\bpf(w_{k})\nabla L(w_k)\mathrm{d}\tau,\\
        \mathrm{Err}_2^{\mathrm{s}} &= -\eta\cdot\int_{0}^1 \nabla \bps(w_{k,\tau})[\nabla L(w_k)]\nabla L(w_{k,\tau}) \mathrm{d}\tau.
    \end{align}
    We now upper bound the norms of $\mathrm{Err}_1$ and $\mathrm{Err}_2^{\mathrm{s}}$ respectively. 
    For $\mathrm{Err}_1^{\mathrm{s}}$, we have 
    \begin{align}
        \|\mathrm{Err}_1^{\mathrm{s}}\|_2 & = \eta\cdot \left\|\int_{0}^1  \bps(w_{k,\tau})\nabla^2 L(w_{k,\tau})\bps(w_{k,\tau})\bpf(w_{k})\nabla L(w_k)\mathrm{d}\tau\right\|_2 \\
        & = \eta\cdot \left\|\int_{0}^1  \bps(w_{k,\tau})\nabla^2 L(w_{k,\tau})\Big(\bps(w_{k,\tau}) - \bps(w_k)\Big)\bpf(w_{k})\nabla L(w_k)\mathrm{d}\tau\right\|_2\\
        &\leq \eta\cdot \int_0^1\|\nabla^2 L(w_{k,\tau})\|_2\cdot \|\bps(w_{k,\tau}) - \bps(w_k)\|_2\cdot\|\nabla L(w_k)\|_2\mathrm{d}\tau\\
        &\leq \eta^2\gmax\gamma\cdot \kappa \cdot\|\nabla L(w_k)\|_2,\label{eq: proof lem sharp dominance 1}
    \end{align}
    where the last inequality uses Lemma~\ref{lem: change of projection to the river} and Assumption~\ref{ass: regularity} (2). 
    For $\mathrm{Err}_2$, we have 
    \begin{align}
        \|\mathrm{Err}_2^{\mathrm{s}}\|_2 \leq \eta\gamma\cdot \kappa\cdot \|\nabla L(w_k)\|_2,\label{eq: proof lem sharp dominance 2}
    \end{align}
    where we use Lemma~\ref{lem: river spinning} and item 3 of Assumption~\ref{ass: regularity}. 
    By \eqref{eq: proof lem sharp dominance 0}, \eqref{eq: proof lem sharp dominance 1}, and \eqref{eq: proof lem sharp dominance 2}, we have
    \begin{align}
        &\|\bps(w_{k+1})\nabla L(w_{k+1})\|_2\label{eq: proof lem sharp dominance 2+}\\
        &\qquad \leq \left\|\left(\boldsymbol{I}_d - \eta\cdot \int_{0}^1\bps(w_{k,\tau})\nabla^2 L(w_{k,\tau})\bps(w_{k,\tau})\mathrm{d}\tau\right)\bps(w_{k})\nabla L(w_{k})\right\|_2  \\
        &\qquad\qquad +  \|\mathrm{Err}^{\mathrm{s}}_1\|_2 +  \|\mathrm{Err}^{\mathrm{s}}_2\|_2 \\
        &\qquad \leq \max\Big\{\big|1-\eta\gmax\big|, \big|1-\eta(\gamma + 4\gflat)\big|\Big\}\cdot  \|\bps(w_{k})\nabla L(w_{k})\|_2 \\
        &\qquad \qquad + \eta\gamma\big(1+\eta\gmax\big)\cdot\kappa \cdot\|\nabla L(w_k)\|_2.
    \end{align}
    This upper bounds the gradient norm in the sharp direction. 

    \paragraph{Step 2: lower bounding $\|\bpf(w_{k+1})\nabla L(w_{k+1})\|_2$.} We use a similar way. Consider
    \allowdisplaybreaks
    \begin{align}
        &\bpf(w_{k+1})\nabla L(w_{k+1}) - \bpf(w_{k})\nabla L(w_{k})\\
        &\qquad = -\eta\cdot \int_{0}^1 \nabla \bpf(w_{k,\tau})[\nabla L(w_k)]\nabla L(w_{k,\tau}) + \bpf(w_{k,\tau})\nabla^2 L(w_{k,\tau})\nabla L(w_{k})\mathrm{d}\tau \\
        &\qquad = -\eta\cdot \int_{0}^1 \nabla \bpf(w_{k,\tau})[\nabla L(w_k)]\nabla L(w_{k,\tau}) + \bpf(w_{k,\tau})\nabla^2 L(w_{k,\tau})\bpf(w_{k,\tau})\nabla L(w_{k})\mathrm{d}\tau \\
        &\qquad = -\eta\cdot \int_{0}^1  \bpf(w_{k,\tau})\nabla^2 L(w_{k,\tau})\bpf(w_{k,\tau})\bpf(w_{k})\nabla L(w_{k})\mathrm{d}\tau + \mathrm{Err}_1^{\mathrm{f}} + \mathrm{Err}_2^{\mathrm{f}},\label{eq: proof lem sharp dominance 3}
    \end{align}
    where
    \begin{align}
        \mathrm{Err}_1^{\mathrm{f}} &= -\eta\cdot \int_{0}^1  \bpf(w_{k,\tau})\nabla^2 L(w_{k,\tau})\bpf(w_{k,\tau})\bps(w_{k})\nabla L(w_k)\mathrm{d}\tau,\\
        \mathrm{Err}_2^{\mathrm{f}} &= -\eta\cdot\int_{0}^1 \nabla \bpf(w_{k,\tau})[\nabla L(w_k)]\nabla L(w_{k,\tau}) \mathrm{d}\tau.
    \end{align}
    Similar to \eqref{eq: proof lem sharp dominance 1} and \eqref{eq: proof lem sharp dominance 2}, we can show that 
    \begin{align}
        \|\mathrm{Err}_1^{\mathrm{f}}\|_2 \leq \eta^2\gmax\gamma\cdot \kappa\cdot \|\nabla L(w_k)\|_2,\quad  \|\mathrm{Err}_2^{\mathrm{f}}\|_2 \leq\eta\gamma\cdot \kappa\cdot\|\nabla L(w_k)\|_2.\label{eq: proof lem sharp dominance 4}
    \end{align}
    Thus combining \eqref{eq: proof lem sharp dominance 3} and \eqref{eq: proof lem sharp dominance 4}, we obtain that 
    \begin{align}
        &\|\bpf(w_{k+1})\nabla L(w_{k+1})\|_2\\
        &\qquad \geq \left\|\left(\boldsymbol{I}_d - \eta\cdot \int_{0}^1\bpf(w_{k,\tau})\nabla^2 L(w_{k,\tau})\bpf(w_{k,\tau})\mathrm{d}\tau\right)\bpf(w_{k})\nabla L(w_{k})\right\|_2 -  \|\mathrm{Err}^{\mathrm{f}}_1\|_2 -  \|\mathrm{Err}^{\mathrm{f}}_2\|_2 \\
        &\qquad \geq (1-\eta\gflat)\cdot \|\bpf(w_{k})\nabla L(w_{k})\|_2 - \eta\gamma\big(1+\eta\gmax\big)\cdot\kappa \cdot\|\nabla L(w_k)\|_2.\label{eq: proof lem sharp dominance 4+}
    \end{align}
    This lower bounds the gradient norm in the flat direction. 

    \paragraph{Step 3: summary up.}
    Now with \eqref{eq: proof lem sharp dominance 2+} and \eqref{eq: proof lem sharp dominance 4+}, we have that, for any coefficient $\alpha\in[0,1]$, 
    \begin{align}
        &\|\bps(w_{k+1})\nabla L(w_{k+1})\|_2 - \alpha\cdot \|\bpf(w_{k+1})\nabla L(w_{k+1})\|_2 \\
        &\qquad \leq \max\Big\{\big|1-\eta\gmax\big|, \big|1-\eta(\gamma + 4\gflat)\big|\Big\}\cdot  \|\bps(w_{k})\nabla L(w_{k})\|_2 \\
        &\qquad\qquad - \alpha\cdot  (1-\eta\gflat)\cdot \|\bpf(w_{k})\nabla L(w_{k})\|_2  + 2\eta\gamma\big(1+\eta\gmax\big)\cdot\kappa \cdot\|\nabla L(w_k)\|_2.\label{eq: proof lem sharp dominance 5}
    \end{align}
    By the fact that 
    \begin{align}
        \|\nabla L(w_k)\|_2 \leq \|\bps(w_{k})\nabla L(w_{k})\|_2 + \|\bpf(w_{k})\nabla L(w_{k})\|_2,
    \end{align}
    we can further upper bound the right hand side of \eqref{eq: proof lem sharp dominance 5} by 
    \begin{align}
        &\|\bps(w_{k+1})\nabla L(w_{k+1})\|_2 - \alpha\cdot \|\bpf(w_{k+1})\nabla L(w_{k+1})\|_2 \\
        &\qquad \leq \bigg(2\eta\gamma\big(1+\eta\gmax\big)\kappa + \max\Big\{\big|1-\eta\gmax\big|, \big|1-\eta(\gamma + 4\gflat)\big|\Big\}\bigg)\cdot  \|\bps(w_{k})\nabla L(w_{k})\|_2 \\
        &\qquad \qquad - \Big(\alpha\cdot (1-\eta\gflat) - 2\eta\gamma\big(1+\eta\gmax\big)\kappa\Big)\cdot \|\bpf(w_{k})\nabla L(w_{k})\|_2. \label{eq: proof lem sharp dominance 6}
    \end{align}
    Now we let the right hand side of \eqref{eq: proof lem sharp dominance 6} satisfying the equation
    \begin{align}
        \text{R.H.S. of \eqref {eq: proof lem sharp dominance 6}} &= \bigg(2\eta\gamma\big(1+\eta\gmax\big)\kappa + \max\Big\{\big|1-\eta\gmax\big|, \big|1-\eta(\gamma + 4\gflat)\big|\Big\}\bigg)\\
        &\qquad \cdot \Big(\|\bps(w_{k})\nabla L(w_{k})\|_2 - \alpha\cdot \|\bpf(w_{k})\nabla L(w_{k})\|_2\Big).\label{eq: proof lem sharp dominance 7}
    \end{align}
    and finding for sufficient conditions on $\eta$ to bound $\alpha$. A sufficient condition for \eqref{eq: proof lem sharp dominance 7} is
    \begin{align}
        &\alpha\cdot \bigg(2\eta\gamma\big(1+\eta\gmax\big)\kappa + \max\Big\{\big|1-\eta\gmax\big|, \big|1-\eta(\gamma + 4\gflat)\big|\Big\}\bigg)\\
        &\qquad = \alpha\cdot (1-\eta\gflat) - 2\eta\gamma\big(1+\eta\gmax\big)\kappa,
    \end{align}
    or equivalently, 
    \begin{align}
        &\bigg(1 - \eta\gflat - 2\eta\gamma\big(1+\eta\gmax\big)\kappa - \max\Big\{\big|1-\eta\gmax\big|, \big|1-\eta(\gamma + 4\gflat)\big|\Big\}\bigg)\cdot \alpha \\
        &\qquad = 2\eta\gamma(1+\eta\gmax)\kappa.
    \end{align}
    Let's now narrow our focus on the regime $\eta <2/\gmax$ (other wise in the sharp direction the iteration explores).
    Consider two different sub-cases.
    Firstly, for $0<\eta \leq 2/(\gmax+\gamma + 4\gflat)$, the above equation reduces to
    \begin{align}
        \Big(\eta\gamma + 3\eta\gflat - 2\eta\gamma\big(1+\eta\gmax\big)\kappa\Big)\cdot \alpha = 2\eta\gamma(1+\eta\gmax)\kappa.
    \end{align}
    This solves $\alpha$ as 
    \begin{align}
        \alpha = \frac{2\gamma(1+\eta\gmax)}{\gamma + 3\gflat - 2\gamma(1+\eta\gmax)\kappa}\cdot \kappa \leq \frac{6\gamma}{\gamma - 6\gamma\kappa}\cdot \kappa \leq 7\kappa
    \end{align}
    Secondly, for $\eta> 2/(\gmax+\gamma + 4\gflat)$, the equation reduces to 
    \begin{align}
        \Big(2 - \eta\gflat - 2\eta\gamma\big(1+\eta\gmax\big)\kappa - \eta\gmax\Big)\cdot \alpha = 2\eta\gamma(1+\eta\gmax)\kappa.
    \end{align}
    Letting $\eta$ further satisfy 
    \begin{align}
        \eta < \frac{1.9 - 2\gflat\gmax^{-1} - 12\kappa}{\gmax},\label{eq: proof lem sharp dominance 8}
    \end{align}
    then we can solve $\alpha$ as 
    \begin{align}
        \alpha &= \frac{2\eta\gamma(1+\eta\gmax)}{2 - \eta\gflat - 2\eta\gamma\big(1+\eta\gmax\big)\kappa - \eta\gmax}\cdot \kappa,\label{eq: proof lem sharp dominance 9}
    \end{align}
    and under \eqref{eq: proof lem sharp dominance 8}
    \begin{align}
        \alpha \leq \frac{12}{2-2\gflat\gmax^{-1} - 12\kappa - 1.9 + 2\gflat\gmax^{-1} + 12\kappa}\cdot \kappa \leq 120\kappa.
    \end{align}
    Thus in conclusion, for learning rate $\eta$ satisfying \eqref{eq: proof lem sharp dominance 8}, we can guarantee that
    \begin{align}
        &\|\bps(w_{k+1})\nabla L(w_{k+1})\|_2 - \alpha\cdot \|\bpf(w_{k+1})\nabla L(w_{k+1})\|_2  \\
        &\qquad \leq \bigg(2\eta\gamma\big(1+\eta\gmax\big)\kappa + \max\Big\{\big|1-\eta\gmax\big|, \big|1-\eta(\gamma + 4\gflat)\big|\Big\}\bigg)\\
        &\qquad \qquad \cdot \Big(\|\bps(w_{k})\nabla L(w_{k})\|_2 - \alpha\cdot \|\bpf(w_{k})\nabla L(w_{k})\|_2\Big),
    \end{align}
    for some $\alpha\leq 120\kappa$. Iterating the above inequality and use the fact that 
    \begin{align}
        \|\bps(w_{0})\nabla L(w_{0})\|_2 - \alpha\cdot \|\bpf(w_{0})\nabla L(w_{0})\|_2 = -\alpha \cdot \|\bpf(w_{0})\nabla L(w_{0})\|_2\leq 0,
    \end{align}
    we can conclude the proof.
\end{proof}


\newpage 

\section{More Results and Details of Experiments}\label{sec: experiment appendix}

\subsection{Further Experiment Details}

The training set of the TinyStories dataset is of about $2.2$ million instances, and the validation set is of about $22000$ instances.
Training loss is recorded every $100$ steps on the corresponding training batch.
In the figures below, faint traces show these recorded losses, solid curves show their $900$-step moving averages, and the insets enlarge steps $30{,}000$--$40{,}000$.
Full-validation loss is evaluated every $5{,}000$ steps and shown with filled markers.
All of the experiments are trained using a single NVIDIA H100 (80G PCIe) GPU.

\subsection{More Experiment Results}

We continue the study of Section~\ref{sec: llm experiment} with learning rates
$0.0001$, $0.0002$, $0.0004$, $0.0006$, and $0.0008$, and momentum parameters
$0$, $0.85$, $0.9$, and $0.95$, plotting the corresponding training and
validation loss curves.
In every figure, the left and right panels show training and validation loss, respectively.

\subsubsection{Fix the learning rate and change the momentum}

We first fix the learning rate $\eta$ and compare the loss curves under different choices of momentum $\beta_1$, see Figures~\ref{fig: eta_01} to \ref{fig: eta_08}.
The results match observations in Section~\ref{sec: llm experiment} that momentum enables more stable and faster training under larger learning rates.
The effect is more significant for larger learning rates. 

\subsubsection{Fix the momentum and change the learning rate}
We then fix the momentum $\beta_1$ and compare the loss curves under different choices of learning rate $\eta$, see Figures~\ref{fig: beta_0} to \ref{fig: beta_095}.
The results show two trends, under the selected $\beta_1$: (i) for larger momentum, the training and validation loss curves are more stable (especially for the validation loss), demonstrating the role of stabilizer of momentum as predicted by the theory; 
(ii) for the same momentum $\beta_1$, during the early phase of the training, the larger the learning rate, the lower the training and validation loss, which matches our prediction of the speed on river by theory.

\begin{figure}[!ht]
    \centering
    \captionsetup{font=footnotesize, skip=2pt}
    \includegraphics[width=\textwidth]{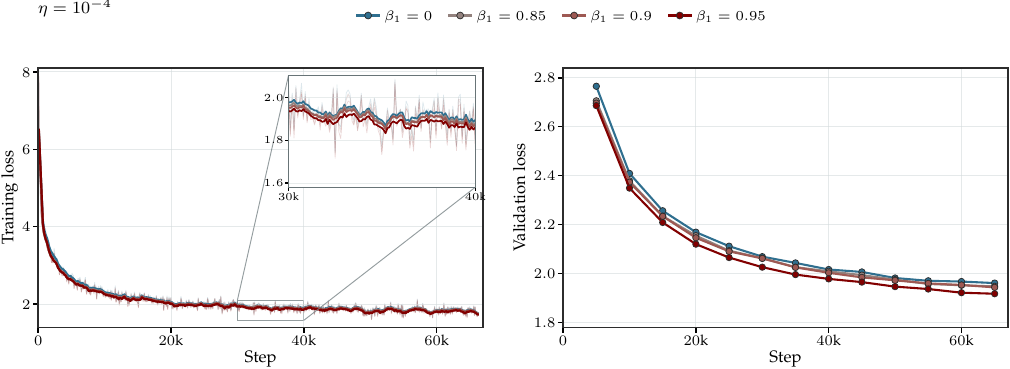}
    \caption{Training and validation losses for $\eta=0.0001$ and $\beta_1\in\{0,0.85,0.9,0.95\}$.}\label{fig: eta_01}
\end{figure}

\begin{figure}[!ht]
    \centering
    \captionsetup{font=footnotesize, skip=2pt}
    \includegraphics[width=\textwidth]{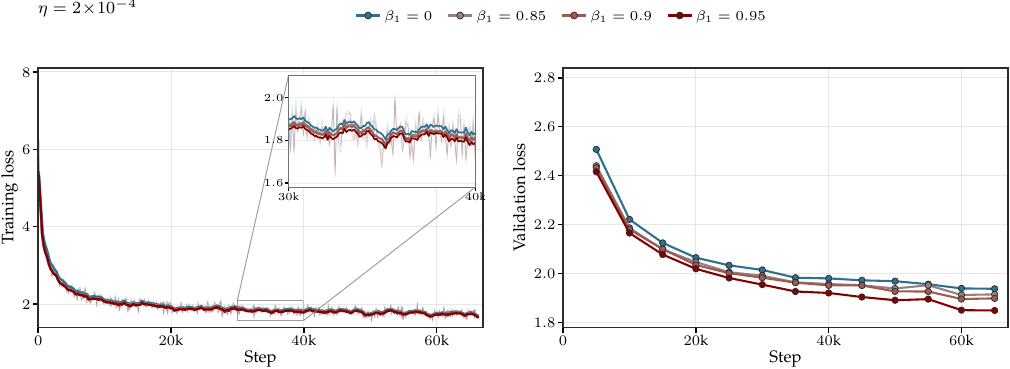}
    \caption{Training and validation losses for $\eta=0.0002$ and $\beta_1\in\{0,0.85,0.9,0.95\}$.}\label{fig: eta_02}
\end{figure}

\begin{figure}[!ht]
    \centering
    \captionsetup{font=footnotesize, skip=2pt}
    \includegraphics[width=\textwidth]{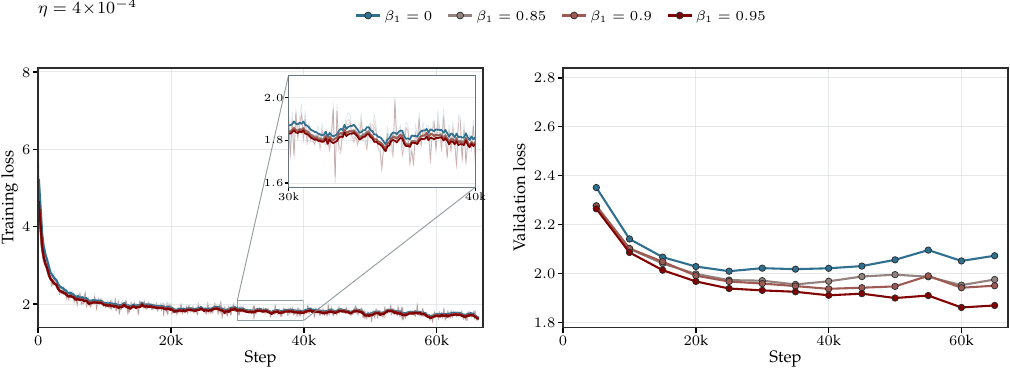}
    \caption{Training and validation losses for $\eta=0.0004$ and $\beta_1\in\{0,0.85,0.9,0.95\}$.}\label{fig: eta_04}
\end{figure}

\begin{figure}[!ht]
    \centering
    \captionsetup{font=footnotesize, skip=2pt}
    \includegraphics[width=\textwidth]{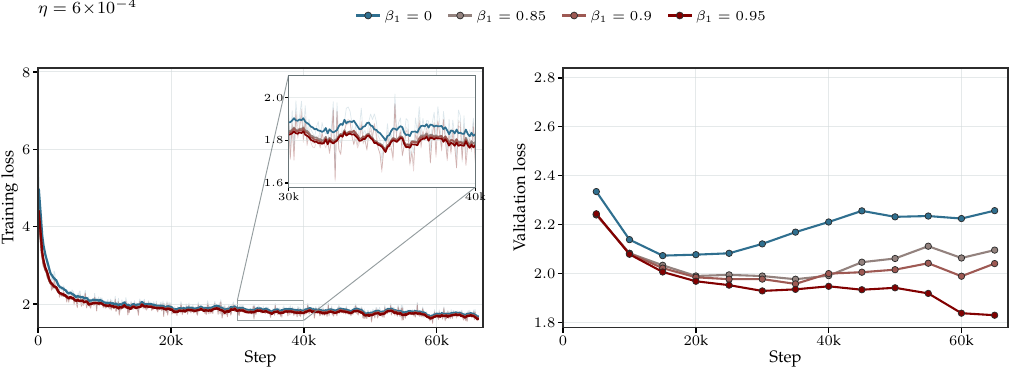}
    \caption{Training and validation losses for $\eta=0.0006$ and $\beta_1\in\{0,0.85,0.9,0.95\}$.}\label{fig: eta_06}
\end{figure}

\begin{figure}[!ht]
    \centering
    \captionsetup{font=footnotesize, skip=2pt}
    \includegraphics[width=\textwidth]{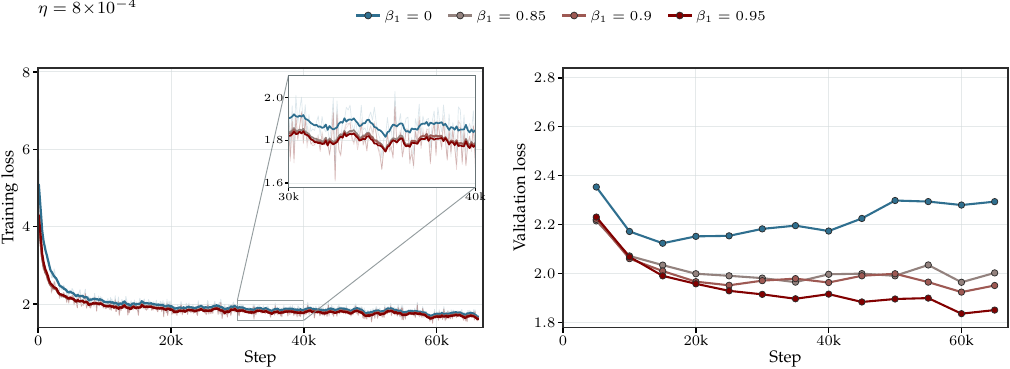}
    \caption{Training and validation losses for $\eta=0.0008$ and $\beta_1\in\{0,0.85,0.9,0.95\}$.}\label{fig: eta_08}
\end{figure}

\newpage

\begin{figure}[!ht]
    \centering
    \captionsetup{font=footnotesize, skip=2pt}
    \includegraphics[width=\textwidth]{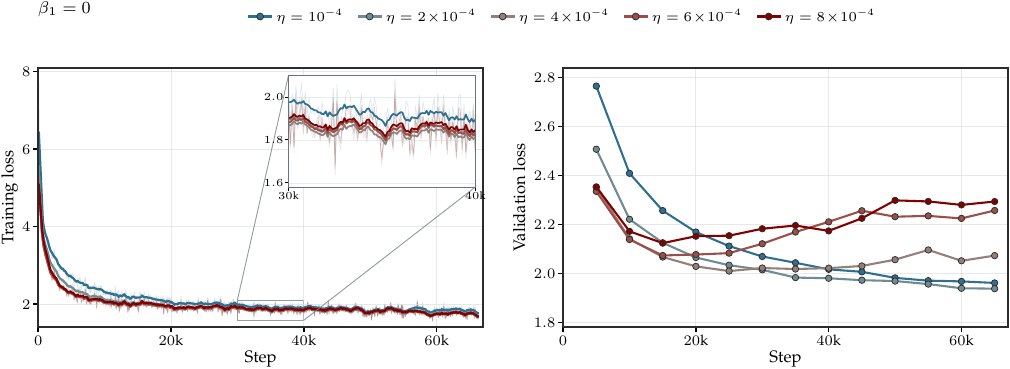}
    \caption{Training and validation losses for $\beta_1=0$ and learning rates $\{1,2,4,6,8\}\times10^{-4}$.}\label{fig: beta_0}
\end{figure}

\begin{figure}[!ht]
    \centering
    \captionsetup{font=footnotesize, skip=2pt}
    \includegraphics[width=\textwidth]{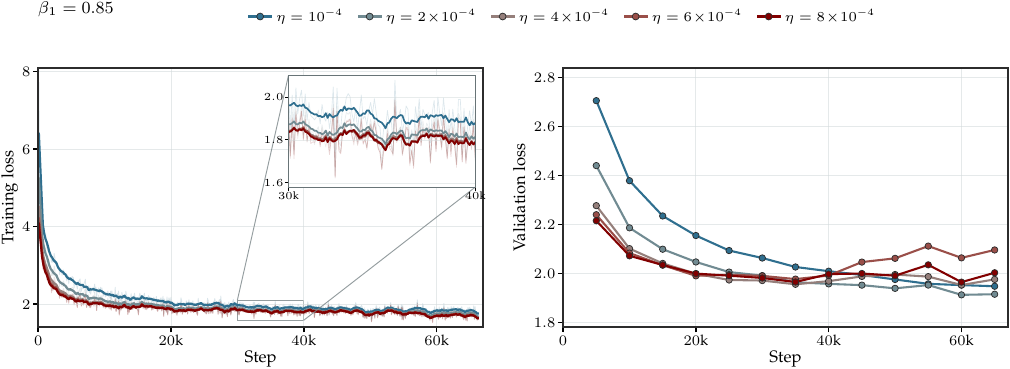}
    \caption{Training and validation losses for $\beta_1=0.85$ and learning rates $\{1,2,4,6,8\}\times10^{-4}$.}\label{fig: beta_085}
\end{figure}

\begin{figure}[!ht]
    \centering
    \captionsetup{font=footnotesize, skip=2pt}
    \includegraphics[width=\textwidth]{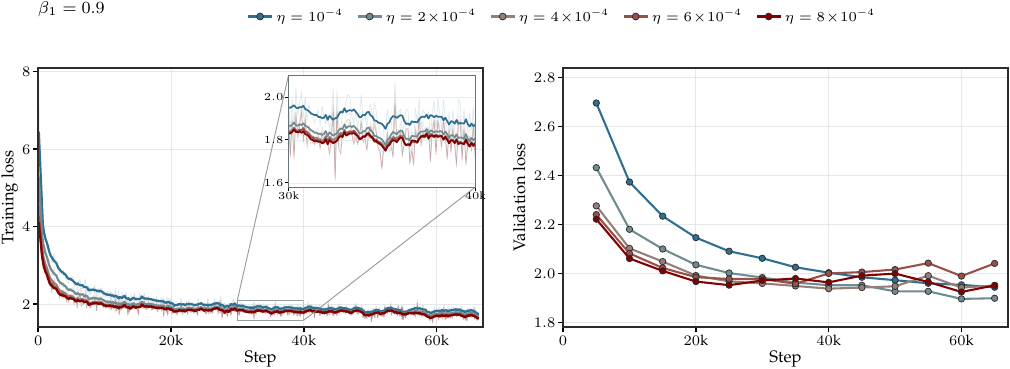}
    \caption{Training and validation losses for $\beta_1=0.9$ and learning rates $\{1,2,4,6,8\}\times10^{-4}$.}\label{fig: beta_09}
\end{figure}

\newpage 

\begin{figure}[!t]
    \centering
    \captionsetup{font=footnotesize, skip=2pt}
    \includegraphics[width=\textwidth]{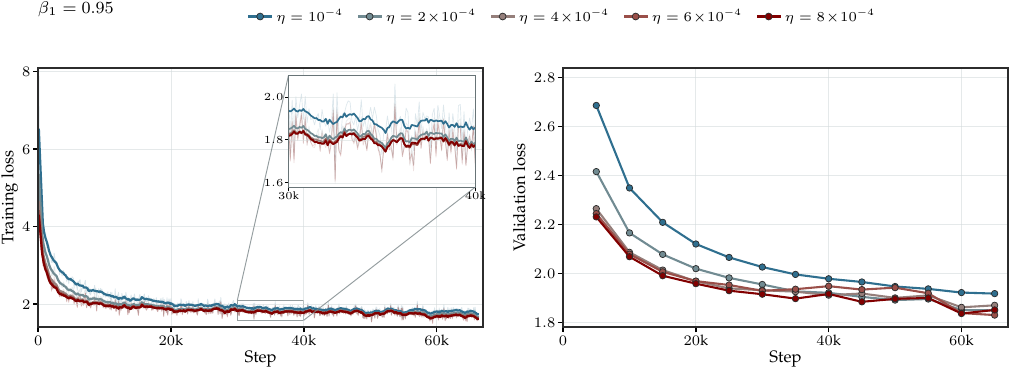}
    \caption{Training and validation losses for $\beta_1=0.95$ and learning rates $\{1,2,4,6,8\}\times10^{-4}$.}\label{fig: beta_095}
\end{figure}

\end{document}